\PassOptionsToPackage{table,dvipsnames}{xcolor}
\documentclass{article}

\usepackage{subcaption}
\usepackage{microtype}
\usepackage{graphicx}
\usepackage{booktabs}
\usepackage{hyperref}
\usepackage{amsmath}
\usepackage{amssymb}
\usepackage{mathtools}
\usepackage{amsthm}
\usepackage[capitalize,noabbrev]{cleveref}

\usepackage{multirow}
\usepackage{url}

\usepackage[accepted]{icml2026}

\usepackage{mathtools}
\usepackage{mdframed}
\usepackage{amsthm, amsfonts, amssymb, mathrsfs}
\usepackage{thmtools}

\usepackage{algpseudocode}
\usepackage{enumitem}
\usepackage{aliascnt}
\usepackage{etoc}
\usepackage{wrapfig}
\usepackage{booktabs}

\definecolor{darkred}{RGB}{100,0,0}
\definecolor{darkblue}{RGB}{0,0,100}
\definecolor{darkgreen}{RGB}{0,75,0}

\hypersetup{citecolor=darkgreen}
\hypersetup{linkcolor=darkgreen}
\hypersetup{urlcolor=darkgreen}

\declaretheoremstyle[
    spaceabove    = \parsep,
    spacebelow    = \parsep,
    bodyfont      = \normalfont\itshape,
]{theoremsty}

\declaretheoremstyle[
    spaceabove    = \parsep,
    spacebelow    = \parsep,
    bodyfont      = \normalfont,
]{normalsty}

\mdfdefinestyle{coloredstyle}{%
    leftline          = false,%
    rightline         = false,%
    topline           = false,%
    bottomline        = false,%
    backgroundcolor   = gray!10,%
    innertopmargin    = 1.5ex,%
    innerbottommargin = 0.7ex,%
    innerleftmargin   = 1.ex,%
    innerrightmargin  = 1.ex,%
    skipabove         = \topskip,%
    skipbelow         = \parskip%
}

\declaretheorem[name=Theorem, style=theoremsty, mdframed={style = coloredstyle}]{theorem}
\declaretheorem[name=Proposition, style=theoremsty, mdframed={style = coloredstyle}]{proposition}

\declaretheorem[name=Lemma, style=theoremsty, mdframed={style = coloredstyle}]{lemma}

\declaretheorem[name=Theorem, style=theoremsty, mdframed={style = coloredstyle}]{theorem*}
\declaretheorem[name=Proposition, style=theoremsty, mdframed={style = coloredstyle}]{proposition*}
\declaretheorem[name=Corollary, style=theoremsty, mdframed={style = coloredstyle}]{corollary*}
\declaretheorem[name=Lemma, style=theoremsty, mdframed={style = coloredstyle}]{lemma*}
\declaretheorem[name=Conjecture, style=theoremsty, mdframed={style = coloredstyle}]{conjecture*}

\declaretheorem[name=Theorem, style=theoremsty, mdframed={style = coloredstyle}, numberwithin=section]{theorem_ap}
\declaretheorem[name=Proposition, style=theoremsty, mdframed={style = coloredstyle}, numberwithin=section]{proposition_ap}

\declaretheorem[name=Lemma, style=theoremsty, mdframed={style = coloredstyle}, numberwithin=section]{lemma_ap}

\declaretheorem[name=Remark, style=normalsty]{remark}

\declaretheorem[name=Definition, style=normalsty, mdframed={style = coloredstyle}, numberwithin=section]{definition_ap}

\declaretheorem[name=Remark, style=normalsty, numbered=no]{remark*}
\declaretheorem[name=Definition, style=normalsty, mdframed={style = coloredstyle}, numbered=no]{definition*}
\declaretheorem[name=Assumption, style=normalsty, mdframed={style = coloredstyle}, numbered=no]{assumption*}

\etocframedstyle[1]{\textbf{\textsc{Table of Contents}}}
\etocsettocdepth{3}

\usepackage{bbm}

\newcommand{\calE}{{\mathcal{E}}}
\newcommand{\calF}{{\mathcal{F}}}

\newcommand{\calH}{{\mathcal{H}}}

\newcommand{\calL}{{\mathcal{L}}}
\newcommand{\calM}{{\mathcal{M}}}
\newcommand{\calN}{{\mathcal{N}}}
\newcommand{\calO}{{\mathcal{O}}}
\newcommand{\calP}{{\mathcal{P}}}

\newcommand{\calU}{{\mathcal{U}}}

\newcommand{\bbE}{\mathbb{E}}

\newcommand{\bbP}{\mathbb{P}}

\newcommand{\bbR}{\mathbb{R}}

\newcommand{\lbra}[1]{\left[#1\right]}
\newcommand{\mbra}[1]{\left\{#1\right\}}
\newcommand{\sbra}[1]{\left(#1\right)}
\newcommand{\norm}[1]{\lVert#1\rVert}
\newcommand{\abs}[1]{\left| #1\right|}
\newcommand{\inner}[1]{\langle#1\rangle}

\newcommand{\eye}{{\mathbbm{1}}}

\newcommand{\divergence}{\textsf{div}}

\newcommand{\KL}{\textsf{KL}}

\newcommand{\trace}[1]{\textsf{Tr}\sbra{#1}}

\newcommand{\rev}[1]{\overset{\leftarrow}{#1}}

\hypersetup{linktoc=all}

\icmltitlerunning{Efficient Diffusion Models under Nonconvex Constraints via Landing}

\begin{document}

\twocolumn[
  \icmltitle{Efficient Diffusion Models under Nonconvex Equality and Inequality Constraints via Landing}

  \icmlsetsymbol{equal}{*}

  \begin{icmlauthorlist}
    \icmlauthor{Kijung Jeon}{gatech}
    \icmlauthor{Michael Muehlebach}{mpiis}
    \icmlauthor{Molei Tao}{gatech}
  \end{icmlauthorlist}

  \icmlaffiliation{gatech}{Georgia Institute of Technology}
  \icmlaffiliation{mpiis}{Max Planck Institute for Int. Sys}
  \icmlcorrespondingauthor{Molei Tao}{\texttt{mtao@gatech.edu}}

  \icmlkeywords{Diffusion Models, Nonconvex Constraints, Landing, Machine Learning}

  \vskip 0.3in
]

\printAffiliationsAndNotice{}

\begin{abstract}
    Generative modeling within constrained sets is essential for scientific and engineering applications involving physical, geometric, or safety requirements (e.g., molecular generation, robotics). We present a unified framework for constrained diffusion models on generic nonconvex feasible sets $\Sigma$ that simultaneously enforces equality and inequality constraints throughout the diffusion process. Our framework incorporates both overdamped and underdamped dynamics for forward and backward sampling. A key algorithmic innovation is a computationally efficient landing mechanism that replaces costly and often ill-defined projections onto $\Sigma$, ensuring feasibility without iterative Newton solves or projection failures. By leveraging underdamped dynamics, we accelerate mixing toward the prior distribution, effectively alleviating the high simulation costs typically associated with constrained diffusion. Empirically, this approach reduces function evaluations and memory usage during both training and inference while preserving sample quality. On benchmarks featuring equality and mixed constraints, our method achieves comparable sample quality to state-of-the-art baselines while significantly reducing computational cost, providing a practical and scalable solution for diffusion on nonconvex feasible sets.
\end{abstract}

\section{Introduction}
\label{main:sec:Introduction}

Generative modeling is a fundamental machine learning task. In recent years, denoising diffusion models \citep{ho2020denoising, song2021score}  have become the state-of-the-art in image, audio, and video generation, matching and surpassing earlier approaches such as GAN \citep{goodfellow2014,dhariwal2021diffusion}. Key advantages include ease of training, high-fidelity sampling, and their flexibility in conditional generation.
{\let\thefootnote\relax\footnotetext{\raggedright All code is available at:  \href{https://github.com/KraitGit/efficient-constrained-diffusion}{\nolinkurl{github.com/KraitGit/efficient-constrained-diffusion}}.\par}}

The majority of these impressive results have been achieved for data in an unconstrained space, particularly $\mathbb{R}^d$. While this suffices for digital content creation, there are many emerging applications in science and engineering, where the generation of high-fidelity samples that adhere to constraints is crucial. Examples include molecular generation \cite{jing2022,denovo2023,wu2022}, where atoms must satisfy distance or chirality constraints and respect physical laws, robotics \cite{chi2024diffusionpolicy,roemer2025,ma2025}, where trajectories are subject to dynamics, actuation limits, safety margins, and collision-avoidance, and shape optimization for engineering design \cite{wagenaar2024,kyaw2025,regenwetter2022}, where specifications, symmetries, and manufacturing impose constraints.

In these settings, a valid sample needs to remain inside a non-trivial feasible set $\Sigma\subset \mathbb{R}^d$, as constraint violations would render the generation physically meaningless or unsafe. Unfortunately, enforcing constraints in diffusion models is challenging: (i) Projection-based methods demand repeated Newton-iterations, which scales poorly with dimension and may fail when $\Sigma$ is nonconvex and projections are undefined \cite{christopher2024constrained}; (ii) Reparametrization encodes the constraints implicitly, which requires domain knowledge, may alter the conditioning of the score matching, and the fidelity of the sampling. Reparametrizations might be hard to find in applications with complex mixed equality-inequality constraints \cite{ermon2023};
(iii) Penalty and barrier methods \cite{nocedal2009,fishman2023} incorporate constraints through additional terms in the objective function, which may introduce bias (barrier functions), lead to constraint violations (penalty), and result in additional hyperparameters that are difficult to tune.

Consequently, constrained diffusion is computationally demanding as both training and inference rely on the simulation of diffusion processes. For example, this poses a significant bottleneck for applications including robotics - where trajectories have to be generated in real time on resource-constrained hardware.

This article addresses the need for computationally-efficient constrained diffusion. Our contributions are threefold:
\begin{itemize}[leftmargin=10pt]
    \item \textbf{(Landing-based diffusion process)} \quad We extend the recently developed constrained optimization technique of landing \citep{muehlebach,ablin,schechtman,mathprog}
    to generative modeling, enabling inexpensive first-order updates that steer the diffusion process toward $\Sigma$ without requiring per-step exact projections or Newton iterations. The continuous-time dynamics preserve feasibility when initialized on the manifold and exponentially decay constraint violations; the discretized sampler uses this self-correction to avoid repeated Newton projections, with only an optional terminal projection.
    \item \textbf{(Unified framework for constraints)}\quad Our method works for general nonconvex feasible sets as we develop an SDE-based framework that unifies both equality and inequality constraints.
    \item \textbf{(Fast constrained diffusion via underdamped dynamics)}\quad The framework encompasses constrained versions of both underdamped and overdamped Langevin dynamics. The fast mixing of the underdamped version is leveraged to substantially reduce the length of forward trajectory $N$ to reach the prior distribution, thereby cutting the dominant sampling (up to $47 \times$) and training cost (up to $5 \times$) in constrained diffusion models.
\end{itemize}


\vspace{-15pt}
\section{Related Works}
\label{main:sec:Related Works}

Recent works have extended score-based diffusion models from Euclidean spaces to non-Euclidean domains. Our setup considers a domain specified by equality and inequality constraints, and closely related is a rich collection of successful generative models for manifold data \cite{mathieu2020, rozen2021moser,de2022riemannian,huang2022riemannian, chen2024flow,zhu2025trivialized}. However, while many manifolds considered (such as $S^d$ and $SO(n)$) can also be specified using equality constraints, the geometry of the constrained set can easily become too complicated to handle when there are a large number of constraints, or when the constraints introduce manifolds with boundaries or even lower dimensional structures.


Approaches directly targeting constrained generation also exist, particularly when data lie in a bounded subset $\Sigma\subset \mathbb{R}^d$. Several strategies have been explored. For example, classical reflected Brownian motion \cite{williams1987, pilipenko2014} was recently leveraged to create constrained diffusion models \cite{fishman2023,ermon2023, fishman2023metropolis}, and the recent development of mirror Langevin dynamics and algorithms \citep{zhang2020,li2022} was also employed for constrained generation \citep{liu2024mirror}. However, the former approach is difficult to be made simulation-free and/or tractable in the conditional score, which is essential for efficiency, while the latter only works for convex constraints.
Even more recently, Riemannian Denoising Diffusion Probabilistic Models (RDDPM) \cite{liu2025riemannian} extends score-based models to general manifolds via per-step Newton's projections, ensuring feasibility but incurring sizable computational cost and occasional projection failures on nonconvex sets.
In parallel, Riemannian Flow Matching (RFM) \cite{chen2024flow} learns manifold flows without projections for simple manifolds, yet typically needs a long integration horizon or still projections on nontrivial geometries.

In the light of these advances, we construct an efficient diffusion process that remains on feasible sets described by equality and inequality constraints. The key is to incorporate landing, a technique developed in constrained optimization \cite{muehlebach, ablin, schechtman, mathprog}, which handles non-convex constraints and guarantees feasibility while avoiding expensive projections, retractions, or evaluations of the exponential map.

\section{Preliminaries \& Notations}
\label{main:sec:Preliminaries  Notations}

\textbf{Constrained set and geometry.} \quad
We implement the diffusion model on a constrained set
\begin{equation*}
    \Sigma:= \mbra{x \in \bbR^d \mid h(x) =0 ,g(x)\leq 0}
\end{equation*}
defined by smooth equality constraints $h:\bbR^d \rightarrow \bbR^m$
and inequality constraints $g:\bbR^d \rightarrow \bbR^l$.
For theoretical analysis, we assume that $\Sigma$ is a stratified manifold
and constraints $h, g$ satisfy the relaxed Constant Rank Constraint Qualification (rCRCQ)
\citep{minchenko2011relaxed} on a neighborhood of $\Sigma$.

Specifically, for each $x \in \bbR^d$ with the index set of active inequalities
$I_x := \mbra{j \in [l] \mid g_j(x) \geq 0}$,
we say that rCRCQ holds if, for every index set $J \subset I_x$, the set
$\mbra{\nabla h_i(y)}_{i=1}^m \cup \mbra{\nabla g_j(y)}_{j \in J}$
has the same rank for any $y$ in the neighborhood of $x$
(see \autoref{app:remark:Comments on relaxed Constant Rank Constraint Qualification (rCRCQ)}
for further discussion on rCRCQ).

We note that the rCRCQ implies that the stacked Jacobian
$\nabla J(x) \in \bbR^{(m+ \abs{I_x}) \times d}$
(with $J(x) := [h(x), g_{I_x}(x) +\epsilon]^T$)
has a constant rank in the neighborhood of $x$,
where the boundary repulsion rate  $\epsilon>0$ is a hyperparameter to be introduced later.
Due to this result, the tangent space of $\Sigma$ is characterized by the kernel of the Jacobian, given by
\begin{equation*}
    T_x \Sigma := \mbra{p \in \bbR^d \mid \nabla J(x) p = 0}.
\end{equation*}

Accordingly, the orthogonal projector
$\Pi(x):= I -\nabla J(x)^T G(x)^{\dagger} \nabla J(x)$
onto $T_x \Sigma$ is well-defined, where $G(x)^\dagger$ is the Moore-Penrose pseudo-inverse of
$G(x): = \nabla J(x) \nabla J(x)^T$.

For the reference measure on $\Sigma$, we use the induced surface (Hausdorff) measure on $\Sigma$,
denoted as $d\sigma_\Sigma$.

In the underdamped setting, the natural phase space is the cotangent bundle given by
\begin{equation*}
    T^*\Sigma :=\mbra{(x,p) \in \bbR^d \times \bbR^d \mid x \in \Sigma, \nabla J(x) p =0}
\end{equation*}
In this manifold, the natural reference measure is the Liouville measure
$d\sigma_{T*\Sigma} (x,p) = d\sigma_\Sigma (x) \otimes dp(x)$
where $dp(x)$ is Lebesgue measure on
$T^*_x \Sigma := \mbra{p \in \bbR^d \mid \nabla J(x) p =0}$.
For the detailed background and notations, see \autoref{app:subsec:Construction of OLLA} for the overdamped,
\autoref{app:subsec:Construction of ULLA} for the underdamped, and the table of key notations
(\autoref{tab:notation}).

\textbf{Time grid and schedule.} \quad
In our paper, we use continuous time $t \in [0,T]$ and a uniform grid
$t_k := k\Delta t$ for $k\in \mbra{0,...,N}$ with $T = N \Delta t$
for the implementation of the diffusion model with step size $\Delta t >0$.

Also, the noise magnitudes used in implemented diffusion models are specified by a scheduler
$\sigma : [0,T] \rightarrow \bbR_+$ and, in our case, we use the linear scheduler given by
$\sigma(t) := \sigma_{\text{min}} + \frac{t}{T} (\sigma_{\text{max}} - \sigma_{\text{min}}) $
with $\sigma_k := \sigma(t_k)$.

\section{Main Results}
\label{main:sec:main_results}
\subsection{Constrained Langevin Dynamics via Landing}

Classical constrained samplers take an unconstrained (in $\bbR^d$) or tangential step (in $T_x\Sigma$) and then project back to $\Sigma$.
This can be problematic for several reasons:
(i) on nonconvex manifolds, nearest-point projection can be multi-valued or not globally defined;
(ii) per-step projection solves are costly and may fail (e.g. Newton's method failure);
and (iii) behavior near $\partial \Sigma$ is delicate since the active set $I_x$ changes frequently.

\textbf{From projections to landing mechanisms.} \quad
We therefore seek a projection-free scheme that remains well-posed even when local projections are unreliable.
Our approach builds robustness directly into the SDE via a landing term that enforces exponential decay of constraint violation $J$:
\begin{equation*}
    dJ(X_t) = -\alpha \sigma(t)^2 J(X_t) dt \tag{Target landing property}
\end{equation*}
so discretization-induced infeasibility self-corrects without explicit projection.
The landing modifies only the normal component of the drift, while leaving the tangential drift and diffusion unchanged,
so trajectories evolve intrinsically on $\Sigma$ (or $T^*\Sigma$ in the underdamped case).
Formally, any diffusion process $X_t$ with this property enjoys the guarantees stated in \autoref{lem:Exponential decay of constraint functions}.
For detailed proofs of the mathematical claims below, see \autoref{app:sec:Constrained Langevin Dynamics}.

\begin{lemma}[Exponential decay of constraint functions]
\label{lem:Exponential decay of constraint functions}
Under the target property $dJ(X_t) = -\alpha \sigma(t)^2 J(X_t) dt$, the diffusion process $X_t$ satisfies the following constraint satisfaction property almost surely:
\begin{equation*}
    h_i(X_t) = h_i(X_0) e^{-\alpha S(t)}, \quad t \geq 0
\end{equation*}
and
\begin{equation*}
    \begin{cases}
        \begin{aligned}[t]
        g_j(X_t) &= -\epsilon + (g_j(X_0) + \epsilon) e^{-\alpha S(t)}, \quad && t\leq \tau_{j, \epsilon}\\
        g_j(X_t) &\leq 0,\quad && t \geq \tau_{j, \epsilon},
        \end{aligned}
    \end{cases}
\end{equation*}
where $S(t) := \int_{0}^t \sigma(s)^2 ds$ and  $\tau_{j,\epsilon}$ are defined to be
\begin{equation*}
    \tau_{j, \epsilon} := \inf \mbra{t \geq 0 \mid S(t) \geq \frac{1}{\alpha} \ln \sbra{ \frac{g_j(X_0) +\epsilon}{\epsilon}}}
\end{equation*}
for all $j \in I_{X_0}$.
\end{lemma}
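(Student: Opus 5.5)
The plan is to treat the target landing property as a family of pathwise linear ODEs, one per component of $J$. Because $dJ(X_t) = -\alpha\sigma(t)^2 J(X_t)\,dt$ has no martingale part, each component $t\mapsto J_k(X_t)$ is almost surely absolutely continuous. It then satisfies a scalar linear ODE with deterministic, continuous coefficient $-\alpha\sigma(t)^2$. The key step is to apply the integrating factor $e^{\alpha S(t)}$, with $S(t)=\int_0^t\sigma(s)^2ds$. Differentiating $e^{\alpha S(t)}J_k(X_t)$ gives zero, either by the ordinary product rule or by Itô's product rule with vanishing covariation. Hence $J_k(X_t)=J_k(X_0)e^{-\alpha S(t)}$ almost surely.

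For the equality components $J_i=h_i$, this gives the first claim directly, valid for all $t\ge0$.

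For the inequality components, note that the stacked $J$ contains $g_j+\epsilon$ only for $j$ in the active set. I will restrict to $j\in I_{X_0}$ and argue on the interval on which $j$ stays active, where the $j$-th row is $g_j(X_t)+\epsilon$. The same integrating-factor argument gives
\[
g_j(X_t)+\epsilon=(g_j(X_0)+\epsilon)e^{-\alpha S(t)}.
\]
This is the stated formula for $t\le\tau_{j,\epsilon}$.

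Next I handle the stopping time. Since $j\in I_{X_0}$ means $g_j(X_0)\ge0$, the logarithm in the definition of $\tau_{j,\epsilon}$ is nonnegative. Because $S$ is continuous and nondecreasing, at $t=\tau_{j,\epsilon}$ we have $S(\tau_{j,\epsilon})=\frac1\alpha\ln\frac{g_j(X_0)+\epsilon}{\epsilon}$. Substituting yields $g_j(X_{\tau_{j,\epsilon}})=-\epsilon+\epsilon=0$. For earlier times the explicit formula shows $g_j(X_t)\ge0$, so $j$ remains active and the ODE indeed governs the $j$-th row up to $\tau_{j,\epsilon}$. This consistency check justifies using the formula on $[0,\tau_{j,\epsilon}]$. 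If $S(t)$ never reaches the threshold, then $\tau_{j,\epsilon}=\infty$ and only the first case applies.

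The step I expect to be the main obstacle is showing $g_j(X_t)\le0$ for $t\ge\tau_{j,\epsilon}$. After $\tau_{j,\epsilon}$ the index $j$ may leave the active set, and then the landing property no longer speaks about $g_j$. I would argue by contradiction using a first re-entry time. Suppose $g_j(X_s)>0$ for some $s>\tau_{j,\epsilon}$, and let $r=\sup\{u\in[\tau_{j,\epsilon},s] : g_j(X_u)\le 0\}$. By continuity $g_j(X_r)=0$, and on $(r,s]$ the index $j$ is active. On that interval the landing ODE applies to $g_j+\epsilon$, giving
\[
g_j(X_u)+\epsilon=\epsilon\, e^{-\alpha(S(u)-S(r))}\le\epsilon
\]
for $u\in[r,s]$, hence $g_j(X_u)\le0$. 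This contradicts $g_j(X_s)>0$.

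Here I use two facts. First, $S$ is nondecreasing because $\sigma^2\ge0$. Second, the landing identity holds on any time interval where the index is active, restarted from that interval's initial value, so I would invoke it in its incremental (restarted) form rather than only from time $0$. This works for $\epsilon>0$, which is exactly why the repulsion offset $\epsilon$ is needed: with $\epsilon=0$ the point $g_j=0$ would be a fixed point and the boundary could not be strictly repelled. All statements hold almost surely, since the only randomness enters through $X_0$ and the pathwise validity of the ODE.
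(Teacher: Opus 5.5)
Your proposal is correct and takes the same route as the paper: solve the componentwise linear ODE with the integrating factor $e^{\alpha S(t)}$, for $h_i$ on all of $[0,\infty)$ and for $g_j+\epsilon$ while $j$ remains active up to $\tau_{j,\epsilon}$. For $t\ge\tau_{j,\epsilon}$ the paper only asserts that $X_t$ is instantaneously repelled whenever it hits $\partial\Sigma$; your last-zero argument, taking $r=\sup\{u\in[\tau_{j,\epsilon},s]: g_j(X_u)\le 0\}$ and restarting the ODE at $r$, is a rigorous version of that claim.
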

\vspace{-2mm}

\textbf{Constrained Overdamped Langevin dynamics via Landing (OLLA).} \quad
Following the framework proposed in \citet{jeon2025fast}, we first derive such landing-based constrained Langevin dynamics for the overdamped case. By viewing constrained Langevin dynamics in Lagrangian form \cite{rousset2010free}, we pick a Lagrangian process $d\lambda_t$ so that it can impose the target property $dJ(X_t) = -\alpha \sigma(t)^2 J(X_t)dt$ and have a closed-form SDE as follows:

\begin{proposition}[Construction, stationarity and backward process of OLLA]
    \label{prop:Construction, stationarity and backward process of OLLA}
    Consider the following Lagrangian form constrained overdamped Langevin dynamics of $X_t \sim q_t$:
    \begin{equation}
        \label{eqn:lagrangian form-overdamped}
        dX_t = -\frac{1}{2} \sigma(t)^2 \nabla f(X_t)dt + \sigma(t) \circ dW_t + \nabla J(X_t)^T d\lambda_t
    \end{equation}
    where $d\lambda_t$ is the adapted process such that $dJ(X_t) = -\alpha \sigma(t)^2 J(X_t)$. The explicit solution of $d\lambda_t$ provides the closed form SDE of \eqref{eqn:lagrangian form-overdamped} as follows:
    \begin{align*}
        &dX_t = -\frac{\sigma(t)^2}{2}\Pi(X_t)\nabla f(X_t)\,dt
        + \sigma(t)\Pi(X_t)\,dW_t \\
        & \mkern-7mu + \Bigl[
            -\underbrace{\alpha \sigma(t)^2 \nabla J(X_t)^T G^{\dagger}(X_t)J(X_t)}_{\text{Landing term}}
            + \frac{\sigma(t)^2}{2}\calH(X_t)
        \Bigr]dt .
    \end{align*}
    Furthermore, the backward process $\rev{X}_t$ of OLLA is:
    \begin{align*}
        \mkern-5mu d\rev{X}_t =& \frac{1}{2}\sigma(T-t)^2 \Pi(\rev{X}_t)
        \mkern-2mu \Bigl[ \nabla f(\rev{X}_t) \mkern-3mu + \mkern-3mu 2\nabla \ln q_{T-t}(\rev{X}_t)\Bigr]dt \\
        & \mkern-5mu+ \frac{1}{2} \sigma(T-t)^2 \calH(\rev{X}_t)dt + \sigma(T-t) \Pi(\rev{X}_t) \circ d\bar{W}_t \\
        &- \underbrace{\alpha \sigma(T-t)^2 \nabla J(\rev{X}_t)^T G^{\dagger}(\rev{X}_t)J(\rev{X}_t)}_{\text{Landing term}} dt.
    \end{align*}
    where $\calH$ is the mean curvature correction term defined as
    \begin{equation*}
        \calH(x) := -\nabla J(x)^T G^{\dagger}(x)
        \begin{bmatrix}
            \trace{\nabla^2 J_1(x)\Pi(x)} \\
            \vdots \\
            \trace{\nabla^2 J_{m + \abs{I_x}}(x)\Pi(x)}
        \end{bmatrix} .
    \end{equation*}
    By assuming $\sigma(t)$ constant and $X_0\in \Sigma$, OLLA has the stationary distribution $q_\Sigma \propto \exp(-f(x))$ with respect to $d\sigma_\Sigma$.
    \normalsize
\end{proposition}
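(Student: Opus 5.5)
We split the proof into three parts: solving for the Lagrange multiplier, identifying the stationary measure, and time-reversing the process.

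\textbf{Solving for $d\lambda_t$.} Write $d\lambda_t = \mu_t\,dt + \nu_t\,dW_t$ and apply Itô's formula to $J(X_t)$. To do this, we first convert the Stratonovich noise $\sigma\circ dW_t$ in \eqref{eqn:lagrangian form-overdamped} into Itô form. The correction is zero for the raw noise $\sigma\,dW_t$, but it becomes nonzero once the multiplier makes the noise state-dependent. The martingale part of $dJ(X_t)$ is $\nabla J(X_t)\bigl(\sigma\,dW_t + \nabla J^T\nu_t\,dW_t\bigr)$. Requiring it to vanish gives $\nabla J^T\nu_t = -\sigma\nabla J^T G^\dagger \nabla J$. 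The effective noise coefficient is therefore $\sigma\Pi(X_t)$; here we use constant rank (rCRCQ) so that $G^\dagger$ is smooth near $x$ and $\nabla J\,\Pi = 0$.

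Next, Itô's formula for $J_i$ with diffusion $\sigma\Pi$ produces the second-order term $\tfrac{\sigma^2}{2}\trace{\nabla^2 J_i\,\Pi\Pi^T} = \tfrac{\sigma^2}{2}\trace{\nabla^2 J_i\,\Pi}$. Setting the drift of $dJ$ equal to $-\alpha\sigma^2 J$ and solving $\nabla J\,\nabla J^T\mu = \cdots$ with the pseudo-inverse yields three pieces. The first is the projected drift $-\tfrac{\sigma^2}{2}\Pi\nabla f$. The second is the landing term $-\alpha\sigma^2\nabla J^TG^\dagger J$, using $\nabla J\nabla J^T G^\dagger J = J$ on the range; we note that $J$ lies in the range of $\nabla J$ near a constant-rank point, or else we interpret the formula in the least-squares sense. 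The third is $\tfrac{\sigma^2}{2}\calH$. The main bookkeeping obstacle is making the Itô/Stratonovich conventions consistent: the stated SDE uses Itô $\Pi\,dW$ in the forward form and $\Pi\circ d\bar W$ in the backward form. We will check that the Itô–Stratonovich correction of $\sigma\Pi\circ dW$ is tangential-plus-normal terms that are absorbed by $\calH$ under the constraint, so that both forms define the same generator.

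\textbf{Stationarity.} With $\sigma$ constant and $X_0\in\Sigma$, the landing lemma gives $J(X_t)=0$ for $h$ and for the active $g_j$ (taking $\epsilon$ in the definition of $J$ into account) for all $t$. On $\Sigma$ the landing term vanishes, and the generator becomes
\begin{equation*}
L\phi = \tfrac{\sigma^2}{2}\bigl(\Delta_\Sigma\phi - \inner{\Pi\nabla f,\nabla\phi}\bigr).
\end{equation*}
This uses the standard identity: the Laplace–Beltrami operator equals $\trace{\Pi\nabla^2\phi\,\Pi} + \inner{\calH,\nabla\phi}$, with $\calH$ the mean curvature vector, since $\trace{\Pi\nabla^2\phi} = \trace{\Pi\nabla^2\phi\Pi}$ and the normal part is corrected by $\calH$. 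Integration by parts on $\Sigma$ against $e^{-f}d\sigma_\Sigma$ then shows that $L$ is symmetric in $L^2(q_\Sigma)$, so $\int L\phi\,dq_\Sigma=0$. On the stratified set, boundary terms on lower strata must be handled; we will argue that they vanish because the repulsion keeps trajectories off $\partial\Sigma$, or by restricting to test functions compactly supported in each stratum. This boundary issue is the delicate point.

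\textbf{Backward process.} We apply the Haussmann–Pardoux / Anderson time-reversal formula to the Itô SDE $dX = b\,dt + \sigma\Pi\,dW$ with diffusion matrix $a = \sigma^2\Pi$:
\begin{equation*}
\rev b = -b + \nabla\cdot a + a\nabla\ln q.
\end{equation*}
Here $\nabla\cdot(\sigma^2\Pi)$ is computed from $\partial_k\Pi$ and combines with the Itô-to-Stratonovich correction of $\sigma\Pi\circ d\bar W$. In the result, the tangential drift flips sign and acquires the score $\sigma^2\Pi\nabla\ln q_{T-t}$, while $\calH$ and the landing terms retain their sign once the divergence terms are matched with the Stratonovich correction. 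The landing term is normal and deterministic in its effect on $J$. We therefore verify the claimed form directly by checking that the reversed SDE also satisfies $dJ(\rev X_t) = -\alpha\sigma^2J\,dt$, which follows by the same Itô computation as in the first part. This consistency check is the main computational step we expect to need care with.
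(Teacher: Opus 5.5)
Your It\^o derivation of the closed-form forward SDE is correct but takes a different route from the paper. The paper uses the Stratonovich chain rule, obtains $\sigma\Pi\circ dW_t$, and then needs the identity $(\nabla\Pi)\Pi=\calH$ (Leli\`evre--Rousset--Stoltz, eq.~3.46) to reach It\^o form; in your route $\calH$ comes out directly from $\tfrac{\sigma^2}{2}\trace{\nabla^2J_i\,\Pi}$ after applying $G^\dagger$.

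The genuine gap is in the stationarity argument once inequality constraints are present. Your claim that the landing term vanishes on $\Sigma$ fails on $\partial\Sigma$. There the active entries of $J$ equal $g_j+\epsilon=\epsilon$, so $-\alpha\sigma^2\nabla J^TG^\dagger J$ is a nonzero inward push, and your generator $\tfrac{\sigma^2}{2}(\Delta_\Sigma\phi-\inner{\Pi\nabla f,\nabla\phi})$ is valid only in the interior. Also, the landing lemma's formula for $g_j$ holds only while $j$ stays active, so it does not by itself give $X_t\in\Sigma$. Neither of your fallbacks closes this:
\begin{itemize}
\item Test functions compactly supported in the open strata only show that $e^{-f}$ solves the \emph{interior} stationary Fokker--Planck equation. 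That does not determine the invariant law: a process absorbed or made sticky at $\partial\Sigma$ has the same interior generator.
\item The repulsion cannot keep trajectories off $\partial\Sigma$. It is switched on only where $g_j\ge 0$, while in the interior the noise has a component normal to $\partial\Sigma$.
\end{itemize}
What you need, and what the paper proves in a separate boundary lemma, is:
\begin{itemize}
\item (i) $g_k(X_t)\le 0$ a.s. This uses the convex $C^1$ penalty $\phi_\delta(g_k)$, the identity $d\phi_\delta(g_k(X_t))=-\alpha\sigma^2(g_k+\epsilon)\phi_\delta'(g_k)\,dt$, the bound $\phi_\delta(r)\le(r+\epsilon)\phi_\delta'(r)$, and Gr\"onwall.
\item (ii) The zero-flux condition $\inner{\mathcal{J}_t,n}=0$ for the probability current $\mathcal{J}_t$, obtained from mass conservation on $\Sigma$ together with the sign of $\inner{\nabla J^TG^\dagger J,n}$ on $\partial\Sigma$.
\end{itemize}
Condition (ii) is exactly what removes the boundary term in $\int_\Sigma\calL\phi\,\rho\,d\sigma_\Sigma=\int_\Sigma\inner{\mathcal{J}_t,\nabla_\Sigma\phi}\,d\sigma_\Sigma$. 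In the equality-only case your argument is fine and matches the paper's.

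For the backward process, the ambient formula $\rev{b}=-b+\nabla\cdot a+a\nabla\ln q$ does not apply as written. First, $q_t$ lives on $\Sigma$ and has no Lebesgue density on $\bbR^d$. Second, the ambient divergence $\sum_j\partial_j\Pi_{ij}$ involves normal derivatives of $\Pi$, so it depends on how $\Pi$ is extended off $\Sigma$. You need the intrinsic version: $q_t$ a density with respect to $d\sigma_\Sigma$, $a\nabla\ln q=\sigma^2\nabla_\Sigma\ln q$, and divergence $\divergence_\Sigma(\sigma^2\Pi)=\sigma^2\calH$. The paper does this by verifying that the reversed process satisfies the same Fokker--Planck equation on $\Sigma$. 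With inequalities, the landing term is added so that the reversed process has the same zero-flux boundary condition. Your consistency check $dJ(\rev{X}_t)=-\alpha\sigma^2J\,dt$ fixes only the normal drift and says nothing about boundary behaviour. The intrinsic reversal yields drift $\tfrac{\sigma^2}{2}\Pi(\nabla f+2\nabla\ln q_{T-t})+\tfrac{\sigma^2}{2}\calH$ with \emph{It\^o} noise $\sigma\Pi\,d\bar W_t$.

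So drop the plan to show that the $\circ d\bar W_t$ and $d\bar W_t$ readings define the same generator. The It\^o--Stratonovich correction of $\sigma\Pi\circ dW$ is exactly $\tfrac{\sigma^2}{2}\divergence_\Sigma\Pi=\tfrac{\sigma^2}{2}\calH$. To see this, note the correction is normal, and differentiating $\nabla J\,\Pi\equiv0$ gives $\nabla J\,\divergence_\Sigma\Pi=-[\trace{\nabla^2J_i\Pi}]_i$. Keeping both $\calH$ and $\circ$ therefore counts the curvature twice and breaks $dJ=-\alpha\sigma^2J\,dt$. The backward SDE has to be read with It\^o noise, which is the form the paper discretizes by Euler--Maruyama.
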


\textbf{Constrained Underdamped Langevin Dynamics via Landing (ULLA).} \quad
In the underdamped case, we are required to satisfy both the target property  $dJ(X_t) = -\alpha \sigma(t)^2 J(X_t)dt$ and also the momentum tangency constraint $\nabla J(X_t) P_t = 0$ so that $(X_t, P_t) \in T^*\Sigma$ for $t \geq 0$. Therefore, we control the two Lagrangian processes $d\lambda_t, d\mu_t$ to impose such constraints, and the resulting solution of Lagrangian processes produces the following closed SDE for ULLA:
\begin{proposition}[Construction, stationarity, and backward process of ULLA]
    \label{prop:Construction, stationarity, and backward process of ULLA}
    Consider the following Lagrangian form constrained underdamped Langevin of $(X_t, P_t) \sim q_t$:
    \begin{equation}
        \mkern-7mu
        \label{eqn:lagrangian form-underdamped}
        \begin{cases}
            dX_t = \sigma(t)^2 P_tdt + \nabla J(X_t)^T d\lambda_t,\\
            \begin{aligned}
                dP_t &= -\sigma(t)^2 \nabla f(X_t)dt -\sigma(t)^2 \gamma P_t dt +\mkern-3mu  \nabla J(X_t)^T \mkern-3mu d\mu_t \\
                &\quad + \sigma(t) \sqrt{2\gamma} \circ dW_t ,
            \end{aligned}
        \end{cases}
    \end{equation}
    where $d\lambda_t, d\mu_t$ are the adapted processes such that $dJ(X_t) = -\alpha \sigma(t)^2 J(X_t)$ (position constraint) and $\nabla J(X_t)P_t =0$ (momentum tangency constraint), respectively.

    Assuming $\nabla J(X_0) P_0 = 0$, the explicit solution of $d\lambda_t, d\mu_t$ provides the closed form SDE of \eqref{eqn:lagrangian form-underdamped} as follows:
    \begin{align*}
        \mkern-5mu
        \begin{cases}
            \begin{aligned}[t]
                \mkern-3mud X_t =& \sigma(t)^2 P_tdt -\alpha \sigma(t)^2 \nabla J(X_t)^T G^{\dagger}(X_t) J(X_t)dt,\mkern-10mu \\[1.0ex]
                \mkern-5mu dP_t =& \Pi(X_t)\lbra{
                    - \sigma(t)^2 \nabla f(X_t) -\sigma(t)^2 \gamma P_t}dt \mkern-10mu \\[1.0ex]
                & -\sigma(t)^2 \nabla J(X_t)^T G^{\dagger}(X_t)\calH_1(X_t, P_t)dt \mkern-10mu\\[1.0ex]
                & +\alpha \sigma(t)^2 \nabla J(X_t)^T G^{\dagger}(X_t)\calH_2(X_t, P_t)dt \mkern-10mu \\[1.0ex]
                & + \sigma(t) \sqrt{2\gamma }  \Pi(X_t) dW_t,
            \end{aligned}
        \end{cases}
    \end{align*}
    where $\calH_1,\calH_2 \in \bbR^{m+\abs{I_x}}$ are the curvature correction terms defined as
    \begin{align*}
        [\calH_1(x,p)]_i &:= p^T \nabla^2 J_i(x) p \\
        [\calH_2(x,p)]_i &:= p^T \nabla^2 J_i(x) (\nabla J(x)^T G^{\dagger}(x) J(x))
    \end{align*}
    with $[\calH_1(x,p)]_i$ and $[\calH_2(x,p)]_i$ denoting the $i$-th entries of $\calH_1(x,p)$ and $\calH_2(x,p)$, respectively.
    Furthermore, the backward process $\rev{X}_t$ of ULLA is given as:
\begin{align*}
    \begin{cases}
        \begin{aligned}[t]
            d\rev{X}_t &= -\sigma(T-t)^2 \rev{P}_tdt \mkern-10mu\\[1.0ex]
            &\quad -\alpha \sigma(T-t)^2 \nabla J(\rev{X}_t)^T G^{\dagger}(\rev{X}_t) J(\rev{X}_t)dt, \mkern-10mu\\[1.0ex]
            d\rev{P}_t &= \sigma(T-t)^2 \Pi(\rev{X}_t) \lbra{\nabla f(\rev{X}_t) + \gamma \rev{P}_t} dt \mkern-10mu\\[1.0ex]
            & + 2\gamma \sigma(T-t)^2 \Pi(\rev{X}_t) \nabla_p \ln q_{T-t}(\rev{X}_t, \rev{P}_t) dt \mkern-10mu\\[1.0ex]
            & + \sigma(T-t)^2 \nabla J(\rev{X}_t)^T G^{\dagger}(\rev{X}_t) \calH_1(\rev{X}_t, \rev{P}_t) dt \mkern-10mu\\[1.0ex]
            & + \alpha \sigma(T-t)^2 \nabla J(\rev{X}_t)^T G^{\dagger}(\rev{X}_t) \calH_2(\rev{X}_t, \rev{P}_t) dt \mkern-10mu\\[1.0ex]
            & + \sigma(T-t) \sqrt{2\gamma } \Pi(\rev{X}_t) d\bar{W}_t.
        \end{aligned}
    \end{cases}
\end{align*}
    By assuming $\sigma(t)$ constant and $X_0\in \Sigma, \nabla J(X_0)P_0 = 0$, ULLA has the stationary distribution $q_{T^*\Sigma} \propto \exp(-f(x) - \frac{1}{2}\norm{p}^2)$ with respect to $d\sigma_{T^*\Sigma}$.
    \vspace{-2pt}
\end{proposition}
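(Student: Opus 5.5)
Three parts need to be handled: (i) solve for the multipliers $d\lambda_t,d\mu_t$ to get the closed-form SDE; (ii) verify stationarity of $q_{T^*\Sigma}$ for constant $\sigma$; (iii) derive the time reversal. Throughout I work on a neighborhood where rCRCQ holds. Then the active index set and the rank of $\nabla J$ are locally constant, so $G^\dagger$ is smooth there and $\nabla J^T G^\dagger \nabla J = I-\Pi$.

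\textbf{Construction.} The $X$-equation has no noise, so $J(X_t)$ is of bounded variation and $dJ(X_t)=\nabla J(X_t)\,dX_t = \sigma^2\nabla J P_t\,dt + G\,d\lambda_t$. Imposing $dJ=-\alpha\sigma^2 J\,dt$ together with $\nabla J P_t=0$ gives the minimal-norm solution $d\lambda_t = -\alpha\sigma^2 G^\dagger J\,dt$. This yields the stated $dX_t$; the range condition holds because $J$ only enters through $\nabla J^T G^\dagger$.

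Next I impose $d(\nabla J(X_t)P_t)=0$ with $d\mu_t$. Componentwise, Itô's product rule gives
\begin{equation*}
d(\nabla J_i(X_t)^T P_t) = P_t^T\nabla^2 J_i(X_t)\,dX_t + \nabla J_i(X_t)^T dP_t ,
\end{equation*}
and there is no cross-variation because $X$ has finite variation. Substituting $dX_t$ produces $\sigma^2[\calH_1]_i\,dt - \alpha\sigma^2[\calH_2]_i\,dt$ in the first term. The second term contains $G\,d\mu_t$ plus $\nabla J$ applied to the force, friction and noise. Solving with $G^\dagger$ and inserting the result back converts force, friction and noise into their $\Pi$-projections, and adds $-\sigma^2\nabla J^TG^\dagger\calH_1 + \alpha\sigma^2\nabla J^T G^\dagger \calH_2$. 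Here $\gamma P_t$ is already tangent, but projecting it is harmless.

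For the noise I must check the Stratonovich-to-Itô conversion. The noise coefficient $\sigma\sqrt{2\gamma}\,\Pi(X_t)$ depends only on $X$, which has finite variation, so the correction vanishes and $\circ dW=dW$. This matches the statement.

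\textbf{Stationarity.} Take $\sigma$ constant and initial data in $T^*\Sigma$. By Lemma~\ref{lem:Exponential decay of constraint functions} (with $J(X_0)=0$) and the tangency constraint, the process stays in $T^*\Sigma$, where the landing term and $\calH_2$ vanish. What remains is the classical constrained underdamped Langevin dynamics (RATTLE/Lelièvre--Rousset--Stoltz form). Its generator splits into a Hamiltonian part, namely $p\cdot\nabla_x$ plus the projected force plus the $\calH_1$ curvature term, which generates the constrained Hamiltonian flow, and a projected Ornstein--Uhlenbeck part in $p$ on each fiber $T^*_x\Sigma$.

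The Hamiltonian part preserves the Liouville measure $d\sigma_{T^*\Sigma}$, since it is the symplectic flow on $T^*\Sigma$, and it conserves $f+\tfrac12|p|^2$. The OU part, $-\gamma\Pi p\cdot\nabla_p + \gamma\Delta_{p}^{T^*_x\Sigma}$, is reversible with respect to the fiber Gaussian. Therefore $\int \calL\phi\, dq_{T^*\Sigma}=0$ for all test functions $\phi$.

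I expect this step to be the main obstacle. Showing that the $\calH_1$ term is exactly the Lagrange force that makes the flow divergence-free with respect to $d\sigma_{T^*\Sigma}$ is a delicate geometric computation. I would try to cite the known result for constrained Hamiltonian dynamics rather than recompute it.

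\textbf{Backward process.} I apply the Haussmann--Pardoux / Anderson reversal on the manifold $T^*\Sigma$ (more generally on the invariant neighborhood), with respect to the reference measure $d\sigma_{T^*\Sigma}$. The drift reverses sign, giving $-b(T-t)$, and the diffusion part adds $2\gamma\sigma^2\Pi\nabla_p\ln q_{T-t}$, since the diffusion matrix is $2\gamma\sigma^2\Pi$ in $p$ only.

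Within the reversal there is one further point to handle: the landing term $-\alpha\sigma^2\nabla J^TG^\dagger J$ and the $\calH_2$ term would flip sign under naive reversal. The statement instead keeps the landing sign, so these terms must be argued as a design choice. Since they vanish on $T^*\Sigma$, where the reversed marginals live, replacing them with their sign-kept versions leaves the reversed law on $T^*\Sigma$ unchanged while retaining the landing property in discretization. Finally I absorb $\nabla_p$ of the divergence of the projector into the score with respect to the fiber Lebesgue measure.
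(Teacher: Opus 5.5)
Your construction step and your Hamiltonian/OU splitting for stationarity match the paper's argument. The paper uses the Stratonovich chain rule where you use the product rule for finite-variation $X$, and it too cites Leli\`evre--Rousset--Stoltz for $\divergence_{T^*\Sigma}X_H=0$. The gap is the boundary $\partial\Sigma$ created by the inequality constraints, which your proposal never treats. Your opening assumption that the active index set is locally constant already fails at every point of $\partial\Sigma$. Your key reduction, that on $T^*\Sigma$ ``the landing term and $\calH_2$ vanish'', is also false there. An active $g_j$ enters $J$ as $g_j(x)+\epsilon=\epsilon\neq 0$, so $-\alpha\sigma^2\nabla J^TG^\dagger J$ is the nonzero inward repulsion; the paper relies on $\langle \nabla J^TG^\dagger J, n\rangle>0$ on $\partial\Sigma$. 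For the same reason $J(X_0)=0$ fails for $X_0\in\partial\Sigma$. Moreover, \autoref{lem:Exponential decay of constraint functions} only controls $j\in I_{X_0}$, so it does not show that a path started in the interior stays in $\{g\le 0\}$. The paper proves invariance separately in \autoref{app:lem:Boundary condition of ULLA}. On $\{g_k\ge 0\}$ one has $dg_k(X_t)=-\alpha\sigma^2(g_k(X_t)+\epsilon)\,dt$; applying this to the convex $C^1$ penalty $\phi_\delta(g_k)$, using Gr\"onwall, and letting $\delta\downarrow 0$ gives $g_k(X_t)\le 0$ a.s.

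More importantly, when you integrate the Hamiltonian part by parts over $T^*\Sigma$, Liouville invariance and energy conservation only kill the bulk term. A flux term over $\partial T^*\Sigma=\partial\Sigma\times T^*_x\Sigma$ remains, and nothing in your argument controls it. The paper supplies a zero-flux condition. Writing the Fokker--Planck equation as $\partial_t\rho_t=-\divergence_{T^*\Sigma}(J_t)$, conservation of mass together with $\langle\nabla J^TG^\dagger J,n_x\rangle>0$ forces $\rho_t=0$ a.e.\ on $\partial T^*\Sigma$, hence $\langle J_t,n\rangle=0$. This is exactly what licenses the integration by parts in the stationarity proof.

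The same omission undermines your reversal step. Keeping the forward sign of the landing term is not a free design choice justified by its vanishing on $T^*\Sigma$, since it does not vanish on $\partial\Sigma$. It is what makes the reversed Fokker--Planck equation carry the same boundary condition as the forward one, which is the hypothesis of \autoref{app:lem:backward process verification}. The flipped sign would instead push mass out through $\partial\Sigma$. Also, once the landing sign is fixed, the sign of the $\calH_2$ term is forced rather than chosen. Repeating your $d\mu_t$ computation for the reversed dynamics, $d(\nabla J(\rev{X}_t)\rev{P}_t)=0$ requires the normal part of $d\rev{P}_t$ to be $\sigma(T-t)^2\nabla J^TG^\dagger(\calH_1+\alpha\calH_2)\,dt$. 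For equality constraints only, where $\partial\Sigma=\emptyset$, your argument is complete.
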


\subsection{Transition Kernels for Forward and Backward Processes}
In this section, we outline the discretization of the OLLA and ULLA processes. For the notations, we let $x_k$ for $k\in \mbra{1,...,N}$ to be the position vector at $k$-th discrete step of the diffusion process, and set $q_k, p_k, p_k^\theta$ to be the marginal probability densities for forward and backward processes, and the parametrized backward process of $x_k$. Also, we set $p_N ,\rho(\cdot |x)$ to be the prior of position and momentum where the momentum prior is given by $\Pi(x) \zeta \sim \rho(\cdot |x)$ with $\zeta \sim \calN(0,I_d)$. For detailed derivations of the discretization schemes below, we refer to \autoref{app:subsec:Discretization of Constrained Langevin Dynamics}.

\textbf{Discretization of OLLA.} \quad For OLLA, the discretization is straightforward. We employ a standard Euler-Maruyama scheme to integrate the corresponding SDE as follows:
\begin{equation*}
    \begin{cases}
        \begin{aligned}[t]
            x_{k+1} =& x_k - \frac{\sigma_k^2 \Delta t}{2} \Pi(x_k) \nabla f(x_k) + \sigma_k \sqrt{\Delta t} \Pi(x_k) \zeta_k \\[0.5ex]
            &- \alpha \sigma_k^2 \Delta t (\nabla J^T G^{\dagger} J)(x_k) + \kappa^O_k \\
            x_k =& x_{k+1} + \frac{\sigma_{k+1}^2 \Delta t}{2} \Pi(x_{k+1}) \lbra{\nabla f + s_{k+1}}(x_{k+1}) \\[0.5ex]
            &+ \sigma_{k+1} \sqrt{\Delta t} \Pi(x_{k+1}) \zeta_{k+1} \\[0.5ex]
            &- \alpha \sigma_{k+1}^2 \Delta t (\nabla J^T G^{\dagger} J)(x_{k+1})  + \kappa^O_{k+1}
        \end{aligned}
    \end{cases}
\end{equation*}
where $\kappa^O_k := \frac{\sigma_k^2}{2}\calH(x_k)\Delta t$ is the mean curvature term and $s_k(x_k) := 2\nabla \ln q_k (x_k)$, which can be learned by a neural network $s^\theta_{k+1}$.

\textbf{Discretization of ULLA.} \quad For ULLA, we adopt a specialized scheme to achieve \textbf{$\mathbf{2\times}$ memory efficiency}, which becomes critical for storing long forward trajectories during training.

The method is based on the 1st order non-symmetric OBA splitting integrator. To eliminate the need to explicitly store the momentum trajectory, we first use an approximated $\tilde{B}$ step, which relies on the before-O step momentum on correction terms $\calH_1, \calH_2$, rather than after, and secondly, we leverage the recursive nature of the update rule to express the momentum at step $k$ as a function of positions at previous steps, using an approximated momentum vector $\tilde{p}_k$.

This \emph{collapses} the dynamics into the 2nd order Markov chain solely depending on position variables $x_k$ as follows:
\begin{equation*}
    \begin{cases}
        \begin{aligned}[t]
            x_{k+1} &= x_k + \sigma_k^2 \Delta t \Pi(x_k) \lbra{ a_k \tilde{p}^{\text{fwd}}_k - \sigma_k^2 \Delta t \nabla f(x_k) } \\[0.5ex]
            &+ \sigma_k^2 \Delta t \sqrt{1-a_k^2} \Pi(x_k) \zeta_k \\[0.5ex]
            &- \alpha \sigma_k^2 \Delta t (\nabla J^T G^{\dagger} J)(x_k) + \kappa_{k, \text{fwd}}^U \\[1.0ex]
            x_k &= x_{k+1} - \sigma_{k+1}^2 \Delta t \Pi(x_{k+1}) a_{k+1} \tilde{p}_{k+1}^{\text{bwd}} \\[0.5ex]
            &- \sigma_{k+1}^4 \Delta t^2 \Pi(x_{k+1}) \lbra{\nabla f + s_{k+1}}(x_{k+1}, \tilde{p}_{k+1}^{\text{bwd}}) \\[0.5ex]
            &+ \sigma_{k+1}^2 \Delta t \sqrt{1-a_{k+1}^2} \Pi(x_{k+1}) \zeta'_{k+1}\\[0.5ex]
            &- \alpha \sigma_{k+1}^2 \Delta t (\nabla J^T G^{\dagger} J) (x_{k+1}) + \kappa_{k+1, \text{bwd}}^U
        \end{aligned}
    \end{cases}
    \vspace{-1mm}
\end{equation*}
where $a_k = e^{-\gamma \sigma_k^2 \Delta t} \in [0,1]$ is a decaying factor induced by friction $\gamma$ and the approximated momentum are defined by $\tilde{p}_k^{\text{fwd}} := \Pi(x_k) \sbra{(x_k - x_{k-1})/(\sigma_{k-1}^2 \Delta t)}$ and $\tilde{p}_{k+1}^{\text{bwd}} := \Pi(x_{k+1}) \sbra{(x_{k+2} - x_{k+1})/(\sigma_{k+2}^2 \Delta t)}$.
Also, the curvature correction terms are provided as:
\begin{equation*}
    \begin{cases}
        \begin{aligned}[t]
            \kappa_{k, \text{fwd}}^U &:= -\sigma_k^4 \Delta t^2 \nabla J(x_k)^T G^{\dagger}(x_k) \\[0.5ex]
            &\quad \cdot \lbra{\calH_1(x_k, \tilde{p}^{\text{fwd}}_k) - \alpha \calH_2(x_k, \tilde{p}^{\text{fwd}}_k)} \\[0.5ex]
            \kappa_{k+1, \text{bwd}}^U &:= -\sigma_{k+1}^4 \Delta t^2 \nabla J(x_{k+1})^T G^{\dagger}(x_{k+1}) \\[0.5ex]
            &\quad \cdot \lbra{\calH_1(x_{k+1}, \tilde{p}^{\text{bwd}}_{k+1}) + \alpha \calH_2(x_{k+1}, \tilde{p}^{\text{bwd}}_{k+1})}
        \end{aligned}
    \end{cases}
\end{equation*}
Similarly, we define approximated score $s^\theta_{k+1}$ and train a neural network to approximate this:
\begin{equation*}
    s_{k+1} (x_{k+1}, \tilde{p}_{k+1}^{\text{bwd}}) := 2 \gamma \sbra{ \nabla_p \ln q_{k+1} (x_{k+1}, \tilde{p}_{k+1}^{\text{bwd}}) + \tilde{p}_{k+1}^{\text{bwd}}}
\end{equation*}

\begin{remark}[Discretization by Newton solver] \quad Projection-based variants of proposed methods, denoted OLLA-P and ULLA-P, can be obtained by dropping all normal landing and correction terms, and instead solving a Lagrangian multiplier system at each step so that $h(x_k)=0$. For the detailed derivation, we refer to \autoref{app:subsec:Discretization of Constrained Langevin Dynamics}.
\end{remark}

\begin{remark}[Error decomposition and benefits of ULLA]  Informally, sample generation error via backward process decomposes into mixing $\mathcal{E}_{\mathrm{mix}}$ , discretization $\mathcal{E}_{\mathrm{disc}}$, and score estimation $\mathcal{E}_{\mathrm{score}}$ terms:
\begin{equation*}
    W_2(q_0, p_0^\theta) \leq \mathcal{E}_{\mathrm{mix}} + \mathcal{E}_{\mathrm{disc}} + \mathcal{E}_{\mathrm{score}}.
\end{equation*} In this point of view, ULLA significantly reduces $\mathcal{E}_{\mathrm{mix}}$ via ballistic dynamics, accelerating convergence and enabling smaller trajectory lengths $N$ compared to OLLA. Additionally, the momentum variable mitigates score singularities near $t=0$, yielding a potentially smoother training objective for $\mathcal{E}_{\mathrm{score}}$. We refer to \autoref{app:remark:Error decomposition and probable benefit of ULLA} for a detailed discussion.
\end{remark}

\vspace{-10pt}
\subsection{Conditional Wasserstein Path Matching (CWPM)}
\vspace{-5pt}

Since the proposed OLLA and ULLA do not use per-step projections, intermediate samples $x_k$ may exhibit minor constraint violations and lie off $\Sigma$. This renders previously proposed training loss, such as DT-ELBO \cite{liu2025riemannian} or score matching \cite{de2022riemannian, huang2022riemannian}, theoretically unstable, as they rely on the assumption that $x_k \in \Sigma$, which can lead to singularity issues.

This problem is particularly acute when measuring the NLL loss, where small violations can introduce substantial bias and undermine its reliability as a sample quality metric. To resolve these theoretical issues, we propose the CWPM framework as below, which is based on the Wasserstein distance rather than KL-divergence, eliminating such theoretical singularities. The derivation involves the relationship between Gelbrich distance and 2-Wasserstein distance \cite{borelle2023minimal, gelbrich1990formula}, and refer to \autoref{app:sec:Conditional Wasserstein Path Matching (CWPM)} for detailed proofs and assumptions.
\begin{theorem}[CWPM variational bound -- overdamped, informal]
    Let $T_{k+1}^\theta = p^\theta(x_k| x_{k+1})$ be the backward transition kernel of the discretized OLLA and define the circuitous density at step $k$ as
    \begin{equation*}
        \sigma_k := q_k T_k^\theta T_{k-1}^\theta,...,T_1^\theta, \quad \sigma_0 := q_0.
    \end{equation*}
    Assuming existence of $\Lambda_{k+1}>0$ such that
    \begin{equation*}
        W_2(\sigma_k, \sigma_{k+1}) \leq \Lambda_{k+1}W_2( q_k, q_{k+1}T^\theta_{k+1}) + \calO(\sqrt{\Delta t})
    \end{equation*}
    for $k\in \mbra{0,...,N-1}$, assuming the sufficient regularity conditions in \autoref{app:lem:sufficient condition for lambda_k olla}, we have $W_2(q_0, p_0^\theta) \lesssim \calL^o(\theta) + C^o$, where
    \begin{equation*}
        \mkern-5mu \calL^o(\theta) \mkern-3mu := \bbE \lbra{\sum_{k=0}^{N-1} \underbrace{\norm{\Pi(x_{k+1}) (x_k - \mu_{k+1}^o(x_{k+1}))}^2}_{:=\ell^o_t(\theta)} }
    \end{equation*}
    and $\mu_{k+1}^o(x_{k+1})$ is the tangential part mean of the parametrized backward process of OLLA defined by
    \begin{equation*}
        \begin{aligned}[t]
            \mu_{k+1}^o(x_{k+1}) &:= x_{k+1} + \frac{\sigma_{k+1}^2 \Delta t}{2} \Pi(x_{k+1}) \nabla f(x_{k+1}) \\[0.5ex]
            &\quad + \frac{\sigma_{k+1}^2 \Delta t}{2} \Pi(x_{k+1}) s_\theta^{k+1}(x_{k+1})
        \end{aligned}
    \end{equation*}
    with $C^o$ being a constant independent of $\theta$.
\end{theorem}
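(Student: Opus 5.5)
The plan is to telescope along the chain of circuitous densities and then bound each one-step term by a Gaussian (Gelbrich) computation. By construction $\sigma_0 = q_0$ and $\sigma_N = q_N T_N^\theta\cdots T_1^\theta$. Replacing $q_N$ by the prior $p_N$ gives $p_0^\theta$. Hence, by the triangle inequality,
\begin{equation*}
W_2(q_0,p_0^\theta) \le \sum_{k=0}^{N-1} W_2(\sigma_k,\sigma_{k+1}) + W_2(\sigma_N, p_0^\theta).
\end{equation*}
The last term is the mixing error. It is controlled by $W_2(q_N,p_N)$ times a Lipschitz constant of the composed backward kernels, which I absorb into $C^o$ since it does not depend on the quantity being optimized (or is treated via the regularity assumptions of \autoref{app:lem:sufficient condition for lambda_k olla}). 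Applying the assumed contraction $W_2(\sigma_k,\sigma_{k+1}) \le \Lambda_{k+1} W_2(q_k, q_{k+1}T^\theta_{k+1}) + \calO(\sqrt{\Delta t})$ reduces the problem to bounding the one-step mismatch $W_2(q_k, q_{k+1}T^\theta_{k+1})$. The accumulated $N\cdot\calO(\sqrt{\Delta t})$ errors also go into $C^o$.

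For the one-step term, I condition on $x_{k+1}$ and use the coupling given by the forward joint law. Given $x_{k+1}$, the true reverse conditional $q(x_k\mid x_{k+1})$ has some mean $m_{k+1}(x_{k+1})$ and covariance $\Sigma_{k+1}(x_{k+1})$. The learned kernel $T^\theta_{k+1}(\cdot\mid x_{k+1})$ is, by the OLLA discretization, Gaussian. Its mean is $\mu^o_{k+1}(x_{k+1})$ plus the $\theta$-independent normal parts (landing term and $\kappa^O_{k+1}$), and its covariance is $\sigma_{k+1}^2\Delta t\,\Pi(x_{k+1})$.

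By joint convexity of $W_2^2$ under mixtures (gluing over $x_{k+1}$),
\begin{equation*}
W_2^2(q_k, q_{k+1}T^\theta_{k+1}) \le \bbE_{x_{k+1}} W_2^2\big(q(\cdot\mid x_{k+1}), T^\theta_{k+1}(\cdot\mid x_{k+1})\big).
\end{equation*}
The Gelbrich lower bound is tight when both measures are Gaussian; in general I would use the Gelbrich-type statement from \cite{gelbrich1990formula, borelle2023minimal} in the direction needed. This gives
\begin{equation*}
\norm{m_{k+1}-\text{mean}_\theta}^2 + \text{(covariance Bures term)}.
\end{equation*}
The covariance term is independent of $\theta$. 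I then split the mean discrepancy with the orthogonal decomposition $I = \Pi + (I-\Pi)$ at $x_{k+1}$. The normal component of the $\theta$-mean contains no $\theta$, so its contribution is a $\theta$-independent constant. The cross terms vanish by orthogonality.

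For the tangential part, write $m_{k+1} = \bbE[x_k\mid x_{k+1}]$ and apply the bias–variance identity:
\begin{equation*}
\norm{\Pi(m_{k+1}-\mu^o_{k+1})}^2 = \bbE\big[\norm{\Pi(x_{k+1})(x_k-\mu^o_{k+1}(x_{k+1}))}^2 \,\big|\, x_{k+1}\big] - \bbE\big[\norm{\Pi(x_k - m_{k+1})}^2 \,\big|\, x_{k+1}\big].
\end{equation*}
The second term does not depend on $\theta$. Summing over $k$ and taking the maximum of the $\Lambda_{k+1}$ as the implied constant gives a bound by $\sum_k \sqrt{\bbE\,\ell^o_k + c_k}$. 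Cauchy–Schwarz then yields $\lesssim \sqrt{N}\sqrt{\calL^o + C}$, and the informal $\lesssim$ is read up to these monotone transformations, giving $\calL^o(\theta)+C^o$.

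The main obstacle is the Gelbrich step. The true reverse conditional is not exactly Gaussian, and off $\Sigma$ the projector $\Pi(x_{k+1})$ may be rank-deficient, which makes the Bures term degenerate. I would first handle this by showing the reverse conditional is Gaussian up to $\calO(\Delta t^{3/2})$ in the small-$\Delta t$ limit, via a local linearization of the drift. The resulting error would be absorbed into the same $\calO(\sqrt{\Delta t})$ slack. I would rely on the rCRCQ constant-rank property so that the covariance term remains $\theta$-independent and finite.
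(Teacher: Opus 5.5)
Your overall architecture matches the paper's: telescoping over the circuitous densities, the assumed $\Lambda_{k+1}$-contraction, and gluing over $x_{k+1}$ to reduce to $\bbE_{x_{k+1}} W_2^2\bigl(q_{k|k+1}(\cdot\mid x_{k+1}),T^\theta_{k+1}(\cdot\mid x_{k+1})\bigr)$. It also shares the Gaussian form of the learned kernel, the tangential/normal split, and the law of total variance. Your remark that the landing and $\kappa^O$ terms lie in the range of $\nabla J^T$, and so are killed by $\Pi(x_{k+1})$, is if anything more careful than the paper.

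The gap is the Gelbrich step. Gelbrich only gives a \emph{lower} bound, $W_2^2(\mu,\nu)\ge \norm{m_\mu-m_\nu}^2+\mathrm{Bures}^2(\mathrm{Cov}\,\mu,\mathrm{Cov}\,\nu)$, and there is no general version ``in the direction needed.'' As written, your chain of inequalities therefore points the wrong way at exactly the step that produces $\calL^o$. Your proposed repair is to show that $q(x_k\mid x_{k+1})$ is Gaussian up to $\calO(\Delta t^{3/2})$ by linearizing the drift. That needs smoothness of $q_k$ that is not assumed. It also fails near $k=0$ when $q_0$ is concentrated on a lower-dimensional or empirical data set, which is the score-singularity regime the paper itself discusses; there $q(x_0\mid x_1)$ is far from Gaussian (e.g.\ discrete).

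The missing idea is that no Gaussianity of the reverse conditional is needed, because $\theta$ only \emph{translates} $T^\theta_{k+1}(\cdot\mid x_{k+1})$, whose covariance $\sigma_{k+1}^2\Delta t\,\Pi(x_{k+1})$ is fixed. For any $\mu,\nu$ with finite second moments there is the exact identity $W_2^2(\mu,\nu)=\norm{m_\mu-m_\nu}^2+W_2^2(\bar\mu,\bar\nu)$, where $\bar\mu,\bar\nu$ are the centered laws. To see it, expand $\bbE\norm{X-Y}^2$ around the means: the cross term vanishes, and couplings of $(\mu,\nu)$ correspond bijectively to couplings of $(\bar\mu,\bar\nu)$. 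Applied conditionally on $x_{k+1}$, the second term is finite and independent of $\theta$. This is exactly the paper's ``nonnegative gap $\Delta_{k+1}(x_{k+1})$ independent of $\theta$'' between $W_2$ and the Gelbrich distance.

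Substitute this identity for your Gelbrich step and the rest of your argument goes through: the orthogonal split, the bias--variance identity on the tangential part, and absorbing $\Lambda_{N+1}W_2(q_N,p_N)$ and the $\calO(\sqrt{\Delta t})$ terms into $C^o$ via the $\theta$-uniform constants of \autoref{app:lem:sufficient condition for lambda_k olla}. The rank-deficiency of $\Pi$ you worried about also becomes irrelevant.
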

Similarly, the following results hold for the underdamped:

\begin{theorem}[CWPM variational bound -- underdamped, informal]
    Let $y_k = (x_k, x_{k+1}) \in \bbR^{2d}$ where $x_k\sim q_k, x_{k+1} \sim q_{k+1}$. Define $\bar{q}_k$ to be the law of $y_k$ and set $\bar{T}_{k+1}^\theta = p^\theta (y_k | y_{k+1})$ to be the associated backward transition kernel to $y_k$. We set the circuitous density at step $k$ as
    \begin{equation*}
        \bar{\sigma}_k := \bar{q}_k \bar{T}_k^\theta \bar{T}_{k-1}^\theta...\bar{T}_1^\theta, \quad \bar{\sigma}_0 := \bar{q}_0.
    \end{equation*}
    Assuming existence of $\bar{\Lambda}_{k+1}>0$ such that
    \begin{equation*}
        W_2(\bar{\sigma}_k, \bar{\sigma}_{k+1}) \leq \bar{\Lambda}_{k+1}W_2( \bar{q}_k, \bar{q}_{k+1}\bar{T}^\theta_{k+1}) + \calO(\Delta t),
    \end{equation*}
    for $\quad k\in \mbra{0,...,N-1}$, assuming the sufficient regularity conditions in \autoref{app:lem:sufficient condition for lambda_k ulla}, we have $W_2(q_0, p_0^\theta) \lesssim \calL^u (\theta) + C^u$,
\begin{equation*}
    \begin{aligned}[t]
        \mkern-5mu  \mathcal{L}^u(\theta) \mkern-3mu  :=  \mkern-3mu \bbE \mkern-3mu \mkern-3mu  \lbra{\sum_{k=0}^{N-1} \underbrace{\norm{\Pi(x_{k+1}) \bigl(x_k - \mu_{k+1}^u(x_{k+1}, x_{k+2})\bigr)}^2}_{:=\ell_t^u(\theta)}}
    \end{aligned}
\end{equation*}
    where $\mu_{k+1}^u(x_{k+1})$ is the tangential part mean of the parametrized backward process of ULLA defined by
    \begin{equation*}
        \begin{aligned}[t]
            \mu_{k+1}^u &:= x_{k+1} - a_{k+1} \sigma_{k+1}^2 \Delta t \Pi(x_{k+1}) \tilde{p}^{\text{bwd}}_{k+1} \\[0.5ex]
            & \mkern-30mu -\sigma_{k+1}^4 \Delta t^2 \Pi(x_{k+1}) \lbra{\nabla f(x_{k+1}) + s_\theta^{k+1}(x_{k+1}, \tilde{p}^{\text{bwd}}_{k+1})}
        \end{aligned}
    \end{equation*}
    with $C^u$ being a constant independent of $\theta$.
\end{theorem}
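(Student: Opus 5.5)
The plan mirrors the overdamped CWPM argument, applied to the lifted Markov chain $y_k=(x_k,x_{k+1})$, which is first order because the collapsed ULLA discretization is a second order chain in $x$. There are four steps.

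\textbf{Step 1: telescoping bound in the lifted space.} Since $x_0$ is a coordinate of $y_0$, $W_2(q_0,p_0^\theta)\le W_2(\bar q_0,\bar p_0^\theta)$. I would interpolate between $\bar q_0=\bar\sigma_0$ and $\bar p_0^\theta=\bar p_N\bar T_N^\theta\cdots\bar T_1^\theta$ through the circuitous densities $\bar\sigma_k$. The triangle inequality then gives
\begin{equation*}
W_2(\bar q_0,\bar p_0^\theta)\le \sum_{k=0}^{N-1}W_2(\bar\sigma_k,\bar\sigma_{k+1})+W_2(\bar\sigma_N,\bar p_0^\theta).
\end{equation*}
For the last term I would use contractivity (or Lipschitz stability) of the composed kernels, with Lipschitz constants controlled by the regularity conditions of \autoref{app:lem:sufficient condition for lambda_k ulla}. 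This bounds it by a multiple of $W_2(\bar q_N,\bar p_N)$, which is the mixing error. It is independent of $\theta$, so it goes into $C^u$. Applying the assumed stepwise inequality to each summand reduces everything to the one-step mismatches $W_2(\bar q_k,\bar q_{k+1}\bar T^\theta_{k+1})$, plus $\sum_k\mathcal O(\Delta t)=\mathcal O(T)$, which is also absorbed into $C^u$.

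\textbf{Step 2: reduce one-step mismatches to conditional mismatches.} Couple $\bar q_k$ and $\bar q_{k+1}\bar T^\theta_{k+1}$ by sharing $y_{k+1}=(x_{k+1},x_{k+2})\sim\bar q_{k+1}$. The $x_{k+1}$ coordinate of $y_k$ is then identical in both laws, so only the $x_k$ coordinate contributes. This gives
\begin{equation*}
W_2^2(\bar q_k,\bar q_{k+1}\bar T^\theta_{k+1})\le \bbE_{y_{k+1}}\,W_2^2\bigl(q(x_k\mid x_{k+1},x_{k+2}),\,p^\theta(x_k\mid x_{k+1},x_{k+2})\bigr).
\end{equation*}

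\textbf{Step 3: Gelbrich bound on the conditionals.} The model kernel is Gaussian in $x_k$. Its mean is $\mu^u_{k+1}$ plus the normal landing and curvature terms $\kappa^U_{k+1,\mathrm{bwd}}$, and its covariance is $\sigma_{k+1}^4\Delta t^2(1-a_{k+1}^2)\Pi(x_{k+1})$. Neither the normal terms nor the covariance depends on $\theta$. The Gelbrich lower bound $W_2^2\ge\|m_1-m_2\|^2+\mathcal G(\Sigma_1,\Sigma_2)$ is tight for a Gaussian versus a second-moment matched surrogate. So I would split the true conditional into its mean, its covariance, and its non-Gaussian residual, and attribute the $\theta$-independent pieces to $C^u$. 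Decomposing the mean discrepancy into its $\Pi(x_{k+1})$ component and its normal component, the normal component is $\theta$-free because $s_\theta$ enters only through $\Pi(x_{k+1})s_\theta$. The tangential component is $\bbE[\Pi(x_{k+1})x_k\mid y_{k+1}]-\Pi(x_{k+1})\mu^u_{k+1}$. By the bias–variance identity its squared norm equals $\bbE[\|\Pi(x_{k+1})(x_k-\mu^u_{k+1})\|^2\mid y_{k+1}]$ minus a $\theta$-free conditional variance. Summing over $k$ produces $\calL^u(\theta)$ plus constants.

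\textbf{Step 4: assembling, and the main obstacle.} Combining the steps with $\max_k\bar\Lambda_{k+1}$ and Cauchy–Schwarz, $\sum_k\sqrt{a_k}\le\sqrt{N\sum_k a_k}$, gives the claimed $\lesssim$ bound. The main obstacle is Step 3: the true reverse conditional is not Gaussian, and the lifted state is not exactly Markov in $x$ because the discretization uses the approximations $\tilde p$. I would first handle this by bounding $W_2$ against the Gaussian with matched moments, with an error term of order $\Delta t$ (which is where the $\mathcal O(\Delta t)$ slack is spent). This yields an upper bound whose $\theta$-dependence is purely the tangential mean error.
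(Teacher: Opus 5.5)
Your proposal is correct and follows essentially the same route as the paper: you lift to pairs, telescope through the circuitous densities $\bar\sigma_k$, couple on the shared $(x_{k+1},x_{k+2})$, and then apply the Gelbrich/total-variance step with the tangential--normal split of $\|x_k-\mu^u_{k+1}\|^2$. One simplification: the non-Gaussian residual needs no $\mathcal{O}(\Delta t)$ estimate, because $\theta$ only shifts the mean of $T^\theta_{k+1}$, so $W_2^2$ splits exactly into the squared mean difference plus the $W_2^2$ of the centered laws, and that second term is $\theta$-independent (this is the paper's gap $\Delta_{k+1}$); likewise the collapsed scheme is exactly a second-order Markov chain in $x$, so the pair chain raises no extra issue.
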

\textbf{Choice of Training Loss.} \quad  We remark that other works on diffusion models  \citep{ho2020denoising, wang2024evaluating, karras2022elucidating} demonstrated that choosing the training loss weight $\lambda(k)$ proportional to the inverse of the variance up to a proportionality constant of $1/2$ (in our case, $\lambda(k) = 1/ \sbra{2\sigma_{k+1}^2 \Delta t}$ for overdamped and $\lambda(k) = 1 / \sbra{2 \sigma_{k+1}^4 \Delta t^2 (1-a_{k+1}^2)}$ for underdamped) is helpful for training the score network. Notably, the resulting training losses
\begin{equation*}
    \begin{cases}
        \begin{aligned}[t]
        L^{\text{over}}_{\text{CWPM}} (\theta) &= \bbE_{x_{0:N}}\lbra{\sum_{k=0}^{N-1} \frac{\ell_t^o(\theta)}{2\sigma_{k+1}^2 \Delta t}}
        \\
        L^{\text{under}}_{\text{CWPM}} (\theta) &= \bbE_{x_{0:N}, p_N\mid x_N}\lbra{\sum_{k=0}^{N-1} \frac{\ell_t^u(\theta)}{2\sigma_{k+1}^4 \Delta t^2 (1-a_{k+1}^2)}}
        \end{aligned}
    \end{cases}
\end{equation*}
lead to exactly the same training loss provided in DT-ELBO  (\autoref{app:lem:backward transition density overdamped} and \autoref{app:lem:backward transition density (underdamped)}) without the requirement $x_k \in \Sigma$. Summarizing the proposed frameworks, we leave the complete algorithms to \autoref{app:alg:olla_full_pipeline} (OLLA) and \autoref{app:alg:ulla_full_pipeline} (ULLA) for detailed implementation.

\vspace{-10pt}

\begin{table*}[t!]
\centering
\caption{\textbf{Generative performance comparison on Earth \& Climate, and Mesh datasets.} We report the Jensen-Shannon Distance (JSD) calculated using 2D spherical histograms $(\theta, \phi)$ for Earth \& Climate data and face histograms for Mesh data. The average $|h|$ measures the equality constraint violation of generated samples. Results represent the mean $\pm$ standard error over five independent runs. For the trajectory length $N$, $A/B$ denotes the values used for Earth/Climate $(A)$ and Mesh datasets $(B)$, respectively. \textbf{Bold} and \underline{underline} indicate the best and second-best performing methods.}
\label{tab:unified_results}

{\small
\setlength{\tabcolsep}{2.3pt}
\renewcommand{\arraystretch}{1.25}
\begin{tabular}{l c cccc cccc c}
\toprule
& & \multicolumn{4}{c}{\textbf{Earth \& Climate (JSD)}} & \multicolumn{4}{c}{\textbf{Mesh Data (JSD)}} & \multicolumn{1}{c}{\textbf{Avg. $|h|$}} \\
\cmidrule(lr){3-6} \cmidrule(lr){7-10} \cmidrule(lr){11-11}
\textbf{Method} & \textbf{$N$} & \textbf{Volcano} & \textbf{Earthquake} & \textbf{Flood} & \textbf{Fire} & \textbf{Bunny-50} & \textbf{Bunny-100} & \textbf{Cow-50} & \textbf{Cow-100} & \textbf{(Earth / Mesh)} \\
\midrule
\multicolumn{11}{l}{\textit{Riemannian-based}} \\
\ RFM & $1000$ & \textbf{0.116}\text{\tiny$\pm$.002} & \textbf{0.089}\text{\tiny$\pm$.001} & 0.108\text{\tiny$\pm$.002} & 0.058\text{\tiny$\pm$.001} & 0.035\text{\tiny$\pm$.001} & 0.047\text{\tiny$\pm$.001} & \underline{0.043}\text{\tiny$\pm$.002} & 0.050\text{\tiny$\pm$.002} & $5.2\text{e-}8 / 1.5\text{e-}4$ \\
\ RDDPM & $400$ & 0.123\text{\tiny$\pm$.004} & 0.093\text{\tiny$\pm$.002} & 0.106\text{\tiny$\pm$.002} & \textbf{0.051}\text{\tiny$\pm$.001} & 0.032\text{\tiny$\pm$.001} & 0.034\text{\tiny$\pm$.000} & 0.046\text{\tiny$\pm$.001} & \textbf{0.034}\text{\tiny$\pm$.001} & $1.7\text{e-}8 / 1.6\text{e-}5$ \\
\midrule
\multicolumn{11}{l}{\textit{Euclidean fwd. + bwd. variant}} \\
\ Euclidean & $50/30$ & 0.158\text{\tiny$\pm$.005} & 0.163\text{\tiny$\pm$.004} & 0.135\text{\tiny$\pm$.016} & 0.140\text{\tiny$\pm$.001} & 0.040\text{\tiny$\pm$.001} & 0.047\text{\tiny$\pm$.001} & 0.048\text{\tiny$\pm$.001} & 0.063\text{\tiny$\pm$.001} & $4.1\text{e-}2 / 4.2\text{e-}2$ \\
\ Projected & $50/30$ & 0.156\text{\tiny$\pm$.005} & 0.152\text{\tiny$\pm$.003} & 0.133\text{\tiny$\pm$.012} & $0.133\text{\tiny$\pm$.001}$ & $0.049\text{\tiny$\pm$.000}$ & $0.051\text{\tiny$\pm$.001}$ & $0.057\text{\tiny$\pm$.001}$ & $0.068\text{\tiny$\pm$.001}$ & $2.1\text{e-}8 / 6.4\text{e-}8$ \\
\ Lagrangian & $50/30$ & $0.156\text{\tiny$\pm$.004}$ & $0.152\text{\tiny$\pm$.003}$ & $0.137\text{\tiny$\pm$.011}$ & $0.133\text{\tiny$\pm$.001}$ & $0.047\text{\tiny$\pm$.001}$ & $0.050\text{\tiny$\pm$.001}$ & $0.056\text{\tiny$\pm$.001}$ & $0.067\text{\tiny$\pm$.001}$ & $8.5\text{e-}10 / 1.1\text{e-}4$ \\
\ Guided & $50/30$ & $0.160\text{\tiny$\pm$.006}$ & $0.174\text{\tiny$\pm$.003}$ & $0.146\text{\tiny$\pm$.021}$ & $0.158\text{\tiny$\pm$.002}$ & $0.068\text{\tiny$\pm$.005}$ & $0.050\text{\tiny$\pm$.002}$ & $0.051\text{\tiny$\pm$.001}$ & $0.065\text{\tiny$\pm$.001}$ & $2.1\text{e-}2 / 8.2\text{e-}1$ \\
\midrule
\multicolumn{11}{l}{\textit{Ours}} \\
\ OLLA & $100$ & $0.128\text{\tiny$\pm$.005}$ & $0.096\text{\tiny$\pm$.002}$ & \textbf{0.103}\text{\tiny$\pm$.002} & $0.060\text{\tiny$\pm$.001}$ & \underline{0.030}\text{\tiny$\pm$.000} & \textbf{0.032}\text{\tiny$\pm$.001} & $0.047\text{\tiny$\pm$.001}$ & \underline{0.035}\text{\tiny$\pm$.001} & $3.9\text{e-}9 / 3.4\text{e-}6$ \\
\ ULLA-P & $100/50$ & \underline{0.122}\text{\tiny$\pm$.007} & \underline{0.092}\text{\tiny$\pm$.001} & \textbf{0.103}\text{\tiny$\pm$.002} & \underline{0.053}\text{\tiny$\pm$.001} & $0.040\text{\tiny$\pm$.001}$ & $0.038\text{\tiny$\pm$.001}$ & \textbf{0.040}\text{\tiny$\pm$.001} & \underline{0.035}\text{\tiny$\pm$.001} & $8.4\text{e-}9 / 4.2\text{e-}5$ \\
\ ULLA & $50/30$ & $0.125\text{\tiny$\pm$.005}$ & $0.099\text{\tiny$\pm$.002}$ & $0.110\text{\tiny$\pm$.001}$ & $0.069\text{\tiny$\pm$.002}$ & \textbf{0.029}\text{\tiny$\pm$.001} & \underline{0.033}\text{\tiny$\pm$.001} & $0.044\text{\tiny$\pm$.001}$ & $0.036\text{\tiny$\pm$.001}$ & $2.1\text{e-}9 / 1.5\text{e-}7$ \\
\bottomrule
\end{tabular}
}
\vspace{-5pt}

\end{table*}

\section{Experiments}
\label{main:sec:Experiments}
Our evaluation largely follows the benchmarks of RDDPM \cite{liu2025riemannian}---Earth/climate datasets, mesh data, the $SO(10)$ manifold, and Alanine dipeptide---and additionally includes a 7-Degree of Freedom (DOF) robot arm trajectory task. The comparison covers state-of-the-art (SOTA) constrained generative model algorithms such as RFM \cite{chen2024flow} and RDDPM, together with Euclidean forward-backward variant baselines that highlight the importance of handling intrinsic geometry and learning the score function on $\Sigma$.

Experimental setup, baseline descriptions, and hyperparameters are deferred to \autoref{app:sec:Experiment settings and Supplementary Results}. Following the practical landing-based sampling scheme demonstrated in \citet{zhang2022sampling, jeon2025fast}, where landing-based constrained sampling performs robustly even without explicit correction terms, we set the correction terms $\kappa = 0$ to circumvent the high computational cost of Hessian-related calculations.
\vspace{-5pt}
\subsection{Equality-only Scenario Tasks}
\textbf{Earth and climate science datasets.} \quad This benchmark lives on the 2-sphere $S^2$, where nearest-point projection is globally available, so landing-based dynamics are not strictly required.
\begin{figure}[t]
    \centering
    \includegraphics[width=0.35\textwidth]{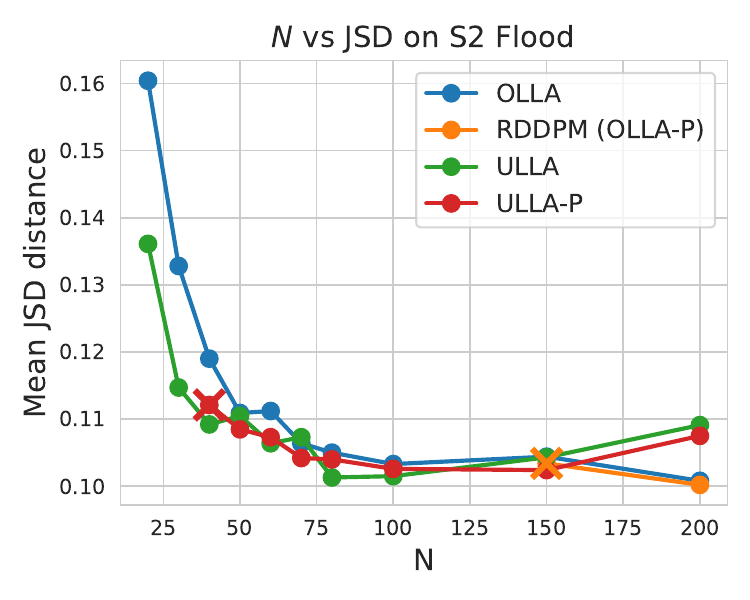}
    \vspace{-10pt}
    \caption{Mean JSD on $S^2$ flood versus trajectory length $N$ under the fixed $T$. Cross mark $(\times)$ indicates the smallest $N$ values after which projection failures no longer occur during the forward process.}
    \label{fig: N vs JSD on S2 flood}
    \vspace{-15pt}
\end{figure}
Nevertheless, we use this dataset to (i) assess the intrinsic benefits of underdamped dynamics and (ii) quantify the sampling quality--computational cost trade-off under landing.

From \autoref{fig: N vs JSD on S2 flood}, due to faster mixing of the underdamped dynamics, underdamped algorithms (ULLA, ULLA-P) markedly reduce the needed forward length $N$: ULLA-P is stable without projection failure even at $N=40$, whereas RDDPM (OLLA-P) requires at least $N \approx 150$ to avoid failures. Thus, smaller $N$ yields large training-time savings while preserving comparable sample quality. Although exact projections are available here, \autoref{tab:unified_results} indicate that ULLA incurs comparable sampling quality under negligible constraint violations; visual comparisons of ULLA (\autoref{app:subsec:Generated samples from tasks}) show similarly generated distribution to projection-based methods, supporting the practical value of landing.

\begin{table}[t!]
\centering
\caption{\textbf{Comparison of computational efficiency (Wall-clock time).}
Total training time and simulation times (Sim.) are reported in seconds, where Sim. denotes the time spent on forward trajectory simulation during training.
Our landing-based methods (OLLA, ULLA) exhibit significantly lower simulation cost than Riemannian baselines.}
\label{tab:efficiency_small}
\small
\setlength{\tabcolsep}{5pt}
\renewcommand{\arraystretch}{1.2}
\begin{tabular}{lcc}
\toprule
\textbf{Method} & \textbf{Earth (s)} & \textbf{Mesh (s)} \\
\textit{(Traj. length $N$)} & (Train / Sim.) & (Train / Sim.) \\
\midrule
\multicolumn{3}{l}{\textit{Riemannian-based}} \\
RFM ($1000$) & $4019 \ (0)$ & $145424 \ (112244)$ \\
RDDPM ($400$) & $12388 \ (3302)$ & $1916 \ (126)$ \\
\midrule
\multicolumn{3}{l}{\textit{Ours}} \\
OLLA ($100$) & $1686 \ (749)$ & $642 \ (4.0)$ \\
ULLA-P ($100/50$) & $1631 \ (1154)$ & $387 \ (10.5)$ \\
ULLA ($50/30$) & $\mathbf{1021 \ (530)}$ & $\mathbf{360 \ (2.0)}$ \\
\bottomrule
\end{tabular}
\vspace{-15pt}
\end{table}

\textbf{3D Mesh data on learned manifold.} \quad Unlike the 2-sphere, meshes lie on manifolds where nearest-point projection is not globally defined and, in our benchmark, must be approximated by a Newton solver because the constraint $h(x)=0$ is represented by a learned neural network - making projection-based sampling computationally expensive. In this regime, landing becomes particularly effective: as shown in \autoref{tab:unified_results}, ULLA and ULLA-P show comparable JSD of RDDPM and RFM with far fewer steps $(N=30,50)$, yielding $~5\times$ faster training and up to $~47\times$ faster sampling than RDDPM.

The gains stem from the combinations of the following facts: (i) the underdamped dynamics permits much smaller $N$, and (ii) landing (particularly without curvature corrections) requires only a single constraint-gradient evaluation per step, avoiding iterative projections. These improvements indicate that, for complex learned manifolds where projection is expensive, ULLA provides a scalable and efficient alternative.

\textbf{High-dimensional special orthogonal group: SO(10).} \quad This experiment evaluates scalability on the high-dimensional Lie group $SO(10) \subset \bbR^{100}$, defined by $55$ equality constraints $(X^TX=I)$; $\text{det}(X)=1$ condition is checked based on rejection. The synthetic distribution is multimodal with $m$ modes, and sampling quality is assessed by power-trace statistics. As shown in \autoref{fig:SO10} and \autoref{app:subsec:Generated samples from tasks}, landing-based methods (ULLA/ULLA-P/OLLA) remain efficient on this complex manifold, producing high-quality samples with a forward trajectory length of $N =50$, whereas RDDPM requires at least $N\approx150$ to avoid projection failures.

\subsection{Mixed Scenario Tasks}

\begin{figure*}[t!]
    \centering
    \begin{subfigure}[b]{0.67\linewidth}
        \centering
        \includegraphics[width=\linewidth]{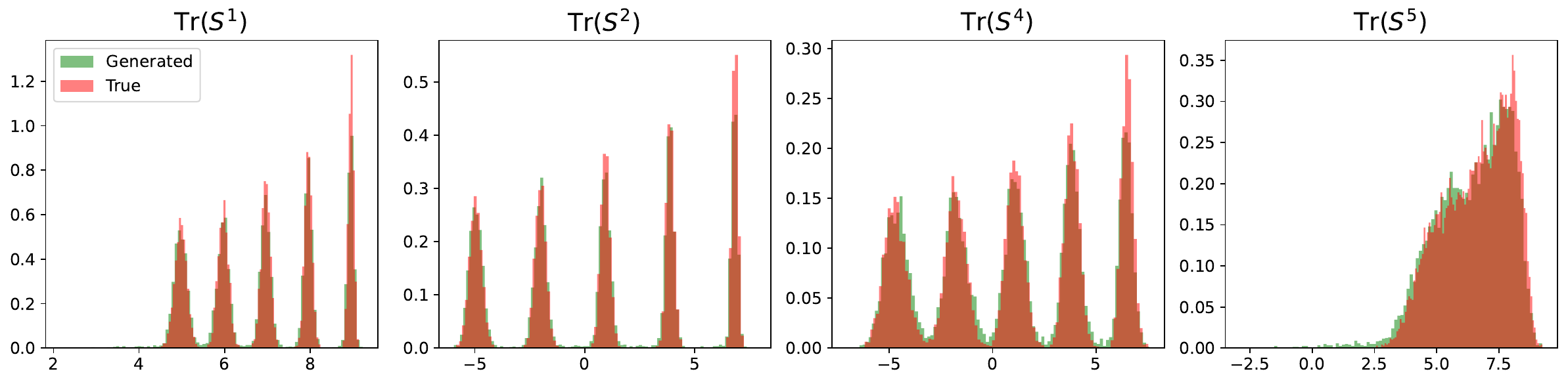}
        \caption{Power-trace statistics on $SO(10)$}
        \label{fig:SO10}
    \end{subfigure}
    \hfill
    \begin{subfigure}[b]{0.32\linewidth}
        \centering
        \includegraphics[width=\linewidth]{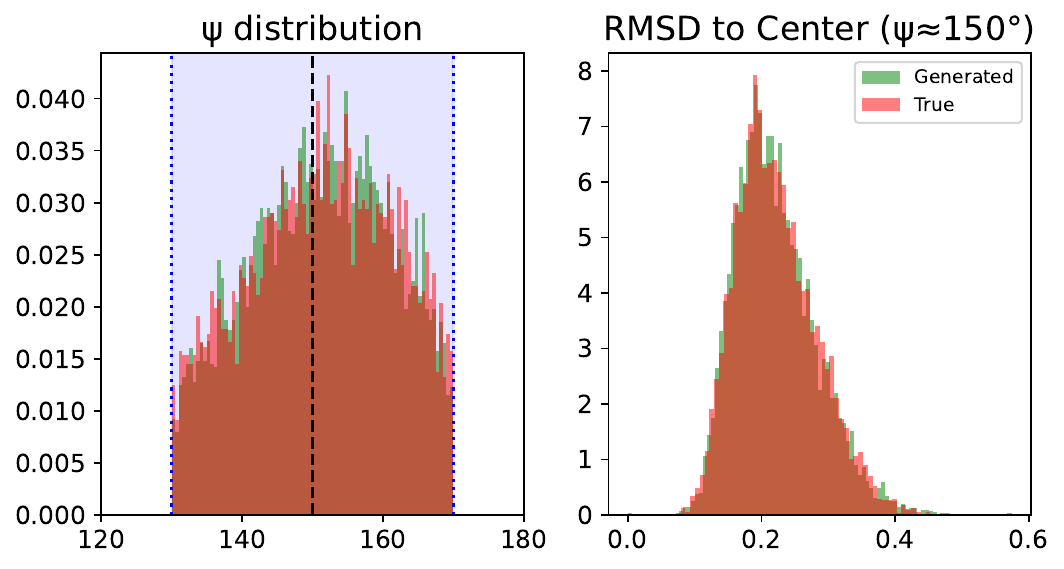}
        \vspace{-15pt}
        \caption{$\psi$ angle and RMSD on Dipeptide exp.}
        \label{fig:hist_rmsd}
    \end{subfigure}

    \caption{\textbf{Generative performance on complex geometric tasks.}
    (a) Histograms of the generated power-trace statistics $\trace{S^k}$ for $k \in \mbra{1,2,4,5}$ on $SO(10)$ ($m=5$), where ULLA (green) accurately recovers the ground-truth (red) distributions.
    (b) Joint distribution of $\psi$ angle and Root Mean Square Deviation (RMSD) for the Alanine Dipeptide task; the blue shaded area represents the feasible region defined by inequality constraints $\psi \in [130^{\circ}, 170^{\circ}]$.}
    \label{fig:combined_geometric_results}
\end{figure*}

\begin{table*}[t!]
\centering
\caption{\textbf{Unified generative performance on mixed constraint tasks.}
We report JSD (lower is better) with standard errors. Left: Dimension scalability on 7-DOF robot arm ($N=100$) across dimensions $d$.
Middle: Alanine Dipeptide conformation ($N=100$).
Right: Average constraint violations for Robot (Rob) and Alanine (Ala). NaN indicates method failure (divergence or projection failure). \textbf{Bold} and \underline{underline} indicate the best and second-best JSD results.}
\label{tab:mixed_constraints}

{\small
\setlength{\tabcolsep}{4.5pt}
\renewcommand{\arraystretch}{1.25}
\begin{tabular}{l cccc cc cccc}
\toprule
& \multicolumn{4}{c}{\textbf{7-DOF Robot Arm (JSD)}} & \multicolumn{2}{c}{\textbf{Alanine (JSD)}} & \multicolumn{4}{c}{\textbf{Violations (Avg.)}} \\
\cmidrule(lr){2-5} \cmidrule(lr){6-7} \cmidrule(lr){8-11}
\textit{Method} & \textbf{$d=140$} & \textbf{$d=280$} & \textbf{$d=420$} & \textbf{$d=560$} & \textbf{$\psi$ Angle} & \textbf{RMSD} & \textbf{Rob-$|h|$} & \textbf{Rob-$|g^+|$} & \textbf{Ala-$|h|$} & \textbf{Ala-$|g^+|$} \\
\midrule
\multicolumn{11}{l}{\textit{Euclidean fwd. + bwd. variant}} \\
\ Euclidean &
\underline{0.498}\text{\tiny$\pm$.028} &
$0.656\text{\tiny$\pm$.034}$ &
\underline{0.647}\text{\tiny$\pm$.022} &
$0.750\text{\tiny$\pm$.025}$ &
$0.150\text{\tiny$\pm$.003}$ &
\underline{0.057}\text{\tiny$\pm$.001} &
$8.3\text{e-}1$ & $5.3\text{e-}3$ &
$5.7\text{e-}2$ & $2.4\text{e-}3$ \\
\ Lagrangian &
$0.769\text{\tiny$\pm$.043}$ &
$0.816\text{\tiny$\pm$.005}$ &
$0.831\text{\tiny$\pm$.001}$ &
$0.831\text{\tiny$\pm$.002}$ &
\multicolumn{2}{c}{NaN} &
$4.9\text{e-}2$ & $1.6\text{e-}3$ &
\multicolumn{2}{c}{NaN} \\
\ Projected &
\multicolumn{4}{c}{NaN} &
\underline{0.145}\text{\tiny$\pm$.002} &
$0.073\text{\tiny$\pm$.005}$ &
\multicolumn{2}{c}{NaN} &
$5.4\text{e-}8$ & $3.3\text{e-}3$ \\
\ Guided &
$0.499\text{\tiny$\pm$.028}$ &
\underline{0.655}\text{\tiny$\pm$.041} &
$0.665\text{\tiny$\pm$.013}$ &
\underline{0.740}\text{\tiny$\pm$.033} &
$0.152\text{\tiny$\pm$.003}$ &
\underline{0.057}\text{\tiny$\pm$.002} &
$8.0\text{e-}1$ & $4.9\text{e-}3$ &
$5.7\text{e-}2$ & $2.4\text{e-}3$ \\
\midrule
\multicolumn{11}{l}{\textit{Ours}} \\
 \ ULLA &
\textbf{0.275}\text{\tiny$\pm$.011} &
\textbf{0.295}\text{\tiny$\pm$.006} &
\textbf{0.366}\text{\tiny$\pm$.005} &
\textbf{0.391}\text{\tiny$\pm$.012} &
\textbf{0.031}\text{\tiny$\pm$.002} &
\textbf{0.035}\text{\tiny$\pm$.002} &
$1.8\text{e-}5$ & $1.5\text{e-}9$ &
$1.4\text{e-}7$ & $6.0\text{e-}11$ \\
\bottomrule
\end{tabular}
}
\vspace{-10pt}
\end{table*}

\begin{wrapfigure}{r}{0.5\columnwidth}
    \centering
    \vspace{-45pt}
    \hspace{-10pt}
    \includegraphics[width=0.95\linewidth]{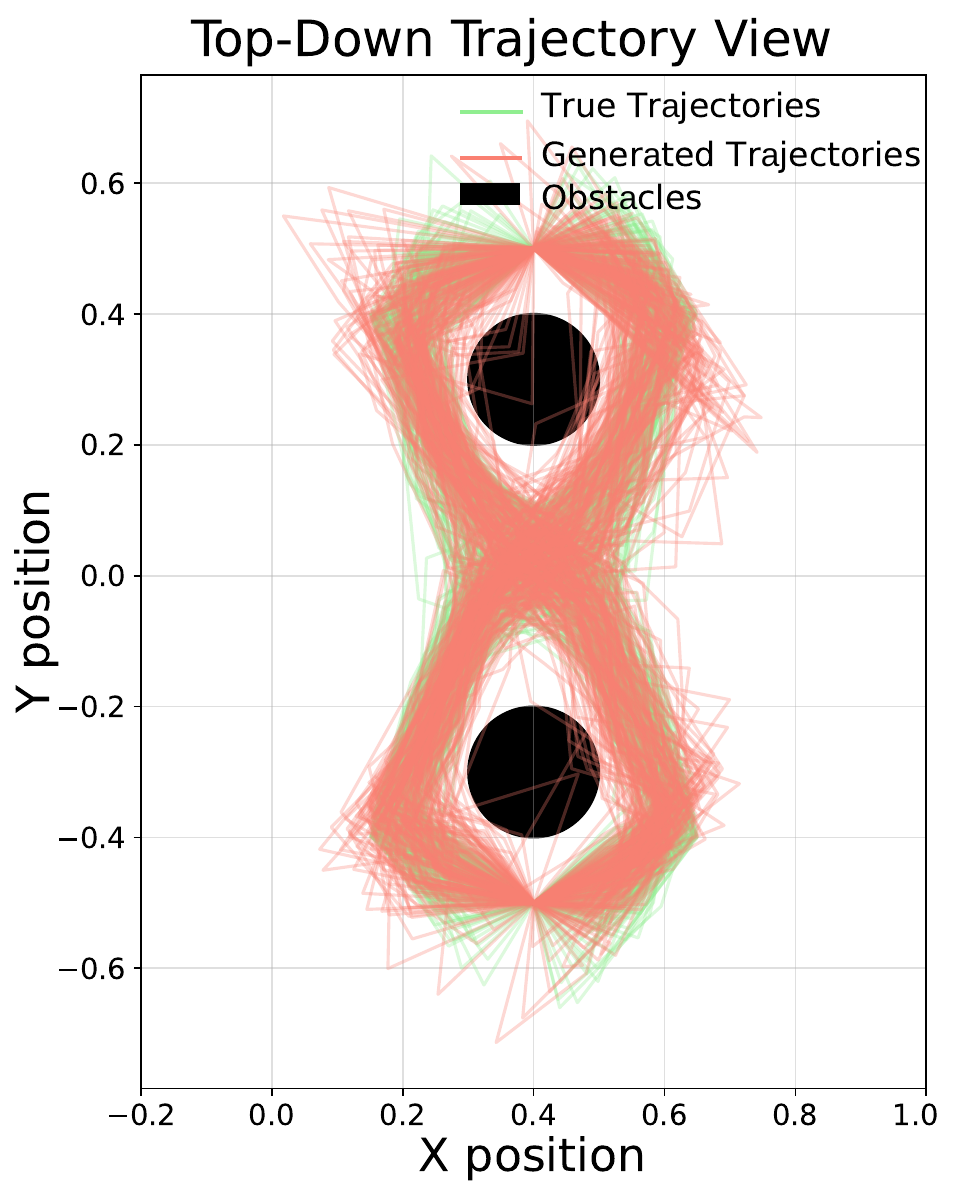}
    \caption{Generated Robot arm trajectories (red) by ULLA.}
    \label{fig:robot_topdown}
    \vspace{-20pt}
\end{wrapfigure}

\textbf{Alanine dipeptide and 7-DOF robot arm.}

We further evaluate our landing algorithms under mixed-constraints setups.

The feasible set $\Sigma$ is defined by complex equality and inequality constraints. In these settings, exact projections are often numerically unstable or computationally prohibitive. As summarized in \autoref{tab:mixed_constraints}, standard baselines encounter significant difficulties: the \textit{Projected} Euclidean variant failed in the high-dimensional 7-DOF robot arm task due to severe projection failures, while the \textit{Lagrangian} Euclidean variant failed to converge to a high-quality distribution in the Dipeptide task. For similar reasons, OLLA, OLLA-P, and ULLA-P are omitted from the full mixed-task benchmark in \autoref{tab:mixed_constraints}: OLLA suffered from landing instability, while OLLA-P and ULLA-P encountered frequent projection failures, preventing competitive performance.

In contrast, our proposed ULLA method demonstrates superior performance compared to the valid Euclidean forward-backward variants. ULLA not only achieves significantly lower JSDs, consistently outperforming Euclidean baselines even as the dimension scales, but also maintains extremely low constraint violations (e.g., avg. $|h| \approx 10^{-5}$ and $|g^+| \approx 10^{-9}$), effectively respecting the complex geometry without the need for expensive multiple projection steps. In the 7-DOF robot arm experiments, ULLA is run without the optional terminal projection; nevertheless, it still achieves low constraint violation, showing that the landing drift provides effective numerical self-correction.

\vspace{-10pt}
\subsection{Component Ablations}
\label{sec:component_ablations}

We organize the ablations around 5 practical questions: (i) how much landing reduces projection overhead compared with projection-based algorithms, (ii) why ULLA is preferable to OLLA on complicated mixed-constraint tasks, (iii) whether the correction terms materially affect performance, (iv) what happens when the optional terminal projection is omitted, and (v) how sensitive ULLA is to the landing and repulsion rates.
\vspace{-5pt}
\paragraph{Projection overhead.}
Projection is the dominant cost in projection-based variants. ULLA avoids repeated per-step projection and uses only an optional terminal cleanup projection.
As reported in \autoref{tab:projection_overhead}, projection cost accounts for more than $90\%$ of inference time in ULLA-P. Since ULLA-P performs projections at intermediate sampling steps, it can also fail before termination; ULLA avoids these failure modes through landing and terminal projection. This supports the main computational motivation for replacing per-step projections with landing.

\begin{table}[!ht]
\centering
\caption{\textbf{Projection overhead on Alanine.} Inference time and projection time are reported in ms/sample; overhead is projection time divided by inference time, and failure is the fraction of failed trajectories among all generated trajectories.}
\label{tab:projection_overhead}
\footnotesize
\setlength{\tabcolsep}{2.7pt}
\renewcommand{\arraystretch}{1.12}
\begin{tabular}{lcccc}
\toprule
\textbf{Method} & \textbf{Inf. time} & \textbf{Proj. time} & \textbf{Overhead} & \textbf{Failure} \\
\midrule
ULLA & 0.13 & 0.01 & 7.7\% & 0.0\% \\
ULLA-P & 1.13 & 1.03 & 91.2\% & 20.0\% \\
\bottomrule
\end{tabular}
\vspace{-4pt}
\end{table}

\vspace{-5pt}
\paragraph{Gain of ULLA on mixed constraints.}
On complex mixed-constraint tasks, ULLA benefits from the inertial position update induced by momentum. In OLLA, tangential noise directly perturbs the position at each step, whereas in ULLA the noise acts through momentum and the position evolves through an integrated velocity. This makes trajectories more persistent and can make the landing correction less oscillatory in practice. For equality constraints, this reduces repeated small deviations around $\Sigma$; for active inequalities, once repulsion moves a sample away from the boundary, momentum tends to carry it toward the feasible interior rather than immediately diffusing back to violation.

As shown in \autoref{tab:olla_ulla_alanine_auc}, ULLA improves trajectory-wise feasibility on Alanine, reducing both equality- and inequality-violation AUC by more than an order of magnitude compared with OLLA, which also improves sampling quality.
\begin{table}[t]
\centering
\caption{\textbf{OLLA vs. ULLA on Alanine.} AUC eq. and AUC ineq. are the areas under the per-step equality- and inequality-violation curves along inference trajectories; larger values indicate greater cumulative violation.}
\label{tab:olla_ulla_alanine_auc}
\footnotesize
\setlength{\tabcolsep}{3pt}
\renewcommand{\arraystretch}{1.12}
\begin{tabular}{lcccc}
\toprule
\textbf{Method} & \textbf{JSD ($\psi$)} & \textbf{JSD (RMSD)} & \textbf{AUC eq.} & \textbf{AUC ineq.} \\
\midrule
OLLA & 0.0804 & 0.7830 & $3.93\text{e-}3$ & $2.96\text{e-}3$ \\
ULLA & 0.0372 & 0.0381 & $1.52\text{e-}4$ & $2.14\text{e-}5$ \\
\bottomrule
\end{tabular}
\vspace{-12pt}
\end{table}
\vspace{-18pt}
\paragraph{Correction terms.}
We also ablate the correction terms. These terms are normal-direction, second-order geometric corrections: they improve the fidelity of the discretized dynamics to the ideal continuous-time SDE, but they are not the primary tangential drift, driving sample transport on $\Sigma$. In our experiments, the first-order landing term already suppresses constraint residuals effectively.

As shown in \autoref{tab:correction_term_ablation}, adding correction terms gives negligible improvement in sample quality or constraint violation in these tested settings, while increasing computational cost and potential numerical sensitivity.

\begin{table}[!ht]
\centering
\caption{\textbf{Correction-term ablation.} The on/off setting indicates whether correction terms are included.}
\label{tab:correction_term_ablation}
\footnotesize
\setlength{\tabcolsep}{3pt}
\renewcommand{\arraystretch}{1.12}
\begin{tabular}{llcccc}
\toprule
\textbf{Task} & \textbf{Method} & \textbf{Corr.} & \textbf{JSD ($\psi$)} & \textbf{Eq. viol.} & \textbf{Ineq. viol.} \\
\midrule
Bunny-100 & ULLA & off & 0.034 & $1.59\text{e-}7$ & NA \\
Bunny-100 & ULLA & on & 0.040 & $2.24\text{e-}5$ & NA \\
Alanine & ULLA & off & 0.034 & $1.45\text{e-}6$ & $1.20\text{e-}10$ \\
Alanine & ULLA & on & 0.035 & $1.48\text{e-}6$ & $3.90\text{e-}10$ \\
\bottomrule
\end{tabular}
\vspace{-8pt}
\end{table}

\paragraph{Terminal projection.}
The terminal projection mainly improves final numerical precision, rather than sample quality.
During backward sampling, the landing drift is already applied at every step, so constraint residuals do not accumulate or amplify toward the final iterate; instead, they remain controlled up to discretization error.
As shown in \autoref{tab:no_terminal_projection}, removing this optional cleanup step on Alanine leaves sample quality nearly unchanged while increasing only the final numerical residuals.
We further report trajectory-wise diagnostics in \autoref{fig:trajectory_violation_diagnostics}, which show that the practical discretized sampler is not exactly pathwise feasible but stays near-feasible throughout inference due to landing mechanism; therefore, omitting the terminal projection does not lead to large constraint violations.

\begin{table}[!ht]
\centering
\caption{\textbf{No-terminal-projection ablation on Alanine.} The terminal projection mainly improves final residuals.}
\label{tab:no_terminal_projection}
\footnotesize
\setlength{\tabcolsep}{4pt}
\renewcommand{\arraystretch}{1.12}
\begin{tabular}{lcccc}
\toprule
\textbf{Task} & \textbf{Terminal} & \textbf{JSD ($\psi$)} & \textbf{Eq. viol.} & \textbf{Ineq. viol.} \\
\midrule
Alanine & off & 0.033 & $7.2\text{e-}5$ & $8.0\text{e-}5$ \\
Alanine & on & 0.031 & $1.4\text{e-}7$ & $6.0\text{e-}10$ \\
\bottomrule
\end{tabular}
\vspace{-12pt}
\end{table}

\begin{figure}[t]
\centering
\includegraphics[width=0.80\columnwidth]{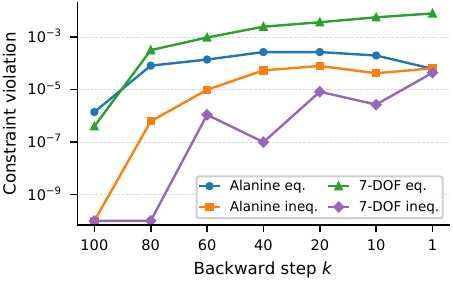}
\vspace{-4pt}
\caption{\textbf{Trajectory-wise constraint violations.} Backward sampling starts from an on-manifold prior at $k=100$ and ends at $k=1$.}
\label{fig:trajectory_violation_diagnostics}
\vspace{-15pt}
\end{figure}

\paragraph{Landing and repulsion rates.}
We study the effects of landing rate $\alpha$ and boundary repulsion rate $\epsilon$ on Alanine.
\autoref{app:tab:effect-alpha-epsilon-dipeptide} shows that increasing $\alpha$ improves equality constraint contraction up to a point, but overly large values can introduce discretization error induced by the landing. Similarly, too small $\epsilon$ causes boundary stickiness, while too large $\epsilon$ pushes samples too aggressively into the interior. We give a practical tuning guideline in \autoref{app:sec:alpha_epsilon_guideline}, with visual examples of the boundary-repulsion effect in \autoref{app:fig:effect_of_epsilon}.


\vspace{-10pt}
\section{Conclusion, Limitations, and Future Work}
\label{main:sec:conclusion_future}
We introduced a landing-based constrained diffusion framework built from overdamped and underdamped Langevin dynamics for equality and inequality constraints. By replacing repeated per-step projections with inexpensive normal corrections and using faster-mixing underdamped dynamics, our models achieve competitive sample quality with substantially lower projection overhead and computational cost. As limitations, training still stores forward trajectories, so memory scales as $O(dN)$ with ambient dimension $d$ and trajectory length $N$. Also, when $\Sigma$ is disconnected, the terminal prior must cover its components appropriately; otherwise, the learned backward process may generate a skewed component distribution. A natural future direction is to develop landing-based latent constrained diffusion models with encoder-decoder architectures, where landing acts through decoder-composed constraints by the chain rule to improve high-dimensional scalability.

\section*{Acknowledgements}
Michael Muehlebach thanks the German Research Foundation for the support. Kijung Jeon and Molei Tao are grateful for partial supports by NSF Grant DMS-2513699 (KJ \& MT), DOE Grants NA0004261 (MT), SC0026274 (KJ \& MT),  Richard Duke Fellowship (KJ \& MT), and Simons Institute for the Theory of Computing at UC Berkeley (MT).

\section*{Impact Statement}
This work develops more efficient diffusion models for constrained generation. It may be useful in scientific modeling, robotics, and engineering design, where generated samples need to satisfy physical, geometric, or safety constraints. The broader impact will depend on the specific application, and we do not identify risks beyond those generally associated with generative machine learning methods.

\bibliographystyle{icml2026}
\bibliography{references}

@article{liu2025riemannian,
  title={Riemannian denoising diffusion probabilistic models},
  author={Liu, Zichen and Zhang, Wei and Sch{\"u}tte, Christof and Li, Tiejun},
  journal={Communications in Mathematical Sciences},
  volume={24},
  number={5},
  pages={1267--1295},
  year={2026}
}

@inproceedings{ho2020denoising,
  title={Denoising diffusion probabilistic models},
  author={Ho, Jonathan and Jain, Ajay and Abbeel, Pieter},
  booktitle={Advances in Neural Information Processing Systems},
  volume={33},
  pages={6840--6851},
  year={2020}
}

@book{rousset2010free,
  title={Free Energy Computations: A Mathematical Perspective},
  author={Leli{\`e}vre, Tony and Rousset, Mathias and Stoltz, Gabriel},
  year={2010},
  publisher={Imperial College Press},
  doi={10.1142/P579}
}

@inproceedings{kyaw2025,
    title={Making Physical Objects with Generative {AI} and Robotic Assembly: Considering Fabrication Constraints, Sustainability, Time, Functionality, and Accessibility},
    author={Alexander Htet Kyaw and Se Hwan Jeon and Miana Smith and Neil Gershenfeld},
    booktitle={{GenAICHI}: {CHI} 2025 Workshop on Generative {AI} and {HCI}},
    year={2025}
}

@inproceedings{jing2022,
    title={Torsional Diffusion for Molecular Conformer Generation},
    author={Bowen Jing and Gabriele Corso and Jeffrey Chang and Regina Barzilay and Tommi Jaakkola},
    booktitle={Advances in Neural Information Processing Systems},
    year={2022}
}

@article{denovo2023,
    title={De novo design of protein structure and function with {RF}diffusion},
    author={Joseph L. Watson and David Juergens and Nathaniel R. Bennett and Brian L. Trippe and Jason Yim and Helen E. Eisenach and Woody Ahern and Andrew J. Borst and Robert J. Ragotte and Lukas F. Milles and Basile I. M. Wicky and Nikita Hanikel and Samuel J. Pellock and Alexis Courbet and William Sheffler and Jue Wang and Preetham Venkatesh and Isaac Sappington and Susana Vázquez Torres and Anna Lauko and Valentin De Bortoli and Emile Mathieu and Sergey Ovchinnikov and Regina Barzilay and Tommi S. Jaakkola and Frank DiMaio and Minkyung Baek and David Baker},
    journal={Nature},
    volume={620},
    number={7976},
    pages={1089--1100},
    year={2023},
    doi={10.1038/s41586-023-06415-8}
}

@article{regenwetter2022,
    author={Lyle Regenwetter and Amin Heyrani Nobari and Faez Ahmed},
    title={Deep Generative Models in Engineering Design: A Review},
    journal={Journal of Mechanical Design},
    volume={144},
    number={7},
    pages={071704},
    year={2022}
}

@book{nocedal2009,
    author={Jorge Nocedal and Stephen J. Wright},
    title={Numerical Optimization},
    edition={Second},
    publisher={Springer},
    year={2006}
}

@article{wu2022,
    title={Protein structure generation via folding diffusion},
    author={Kevin E. Wu and Kevin K. Yang and Rianne van den Berg and Sarah Alamdari and James Y. Zou and Alex X. Lu and Ava P. Amini},
    journal={Nature Communications},
    volume={15},
    pages={1059},
    year={2024}
}

@inproceedings{huang2022riemannian,
  title={{R}iemannian diffusion models},
  author={Huang, Chin-Wei and Aghajohari, Milad and Bose, Avishek Joey and Panangaden, Prakash and Courville, Aaron C.},
  booktitle={Advances in Neural Information Processing Systems},
  volume={35},
  pages={2750--2761},
  year={2022}
}

@book{chirikjian2009stochastic,
  title={Stochastic Models, Information Theory, and Lie Groups, Volume 1: Classical Results and Geometric Methods},
  author={Chirikjian, Gregory S},
  year={2009},
  publisher={Springer Science \& Business Media}
}

@inproceedings{de2022riemannian,
  title={Riemannian score-based generative modelling},
  author={De Bortoli, Valentin and Mathieu, Emile and Hutchinson, Michael and Thornton, James and Teh, Yee Whye and Doucet, Arnaud},
  booktitle={Advances in Neural Information Processing Systems},
  volume={35},
  pages={2406--2422},
  year={2022}
}

@inproceedings{goodfellow2014,
    title={Generative Adversarial Nets},
    author={Ian J. Goodfellow and Jean Pouget-Abadie and Mehdi Mirza and Bing Xu and David Warde-Farley and Sherjil Ozair and Aaron Courville and Yoshua Bengio},
    booktitle={Advances in Neural Information Processing Systems},
    volume={27},
    pages={2672--2680},
    year={2014}
}

@book{watanabe2011stochastic,
  title={Stochastic Differential Equations and Diffusion Processes},
  author={Ikeda, Nobuyuki and Watanabe, Shinzo},
  edition={2},
  series={North-Holland Mathematical Library},
  volume={24},
  year={2014},
  publisher={Elsevier}
}

@inproceedings{dockhorn2022score,
  title={Score-Based Generative Modeling with Critically-Damped {L}angevin Diffusion},
  author={Tim Dockhorn and Arash Vahdat and Karsten Kreis},
  booktitle={International Conference on Learning Representations},
  year={2022}
}

@inproceedings{mathieu2020,
    title={Riemannian continuous normalizing flows},
    author={Emile Mathieu and Maximilian Nickel},
    booktitle={Advances in Neural Information Processing Systems},
    volume={33},
    pages={2503--2515},
    year={2020}
}

@article{fishman2023,
    title={Diffusion models for constrained domains},
    author={Nic Fishman and Leo Klarner and Valentin De Bortoli and Emile Mathieu and Michael Hutchinson},
    journal={Transactions on Machine Learning Research},
    year={2023}
}

@article{williams1987,
    title={Reflected {B}rownian motion with skew symmetric data in a polyhedral domain},
    author={Ruth J. Williams},
    journal={Probability Theory and Related Fields},
    volume={75},
    number={4},
    pages={459-485},
    year={1987}
}

@inproceedings{ma2025,
    title={Constraint-Aware Diffusion Guidance for Robotics: {R}eal-Time Obstacle Avoidance for Autonomous Racing},
    author={Hao Ma and Sabrina Bodmer and Andrea Carron and Melanie Zeilinger and Michael Muehlebach},
    booktitle={Conference on Robot Learning},
    year={2025}
}

@article{wagenaar2024,
    title={Generative Aerodynamic Design with Diffusion Probabilistic Models},
    author={Thomas Wagenaar and Simone Mancini and Andrés Mateo-Gabín},
    journal={arXiv preprint arXiv:2409.13328},
    year={2024}
}

@inproceedings{roemer2025,
    author={R{\"o}mer, Ralf and {von Rohr}, Alexander and Schoellig, Angela P.},
    title={Diffusion Predictive Control with Constraints},
    booktitle={Proceedings of the 7th Annual Learning for Dynamics \& Control Conference},
    volume={283},
    pages={791--803},
    year={2025},
    series={Proceedings of Machine Learning Research},
    publisher={PMLR}
}

@article{chi2024diffusionpolicy,
    author={Cheng Chi and Zhenjia Xu and Siyuan Feng and Eric Cousineau and Yilun Du and Benjamin Burchfiel and Russ Tedrake and Shuran Song},
    title={Diffusion Policy: Visuomotor Policy Learning via Action Diffusion},
    journal={The International Journal of Robotics Research},
    year={2024}
}

@book{pilipenko2014,
    author={Andrey Pilipenko},
    title={An introduction to stochastic differential equations with reflection},
    publisher={Universitätsverlag Potsdam},
    year={2014}
}

@article{borelle2023minimal,
  title={Minimal Gelbrich distance to uncorrelation},
  author={Borelle, Matthieu and Alamo, Teodoro and Stoica, Cristina and Bertrand, Sylvain and Camacho, Eduardo F},
  journal={IEEE Control Systems Letters},
  volume={8},
  pages={61--66},
  year={2024},
  publisher={IEEE},
  doi={10.1109/LCSYS.2023.3343990}
}

@inproceedings{ermon2023,
    title={Reflected diffusion models},
    author={Aaron Lou and Stefano Ermon},
    booktitle={International Conference on Machine Learning},
    volume={202},
    pages={22675--22701},
    year={2023},
    publisher={PMLR}
}

@inproceedings{zhang2020,
    author={Kelvin Shuangjian Zhang and Gabriel Peyré and Jalal Fadili and Marcelo Pereyra},
    title={Wasserstein control
of Mirror {L}angevin {M}onte {C}arlo},
    booktitle={Conference on Learning Theory},
    volume={125},
    pages={3814--3841},
    year={2020},
    publisher={PMLR}
}

@inproceedings{li2022,
    author={Ruilin Li and Molei Tao and Santosh S. Vempala and Andre Wibisono},
    title={The mirror {L}angevin algorithm converges with vanishing bias},
    booktitle={International Conference on Algorithmic Learning Theory},
    volume={167},
    pages={718--742},
    year={2022},
    publisher={PMLR}
}

@inproceedings{liu2024mirror,
    title={Mirror Diffusion Models for
Constrained and Watermarked Generation},
    author={Guan-Horng Liu and Tianrong Chen and Evangelos A. Theodorou and Molei Tao},
    booktitle={Advances in Neural Information Processing Systems},
    year={2023}
}

@article{muehlebach,
    title={On Constraints in First-Order Optimization: A View from Non-Smooth Dynamical Systems},
    author={Michael Muehlebach and Michael I. Jordan},
    volume={23},
    number={256},
    pages={1-47},
    year={2022},
    journal={Journal of Machine Learning Research}
}

@inproceedings{ablin,
    title={Fast and Accurate Optimization on the Orthogonal Manifold without Retraction},
    author={Ablin, Pierre and Peyr{\'e}, Gabriel},
    booktitle={International Conference on Artificial Intelligence and Statistics},
    year={2022},
    publisher={PMLR}
}

@inproceedings{schechtman,
    title={Orthogonal Directions Constrained Gradient Method: from non-linear equality constraints to {S}tiefel manifold},
    author={Schechtman, Sholom and Tiapkin, Daniil and Muehlebach, Michael and Moulines, {\'E}ric},
    booktitle={Conference on Learning Theory},
    volume={195},
    pages={1228--1258},
    year={2023},
    publisher={PMLR}
}

@article{mathprog,
    title={Accelerated first-order optimization under nonlinear constraints},
    author={Michael Muehlebach and Michael I. Jordan},
    journal={Mathematical Programming},
    volume={215},
    pages={407--452},
    year={2026}
}

@article{gelbrich1990formula,
  title={On a formula for the {L}2-{W}asserstein metric between measures on {E}uclidean and {H}ilbert spaces},
  author={Gelbrich, Matthias},
  journal={Mathematische Nachrichten},
  volume={147},
  number={1},
  pages={185--203},
  year={1990},
  publisher={Wiley Online Library}
}

@inproceedings{wang2024evaluating,
  title={Evaluating the design space of diffusion-based generative models},
  author={Wang, Yuqing and He, Ye and Tao, Molei},
  booktitle={Advances in Neural Information Processing Systems},
  volume={37},
  pages={19307--19352},
  year={2024}
}

@inproceedings{karras2022elucidating,
  title={Elucidating the design space of diffusion-based generative models},
  author={Karras, Tero and Aittala, Miika and Aila, Timo and Laine, Samuli},
  booktitle={Advances in Neural Information Processing Systems},
  volume={35},
  pages={26565--26577},
  year={2022}
}

@inproceedings{cheng2018underdamped,
  title={Underdamped {L}angevin {MCMC}: A non-asymptotic analysis},
  author={Cheng, Xiang and Chatterji, Niladri S and Bartlett, Peter L and Jordan, Michael I},
  booktitle={Conference on Learning Theory},
  pages={300--323},
  year={2018}
}

@article{ma2021there,
  title={Is there an analog of {N}esterov acceleration for gradient-based {MCMC}?},
  author={Ma, Yi-An and Chatterji, Niladri S and Cheng, Xiang and Flammarion, Nicolas and Bartlett, Peter L and Jordan, Michael I},
  journal={Bernoulli},
  volume={27},
  number={3},
  pages={1942--1992},
  year={2021}
}

@inproceedings{chen2024flow,
  title={Flow Matching on General Geometries},
  author={Chen, Ricky T. Q. and Lipman, Yaron},
  booktitle={International Conference on Learning Representations},
  year={2024}
}

@inproceedings{paszke2019pytorch,
  title={{PyTorch}: An Imperative Style, High-Performance Deep Learning Library},
  author={Paszke, Adam and Gross, Sam and Massa, Francisco and Lerer, Adam and Bradbury, James and Chanan, Gregory and Killeen, Trevor and Lin, Zeming and Gimelshein, Natalia and Antiga, Luca and others},
  booktitle={Advances in Neural Information Processing Systems},
  volume={32},
  year={2019}
}

@misc{NGDC_earthquake,
  author    = {{National Geophysical Data Center / World Data Service (NGDC/WDS)}},
  title     = {{NCEI/WDS Global Significant Earthquake Database}},
  publisher = {{NOAA National Centers for Environmental Information}},
  year      = {2026},
  doi       = {10.7289/V5TD9V7K},
  url       = {https://www.ncei.noaa.gov/access/metadata/landing-page/bin/iso?id=gov.noaa.ngdc.mgg.hazards:G012153},
  note      = {Accessed: 2026-05-28}
}

@misc{NOAA_volcanic_2020b,
  author    = {{National Geophysical Data Center / World Data Service (NGDC/WDS)}},
  title     = {{NCEI/WDS Global Significant Volcanic Eruptions Database}},
  publisher = {{NOAA National Centers for Environmental Information}},
  year      = {2026},
  doi       = {10.7289/V5JW8BSH},
  url       = {https://www.ncei.noaa.gov/access/metadata/landing-page/bin/iso?id=gov.noaa.ngdc.mgg.hazards:G10147},
  note      = {Accessed: 2026-05-28}
}

@misc{Brakenridge2017,
  author       = {Brakenridge, G. R.},
  title        = {Global active archive of large flood events},
  year         = {2017},
  publisher    = {Dartmouth Flood Observatory, University of Colorado},
  url          = {https://floodobservatory.colorado.edu/index.html},
  note         = {Accessed: 2026-05-28}
}

@misc{EOSDIS2020,
  author       = {{NASA Earth Observing System Data and Information System (EOSDIS)}},
  title        = {Active fire data},
  year         = {2020},
  url          = {https://firms.modaps.eosdis.nasa.gov/content/active_fire/},
  note         = {Data from the Land, Atmosphere Near real-time Capability for EOS (LANCE) system}
}

@inproceedings{turk1994zippered,
  title={Zippered polygon meshes from range images},
  author={Turk, Greg and Levoy, Marc},
  booktitle={Annual conference on Computer Graphics and Interactive Techniques},
  pages={311--318},
  year={1994}
}

@article{crane2013robust,
  title={Robust fairing via conformal curvature flow},
  author={Crane, Keenan and Pinkall, Ulrich and Schr{\"o}der, Peter},
  journal={ACM Transactions on Graphics},
  volume={32},
  number={4},
  pages={1--10},
  year={2013}
}

@inproceedings{rozen2021moser,
  title={Moser flow: Divergence-based generative modeling on manifolds},
  author={Rozen, Noam and Grover, Aditya and Nickel, Maximilian and Lipman, Yaron},
  booktitle={Advances in Neural Information Processing Systems},
  volume={34},
  pages={17669--17680},
  year={2021}
}

@inproceedings{gropp2020implicit,
  title={Implicit geometric regularization for learning shapes},
  author={Gropp, Amos and Yariv, Lior and Haim, Niv and Atzmon, Matan and Lipman, Yaron},
  booktitle={International Conference on Machine Learning},
  volume={119},
  pages={3789--3799},
  year={2020},
  publisher={PMLR}
}

@inproceedings{song2021score,
  title={Score-based generative modeling through stochastic differential equations},
  author={Song, Yang and Sohl-Dickstein, Jascha and Kingma, Diederik P and Kumar, Abhishek and Ermon, Stefano and Poole, Ben},
  booktitle={International Conference on Learning Representations},
  year={2021}
}

@inproceedings{zhu2025trivialized,
  title={Trivialized momentum facilitates diffusion generative modeling on {L}ie groups},
  author={Zhu, Yuchen and Chen, Tianrong and Kong, Lingkai and Theodorou, Evangelos A and Tao, Molei},
  booktitle={International Conference on Learning Representations},
  year={2025}
}

@inproceedings{fishman2023metropolis,
  title={Metropolis sampling for constrained diffusion models},
  author={Fishman, Nic and Klarner, Leo and Mathieu, Emile and Hutchinson, Michael and De Bortoli, Valentin},
  booktitle={Advances in Neural Information Processing Systems},
  volume={36},
  pages={62296--62331},
  year={2023}
}

@inproceedings{dhariwal2021diffusion,
  title={Diffusion models beat {GAN}s on image synthesis},
  author={Dhariwal, Prafulla and Nichol, Alexander},
  booktitle={Advances in Neural Information Processing Systems},
  volume={34},
  pages={8780--8794},
  year={2021}
}

@inproceedings{zhang2022sampling,
  title={Sampling in constrained domains with orthogonal-space variational gradient descent},
  author={Zhang, Ruqi and Liu, Qiang and Tong, Xin},
  booktitle={Advances in Neural Information Processing Systems},
  volume={35},
  pages={37108--37120},
  year={2022}
}

@article{minchenko2011relaxed,
  title={On relaxed constant rank regularity condition in mathematical programming},
  author={Minchenko, Leonid and Stakhovski, Sergey},
  journal={Optimization},
  volume={60},
  number={4},
  pages={429--440},
  year={2011},
  publisher={Taylor \& Francis}
}

@incollection{janin2009directional,
  title={Directional Derivative of the Marginal Function in Nonlinear Programming},
  author={Janin, Robert},
  booktitle={Sensitivity, Stability and Parametric Analysis},
  editor={Fiacco, Anthony V.},
  series={Mathematical Programming Studies},
  volume={21},
  pages={110--126},
  publisher={Springer},
  year={1984},
  doi={10.1007/BFb0121214}
}

@incollection{solodov2010constraint,
  title={Constraint Qualifications},
  author={Solodov, Mikhail V.},
  booktitle={Wiley Encyclopedia of Operations Research and Management Science},
  publisher={John Wiley \& Sons, Inc.},
  address={New York},
  year={2010}
}

@article{stewart1969continuity,
  title={On the continuity of the generalized inverse},
  author={Stewart, Gilbert W},
  journal={SIAM Journal on Applied Mathematics},
  volume={17},
  number={1},
  pages={33--45},
  year={1969},
  publisher={SIAM}
}

@inproceedings{chen2022sampling,
  title={Sampling is as easy as learning the score: theory for diffusion models with minimal data assumptions},
  author={Chen, Sitan and Chewi, Sinho and Li, Jerry and Li, Yuanzhi and Salim, Adil and Zhang, Anru R},
  booktitle={International Conference on Learning Representations},
  year={2023}
}

@article{strasman2026wasserstein,
  title={Wasserstein convergence of critically damped langevin diffusions},
  author={Strasman, Stanislas and Surendran, Sobihan and Boyer, Claire and Le Corff, Sylvain and Lemaire, Vincent and Ocello, Antonio},
  journal={Advances in Neural Information Processing Systems},
  volume={38},
  pages={113306--113355},
  year={2025}
}

@inproceedings{liu2025improving,
  title={Improving the Euclidean Diffusion Generation of Manifold Data by Mitigating Score Function Singularity},
  author={Liu, Zichen and Zhang, Wei and Li, Tiejun},
  booktitle={Advances in Neural Information Processing Systems},
  volume={38},
  year={2025}
}

@book{rockafellar1998variational,
  title={Variational analysis},
  author={Rockafellar, R Tyrrell and Wets, Roger JB},
  year={1998},
  publisher={Springer}
}

@inproceedings{chen2018neural,
  title={Neural ordinary differential equations},
  author={Chen, Ricky TQ and Rubanova, Yulia and Bettencourt, Jesse and Duvenaud, David K},
  booktitle={Advances in Neural Information Processing Systems},
  volume={31},
  year={2018}
}

@inproceedings{christopher2024constrained,
  title={Constrained Synthesis with Projected Diffusion Models},
  author={Christopher, Jacob K. and Baek, Stephen and Fioretto, Ferdinando},
  booktitle={Advances in Neural Information Processing Systems},
  volume={37},
  pages={89307--89333},
  year={2024}
}

@inproceedings{liang2025simultaneous,
  title={Simultaneous Multi-Robot Motion Planning with Projected Diffusion Models},
  author={Liang, Jinhao and Christopher, Jacob K and Koenig, Sven and Fioretto, Ferdinando},
  booktitle={International Conference on Machine Learning},
  volume={267},
  pages={37162--37180},
  year={2025},
  publisher={PMLR}
}

@article{wachter2006implementation,
  title={On the implementation of an interior-point filter line-search algorithm for large-scale nonlinear programming},
  author={W{\"a}chter, Andreas and Biegler, Lorenz T},
  journal={Mathematical programming},
  volume={106},
  number={1},
  pages={25--57},
  year={2006},
  publisher={Springer}
}

@article{abraham2015gromacs,
  title={GROMACS: High performance molecular simulations through multi-level parallelism from laptops to supercomputers},
  author={Abraham, Mark James and Murtola, Teemu and Schulz, Roland and P{\'a}ll, Szil{\'a}rd and Smith, Jeremy C and Hess, Berk and Lindahl, Erik},
  journal={SoftwareX},
  volume={1},
  pages={19--25},
  year={2015},
  publisher={Elsevier}
}

@article{fiorin2013using,
  title={Using collective variables to drive molecular dynamics simulations},
  author={Fiorin, Giacomo and Klein, Michael L and H{\'e}nin, J{\'e}r{\^o}me},
  journal={Molecular Physics},
  volume={111},
  number={22-23},
  pages={3345--3362},
  year={2013},
  publisher={Taylor \& Francis}
}

@article{lelievre2024analyzing,
  title={Analyzing multimodal probability measures with autoencoders},
  author={Leli{\`e}vre, Tony and Pigeon, Thomas and Stoltz, Gabriel and Zhang, Wei},
  journal={The Journal of Physical Chemistry B},
  volume={128},
  number={11},
  pages={2607--2631},
  year={2024},
  publisher={ACS Publications}
}

@inproceedings{jeon2025fast,
  title={Fast Non-Log-Concave Sampling under Nonconvex Equality and Inequality Constraints with Landing},
  author={Jeon, Kijung and Muehlebach, Michael and Tao, Molei},
  booktitle={Advances in Neural Information Processing Systems},
  year={2025}
}

\newpage
\appendix
\onecolumn
\clearpage
\tableofcontents
\clearpage

\section{Table of Key Notation, Additional Remarks, and Algorithms}
\label{app:sec:Table of Key Notations and Algorithm}
\renewcommand{\arraystretch}{1.3}
\begin{table}[ht]
\caption{Table of Key Notations}
\label{tab:notation}
\vspace{5pt}
\centering
\renewcommand{\arraystretch}{1.3}
\begin{tabular}{c l l}
\toprule
\textbf{Symbol} & \textbf{Definition} & \textbf{Descriptions} \\
\midrule

$h$
  & $h(x)=[h_1(x),\dots,h_m(x)]^T$
  & Equality constraints \\

$g$
  & $g(x)=[g_1(x),\dots,g_l(x)]^T$
  & Inequality constraints \\

$\Sigma$
  & $\{x\in \bbR^d \mid  h(x)=0, g(x)\le0\}$
  & Constraint manifold  \\

$I_{x}$
  & $\{i \in [l] \mid  g_i(x) \geq 0\} = \mbra{i_1,..,i_{\abs{I_x}}}$
  & Active index set of inequalities \\
$g_{I_{x}}$
  & $g_{I_x}(x) = [g_{i_1}(x), ... g_{i_{\abs{I_x}}}(x)]^T$
  & Active inequality constraints \\

$J(x)$
  & $\mbra{h(x)^T , g_{i_1}(x)+\epsilon, ..., g_{i_{\abs{I_x}}}(x)+\epsilon}^T$
  & Constraint‐correction vector \\

$\Pi(x)$
  & $I-\nabla J(x)^T G(x)^{\dagger} \nabla J(x)$
  & Orthogonal projector onto $T_x \Sigma$ \\

$T_x\Sigma$
  & $\{p \in \bbR^d  \mid \nabla h(x)v=0, \nabla g_{I_x}(x)v = 0\}$
  & Tangent space of $\Sigma$ at \(x\) \\

$T^*\Sigma$
  & $\mbra{(x,p) \in \bbR^{2d} \mid x \in \Sigma, p \in T_x\Sigma} \simeq T\Sigma$
  & Cotangent bundle of $\Sigma$ \\

$\nabla_{\Sigma}f$
  & $\Pi(x) \nabla f(x)$
  & Intrinsic gradient on $\Sigma$  \\

$\divergence_{\Sigma} X$
  & $\trace{\Pi(x) \nabla X(x)}$
  & Intrinsic divergence on $\Sigma$ \\

$d\sigma_{\Sigma}$
  & Surface (Hausdorff) measure of $\Sigma$
  & Natural measure on $\Sigma$ \\

$d\sigma_{T^*\Sigma}$
  & Liouville measure of $T^*\Sigma$
  & Natural measure on $T^*\Sigma$ \\

$G(x)$
  & $\nabla J(x) \nabla J(x)^T$
  & Gram matrix\\

$\epsilon$
  & Boundary repulsion rate
  & Controls effect of repulsion. \\

$\alpha$
  & Landing rate
  & Controls constraint decay\\

$\gamma$
  & Friction coefficient
  & Used in ULLA, ULLA-P  \\

$\rho_\Sigma$
& Target (stationary) density on $\Sigma$
& Proportional to $\exp(-f)d\sigma_\Sigma$ \\

$\KL^\Sigma(\rho\|\pi)$
  & $\int_\Sigma \rho\ln\frac{\rho}{\pi} d\sigma_\Sigma$
  & KL‐divergence on $\Sigma$  \\
\midrule
$\Delta t$
  & Discrete time step size
  & Relationship: $\Delta t = T/N$ \\
$T, N$
  & Continuous and discrete terminal time
  &  Relationship: $T = N \Delta t$ \\
$\sigma(t), \sigma_k$
  & $\sigma_{\text{min}} + \frac{t}{T} (\sigma_{\text{max}} - \sigma_{\text{min}}), \quad \sigma_k = \sigma(k \Delta t)$
  & Noise schedule function\\
$q_t, p_t^\theta$
  & Continuous time marginal densities at $t$
  & Forward $q_t$, Backward $p_t^\theta$\\
$q_k, p_k^\theta$
  & Discrete time marginal densities at $k$
  & Forward $q_k$, Backward $p_k^\theta$\\

$p_N, \rho(\cdot | x)$
  & Prior distribution of $x$ and $p$ ($p_N$ varies)
  &  $\rho(\cdot |x ) \sim \Pi(x) \zeta$, \  $\zeta \sim \calN(0,I)$\\

$\tilde{p}^{\text{fwd}}_k$
  & $\Pi(x_k)\sbra{\frac{x_k - x_{k-1}}{\sigma_{k-1}^2 \Delta t}}$
  & Forward approximated momentum \\
$\tilde{p}^{\text{bwd}}_k$
  & $\Pi(x_{k+1}) \sbra{\frac{x_{k+2} - x_{k+1}}{\sigma_{k+2}^2 \Delta t}}$
  & Backward approximated momentum \\

\bottomrule
\end{tabular}
\vspace{1mm}
\end{table}

\clearpage
\begin{remark}[Comments on relaxed Constant Rank Constraint Qualification (rCRCQ)]
\label{app:remark:Comments on relaxed Constant Rank Constraint Qualification (rCRCQ)} \quad
    In this remark, we further clarify the definition of the relaxed Constant Rank Constraint Qualification (rCRCQ) and its relationship with other constraint qualifications.

    We first recall the definitions of the Linear Independence Constraint Qualification (LICQ), Constant Rank Constraint Qualification (CRCQ) \citep{janin2009directional}, and its relaxed version (rCRCQ) \citep{minchenko2011relaxed}.
    \begin{definition_ap}[LICQ, CRCQ, and rCRCQ; \citep{solodov2010constraint}]
        \label{app:def:CRCQ_rCRCQ}
        Let $\Sigma := \mbra{x \in \bbR^d \mid h(x)=0, g(x) \leq 0}$ be the feasible set, and denote $I_x = \mbra{i \in [l] \mid  g_i(x) \geq  0}$ to be the active index set of inequalities.
        \begin{itemize}[leftmargin=5mm]
            \item \textbf{LICQ} \citep{rockafellar1998variational}: LICQ holds at $x \in \Sigma$ if the set $\mbra{\nabla h_i(x)}_{i=1}^m \cup \mbra{\nabla g_j(x)}_{j \in I_x}$ is linearly independent.
            \item \textbf{CRCQ} \citep{janin2009directional}: CRCQ holds at $x \in \Sigma$ if there exists a neighborhood $U \subset \bbR^d$ of $x$ such that for \textit{any} subsets of indices $I \subset [m]$ and $J \subset I_x$, the family of gradients $\mbra{\nabla h_i(y)}_{i \in I} \cup \mbra{\nabla g_j(y)}_{j \in J}$ has a constant rank for all $y \in U$.

            \item \textbf{rCRCQ} \citep{minchenko2011relaxed}: rCRCQ holds at $x \in \Sigma$ if there exists a neighborhood $U\subset \bbR^d$ of $x$ such that for any subset of active inequalities $J \subset I_x$, the family of gradients $\mbra{\nabla h_i(y)}_{i=1}^m \cup \mbra{\nabla g_j(y)}_{j \in J}$ has a constant rank for all $y \in U$.
        \end{itemize}
    \end{definition_ap}
    The core reason for assuming rCRCQ lies in the stability of the SDE coefficients. It is a fundamental result in matrix analysis \citep{stewart1969continuity} that the Moore-Penrose pseudo-inverse  $A(x)^\dagger$ is continuous at a point $x_0$ if and only if the rank of $A(x)$ is constant in a neighborhood of $x_0$.
    By assuming rCRCQ, we guarantee that the Jacobian $\nabla J(x)$ maintains locally constant rank (even as the active set changes across strata), which ensures that the pseudo-inverse $G(x)^{\dagger}$ and the resulting projection operator $\Pi(x)$ are continuous and well-defined. Therefore, this guarantees the drift vector and diffusion matrix of the OLLA and ULLA dynamics to be well defined.

    \textbf{Hierarchy of Constraint Qualifications.}  We remark that, from the variational analysis and optimization literature (e.g., \citep{solodov2010constraint}), rCRCQ is a strictly weaker condition than CRCQ, and CRCQ is strictly weaker than LICQ, therefore, their logical implication is as follows:
    \begin{equation*}
        \text{LICQ} \implies \text{CRCQ} \implies \text{rCRCQ}.
    \end{equation*}
    In particular, rCRCQ can relax the gradient degeneracy problem appearing in LICQ.

    To illustrate a case where LICQ fails due to gradient degeneracy while rCRCQ holds, consider a feasible set $\Sigma \subset \mathbb{R}^3$ representing the $z$-axis. It is defined by two equality constraints and one redundant inequality constraint with a nonlinear term:
    \begin{align*}
        h_1(x) &= x_1 = 0, \quad h_2(x) = x_2 = 0, \\
        g(x) &= x_1 + x_2 + x_1^2 \le 0.
    \end{align*}
    On the manifold $\Sigma$ (where $x_1=x_2=0$), the inequality is active since $g(0)=0$.
    \begin{itemize}[leftmargin=5mm]
        \item \textbf{LICQ fails:} The gradients at the origin $x=0$ are $\nabla h_1 = (1, 0, 0)^T$, $\nabla h_2 = (0, 1, 0)^T$, and $\nabla g = (1, 1, 0)^T$. We observe that $\nabla g = \nabla h_1 + \nabla h_2$, meaning the gradients are linearly dependent. Thus, the Gram matrix is singular, and LICQ is violated.

        \item \textbf{rCRCQ holds:} Now consider the Jacobian matrix of the active constraints for an arbitrary point $x \in \mathbb{R}^3$:
        \begin{equation*}
            J(x) = \begin{bmatrix} \nabla h_1(x)^T \\ \nabla h_2(x)^T \\ \nabla g(x)^T \end{bmatrix} = \begin{bmatrix} 1 & 0 & 0 \\ 0 & 1 & 0 \\ 1+2x_1 & 1 & 0 \end{bmatrix}.
        \end{equation*}
        Regardless of the location $x$,  the rank of $J(x)$ is constant and equal to two in the entire neighborhood, satisfying rCRCQ and ensuring that the projection operator $\Pi(x)$ via the pseudo-inverse $G(x)^\dagger$ remains well-defined and continuous.
    \end{itemize}
    \textbf{Extended Usage in Our Framework.} While the standard definition of rCRCQ is checking the condition at a point ``$x \in \Sigma$", we appropriately extend this usage in our diffusion model context.

    In particular, since our landing-based discretized sampling algorithms (OLLA, ULLA) involves noise that may push particles slightly off the manifold, we implicitly assume that this constant rank property extends to an  sufficiently large neighborhood of $\Sigma$ which contains all discretized samples $\mbra{X_k}_{k=0}^N$, or to the entire ambient space $\bbR^d$. This ensures that the projection operator $\Pi(x)$ and the drift terms are well-defined not just on $\Sigma$, but in the surrounding ambient space $\mathbb{R}^d$ where the landing mechanism operates.
\end{remark}
\begin{remark}[Error decomposition and probable benefit of ULLA]
    \label{app:remark:Error decomposition and probable benefit of ULLA}
    Recent theoretical progress on diffusion models (e.g., \citep{chen2022sampling, strasman2026wasserstein}) suggests that the total generation error can be naturally decomposed into three distinct components. In the 2-Wasserstein distance, this can be viewed as:
    \begin{equation*}
        W_2(q_0, p^\theta_0) \leq \underbrace{\mathcal{E}_{\mathrm{mix}}}_{\text{Mixing}} + \underbrace{\mathcal{E}_{\mathrm{disc}}}_{\text{Discretization}} + \underbrace{\mathcal{E}_{\mathrm{score}}}_{\text{Score estimation}}
    \end{equation*}

    \begin{enumerate}[leftmargin=5mm]
        \item \textbf{Discretization error ($\mathcal{E}_{\mathrm{disc}}$) \& Mixing error ($\mathcal{E}_{\mathrm{mix}}$)}:
        Regarding discretization, our ULLA implementation employs a memory-efficient first-order splitting scheme; thus, both ULLA and the baseline OLLA share the same convergence order with respect to the step size. However, ULLA gains a significant advantage in the \emph{mixing error} due to the ballistic behavior of underdamped dynamics, which theoretically accelerates convergence to $\mathcal{O}(\sqrt{d}/\epsilon)$ compared to the diffusive $\mathcal{O}(d/\epsilon^2)$ of overdamped dynamics \citep{cheng2018underdamped, ma2021there}. This allows for a significantly smaller trajectory length $N$ to reach the stationary prior, thereby reducing the computational cost for training and storage.

        \item \textbf{Score estimation error ($\mathcal{E}_{\mathrm{score}}$)}:
        Employing a constrained forward process with the proposed landing mechanism allows the model to faithfully capture the intrinsic geometry of $\Sigma$. Crucially, because the landing mechanism analytically handles the ill-conditioned normal component, the score network $s_t^\theta$ is only required to learn the smoother tangential component $\Pi(x)s_t^{\text{true}}$ \citep{liu2025improving}. Adopting underdamped dynamics introduces a trade-off: learning on the extended phase space potentially increases regression complexity compared to position-only models. However, since empirical data distributions are usually supported on some data manifold $\Sigma_{\text{data}} \subset \Sigma$, standard overdamped models suffer from score singularities where $\norm{s_t^{\text{true}}}_2 \propto \mathcal{O}(1/t)$ near $t=0$ \citep{liu2025improving}. In contrast, as highlighted in \citet{dockhorn2022score}, underdamped dynamics yield a smoother training objective that bypasses this singularity problem due to the existence of momentum variable.

        \autoref{app:fig:jacobian_comparison} provides empirical evidence of this effect on the volcano experiment. The underdamped model exhibits Jacobian norms that are several orders of magnitude smaller across all times and, in particular, does \emph{not} show the sharp blow-up near $t \approx 0$ that appears in the overdamped case. This suggests that ULLA provides a numerically better-conditioned score regression problem, which can potentially reduce $\calE_{\mathrm{score}}$ in practice.
    \end{enumerate}
\end{remark}

\begin{figure}[h!]
    \centering
    \vspace{-10pt}
    \includegraphics[width=0.50\linewidth]{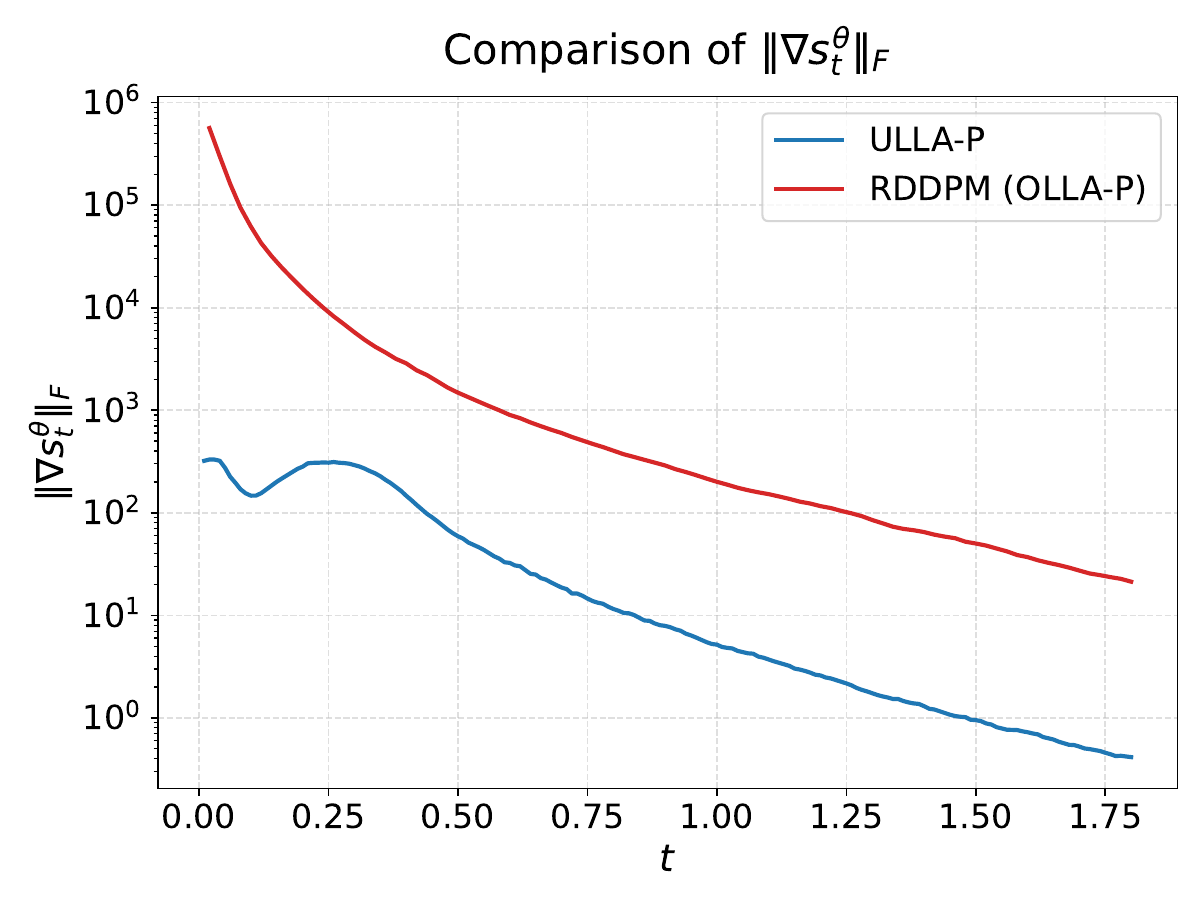}
    \caption{
        Comparison of the Frobenius norm of the score Jacobian $\norm{\nabla s_t^\theta}_F$ over time on the volcano experiment.
        The overdamped RDDPM (OLLA-P) baseline (red) exhibits very large Jacobian norms and a pronounced singular behavior as $t \to 0$,
        while the underdamped ULLA-P sampler (blue) remains several orders of magnitude smaller and shows no blow-up near $t \approx 0$. Jacobian is taken over to position $x$ for the overdamped and to momentum $p$ for the underdamped.
    }
    \vspace{-20pt}
    \label{app:fig:jacobian_comparison}
\end{figure}

\begin{remark}[Explicit versus implicit landing] \quad
The main algorithms and analysis in this paper focus on explicit landing. Nevertheless, one may also use an implicit landing variant in practice, where the tangential proposal is formed first and the landing correction is evaluated at the proposed point. We explain the distinction in the equality-only case $\Sigma=\{x\in\bbR^d:h(x)=0\}$ with $\kappa=0$. Assume $h$ is $C^2$ and $\nabla h$ has full row rank near $\Sigma$.

For OLLA, the tangential part of the forward update can be written as
\begin{equation*}
    \tilde{x}_{k+1}=x_k+u_k^O,\qquad
    u_k^O=-\frac{\sigma_k^2\Delta t}{2}\Pi(x_k)\nabla f(x_k)
    +\sigma_k\sqrt{\Delta t}\Pi(x_k)\zeta_k .
\end{equation*}
For ULLA, the corresponding tangential/inertial proposal is
\begin{equation*}
\begin{aligned}
    \tilde{x}_{k+1}=x_k+u_k^U,\qquad
    u_k^U
    &=\sigma_k^2\Delta t\Pi(x_k)
    \left(a_k\tilde p_k^{\mathrm{fwd}}-\sigma_k^2\Delta t\nabla f(x_k)\right) +\sigma_k^2\Delta t\sqrt{1-a_k^2}\Pi(x_k)\zeta_k .
\end{aligned}
\end{equation*}
In both cases, the proposal increment is tangent to the current level set, i.e., $\nabla h(x_k)u_k^O=0$ and $\nabla h(x_k)u_k^U=0$. Let $u_k$ denote either proposal increment, and let $\beta_k$ be the scalar prefactor multiplying $(\nabla h^\top G^\dagger h)$ in the corresponding landing update. The explicit landing update evaluates the normal correction at $x_k$, giving $x_{k+1}^{\mathrm{exp}}=\tilde{x}_{k+1}-\beta_k(\nabla h^\top G^\dagger h)(x_k)$, whereas the implicit landing update evaluates the correction at the proposal, giving $x_{k+1}^{\mathrm{imp}}=\tilde{x}_{k+1}-\beta_k(\nabla h^\top G^\dagger h)(\tilde{x}_{k+1})$.

\begin{lemma_ap}[Local residual comparison]
\label{app:lem:explicit_implicit_landing_eq}
For a bounded $\beta_k$ and a sufficiently small tangential proposal $u_k$, the explicit update satisfies
\begin{equation*}
    h(x_{k+1}^{\mathrm{exp}})=(1-\beta_k)h(x_k)+O\left(\norm{h(x_k)}^2+\norm{u_k}^2\right),
\end{equation*}
whereas the implicit update satisfies
\begin{equation*}
    h(x_{k+1}^{\mathrm{imp}})
    =(1-\beta_k)h(x_k)
    +O\left(\norm{h(x_k)}^2+\abs{1-\beta_k}\norm{u_k}^2+\norm{u_k}^4\right).
\end{equation*}
In particular, when $\beta_k=1$,
\begin{equation*}
    \norm{h(x_{k+1}^{\mathrm{exp}})}
    =O\left(\norm{h(x_k)}^2+\norm{u_k}^2\right),
    \qquad
    \norm{h(x_{k+1}^{\mathrm{imp}})}
    =O\left(\norm{h(x_k)}^2+\norm{u_k}^4\right).
\end{equation*}
\end{lemma_ap}

\begin{proof}
Set $L(x)=(\nabla h^\top G^\dagger h)(x)$. Since $\nabla h$ has full row rank, we have
\begin{equation*}
    \nabla h(x)L(x)=h(x),\qquad \norm{L(x)}\le C\norm{h(x)}
\end{equation*}
for some constant $C$ near $\Sigma$. For explicit landing, Taylor expansion around $x_k$ gives
\begin{equation*}
\begin{aligned}
h(x_{k+1}^{\mathrm{exp}})
&=h(x_k+u_k-\beta_k L(x_k))\\
&=h(x_k)+\nabla h(x_k)u_k-\beta_k\nabla h(x_k)L(x_k)+O(\norm{u_k}^2+\norm{u_k}\norm{L(x_k)}+\norm{L(x_k)}^2)\\
&=(1-\beta_k)h(x_k)+O(\norm{h(x_k)}^2+\norm{u_k}^2),
\end{aligned}
\end{equation*}
where we used $\nabla h(x_k)u_k=0$. For implicit landing, Taylor expansion around $\tilde{x}_{k+1}=x_k+u_k$ gives
\begin{equation*}
\begin{aligned}
h(x_{k+1}^{\mathrm{imp}})
&=h(\tilde{x}_{k+1})-\beta_k\nabla h(\tilde{x}_{k+1})L(\tilde{x}_{k+1})+O(\norm{L(\tilde{x}_{k+1})}^2) =(1-\beta_k)h(\tilde{x}_{k+1})+O(\norm{h(\tilde{x}_{k+1})}^2).
\end{aligned}
\end{equation*}
Moreover,
\begin{equation*}
    h(\tilde{x}_{k+1})=h(x_k+u_k)=h(x_k)+O(\norm{u_k}^2),
\end{equation*}
again using $\nabla h(x_k)u_k=0$. Substituting this into the previous equation yields
\begin{equation*}
\begin{aligned}
h(x_{k+1}^{\mathrm{imp}})
&=(1-\beta_k)h(x_k)+O(\abs{1-\beta_k}\norm{u_k}^2) +O\left(\norm{h(x_k)}^2+\norm{u_k}^4\right),
\end{aligned}
\end{equation*}
which proves the claim.
\end{proof}

This shows that explicit landing contracts the residual evaluated at the current base point $x_k$, while implicit landing contracts the residual after the proposal has been formed. Thus, implicit landing directly controls the infeasibility created by the current tangential proposal. Also, implicit landing can obtain a lower residual order in practice at the cost of one additional proposal-point evaluation of $h$, $\nabla h$, and $G^\dagger$. When using implicit landing, we adapt $\alpha$ across steps so that $\beta_k=1$. We use this implicit variant for all OLLA/ULLA experiments on $S^2$, and for OLLA on the mesh datasets.
\end{remark}

\clearpage

\begin{algorithm}[H]
\caption{Full Diffusion Pipeline for OLLA / OLLA-P (=RDDPM \cite{liu2025riemannian})}
\label{app:alg:olla_full_pipeline}
\begin{algorithmic}[1]
    \renewcommand{\Comment}[1]{\hfill $\triangleright$ #1}

    \State \textbf{Input:} Data distribution $q_{\text{data}}$, initial score network $s^\theta_k(x)$, number of steps $N$, terminal step $N$, landing rate $\alpha$, boundary repulsion rate $\epsilon$, constraints $h, g$.
    \State \textbf{Options:} \texttt{mode} $\in \{\text{OLLA}, \text{OLLA-P}\}$, \texttt{use\_curvature} $\in \{\text{True}, \text{False}\}$
    \State \textbf{Output:} Trained score network $s^\theta_k(x)$, generated sample $x_0$
    \Statex

    \vspace{-0.1em}\noindent\rule{\linewidth}{0.4pt}\vspace{-0.1em}
    \Statex \textbf{Part 1: Forward Process (Noising)} \Comment{Run Forward Process per $l_f$ iterations}
    \vspace{-1.0em}
    \Statex
    \State Sample $x_0 \sim q_{\text{data}} = q_0$
    \For{$k \in \mbra {0, \dots, N-1}$}
        \State Compute $\nabla f(x_k)$, $J(x_k)$, $\nabla J(x_k)$, $G(x_k)^{\dagger}$, $\Pi(x_k)$
        \State $\bar{\mu}^o_{k}(x_k) \gets x_k -\frac{1}{2}\sigma_k^2 \Delta t \Pi(x_k)\nabla f(x_k) $ \Comment{Prior drift term}
        \If{\texttt{mode} = OLLA-P} \Comment{Projection-based noising}
            \State $x_{k+1} \gets \text{Proj}_{\Sigma}(\bar{\mu}_k^o(x_k) + \sigma_k\sqrt{\Delta t}\Pi(x_k)\zeta_k), \quad \zeta_k \sim\calN(0,I_d)$
        \Else \Comment{Landing-based noising (OLLA)}
            \State $\mathcal{H}(x_k) \gets 0$
            \If{\texttt{use\_curvature}}
                \State $\text{Tr} \gets [\trace{\Pi \nabla^2 J_1}, \dots, \trace{\Pi \nabla^2 J_{m + \abs{I_{x_k}}}}]^T$
                \State $\mathcal{H}(x_k) \gets -\nabla J(x_k)^T G(x_k)^{\dagger} \text{Tr}$
            \EndIf

            \State $L_{k}(x_k) \gets -\alpha \sigma_k^2 \Delta t \nabla J(x_k)^T G(x_k)^{\dagger} J(x_k) $ \Comment{Landing term}
            \State $\kappa_k^O(x_k) \gets \frac{1}{2}\sigma_k^2 \Delta t \mathcal{H}(x_k) $ \Comment{Curvature term}
            \State $x_{k+1} \gets \bar{\mu}^O_k(x_k) + L_{k}(x_k) + \kappa_k^O(x_k) + \sigma_k\sqrt{\Delta t}\Pi(x_k)\zeta_k, \quad \zeta_k \sim\calN(0,I_d)$
        \EndIf
    \EndFor
    \State $x_{N} \gets \text{Proj}_{\Sigma}(x_{N})$ \Comment{Terminal projection by Newton's method}
    \State Store trajectory $\{x_k\}_{k=0}^N$
    \Statex

    \vspace{-0.2em}\noindent\rule{\linewidth}{0.4pt}
    \Statex \textbf{Part 2: Score Network Training}
    \State $L_{\text{CWPM}}^{\text{over}}(\theta) \gets \sum_{k=0}^{N-1}\frac{\|\Pi(x_{k+1})(x_k - \mu_{k+1}^{o}(x_{k+1}))\|^2}{2\sigma_{k+1}^2 \Delta t}$
    \State Update network parameters: $\theta \gets \theta - \eta \nabla_\theta L_{\text{CWPM}}^{\text{over}}(\theta)$ \Comment{$\eta$ is the learning rate}
    \Statex

    \noindent\rule{\linewidth}{0.4pt}\vspace{-0em}
    \Statex \textbf{Part 3: Backward Process (Sampling)}
    \State Sample $x_N \sim p_N$ (prior)
    \For{$k \in \mbra{N, \dots, 1}$}
            \State Compute $\nabla f(x_k), J(x_k)$, $\nabla J(x_k)$, $G(x_k)^{\dagger}$, $\Pi(x_k)$
            \State $\mu_{k}^o(x_k) \gets x_k + \frac{1}{2}\sigma_k^2 \Delta t\Pi(x_k)[\nabla f(x_k) + s^\theta_k(x_k)]$
        \If{\texttt{mode} = OLLA-P} \Comment{Projection-based variant}
            \State $x_{k-1} \gets \text{Proj}_{\Sigma} \left( \mu^o_{k}(x_k) + \sigma_k\sqrt{\Delta t}\Pi(x_k)\zeta_k \right)$
        \Else \Comment{Landing-based variant (OLLA)}
            \State $\mathcal{H}(x_k) \gets 0$
            \If{\texttt{use\_curvature}}
                \State $\text{Tr} \gets [\trace{\Pi \nabla^2 J_1}, \dots, \trace{\Pi \nabla^2 J_{m + \abs{I_{x_k}}}}]^T$
                \State $\mathcal{H}(x_k) \gets -\nabla J(x_k)^T G(x_k)^{\dagger} \text{Tr}$
            \EndIf
            \State $L_{k}(x_k) \gets -\alpha \sigma_k^2  \Delta t\nabla J(x_k)^T G(x_k)^{\dagger} J(x_k)$
            \State $\kappa_k^o(x_k) \gets \frac{1}{2}\sigma_k^2 \Delta t \mathcal{H}(x_k) $
            \State $x_{k-1} \gets  \mu_{k}^O(x_k) + L_{k}(x_k) + \kappa_k^o (x_k) + \sigma_k\sqrt{\Delta t}\Pi(x_k)\zeta_k$
        \EndIf
    \EndFor
    \State $x_{0} \gets \text{Proj}_{\Sigma}(x_{0})$ \Comment{Terminal projection by Newton's method}
    \State \Return $x_0$
\end{algorithmic}
\end{algorithm}

\begin{algorithm}[H]
\caption{Full Diffusion Pipeline for ULLA / ULLA-P}
\label{app:alg:ulla_full_pipeline}
\begin{algorithmic}[1]
    \fontsize{9.5pt}{10.0pt}\selectfont
    \renewcommand{\Comment}[1]{\hfill $\triangleright$ #1}

    \State \textbf{Input:} Data distribution $q_{\text{data}}$, initial score network $s^\theta_k(x,p)$, number of steps $N$, terminal time $T$, landing rate $\alpha$, boundary repulsion rate $\epsilon$, friction $\gamma$, constraints $h, g$.
    \State \textbf{Options:} \texttt{mode} $\in \{\text{ULLA}, \text{ULLA-P}\}$, \texttt{use\_curvature} $\in \{\text{True}, \text{False}\}$
    \State \textbf{Output:} Trained score network $s^\theta_k(x,p)$, generated sample $x_0$
    \Statex

    \vspace{-0.1em}\noindent\rule{\linewidth}{0.4pt}\vspace{-0.1em}
    \Statex \textbf{Part 1: Forward Process (Noising)} \Comment{Run Forward Process per $l_f$ iterations}
    \vspace{-1.0em}
    \Statex
    \State Sample $x_0 \sim q_{\text{data}}$, $p_0 \sim \calN(0,I_d)$ and set $\tilde{p}_0 \gets \Pi(x_0)p_0$
    \State $x_{-1} \gets x_0 - \sigma_{-1}^2 \Delta t \tilde{p}_0 \qquad (\sigma_{-1}:=\sigma_0)$ \Comment{Create pseudo-point for first momentum}
    \For{$k \in \mbra {0, \dots, N-1}$}
        \State Compute $\nabla f(x_k), J(x_k), \nabla J(x_k), G(x_k)^{\dagger}, \Pi(x_k)$
        \State $\tilde{p}_k^{\text{fwd}} \gets \Pi(x_k)\left(\frac{x_k-x_{k-1}}{\sigma_{k-1}^2 \Delta t}\right)$ \Comment{Approximate momentum from positions}
        \State $a_k \gets e^{-\gamma\sigma_k^2\Delta t}$
        \State $\bar{\mu}^u_{k}(x_k, \tilde{p}_k^{\text{fwd}}) \gets x_k + \sigma_k^2 \Delta t \Pi(x_k)[a_k\tilde{p}_k^{\text{fwd}} - \sigma_k^2 \Delta t \nabla f(x_k)]$ \Comment{Prior drift term}
        \If{\texttt{mode} = ULLA-P} \Comment{Projection-based noising}
            \State $x_{k+1} \gets \text{Proj}_{\Sigma}( \bar{\mu}^u_k(x_k, \tilde{p}_k^{\text{fwd}}) + \sigma_k^2\Delta t\sqrt{1-a_k^2}\Pi(x_k)\zeta_k), \quad \zeta_k \sim\calN(0,I_d)$
        \Else \Comment{Landing-based noising (ULLA)}
            \State $\mathcal{H}_1(x_k, \tilde{p}_k^{\text{fwd}}), \mathcal{H}_2(x_k, \tilde{p}_k^{\text{fwd}}) \gets 0, 0$
            \If{\texttt{use\_curvature}}
                 \State Compute $\mathcal{H}_1, \mathcal{H}_2$ using $x_k, \tilde{p}_k^{\text{fwd}}$
            \EndIf
            \State $L_{k}(x_k) \gets -\alpha \sigma_k^2 \Delta t \nabla J(x_k)^T G(x_k)^{\dagger} J(x_k)$ \Comment{Landing term}
            \State $\kappa_{k, \text{fwd}}^U(x_k, \tilde{p}_k^{\text{fwd}}) \gets -\sigma_k^4 \Delta t^2 \nabla J(x_k)^T G^{\dagger}(x_k)[\mathcal{H}_1 - \alpha \mathcal{H}_2]$ \Comment{Curvature term}
            \State $x_{k+1} \gets \bar{\mu}^u_k(x_k, \tilde{p}_k^{\text{fwd}}) + L_k(x_k) + \kappa_{k, \text{fwd}}^U(x_k, \tilde{p}_k^{\text{fwd}}) + \sigma_k^2\Delta t\sqrt{1-a_k^2}\Pi(x_k)\zeta_k$
        \EndIf
    \EndFor
    \State $x_{N} \gets \text{Proj}_{\Sigma}(x_{N})$ \Comment{Terminal projection by Newton's method}
    \State Store trajectory $\{x_k\}_{k=0}^N$
    \Statex

    \vspace{-0.2em}\noindent\rule{\linewidth}{0.4pt}
    \Statex \textbf{Part 2: Score Network Training}
    \State $L_{\text{CWPM}}^{\text{under}}(\theta) \gets \sum_{k=0}^{N-1}\frac{\|\Pi(x_{k+1})(x_k - \mu_{k+1}^{u}(x_{k+1}, x_{k+2}))\|^2}{2\sigma_{k+1}^4 \Delta t^2(1-a_{k+1}^2)}$
    \State Update network parameters: $\theta \gets \theta - \eta \nabla_\theta L_{\text{CWPM}}^{\text{under}}(\theta)$ \Comment{$\eta$ is the learning rate}
    \Statex

    \noindent\rule{\linewidth}{0.4pt}\vspace{-0em}
    \Statex \textbf{Part 3: Backward Process (Sampling)}
    \State Sample $x_N \sim p_N$ (prior), $p_N \sim \mathcal{N}(0,I_d)$. Set $\tilde{p}_N \gets \Pi(x_N)p_N$.
    \State $x_{N+1} \gets x_N + \sigma_N^2 \Delta t \tilde{p}_N$ \Comment{Create pseudo-point for terminal momentum}
    \For{$k \in \mbra{N, \dots, 1}$}
        \State Compute $\nabla f(x_k), J(x_k), \nabla J(x_k), G(x_k)^{\dagger}, \Pi(x_k)$
        \State $\tilde{p}_k \gets \Pi(x_k)\left(\frac{x_{k+1}-x_k}{\sigma_{k+1}^2 \Delta t}\right)$
        \State $a_k \gets e^{-\gamma\sigma_k^2\Delta t}$
        \State $\mu_{k}^u(x_k, \tilde{p}_k^{\text{bwd}}) \gets x_k - \sigma_k^2 \Delta t\Pi(x_k)[a_k \tilde{p}_k^{\text{bwd}} + \sigma_k^2 \Delta t(\nabla f(x_k) + s^\theta_k(x_k, \tilde{p}_k^{\text{bwd}}))]$
        \If{\texttt{mode} = ULLA-P} \Comment{Projection-based variant}
            \State $x_{k-1} \gets \text{Proj}_{\Sigma} \left(\mu^u_{k}(x_k, \tilde{p}_k^{\text{bwd}}) + \sigma_k^2\Delta t\sqrt{1-a_k^2}\Pi(x_k)\zeta_k \right)$
        \Else \Comment{Landing-based variant (ULLA)}
            \State $\mathcal{H}_1(x_k, \tilde{p}_k^{\text{bwd}}), \mathcal{H}_2(x_k, \tilde{p}_k^{\text{bwd}}) \gets 0, 0$
            \If{\texttt{use\_curvature}}
                \State Compute $\mathcal{H}_1, \mathcal{H}_2$ using $x_k, \tilde{p}_k^{\text{bwd}}$
            \EndIf
            \State $L_{k}(x_k) \gets -\alpha \sigma_k^2 \Delta t\nabla J(x_k)^T G(x_k)^{\dagger} J(x_k)$
            \State $\kappa_k^U(x_k, \tilde{p}_k^{\text{bwd}}) \gets -\sigma_k^4 \Delta t^2 \nabla J(x_k)^T G^{\dagger}(x_k)[\mathcal{H}_1 + \alpha \mathcal{H}_2]$
            \State $x_{k-1} \gets \mu_{k}^u(x_k, \tilde{p}_k^{\text{bwd}}) + L_{k}(x_k) + \kappa_k^U (x_k, \tilde{p}_k^{\text{bwd}}) + \sigma_k^2\Delta t\sqrt{1-a_k^2}\Pi(x_k)\zeta_k$
        \EndIf
    \EndFor
    \State $x_{0} \gets \text{Proj}_{\Sigma}(x_{0})$ \Comment{Terminal projection by Newton's method}
    \State \Return $x_0$
\end{algorithmic}
\end{algorithm}

\section{Constrained Langevin Dynamics}
\label{app:sec:Constrained Langevin Dynamics}
In this section, we review the constrained Langevin dynamics and introduce their landing versions.
\subsection{Construction of OLLA}
\label{app:subsec:Construction of OLLA}

\textbf{Notations and Background for overdamped setup.} \quad We consider the constrained set
\begin{equation*}
    \Sigma:=\mbra{x \in \bbR^d \mid h(x) =0 ,g(x) \leq0},
\end{equation*}
assumed to be a stratified manifold $\bbR^d$ with rCRCQ satisfied. We define the stacked active constraint map and its Jacobian as
\begin{equation*}
    J(x) := \lbra{h(x), g_{I_x}(x) + \epsilon} \in \bbR^{m + \abs{I_x}}, \quad \nabla J(x) \in \bbR^{(m+ \abs{I_x}) \times d}
\end{equation*}
where $I_x$ denotes the set of active inequality constraints, i.e., $I_x:=\mbra{i\in [l]~|~g_i(x)\geq 0}$.
Denote the Gram matrix $G(x) := \nabla J(x) \nabla J(x)^T \in \bbR^{(m+ \abs{I_x}) \times (m+ \abs{I_x})} $. The orthogonal projector onto the tangent space of $T_x\Sigma := \mbra{p \in \bbR^d \mid \nabla J(x) p =0}$ is given by $\Pi(x) = I -\nabla J(x)^T G(x)^{\dagger} \nabla J(x)$. On this manifold $\Sigma$, all intrinsic differential operators are defined via the projector $\Pi$. For a smooth scalar function $\phi$ and smooth vector field $X$ on $\Sigma$, we have
\begin{equation*}
    \nabla_\Sigma \phi := \Pi \nabla \phi, \quad \divergence_\Sigma (X) = \trace{\Pi \nabla X}
\end{equation*}
and the Laplace-Betrami operator is $\Delta_\Sigma \phi := \divergence_\Sigma (\nabla_\Sigma \phi)$, where $\Delta$ denotes ambient Euclidean gradient or Jacobian. For comprehensive backgrounds on constrained overdamped Langevin dynamics, see Chapter 3.2 in \citet{rousset2010free}.

\begin{proposition_ap}[Construction of OLLA]
    \label{app:prop:Construction of OLLA}
    Consider the following Lagrangian-form constrained overdamped Langevin dynamics:
    \begin{equation}
        \label{app:eqn:lagrangian form-overdamped}
        dX_t = -\frac{1}{2} \sigma(t)^2 \nabla f(X_t)dt + \sigma(t) \circ dW_t + \nabla J(X_t)^T d\lambda_t
    \end{equation}
    where $d\lambda_t$ is the adapted process such that $dJ(X_t) = -\alpha \sigma(t)^2 J(X_t)$. The explicit minimum norm solution of $d\lambda_t$ is given by
    \begin{equation*}
        d \lambda_t = G^{\dagger}(X_t) \lbra{-\alpha \sigma(t)^2 J(X_t) dt + \frac{1}{2} \sigma(t)^2
        \nabla J(X_t) \nabla f(X_t)dt - \sigma(t) \nabla J(X_t) \circ dW_t},
    \end{equation*}
    with $G(X_t) := \nabla J(X_t) \nabla J(X_t)^T$ defined as the Gram matrix. Therefore, the closed form SDE of \eqref{app:eqn:lagrangian form-overdamped} is as follows:
    \begin{align*}
        dX_t =& -\lbra{\frac{\sigma(t)^2}{2} \Pi(X_t) \nabla f(X_t) + \alpha \sigma(t)^2 \nabla J(X_t)^T G^{\dagger}(X_t)J(X_t)}dt + \frac{\sigma(t)^2}{2}\calH(X_t)dt \\
        & + \sigma(t)\Pi(X_t)dW_t,
    \end{align*}
    where $\calH$ is the mean curvature correction term defined as
    \begin{equation*}
        \calH(x) := -\nabla J(x)^T G^{\dagger}(x) \lbra{\trace{\nabla^2 J_1(x) \Pi(x)}, ..., \trace{\nabla^2 J_{m + \abs{I_x}}(x) \Pi(x)}}^T.
    \end{equation*}
\end{proposition_ap}
\begin{proof}
    From the Stratonovich chain rule, it holds that
    \begin{equation*}
        -\alpha \sigma(t)^2 J(X_t) dt = \nabla J(X_t) \circ dX_t = -\frac{1}{2} \sigma(t)^2 \nabla J(X_t) \nabla f(X_t) dt + \sigma(t) \nabla J(X_t) \circ dW_t + G(X_t) d\lambda_t
    \end{equation*}
    Among the many solutions $d\lambda_t$ satisfying the above equation, we choose the (unique) minimum norm solution of $d\lambda_t$ process:
    \begin{equation*}
        d \lambda_t = G^{\dagger}(X_t) \lbra{-\alpha \sigma(t)^2 J(X_t) dt + \frac{1}{2} \sigma(t)^2 \nabla J(X_t) \nabla f(X_t)dt - \sigma(t) \nabla J(X_t) \circ dW_t}.
    \end{equation*}
    We remark that $\nabla J(X_t)^T d\lambda_t$ is unique regardless of the choice of solution $d\lambda_t$.
    Substituting back to the SDE \eqref{app:eqn:lagrangian form-overdamped} gives the following Stratonovich version of the unique closed-form SDE:
    \begin{equation*}
        dX_t = -\lbra{\frac{1}{2} \sigma(t)^2 \Pi(X_t) \nabla f(X_t) + \alpha \sigma(t)^2 \nabla J(X_t)^T G^{\dagger}(X_t)J(X_t)}dt + \sigma(t) \Pi(X_t) \circ dW_t.
    \end{equation*}
    To recover the It\^o version of the closed-form SDE, we observe that the It\^o-Stratonovich correction term coincides with the mean curvature term of a stratum $\Sigma_{I_x} := \mbra{ x \in \bbR^d \mid J(x) = 0}$ and its representation is given by
    \begin{equation*}
        \frac{1}{2}\nabla \sbra{\sigma(t) \Pi} \sbra{\sigma(t) \Pi} = \frac{\sigma(t)^2}{2} \nabla (\Pi) \Pi = \frac{\sigma(t)^2}{2} \sum_{k=1}^d \nabla (\Pi_k) \Pi_k.
    \end{equation*}
    From the same tensor-calculus technique of Equation 3.46 in \citet{rousset2010free}, we observe that $(\nabla \Pi) \Pi$ is given by
    \begin{equation*}
        \nabla \Pi(x) \Pi(x) \mkern-5mu =  -\nabla J(x)^T G^{\dagger}(x) \lbra{\trace{\nabla^2 J_1(x) \Pi(x)}, ..., \trace{\nabla^2 J_{m + \abs{I_x}}(x) \Pi(x)}}^T.
    \end{equation*}
    Therefore, this gives the following It\^o version of the closed-form SDE:
    \begin{align*}
        dX_t =& -\lbra{\frac{\sigma(t)^2}{2} \Pi(X_t) \nabla f(X_t) + \alpha \sigma(t)^2 \nabla J(X_t)^T G^{\dagger}(X_t)J(X_t)}dt + \frac{\sigma(t)^2}{2}\calH(X_t)dt + \sigma(t)\Pi(X_t)dW_t.
    \end{align*}
\end{proof}

\begin{theorem_ap}[Fokker-Planck equation \citep{chirikjian2009stochastic, huang2022riemannian} and the generator \citep{watanabe2011stochastic} on Riemannian manifold] \
\label{app:thm:Fokker-Planck equation in Riemannian manifold}
Let $Z_t \in \Sigma$ be a stochastic process following the SDE:
\begin{equation*}
    dZ_t = V_0 dt + \sum_{k=1}^d V_k \circ dB_t^k,
\end{equation*}
where $V_0, V_k$ are smooth vector fields on $\Sigma$ for each $k \in [d]$ and $B_t^k$ are $k$th components of Brownian motion $B_t$. Then, the law $\rho_t$ of the stochastic process $Z_t$ satisfies the following Fokker-Planck equation:
\begin{equation*}
    \partial_t \rho_t = -\divergence_\Sigma(\rho_t V_0) + \frac{1}{2}\sum_{k=1}^d \divergence_\Sigma(\divergence_\Sigma(\rho_t V_k)V_k).
\end{equation*}
Also, the generator of $\calL$ of the corresponding SDE is provided as
\begin{equation*}
    \calL \phi = V_0 \phi + \frac{1}{2} \sum_{k=1}^d V_k(V_k \phi)
\end{equation*}
for any smooth function $\phi$ on $\Sigma$.
\end{theorem_ap}

\begin{lemma_ap}[Boundary condition of OLLA]
    \label{app:lem:Boundary condition of OLLA}
    Assuming $X_0 \in \Sigma$, OLLA \eqref{app:eqn:lagrangian form-overdamped} satisfies the following boundary condition on $\partial \Sigma$ and property for $t \geq 0$:
    \begin{equation*}
        (1) \ \ \inner{J_t(x), n(x)} = 0 \quad \text{ a.e. on $\partial \Sigma$}, \qquad  (2) \ \ X_t \in \Sigma \quad a.s
    \end{equation*}
    where $J_t(x)$ is the probability current density defined by $\partial_t \rho_t = - \divergence_\Sigma (J_t)$ and $n(x)$ is the outward unit normal vector on $\partial \Sigma$.
\end{lemma_ap}
\begin{proof}
    First, we show that $\bbP(g_k(X_t) \leq 0) = 1$ for $t \geq 0$ and $k \in [l]$. To show this, we define a convex smooth violation penalty function $\Psi^k_\delta(x): \bbR^d \rightarrow \bbR$ as follows:
    \begin{equation*}
        \Psi^k_\delta(x) := \phi_\delta (g_k(x)) ,\qquad \phi_\delta(r) :=
        \begin{cases}
            \frac{r^2}{2\delta}, & 0 \leq r \leq \delta\\
            r - \delta /2 , & r \geq \delta \\
            0 & r <0.
        \end{cases}
    \end{equation*}
    Then, $\phi_\delta$ is convex, $C^1$, and satisfies
    \begin{equation*}
        \phi_\delta \downarrow (r)_+, \quad \phi'_\delta(r) \rightarrow \eye_{\mbra{ r > 0}}, \quad \text{as $\delta \downarrow 0$}
    \end{equation*}
    with $(r)^+ := \max \mbra{r, 0}$. Now, we observe that, on $\mbra{ g_k \geq 0}$, the Stratonovich chain rule (as in \autoref{app:lem:Exponential decay of constraint functions}) gives
    \begin{equation*}
        dg_k(X_t) = -\alpha \sigma(t)^2 \sbra{g_k(X_t) +\epsilon} dt .
    \end{equation*}
    Therefore, applying It\^o's lemma on $\phi_\delta(g_i(X_t))$ gives
    \begin{align*}
        d\phi_\delta(g_k(X_t)) &= \phi_\delta'(g_k(X_t)) dg_k(X_t) + \frac{1}{2} \phi_\delta''(g_k(X_t)) \underbrace{d \inner{g_k(X_t), g_k(X_t)}_t}_{\text{Quadratic variation} = 0} \\
        &= -\alpha \sigma(t)^2 (g_k(X_t) +\epsilon) \phi_\delta ' (g_k(X_t)) dt.
    \end{align*}
    For the case $\mbra{g_k < 0}$, it trivially holds that $\phi_\delta (g_k(X_t)) = 0$ with $\phi_\delta'(g_k(X_t)) = 0$. Therefore, the above observations lead to the following relation for $t \geq 0$:
    \begin{equation*}
        \frac{d}{dt} \bbE[\Psi^k_\delta(X_t)] = \frac{d}{dt} \bbE [\phi_\delta (g_k(X_t))] = -\alpha \sigma(t)^2  \bbE \lbra{(g_k(X_t) +\epsilon) \phi'_\delta (g_k(X_t))}.
    \end{equation*}
    At this moment, we note that the non-decreasing property of $\phi_\delta' (r)$ implies, for $\forall r \geq 0$,
    \begin{equation*}
        \phi_\delta(r) = \int_{0}^r \phi_\delta'(s)ds \leq  \int_{0}^r \phi_\delta'(r) ds \leq (r+\epsilon) \phi_\delta'(r) \quad \Rightarrow \quad \Psi^k_\delta(x) \leq  (g_k(x)+\epsilon) \phi_\delta'(g_k(x))
    \end{equation*}
    where the inequality $\phi_\delta(r) \leq (r+\epsilon) \phi_\delta'(r)$ also holds trivially for $r<0$. Hence, we finally have
    \begin{equation*}
        \frac{d}{dt} \bbE[\Psi^k_\delta(X_t)] = -\alpha \sigma(t)^2  \bbE \lbra{(g_k(X_t) +\epsilon) \phi'_\delta (g_k(X_t))} \leq -\alpha \sigma(t)^2 \bbE[\Psi_\delta(X_t)]
    \end{equation*}
    and the Gr\"onwall inequality gives
    \begin{equation*}
        0 \leq \bbE[\Psi^k_\delta(X_t)] \leq e^{-\alpha \int_0^t \sigma(s)^2 ds} \bbE[\Psi^k_\delta(X_0)] = 0 \quad (\because X_0 \in \Sigma)
    \end{equation*}
    which leads to $(g_k(X_t))_+ = 0 \Rightarrow g_k(X_t) \leq 0$ for $k \in [l]$ by letting $\delta \downarrow 0$ and applying the monotone convergence theorem. This proves $\bbP(g(X_t) \leq 0) = 1$ for $t \geq 0$ and $X_t \in \Sigma$ a.s.

    Next, we prove $\inner{J_t(x), n(x)} = 0$ for $x \in \partial\Sigma$. We first observe that \autoref{app:thm:Fokker-Planck equation in Riemannian manifold} gives the following Fokker-Planck equation for $g(x) >0$:
    \begin{align*}
        \partial_t \rho_t &= -\divergence_\Sigma \sbra{\rho_t \lbra{-\frac{\sigma^2}{2} \nabla_\Sigma f - \alpha \sigma^2 \nabla J^T G^{\dagger} J}} + \frac{\sigma^2}{2}\sum_{k=1}^d \divergence_\Sigma (\divergence_\Sigma (\rho_t f_k)f_k) \\
        &= -\divergence_\Sigma \sbra{\rho_t \lbra{-\frac{\sigma^2}{2} \nabla_\Sigma f - \alpha \sigma^2 \nabla J^T G^{\dagger} J} - \frac{\sigma^2}{2} \nabla_\Sigma \rho_t},
    \end{align*}
    where $f_k := \Pi e_k$ and $e_k$ being the $k$-th standard basis of $\bbR^d$. Also, we remark that the last equality holds using the property:
    \begin{equation*}
        \sum_{k=1}^d \divergence_\Sigma (\rho_t f_k) f_k = \sum_{k=1}^d \inner{\nabla_\Sigma \rho_t, f_k} f_k +  \rho_t \underbrace{\sum_{k=1}^d \divergence_\Sigma (f_k) f_k}_{=0} = \nabla_\Sigma \rho_t.
    \end{equation*}
    Therefore, the probability current density $J_t$ is given as follows
    \begin{equation*}
        J_t = -\rho_t \lbra{\frac{\sigma^2}{2} \nabla_\Sigma f + \alpha \sigma^2 \nabla J^T G^{\dagger} J} - \frac{\sigma^2}{2} \nabla_\Sigma \rho_t,
    \end{equation*}
    and we have
    \begin{align*}
        0 = \frac{d}{dt} \int_\Sigma \rho_t(x) d\sigma_\Sigma &= - \int_\Sigma \divergence_\Sigma (J_t(x)) d\sigma_\Sigma = -\int_{\partial\Sigma} \inner{J_t(x), n(x)}d \sigma_{\partial \Sigma} \\
        &= \int_{\partial \Sigma} \alpha \rho_t \sigma^2 \underbrace{\inner{\nabla J^T G^{\dagger} J , n}}_{> 0} d\sigma_{\partial \Sigma} \geq 0.
    \end{align*}
    This implies $\rho_t = 0$ a.e on $\partial \Sigma$ and the following boundary condition holds almost everywhere on $\partial \Sigma$
    \begin{equation*}
        \inner{J_t, n} = \inner{ \rho_t \lbra{-\frac{\sigma^2}{2} \nabla_\Sigma f - \alpha \sigma^2 \nabla J^T G^{\dagger} J} - \frac{\sigma^2}{2} \nabla_\Sigma \rho_t, n} = -\alpha \sigma^2 \rho_t \inner{\nabla J^T G^{\dagger} J , n} = 0.
    \end{equation*}
\end{proof}

\begin{lemma_ap}[\citet{huang2022riemannian}] \
\label{app:lem:sum of divf_k f_k = 0}
Let $\mbra{f_k}_{k=1}^d$ be a set of vectors defined by $f_k = \Pi(x)e_k$, where
$\Pi(x)$ is the orthogonal projector onto $T_x\Sigma$ and $e_k$ is the $k$th standard basis vector of $\bbR^d$. Then, it holds that
\begin{equation*}
    \sum_{k=1}^d (\divergence_\Sigma f_k)f_k = 0.
\end{equation*}
\end{lemma_ap}

\begin{proof}
    Let $r$ be the rank of the $\nabla J(x)$ and $\mbra{n_1(x), \dots, n_r(x)}$ be an orthonormal basis of $\text{Im}(\nabla J(x)^T)$. Since $\Pi(x)$ is the orthogonal projector onto the tangent space, it can be written using the projector onto the normal space as:
    \begin{equation*}
        \Pi(x) = I - \sum_{l=1}^r n_l(x) n_l(x)^T.
    \end{equation*}
    Note that by definition, $n_l(x) \in \text{Im}(\nabla J(x)^T)$ implies $n_l(x)^T \Pi(x) = 0$ for all $l \in [r]$.

    Next, we define a vector field $F(x)$ by $F(x) = \Pi(x) \divergence_\Sigma(\Pi(x))$ where $(\divergence_\Sigma \Pi(x))_k := \divergence_\Sigma (f_k(x))$ for the vector field $f_k(x) = \Pi(x)^T e_k= \Pi(x)e_k$. With this definition, we have $\divergence_\Sigma \Pi = - \sum_{l=1}^r \divergence_\Sigma (n_l n_l^T)$ and observe that for any component index $k \in [d]$,
    \begin{align*}
        (\divergence_\Sigma (n_l n_l^T))_k &= \trace{\Pi \nabla (n_l n_l^T e_k)} = \sum_{i,j=1}^d \Pi_{ij} \partial_i (n_{lj} n_{lk}) = \sum_{i,j=1}^d \lbra{ \Pi_{ij} (\partial_i n_{lj}) n_{lk} + \Pi_{ij}n_{lj}(\partial_in_{lk})}\\
        &= (\divergence_\Sigma n_l) n_{lk} + \sum_{i=1}^d \underbrace{(n_l^T \Pi)_i}_{=0}\partial_i n_{lk} = (\divergence_\Sigma n_l) n_{lk},
    \end{align*}
    where we used the property that $n_l$ is orthogonal to the tangent space ($n_l^T \Pi = 0$). From this fact, we have the following result:
    \begin{equation*}
        \divergence_\Sigma \Pi = -\sum_{l=1}^r \divergence_\Sigma(n_ln_l^T) = -\sum_{l=1}^r (\divergence_\Sigma n_l) n_l \ \Rightarrow \  F = \Pi \divergence _\Sigma (\Pi) = -\sum_{l=1}^r (\divergence_\Sigma n_l) \underbrace{\Pi n_l}_{=0} = 0.
    \end{equation*}
    Finally, the definition of $F$ gives $\sum_{k=1}^d (\divergence_\Sigma f_k) f_k = F$, which is zero by the argument above.
\end{proof}

\begin{theorem_ap}[Stationarity of OLLA] Assume $\sigma(t)$ is constant for $\forall t \geq0$ and $X_0 \in \Sigma$. Then, OLLA \eqref{app:eqn:lagrangian form-overdamped} has the following stationary distribution $\rho_\Sigma$ with respect to measure $d\sigma_\Sigma$:
\begin{equation*}
    \rho_\Sigma (x) = \frac{1}{Z_\Sigma} e^{-f(x)}, \quad x \in \Sigma
\end{equation*}
where $d\sigma_\Sigma$ is the surface (or Hausdorff) measure on $\Sigma$ and $Z_\Sigma := \int_{\Sigma} e^{-f(x)} d\sigma_\Sigma$ is the normalization constant.
\end{theorem_ap}
\begin{proof}

To prove stationarity, we observe that
\begin{equation*}
     \int_\Sigma \calL \phi \rho_t d\sigma_\Sigma = \frac{d}{dt} \int_\Sigma \phi \rho_t d \sigma_\Sigma =  \int_\Sigma \phi \partial_t \rho_t d\sigma_\Sigma = -\int_\Sigma \phi \divergence_\Sigma (J_t) d\sigma_\Sigma = \int_\Sigma \inner{J_t, \nabla_\Sigma \phi} d\sigma_\Sigma
\end{equation*}
where the last equality comes from the boundary condition in \autoref{app:lem:Boundary condition of OLLA}. Since $J_t$ is given by
\begin{equation*}
    J_t = -\frac{\sigma^2}{2} \sbra{\rho_t \nabla_\Sigma f +\nabla_\Sigma \rho_t} = -\frac{\sigma^2}{2} e^{-f} \nabla_\Sigma(\rho e^f),
\end{equation*}
on the interior of $\Sigma$, we conclude that
\begin{equation*}
    \int_\Sigma \calL \phi \rho_\Sigma d\sigma_\Sigma = \int_\Sigma \inner{J_t, \nabla_\Sigma \phi} d\sigma_\Sigma = -\frac{\sigma^2}{2} \int_\Sigma \inner{ e^{-f} 0 , \nabla_\Sigma \phi} d\sigma_\Sigma = 0,
\end{equation*}
where $J_t =0$ due to the fact that $\rho_\Sigma \propto e^{-f}$. This proves that $\rho_\Sigma$ is the stationary distribution of the OLLA.
\end{proof}

\subsection{Construction of ULLA}
\label{app:subsec:Construction of ULLA}
\textbf{Notations and Background for underdamped setup.} \quad In the constrained underdamped Langevin case, assuming $X_0 \in \Sigma$, the natural space of $(X_t,P_t)$ is the cotangent bundle $T^* \Sigma := \mbra{(x,p) \in \bbR^d \times \bbR^d \mid x \in \Sigma,  \inner{\nabla J(x), p} = 0}$, where $\Sigma := \mbra{x \in \bbR^d \mid h(x)=0, g(x) \leq 0}$ is a stratified manifold. Because there is no boundary for the cotangent space $T_x^*\Sigma := \mbra{p \in \bbR^d \mid \inner{\nabla J(x), p} =0}$, the boundary of $T^*\Sigma$ is given as $\partial T^*\Sigma = \partial \Sigma \times T_x^* \Sigma$.  In this cotangent bundle $T^* \Sigma$, the canonical reference measure is the Liouville (symplectic) measure $\sigma_{T^* \Sigma}$ defined as $d\sigma_{T^* \Sigma} (x,p) := d\sigma_\Sigma(x) \otimes dp(x)$
with $d\sigma_\Sigma$ the surface measure on $\Sigma$ and $dp(x)$ the Lebesgue measure on the cotangent space $T_x^*\Sigma$ induced by the inner product $\inner{u, v} = u^T v$.

For the notations, we write $\nabla_\Sigma$ and $\divergence_\Sigma$ for the intrinsic gradient and divergence in $x$, and $\nabla_{T_x^*\Sigma}, \divergence_{T^*_x\Sigma}$ for the intrinsic gradient and divergence in $p$. Under these notations, for a smooth function $\phi$ and smooth vector field $X$ on $\Sigma$, we have
\begin{equation*}
    \nabla_\Sigma \phi = \Pi \nabla_x \phi, \qquad \divergence_\Sigma (X) = \trace{\Pi \nabla_x X}
\end{equation*}
Similarly, for a smooth function $\psi$ and smooth vector $Y$ on $T_x^* \Sigma$, we have
\begin{equation*}
   \nabla_{T_x^*\Sigma} \psi = \Pi \nabla_p \psi, \qquad \divergence_{T^*_x\Sigma} (Y) = \trace{\Pi \nabla_p Y},
\end{equation*} where $\nabla_x$ and $\nabla_p$ represent ambient Euclidean partial gradient or Jacobian operators with respect to $x$ or $p$. Also, the global gradient with respect to $\sigma_{T^* \Sigma}$ is given by $\nabla_{T^*\Sigma} \phi = [\nabla_\Sigma \phi, \nabla_{T_x^*\Sigma} \phi]^T$ for any smooth function $\phi$ on $T^*\Sigma$ and the global divergence with respect to $\sigma_{T^* \Sigma}$ can be represented by $\divergence_{T^*\Sigma} ([V^x, V^p]^T)= \divergence_\Sigma (V^x) + \divergence_{T^*_x\Sigma} (V^p)$ for any smooth tangent field $[V^x, V^p]^T \in T^* \Sigma$. For comprehensive backgrounds on constrained underdamped Langevin dynamics, see Chapter 3.3 in \citet{rousset2010free}.

\begin{proposition_ap}[Construction of ULLA]
    \label{app:prop:Construction of ULLA}
    Consider the following Lagrangian-form constrained underdamped Langevin dynamics:
    \begin{equation}
        \label{app:eqn:lagrangian form-underdamped}
        \begin{cases}
            \begin{aligned}[t]
            dX_t &= \sigma(t)^2 P_tdt + \nabla J(X_t)^T d\lambda_t\\
            dP_t &= -\sigma(t)^2 \nabla f(X_t)dt -\sigma(t)^2 \gamma P_t dt +  \sigma(t) \sqrt{2\gamma} \circ dW_t + \nabla J(X_t)^T d\mu_t,
            \end{aligned}
        \end{cases}
    \end{equation}
    where $d\lambda_t, d\mu_t$ are the adapted processes such that $dJ(X_t) = -\alpha \sigma(t)^2 J(X_t)$ (position constraint) and $\nabla J(X_t)P_t =0$ (momentum tangency constraint), respectively.

    Assuming $\nabla J(X_0) P_0 = 0$, the explicit minimum norm solution of $d\lambda_t, d\mu_t$ are given by
    \begin{align*}
        \begin{cases}
        \begin{aligned}[t]
        d \lambda_t =& -\alpha \sigma(t)^2 G^{\dagger}(X_t) J(X_t)dt  \\
        d \mu_t =& G^{\dagger}(X_t) \nabla J(X_t) \lbra{\sigma(t)^2  \nabla f(X_t)  + \sigma(t)^2 \gamma P_t dt + \sigma(t) \sqrt{2\gamma} \circ dW_t} + \\
        & G^{\dagger}(X_t) \lbra{- \sigma(t)^2 \calH_1(X_t, P_t) +\alpha \sigma(t)^2 \calH_2(X_t, P_t)}dt,
        \end{aligned}
        \end{cases}
    \end{align*}
    where $G(x) := \nabla J(x) \nabla J(x)^T$ is the Gram matrix and $\Pi(x) = I - \nabla J(x)^T G^{\dagger}(x) \nabla J(x)$ is the tangential projection map.

    Therefore, the closed form SDE of \eqref{app:eqn:lagrangian form-underdamped} is given as follows:
   \begin{equation*}
        \begin{cases}
            \begin{aligned}[t]
            dX_t =& \sigma(t)^2 P_tdt -\alpha \sigma(t)^2 \nabla J(X_t)^T G^{\dagger}(X_t) J(X_t)dt\\
            dP_t =& \Pi(X_t) \lbra{- \sigma(t)^2 \nabla f(X_t) -\sigma(t)^2 \gamma P_t dt + \sigma(t) \sqrt{2\gamma } \circ dW_t} \\
            & -\sigma(t)^2 \nabla J(X_t)^T G^{\dagger}(X_t) \lbra{\calH_1(X_t, P_t)- \alpha \calH_2(X_t, P_t)}dt
            \end{aligned}
        \end{cases}
    \end{equation*}
    where $\calH_1,\calH_2 \in \bbR^{m+\abs{I_x}}$ are the curvature correction terms defined as
    \begin{align*}
        [\calH_1(x,p)]_i &:= p^T \nabla^2 J_i(x) p, \\ [\calH_2(x,p)]_i &:= p^T \nabla^2 J_i(x) (\nabla J(x)^T G^{\dagger}(x) J(x))
    \end{align*}
    with $[\calH_1(x,p)]_i$ and $[\calH_2(x,p)]_i$ denoting the $i$-th entries of $\calH_1(x,p)$ and $\calH_2(x,p)$, respectively.
\end{proposition_ap}
\begin{proof}
    From the Stratonovich chain rule, we observe that
    \begin{equation*}
        -\alpha \sigma(t)^2 J(X_t) dt = dJ(X_t) = \nabla J(X_t) \circ dX_t = \sigma(t)^2 \underbrace{\nabla J(X_t) P_t}_{=0} dt + G(X_t) d \lambda_t.
    \end{equation*}
    Because the initial condition gives $\nabla J(X_0) P_0 = 0$ and $d\mu_t$ imposes $d(\nabla J(X_t) P_t) = 0$, the first term becomes zero and the minimum norm solution of $d\lambda_t$ simplifies to
    \begin{equation*}
        d\lambda_t = -\alpha \sigma(t)^2 G^{\dagger}(X_t) J(X_t)dt.
    \end{equation*}
    To find the explicit minimum norm solution for the process $d\mu_t$, we consider the momentum tangency constraint $\nabla J(X_t)^T P_t = 0$. Using the Stratonovich chain rule again, we have
    \begin{equation*}
        0 = d(\nabla J(X_t) P_t) = P_t^T \nabla^2 J(X_t) dX_t + \nabla J(X_t) dP_t.
    \end{equation*}
    By substituting $dX_t$ and $dP_t$, the previous equation simplifies to
    \begin{align*}
        0 =& P_t^T \nabla^2 J(X_t)\lbra{\sigma(t)^2 P_t dt + \nabla J(X_t)^T d\lambda_t} \\
        &+ \nabla J(X_t)  \lbra{- \sigma(t)^2 \nabla f(X_t) - \sigma(t)^2 \gamma  P_tdt +\sigma(t) \sqrt{2\gamma} \circ dW_t + \nabla J(X_t)^T d\mu_t} \\
        =& \lbra{\sigma^2 \calH_1 - \alpha \sigma^2 \calH_2  - \sigma^2 \nabla J  \nabla f - \sigma^2 \gamma \nabla J P_t}dt + \sigma \nabla J  \sqrt{2\gamma } \circ dW_t + G d\mu_t.
    \end{align*}
    This gives the following minimum norm solution of $d\mu_t$:
    \begin{equation*}
        d\mu_t = G^{\dagger} \lbra{ \sigma^2 \nabla J \nabla f + \sigma^2 \gamma \nabla J P_t - \sigma^2 \calH_1 + \alpha \sigma^2 \calH_2}dt - \sigma G^{\dagger} \nabla J \sqrt{2\gamma} \circ dW_t.
    \end{equation*}
     Therefore, we recover the following $dX_t, dP_t$ by plugging the adapted process $d\lambda_t, d\mu_t$ into the previous equations:
    \begin{equation*}
        \begin{cases}
            \begin{aligned}[t]
            dX_t =& \sigma(t)^2 P_tdt -\alpha \sigma(t)^2 \nabla J(X_t)^T G^{\dagger}(X_t) J(X_t)dt\\
            dP_t =& \Pi(X_t) \lbra{- \sigma(t)^2 \nabla f(X_t)dt -\sigma(t)^2 \gamma P_t dt + \sigma(t) \sqrt{2\gamma } \circ dW_t} \\
            & -\sigma(t)^2 \nabla J(X_t)^T G^{\dagger}(X_t) \lbra{\calH_1(X_t, P_t)- \alpha \calH_2(X_t, P_t)}dt.
            \end{aligned}
        \end{cases}
    \end{equation*}
    We note that this is the unique closed form SDE because $\nabla J(X_t)^T d\lambda_t$ and $\nabla J(X_t)^T d\mu_t$ are unique among many solutions $d\lambda_t, d\mu_t$ satisfying the properties.

    Finally, we observe that the It\^o-Stratonovich correction term $\frac{1}{2}(\nabla B) B = 0$ where $B = [0, \sigma \sqrt{2 \gamma} \Pi]^T \in \bbR^{2d \times d}$ is the diffusion matrix. This is because the position entries of $B$ are zero and the momentum entries of $B$ depend only on position. Therefore, we have the same formula on the It\^o version of the above Stratonovich SDE.
\end{proof}

\begin{lemma_ap}[Boundary condition of ULLA]
    \label{app:lem:Boundary condition of ULLA}
    Assuming $X_0 \in \Sigma$ and $\nabla J(X_0) P_0 =0$, ULLA \eqref{app:eqn:lagrangian form-underdamped} satisfies the following boundary condition on $\partial \Sigma$ and property for $t \geq 0$:
    \begin{equation*}
        (1)  \ \ \inner{J_t(x,p), n(x)} = 0 \quad \text{a.e  on $\partial T^*\Sigma$}, \qquad (2) \ \ (X_t, P_t) \in T^*\Sigma,
    \end{equation*}
    where $J_t(x)$ is the probability current density defined by $\partial_t \rho_t = - \divergence_\Sigma (J_t)$ and $n(x)$ is the outward unit normal vector on $\partial T^*\Sigma$.
\end{lemma_ap}
\begin{proof}
    First, we prove $\bbP(g_k(X_t) >0 ) = 0$ for $t\geq 0 , k\in [l]$. In particular, this implies $(X_t, P_t) \in T^*\Sigma$ a.s. for all $t \geq0$. To show this, we observe that \autoref{app:lem:Exponential decay of constraint functions} gives
    \begin{equation*}
        dg_k(X_t) = -\alpha \sigma(t)^2 (g_k(X_t) + \epsilon)dt
    \end{equation*}
    for $\mbra{g_k \geq 0}$. Thus, the same proof introduced in \autoref{app:lem:Boundary condition of OLLA} gives $\bbP(g(X_t) \leq 0 ) = 1$ for $t\geq 0$ and, therefore, we have $(X_t, P_t) \in T^* \Sigma$ a.s for $t \geq 0$.

    Next, we show the boundary condition of this SDE. To demonstrate this, we first formulate the SDE of ULLA in the following form:
    \begin{equation*}
        dZ_t = V_0dt + \sum_{k=1}^{2d} V_k \circ dB_t^k
    \end{equation*}
    with $Z_t := \lbra{X_t, P_t}^T \in \bbR^{2d}$, $V_0(x,p) := \lbra{\sigma^2 p - \alpha \sigma^2 \nabla J^T G^{\dagger} J, -\sigma^2 \Pi \nabla f - \sigma^2 \gamma \Pi  p + \kappa}^T$ $\in \bbR^{2d}$, and $V_k(x,p) := \lbra{0, B e_k } \in \bbR^{2d}$, where $\kappa := -\sigma^2 \nabla J^T G^{\dagger} \lbra{\calH_1 - \alpha \calH_2}$ is the curvature correction related term and $B := \sigma \Pi \sqrt{2  \gamma }$. Now, we use the Fokker-Planck equation on the interior of $T^*\Sigma$ (\autoref{app:thm:Fokker-Planck equation in Riemannian manifold}) applied with $\divergence_{T^*\Sigma}$, and this gives the following equation:
    \begin{equation*}
        \partial_t \rho_t  = -\divergence_{T^*\Sigma} (\rho_t V_0) + \frac{1}{2}\sum_{k=1}^{2d} \divergence_{T^*\Sigma} \sbra{ \divergence_{T^*\Sigma} (\rho_t V_k) V_k},
    \end{equation*}
    where
    \begin{equation*}
        -\divergence_{T^*\Sigma} (\rho_t V_0) = -\divergence_\Sigma (\rho_t \sbra{ \sigma^2 p - \alpha \sigma^2 \nabla J^T G^{\dagger} J}) -\divergence_{T^*_x\Sigma} \sbra{\rho_t \sbra{ -\sigma^2 \Pi \nabla f - \sigma^2 \gamma \Pi p +\kappa }}
    \end{equation*}
    and
    \begin{equation*}
        \mkern-5mu
        \frac{1}{2}\sum_{k=1}^{2d} \divergence_{T^*\Sigma} \sbra{ \divergence_{T^*\Sigma} (\rho_t V_k) V_k} \mkern-5mu =  \mkern-5mu\frac{1}{2} \sum_{k=1}^{d}  \divergence_{T_x^*\Sigma} \sbra{ \inner{Be_k, \nabla_{T_x^*\Sigma} \rho_t} Be_k} = \frac{1}{2} \divergence_{T^*_x\Sigma} \sbra{ (BB^T) \nabla_{T_x^*\Sigma} \rho_t }.
    \end{equation*}
    Therefore, we recover the following equation:
    \begin{align*}
        \mkern-5mu
        \partial_t \rho_t  \mkern-3mu&= \mkern-3mu-\divergence_\Sigma (\underbrace{\rho_t \sbra{\sigma^2  p - \alpha \sigma^2 \nabla J^T G^{\dagger} J}}_{:= J_t^x(x,p)}) \\
        &- \divergence_{T^*_x\Sigma} \mkern-5mu \sbra{ \underbrace{\rho_t \sbra{ -\sigma^2 \Pi \nabla f - \sigma^2 \gamma \Pi p + \kappa - \sigma^2 \gamma \Pi \Pi^T \nabla_{T_x^*\Sigma} \rho_t}}_{:= J_t^p (x,p)}}\\
        &= -\divergence_{{T^* \Sigma}} \sbra{ [J_t^x , J_t^p]^T } = -\divergence_{T^* \Sigma} (J_t)
    \end{align*}
    Finally, we observe that, on the boundary $\partial T^*\Sigma = \partial\Sigma \times T_x^*\Sigma$, the outward normal is given by $n = \lbra{n_x, 0}^T \in \bbR^{2d}$ with $n_x$ being the outward unit normal vector on $\partial \Sigma$. Therefore, we have
    \begin{align*}
        0 = \frac{d}{dt}\int_{T^* \Sigma} & \rho_t(x,p) d\sigma_{T^* \Sigma} = - \int_{T^* \Sigma} \divergence_{T^* \Sigma}(J_t(x,p)) d\sigma_{T^* \Sigma} = -\int_{\partial T^* \Sigma} \inner{J_t(x,p), n} d\sigma_{\partial T^* \Sigma}\\
        &=-\int_{\partial T^* \Sigma} \inner{J_t^x(x,p), n_x} d\sigma_{\partial T^* \Sigma} = -\int_{\partial T^* \Sigma}  \mkern-5mu \inner{\rho_t \sbra{\sigma^2 p - \alpha \sigma^2 \nabla J^T G^{\dagger}J}, n_x } d\sigma_{\partial T^* \Sigma}\\
        &= \alpha  \sigma^2  \int_{\partial T^* \Sigma} \rho_t \underbrace{\inner{\nabla J^T G^{\dagger}J, n_x }}_{>0} d\sigma_{\partial T^* \Sigma}
    \end{align*}
    and it implies $\rho_t(x,p) = 0$ a.e. on $\partial T^* \Sigma$. Lastly, we conclude the proof by observing that the following holds a.e on $\partial T^* \Sigma$:
    \begin{equation*}
        \inner{J_t, n} = \inner{J_t^x, n_x} =  \inner{\rho_t \sbra{\sigma^2 p - \alpha \sigma^2 \nabla J^T G^{\dagger} J}, n_x} = -\alpha \sigma^2 \rho_t \inner{\nabla J^T G^{\dagger} J , n_x} = 0.
    \end{equation*}
\end{proof}

\begin{theorem_ap}[Stationarity of ULLA]
    \label{app:thm:Stationarity of ULLA}
     Assume $\sigma(t)$ is constant for $\forall t \geq0$, $X_0 \in \Sigma$, and the tangency constraint $\nabla J(X_0) P_0 = 0$ holds. Then, the ULLA \eqref{app:eqn:lagrangian form-underdamped} has the following stationary distribution $\rho_{T^*\Sigma}$ with respect to the measure $d\sigma_{T^* \Sigma}$:
     \begin{equation*}
         \rho_{T^*\Sigma}(x,p) = \frac{1}{Z_{T^*\Sigma}} \exp \sbra{-f(x) - \frac{1}{2} \norm{p}_2^2}, \quad (x,p) \in T^*\Sigma
     \end{equation*}
     where $d\sigma_{T^* \Sigma}$ is the Liouville measure on $T^* \Sigma$ and $Z_{T^* \Sigma} := \int_{T^* \Sigma} e^{\sbra{-f(x) - \frac{1}{2}\norm{p}_2^2}} d\sigma_{T^* \Sigma}$ is the normalization constant.

\end{theorem_ap}
\begin{proof}
    First, from \autoref{app:lem:Boundary condition of ULLA}, we know that the SDE of ULLA can be rewritten as follows on the interior of $T^*\Sigma$ :
    \begin{equation*}
        dZ_t = V_0dt + \sum_{k=1}^{2d} V_k \circ dB_t^k
    \end{equation*}
    with $Z_t := \lbra{X_t, P_t}^T \in \bbR^{2d}$, $V_0(x,p) := \lbra{\sigma^2 p, -\sigma^2 \Pi \nabla f - \sigma^2 \gamma \Pi  p + \kappa}^T$ $\in \bbR^{2d}$, and $V_k(x,p) := \lbra{0, B e_k } \in \bbR^{2d}$, where $\kappa := -\sigma^2 \nabla J^T G^{\dagger} \calH_1$ is the curvature correction related term and $B := \sigma \Pi \sqrt{2  \gamma }$. Therefore, \autoref{app:thm:Fokker-Planck equation in Riemannian manifold} gives the following generator $\calL$ for any smooth function $\phi$ on $T^* \Sigma$:
    \begin{equation*}
        \calL \phi = V_0 \phi + \frac{1}{2} \sum_{k=1}^{2d} V_k(V_k \phi).
    \end{equation*}
    Because we have
    \begin{equation*}
        V_0 \phi = \inner{\nabla_\Sigma \phi, \sigma^2p} + \inner{\nabla_{T_x^*\Sigma} \phi, -\sigma^2 \Pi \nabla f -\sigma^2 \gamma \Pi p + \kappa}
    \end{equation*}
    and
    \begin{equation*}
        \frac{1}{2} \sum_{k=1}^{2d} V_k(V_k\phi) = \frac{1}{2} \sum_{k=1}^d \inner{Be_k, \nabla_p \sbra{ \inner{Be_k, \nabla_p \phi}}} = \frac{1}{2} \sum_{k=1}^d \inner{\nabla_p^2 \phi Be_k, Be_k} =  \sigma^2 \gamma \trace{\Pi \nabla_p^2 \phi}
    \end{equation*}
    on the interior of $T^* \Sigma$, we can simplify $\calL \phi$ as follows:
    \begin{equation*}
        \calL \phi = \sigma^2 \lbra{\underbrace{\inner{\nabla_\Sigma \phi, p} -  \inner{\Pi \nabla f, \nabla_{T_x^*\Sigma} \phi} + \inner{\kappa, \nabla_{T_x^*\Sigma} \phi}}_{\calL_H \phi} + \underbrace{\gamma  \sbra{ \Delta_{T_x^*\Sigma} \phi -\inner{\nabla_{T_x^*\Sigma} \phi,  p}}},_{\calL_{OU} \phi}}
    \end{equation*}
    where we used $\Pi p = p$ (tangency constraint) and $\Delta_{T_x^*\Sigma} \phi = \divergence_{T^*_x\Sigma} (\Pi \nabla_p \phi) = \trace{\Pi \nabla_p^2 \phi}$. Next, we note the following identity:
    \begin{equation*}
        \divergence_{T^*_x\Sigma} \sbra{ e^{-\norm{p}^2 / 2} \nabla_{T_x^*\Sigma} \phi} = e^{-\norm{p}^2 /2 } \sbra{ \Delta_{T_x^*\Sigma} \phi - \inner{\nabla_{T_x^*\Sigma} \phi, p}}.
    \end{equation*}
    Under this identity, we observe that
    \begin{equation*}
        \int_{T^* \Sigma} \calL_{OU} \phi \rho_{T^* \Sigma} d\sigma_{T^* \Sigma} = \gamma \int_{T^* \Sigma} \sbra{ \Delta_{T_x^*\Sigma} \phi - \inner{\nabla_{T_x^*\Sigma} \phi, p}} e^{-H} d\sigma_{T^* \Sigma} = 0
    \end{equation*}
    where $H:= f(x) + \frac{1}{2} \norm{p}^2$ and the last equality holds because $T_x^*\Sigma$ does not have boundary and $d\sigma_{T^* \Sigma} (x,p) := d\sigma_\Sigma(x) \otimes dp(x)$ holds.
    Lastly, we define $X_H := \lbra{p -\alpha \nabla J^TG^{\dagger} J, -\Pi \nabla f + \kappa}^T \in \bbR^{2d}$ so that $X_H = \lbra{p, -\Pi \nabla f + \kappa}^T$ and $\calL_H \phi = \inner{X_H, \nabla_{T^* \Sigma} \phi}$ at the interior of $T^* \Sigma$. Then, we have
    \begin{equation*}
        \inner{X_H, \nabla_{T^* \Sigma} H} = \inner{p, \nabla_\Sigma f} + \sbra{-\Pi \nabla f + \kappa}\Pi p = \underbrace{\inner{\sbra{\nabla_\Sigma f - \Pi \nabla f}, \Pi p}}_{=0} + \underbrace{\inner{\kappa, \Pi p}}_{=0} = 0,
    \end{equation*}
    where the last equality holds due to tangency constraint of $p$. In addition to this, the following identity holds on the interior of $T^* \Sigma$ due to Liouville's theorem, which is the preservation property of Liouville measure on $T^* \Sigma$ under the constrained Hamiltonian field (see Chapter 1.2.2 and Proposition 3.46 in \citet{rousset2010free}):
    \begin{equation*}
        \divergence_{T^* \Sigma} X_H = \divergence_\Sigma (p) + \divergence_{T_x^* \Sigma} \sbra{-\Pi \nabla f + \kappa } = 0.
    \end{equation*}
    Hence, using these properties and $\rho_{T^* \Sigma} \propto e^{-H}$, we have
    \begin{align*}
        \int_{T^* \Sigma} \calL_{OU} \phi \rho_{T^* \Sigma} d\sigma_{T^* \Sigma} &= \int_{T^* \Sigma} \inner{X_H, \nabla_{T^* \Sigma} \phi} \rho_{T^* \Sigma} d\sigma_{T^* \Sigma} \overset{(1)}{=} -\int \phi \divergence_{T^* \Sigma} \sbra{\rho_{T^* \Sigma} X_H} d\sigma_{T^* \Sigma} \\
        &= -\int_{T^* \Sigma} \phi \sbra{ \inner{X_H, \nabla_{T^* \Sigma} \rho_{T^* \Sigma}} + \underbrace{\divergence_{T^* \Sigma} (X_H)}_{=0} \rho_{T^* \Sigma} } d\sigma_{T^* \Sigma} \\
        &\overset{(2)}{=} \int_{T^* \Sigma} \phi \rho_{T^* \Sigma} \inner{X_H,\nabla_{T^* \Sigma} H}  d\sigma_{T^* \Sigma} = 0,
    \end{align*}
    where (1) comes from the boundary condition in \autoref{app:lem:Boundary condition of ULLA} and (2) comes from the fact that $-\nabla_{T^* \Sigma} H = \nabla_{T^* \Sigma} \ln \rho_{T^* \Sigma} = \nabla_{T^* \Sigma} \rho_{T^* \Sigma} / \rho_{T^* \Sigma}$.

    By combining the observations above, we have
    \begin{equation*}
        \int_{T^* \Sigma} \calL \phi \rho_{T^* \Sigma} d\sigma_{T^* \Sigma} = \int_{T^* \Sigma} \calL_{OU} \phi \rho_{T^* \Sigma} d\sigma_{T^* \Sigma} + \int_{T^* \Sigma} \calL_{H} \phi \rho_{T^* \Sigma} d\sigma_{T^* \Sigma} = 0,
    \end{equation*}
    which proves the theorem.
\end{proof}

\subsection{Properties and Backward Processes of Constrained Langevin Dynamics with Landing}
\textbf{Exponential decaying properties of constrained Langevin dynamics with landing} \quad Due to our previous construction, the landing term $-\alpha \nabla J(X_t)^T G^{\dagger}(X_t)J(X_t)$ appears on both constrained overdamped or underdamped Langevin dynamics so that the processes always satisfy $dJ(X_t) = -\alpha \sigma(t)^2 J(X_t)$ deterministically $\forall t \geq0$. Therefore, even if $X_t \notin \Sigma$, the process can approach $\Sigma$ exponentially fast as illustrated in the following \autoref{app:lem:Exponential decay of constraint functions}.

\begin{lemma_ap}[Exponential decay of constraint functions]
\label{app:lem:Exponential decay of constraint functions}
Let $J(x)$ be the constraint function vector defined as
\begin{equation}
    J(x) = \lbra{h_1(x), ... , h_m(x), g_{i_1}(x) + \epsilon, ... , g_{i_{\abs{I_x}}}(x) + \epsilon}^T \in \bbR^{m+\abs{I_x}}
\end{equation}
where $h: \bbR^d \rightarrow \bbR^m, g: \bbR^d \rightarrow \bbR^l$ are equality and inequality constraint functions respectively, and $I_x := \mbra{ i \in [l] : g(x) \geq 0 }$ is the active index set of inequality constraints.

Under this setup, the constrained Langevin dynamics (\ref{app:eqn:lagrangian form-overdamped} or \ref{app:eqn:lagrangian form-underdamped}) satisfy the following constraint satisfaction property almost surely:
\begin{equation*}
    h_i(X_t) = h_i(X_0) e^{-\alpha S(t)}, \quad t \geq 0
\end{equation*}
and
\begin{equation*}
    \begin{cases}
        \begin{aligned}[t]
        g_j(X_t) &= -\epsilon + (g_j(X_0) + \epsilon) e^{-\alpha S(t)}, \quad && t\leq \tau_{j, \epsilon}\\
        g_j(X_t) &\leq 0,\quad && t \geq \tau_{j, \epsilon},
        \end{aligned}
    \end{cases}
\end{equation*}
where $S(t) := \int_{0}^t \sigma(s)^2 ds$ and  $\tau_{j,\epsilon}$ is defined to be
\begin{equation*}
    \tau_{j, \epsilon} := \inf \mbra{t \geq 0 \mid S(t) \geq \frac{1}{\alpha} \ln \sbra{ \frac{g_j(X_0) +\epsilon}{\epsilon}}}, \quad \forall j \in I_{x_0}.
\end{equation*}
\end{lemma_ap}
\begin{proof}
    From the Stratonovich chain rule, it holds almost surely that
    \begin{equation*}
        d J(X_t) = \nabla J(X_t) \circ dX_t = -\alpha \sigma(t)^2 J(X_t) dt.
    \end{equation*}
    For each equality constraint $h_i$, the component is active for $\forall t \geq0$. Therefore, we have:
    \begin{equation*}
        dh_i(X_t) = - \alpha \sigma(t)^2 h_i(X_t) dt
    \end{equation*}
    and solving this ODE yields:
    \begin{equation*}
        h_i(X_t) = h_i(X_0) e^{-\alpha S(t)}, \quad  t\geq0.
    \end{equation*}
    For the inequality constraints, we fix $j \in I_{X_0} := \mbra{ j \in [l] \mid g(X_0) \geq 0}$. While the $j$-th inequality is active, we have $J_{m+j} = g_j+\epsilon$ in the constraint vector, and the same chain rule gives
    \begin{equation*}
        d(g_j(X_t) + \epsilon) = -\alpha \sigma(t)^2 \sbra{g_j(X_t) + \epsilon} dt.
    \end{equation*}
    Hence, before time $t \leq \tau_{j, \epsilon} := \inf \mbra{t \geq 0 \mid S(t) \geq \frac{1}{\alpha} \ln \sbra{ \frac{g_j(X_0) +\epsilon}{\epsilon}}}$, we have
    \begin{equation*}
        g_j(X_t) = -\epsilon + \sbra{g_j(X_0) + \epsilon} e^{-\alpha s(t)}.
    \end{equation*}
    After $t \geq \tau_{j,\epsilon}$, the particle $X_t$ is instantaneously repelled into the interior of $\Sigma$ whenever it hits the boundary $\partial\Sigma$. Therefore, $g_j(X_t) \leq 0$ holds for $t \geq \tau_{j,\epsilon}$.
\end{proof}
\textbf{Backward process on manifold} \quad Now, we discuss how to construct the backward process of the proposed constrained Langevin dynamics. Define $\calM \subset \bbR^D$ to be a smooth, compact, embedded Riemannian manifold endowed with an induced metric, and we restrict the choice of $\calM$ to be either $\calM = \Sigma \ (D = d)$  or $\calM = T^*\Sigma \ (D = 2d)$. For $x \in \calM$, let $\Pi$ be the tangential projection map on $\calM$. We consider the stochastic process $X_t \sim q_t$ driven by the following Stratonovich SDE on $\calM$ with $X_0 \sim q_{0} = q_{\text{data}}$:
\begin{equation}
    \label{app:eqn:diffusion process on calM}
    dX_t = \Pi(X_t) b(X_t, t) dt + \kappa(t) \Pi(X_t) \circ dW_t, \quad t \in [0,T]
\end{equation}
where $b(y,t) : \calM \times \bbR \rightarrow \bbR^D$, $\kappa(t) : \bbR \rightarrow \bbR^{D\times D}$ are the drift vector and diagonal diffusion matrix for the corresponding SDE. In practice, $\kappa(t): \bbR \rightarrow \bbR^{D\times D}$ is chosen to satisfy $X_T \sim q_T \approx p_{\text{prior}}$ with $p_{\text{prior}}$ being the easy-to-sample prior distribution on $\calM$ and $T$ the terminal time for sampling. Then, \autoref{app:lem:backward process verification} shows that the stochastic process $\rev{X}_t \sim p_t$ driven by the following Stratonovich SDE becomes the backward process of $X_t$:
\begin{equation}
    \label{app:eqn:backward_process_sde}
    d\rev{X}_t = \Pi(\rev{X}_t) \lbra{-b(\rev{X}_t, T-t) + \kappa(T-t)^2 \nabla \ln q_{T-t}(\rev{X}_t)}dt + \kappa(T-t) \Pi(\rev{X}_t) \circ d\bar{W}_t, \quad t \in [0,T].
\end{equation}
It is assumed that the forward process \eqref{app:eqn:diffusion process on calM} and the backward process \eqref{app:eqn:backward_process_sde} have the same boundary conditions, so that $q_t = p_{T-t}$ holds for $t \in [0,T]$.

\begin{lemma_ap}[Backward process verification]
    \label{app:lem:backward process verification}
    Let $X_t, \rev{X}_t \in \calM$ be the stochastic process driven by the forward process \eqref{app:eqn:diffusion process on calM} and the backward process \eqref{app:eqn:backward_process_sde}, respectively. If $q_T = p_0$ and the boundary conditions of $q_t$ and $p_{T-t}$ appearing in the Fokker-Planck equation are the same, then the following relation holds:
    \begin{equation*}
        q_t(x) = p_{T-t}(x), \quad x \in \calM, \ \  t \in [0,T],
    \end{equation*}
    where $q_t, p_t$ are the probability densities of each $X_t$ and $\rev{X}_t$.
\end{lemma_ap}
\begin{proof}
    Let $\nabla_\calM:=\Pi\nabla$ and $\divergence_\calM$ be the intrinsic gradient and divergence on $\calM$, and let $\Delta_\calM:=\divergence_\calM (\nabla_\calM \cdot)$ be the intrinsic Laplacian operator on $\calM$.
    From \autoref{app:thm:Fokker-Planck equation in Riemannian manifold} with $V_0(y,t) = \Pi(y) b(y,t), V_k(y,t) = \kappa(t)\Pi(x)e_k$, and $e_k$ being $k$th standard basis of $\bbR^d$, the Fokker-Planck equation of \eqref{app:eqn:diffusion process on calM} is given by
    \begin{equation*}
        \partial_t q_t = -\divergence_\calM(q_t V_0) + \frac{1}{2} \sum_{k=1}^D \divergence_\calM (\divergence_\calM (q_t V_k)V_k) = -\divergence_\calM (q_t \Pi b ) +\frac{1}{2} \kappa(t)^2 \Delta_\calM q_t,
    \end{equation*}
    where we used the property $\sum_{k=1}^D \divergence_\calM (\divergence_\calM (q_t \Pi e_k) (\Pi e_k)) = \Delta_\calM p_t$.

    Now observe that the process $\rev{X}_t$ driven by \eqref{app:eqn:backward_process_sde} has the following drift and diffusion term:
    \begin{equation*}
        \tilde{V}_0 (y,t) = \Pi(y)\lbra{-b(y,T-t) + \kappa(T-t)^2 \nabla_\calM \ln q_{T-t}(x)}, \quad \tilde{V}_k(y,t) = g(T-t) \Pi(y)e_k.
    \end{equation*}
    Therefore, \autoref{app:thm:Fokker-Planck equation in Riemannian manifold} again implies the following Fokker-Planck equation:
    \begin{align*}
        \partial_t p_{T-t} = -\partial_s p_s \mid_{s= T-t} &= \divergence_\calM (p_{T-t} \tilde{V}_0) - \frac{1}{2} \sum_{k=1}^d \divergence_\calM(\divergence_\calM(p_{T-t} \tilde{V}_k)\tilde{V}_k) \\
        &= -\divergence_\calM (p_{T-t} \Pi b) +\kappa(t)^2 \divergence_\calM (q_{T-t} \nabla_\calM \ln p_{T-t}) -\frac{1}{2} \kappa(t)^2 \Delta_\calM p_{T-t} \\
        &= -\divergence_\calM (p_{T-t}\Pi b) + \frac{1}{2}\kappa(t)^2 \Delta_\calM p_{T-t}.
    \end{align*}
    In the case of manifold with boundary, it is assumed that the boundary condition on the Fokker-Planck equation of $q_t$ and $p_{T-t}$ are the same and $q_T = p_0$. This implies that $\rev{X}_t$ achieves the desired property as stated in the theorem.
\end{proof}

\textbf{Backward process of Constrained Langevin with landing} \quad From \autoref{app:prop:Construction of OLLA}, we note that the forward process $X_t \sim q_t$ of OLLA is given as follows in the interior of $\Sigma$:
\begin{equation*}
    dX_t = -\frac{1}{2} \sigma(t)^2 \Pi(X_t) \nabla f(X_t)dt + \sigma(t) \Pi(X_t) \circ dW_t.
\end{equation*}
Therefore, applying \autoref{app:lem:backward process verification} on the interior of $\Sigma$, the backward process of the OLLA is given as:
\begin{equation*}
    d\rev{X}_t = \frac{1}{2}\sigma(T-t)^2 \Pi(\rev{X}_t) \lbra{ \nabla f(\rev{X}_t) + 2\nabla \ln q_{T-t}(\rev{X}_t)}dt + \sigma(t) \Pi(\rev{X}_t) \circ d\bar{W}_t.
\end{equation*}
Now, we note that the similar construction and proof provided in \autoref{app:prop:Construction of OLLA} and \autoref{app:lem:Boundary condition of OLLA} can demonstrate that adding the landing term $-\alpha \sigma^2 \nabla J^T G^{\dagger} J$ to the previous SDE enforces it to have the same boundary condition ($\inner{J_t, n} = 0$ a.e on $\partial \Sigma$) imposed on the forward process. Therefore, the backward process $\rev{X}_t$ is given as:
\begin{align*}
    \mkern-5mu
    d\rev{X}_t =& \frac{1}{2}\sigma(T \mkern-5mu -t)^2 \Pi(\rev{X}_t) \lbra{ \nabla f(\rev{X}_t)  \mkern-5mu + 2\nabla \ln q_{T-t}(\rev{X}_t)} \mkern-5mu dt \mkern-5mu -\alpha \sigma(T \mkern-5mu -t)^2 \nabla J(\rev{X}_t)^T G^{\dagger}(\rev{X}_t)J(\rev{X}_t) dt \\
    &+ \frac{\sigma(t)^2}{2} \calH(\rev{X}_t)dt + \sigma(t) \Pi(\rev{X}_t) \circ d\bar{W}_t, \tag{Stratonovich-sense}
\end{align*}
which is equal to the following It\^o version SDE involving the It\^o-Stratonovich correction term $\calH$:
\begin{align*}
    d\rev{X}_t =& \frac{1}{2}\sigma(T-t)^2 \Pi(\rev{X}_t) \lbra{ \nabla f(\rev{X}_t) + 2\nabla \ln q_{T-t}(\rev{X}_t)}dt + \frac{1}{2} \sigma(T-t)^2 \calH(\rev{X}_t)dt \\
    & \underbrace{-\alpha \sigma(T-t)^2 \nabla J(\rev{X}_t)^T G^{\dagger}(\rev{X}_t)J(\rev{X}_t)}_{\text{Landing term}} dt + \sigma(T-t) \Pi(\rev{X}_t) \circ d\bar{W}_t. \tag{It\^o-sense}
\end{align*}
Similarly, from \autoref{app:prop:Construction of ULLA}, the forward process $[X_t, P_t]^T \sim \rev{P}_t$ of ULLA is given as below in the interior of $\Sigma$:
\begin{equation*}
    \begin{cases}
        \begin{aligned}[t]
        dX_t =& \sigma(t)^2 P_tdt\\
        dP_t =& \Pi(X_t) \lbra{\sigma(t)^2 \lbra{-\nabla f(X_t) - \gamma P_t} dt + \sigma(t) \sqrt{2\gamma } \circ dW_t} \\
        & -\sigma(t)^2 \nabla J(X_t)^T G^{\dagger}(X_t) \calH_1(X_t, P_t)dt.
        \end{aligned}
    \end{cases}
\end{equation*}
Therefore, \autoref{app:lem:backward process verification} again implies that the following stochastic process $[\rev{X}_t, \rev{P}_t]^T \sim p_t$ becomes the backward process of $X_t$ on the interior of $\Sigma$:
\begin{equation*}
    \mkern-10mu
    \begin{cases}
        \mkern-5mu
        \begin{aligned}[t]
        d\rev{X}_t =& -\sigma(T-t)^2 \rev{P}_tdt\\
        d\rev{P}_t =& \Pi(\rev{X}_t) \lbra{\sigma(T \mkern-5mu -t)^2 \mkern-5mu \lbra{\nabla f(\rev{X}_t)dt + \gamma \rev{P}_t dt + 2\gamma  \nabla_p \ln q_{T-t}(\rev{X}_t, \rev{P}_t)} \mkern-5mu dt  +  \mkern-5mu \sigma(T \mkern-5mu -t) \sqrt{2\gamma } \circ d\bar{W}_t} \\
        & +\sigma(T-t)^2 \nabla J(\rev{X}_t)^T G^{\dagger}(\rev{X}_t) \calH_1(\rev{X}_t, \rev{P}_t)dt.
        \end{aligned}
    \end{cases}
\end{equation*}
Also, the similar construction and proof in \autoref{app:prop:Construction of ULLA} and \autoref{app:lem:Boundary condition of ULLA} show that adding both the landing term $-\alpha \sigma^2 \nabla J^T G^{\dagger} J$ and the corresponding landing correction term $+\alpha \sigma^2 \nabla J^T G^{\dagger} \calH_2$ to the previous SDE imposes the same boundary condition as in the forward process. Hence, the backward process $[\rev{X}_t, \rev{P}_t]^T$ is provided as:
\begin{equation*}
\mkern-10mu
    \begin{cases}
        \mkern-5mu
        \begin{aligned}[t]
        d\rev{X}_t =& -\sigma(T\mkern-5mu -t)^2 \rev{P}_tdt \underbrace{-\alpha \sigma(T-t)^2 \nabla J(\rev{X}_t) G^{\dagger}(\rev{X}_t) J(\rev{X}_t)dt}_{\text{Landing term}}\\
        d\rev{P}_t =& \Pi(\rev{X}_t)  \mkern-5mu\lbra{\sigma(T \mkern-5mu -t)^2  \mkern-5mu\lbra{\nabla f(\rev{X}_t)dt + \gamma \rev{P}_t dt + 2\gamma  \nabla_p \ln q_{T-t}(\rev{X}_t, \rev{P}_t)}dt  +  \mkern-5mu\sigma(T-t) \sqrt{2\gamma } \circ d\bar{W}_t} \\
        & +\sigma(T-t)^2 \nabla J(\rev{X}_t)^T G^{\dagger}(\rev{X}_t) \lbra{\calH_1(\rev{X}_t, \rev{P}_t) + \underbrace{\alpha \calH_2 (\rev{X}_t, \rev{P}_t)}_{\text{Landing correction term}}}dt.
        \end{aligned}
    \end{cases}
\end{equation*}
Because the It\^o-Stratonovich correction term vanishes in the underdamped case \autoref{app:prop:Construction of ULLA}, the It\^o version SDE can be recovered from the Stratonovich version SDE by chaining $\circ$ to $\cdot$ in the Brownian motion term $d\bar{W}_t$.

\subsection{Discretization of Constrained Langevin Dynamics}
\label{app:subsec:Discretization of Constrained Langevin Dynamics}
In the discretization setup, we use a uniform grid $t_k = k \Delta t$ for $k=0,..,N$ with terminal time $T = N \Delta t$. The forward trajectory uses state notations $X_k:= X_{t_k}$ (or $[X_k, P_k]^T := [X_{t_k}, P_{t_k}]^T$) and updates in ascending index as $X_k \leftarrow X_{k+1} + \dots $. The backward trajectory is written in descending index, using states $X_k' := \rev{X}_{T-t_k}$ (or $[X'_k, P'_k]^T := [\rev{X}_{T-t_k}, \rev{P}_{T-t_k}]^T$) and updating as $X_k' \leftarrow X_{k+1}' + \dots$ for $k\in \mbra{N-1, ... 0}$. We define the noise schedule in continuous time and evaluate the schedule on the discrete grid. This yields
\begin{equation*}
    \sigma(t) := \sigma_{\text{min}} + \frac{t}{T} \sbra{ \sigma_{\text{max}} - \sigma_{\text{min}}}, \quad \sigma_k := \sigma (t_k) = \sigma_{\text{min}} + \frac{k}{N} \sbra{ \sigma_{\text{max}} - \sigma_{\text{min}}}.
\end{equation*}
We also denote $q_k, p_k$ to be the probability densities of $X_k$ (or $[X_k, P_k]^T)$ and $X_k'$ (or $[X_k', P_k']^T$) so that $q_k = q_{t_k}$ and $p_k = p_{T-t_k}$.

\textbf{Discretization of OLLA} \quad For the discretization of OLLA, we use straightforward Euler-Maruyama (EM) discretization.

\textbf{Discretization of Forward OLLA} \quad
Recall that the It\^o version of the forward process SDE is provided as follows:
\begin{align*}
    dX_t =& -\frac{\sigma(t)^2}{2}\Pi(X_t) \nabla f(X_t)dt - \alpha \sigma(t)^2 \nabla J(X_t)^T G^{\dagger} (X_t) J(X_t)dt + \frac{\sigma(t)^2}{2} \calH(X_t) dt  + \sigma(t) \Pi(X_t) dW_t.
\end{align*}
Therefore, the EM discretization of the forward process SDE becomes:
\begin{align*}
    \label{app:eqn:discretization of OLLA-forward}
    X_{k+1} =& X_k -\frac{\sigma_k^2}{2}\Pi(X_k) \nabla f(X_k)\Delta t - \alpha \sigma_k^2 \nabla J(X_k)^T G^{\dagger}(X_k) J(X_k)\Delta t + \frac{\sigma_k^2}{2} \calH(X_k)\Delta t + \sigma_k \sqrt{\Delta t}\Pi(X_k) \zeta_k, \tag{Forward-OLLA}
\end{align*}
where $\zeta_k \sim \calN(0,I)$ is the standard Gaussian noise.

\textbf{Discretization of Backward OLLA} \quad
Similarly, we note that following the backward SDE
\begin{align*}
    d\rev{X}_t =& \frac{1}{2}\sigma(T-t)^2 \Pi(\rev{X}_t) \lbra{ \nabla f(\rev{X}_t) + 2\nabla \ln q_{T-t}(\rev{X}_t)}\Delta t + \frac{1}{2} \sigma(T-t)^2 \calH(\rev{X}_t)dt \\
    &-\alpha \sigma(T-t)^2 \nabla J(\rev{X}_t)^T G^{\dagger}(\rev{X}_t)J(\rev{X}_t) dt + \sigma(T-t) \Pi(\rev{X}_t) \circ d\bar{W}_t
\end{align*}
can be discretized as follows:
\begin{align*}
\mkern-10mu
    \rev{X}_{t_{k+1}}  \mkern-5mu=& \rev{X}_{t_k} + \frac{1}{2} \sigma(T \mkern-5mu -t_k)^2 \Pi(\rev{X}_{t_k}) \lbra{\nabla f (\rev{X}_{t_k}) + 2 \nabla \ln q_{T-t_k} (\rev{X_{t_k}})} \Delta t +\frac{1}{2}\sigma(T \mkern-5mu -t_k)^2 \calH(\rev{X}_{t_k}) \Delta t \\
    &-\alpha \sigma(T-t_k)^2 \nabla J(\rev{X}_{t_k})^T G^{\dagger}(\rev{X}_{t_k})J(\rev{X}_{t_k}) \Delta t+ \sigma(t_k) \sqrt{\Delta t} \Pi(\rev{X}_{t_k}) \bar{\zeta}_k.
\end{align*}
In our notation, this is equivalent to
\begin{align*}
    \label{app:eqn:discretization of OLLA-backward}
    X_k' =& X_{k+1}' + \frac{1}{2}\sigma_{k+1}^2 \Pi(X_{k+1}')\lbra{\nabla f(X_{k+1}') + 2 \nabla \ln q_{k+1}(X_{k+1}')}\Delta t + \frac{1}{2} \sigma_{k+1}^2 \calH(X_{k+1}')\Delta t \\
    & - \alpha \sigma_{k+1}^2 \nabla J(X_{k+1}')^T G^{\dagger}(X_{k+1}') J(X_{k+1}') \Delta t +\sigma_{k+1} \sqrt{\Delta t} \Pi(X_{k+1}') \zeta_{k+1}' \tag{Backward-OLLA}
\end{align*}
by changing $k \leftarrow N - (k+1)$.

\textbf{Discretized algorithm of constrained overdamped via Lagrangian multiplier} \quad When we are available to use Newton's method, we replace the explicit normal drifts $\calH, -\alpha \nabla J^T G^{\dagger} J$ terms by a position projection via Lagrangian multipliers $\lambda$ at each step so that $J(x)=0$ is satisfied. Under this way, we recover the following discretization of constrained overdamped Langevin dynamics using Lagrangian multiplier:
\begin{align}
    \label{app:eqn:discretization of COLD-forward}
    X_{k+1} =& X_k -\frac{\sigma_k^2}{2}\Pi(X_k) \nabla f(X_k) \Delta t+ \sigma_k \sqrt{\Delta t}\Pi(X_k) \zeta_k +\nabla J(X_k)^T \lambda_k \tag{Forward-OLLA-P}
\end{align}
with $\lambda_k$ such that $J(X_{k+1}) =0 $.
Similarly, the backward process can be obtained in a similar way as follows:
\begin{align*}
    \label{app:eqn:discretization of COLD-backward}
    X_k' =& X_{k+1}' + \frac{1}{2}\sigma_{k+1}^2 \Pi(X_{k+1}')\lbra{\nabla f(X_{k+1}') + 2 \nabla \ln q_{k+1}(X_{k+1}')}\Delta t + \sigma_{k+1} \sqrt{\Delta t}\Pi(X_{k+1}') \zeta_{k+1}'\\
    & +\nabla J(X_{k+1}')^T \lambda_{k+1}  \tag{Backward-OLLA-P}
\end{align*}
with $\lambda_{k+1}$ such that $J(X'_k) =0 $.

\textbf{Discretization of ULLA} \quad For the discretization of ULLA, we use a 1st order $O\tilde{B}A$ splitting scheme which uses an approximated $B$ process to compute the curvature correction term at $P_k$ with $\calO (\Delta t)$ error. When discretizing, we use a collapsing technique to remove the momentum update rule. Because this collapsing technique requires saving only the position variables $X_k$, it is beneficial for saving memory, especially when saving forward trajectories is necessary.

\textbf{Discretization of Forward ULLA} \quad
From the previous subsection, we recall the following It\^o version of the forward process SDE:
\begin{equation*}
    \begin{cases}
        \begin{aligned}[t]
        dX_t =& \sigma(t)^2 P_tdt -\alpha \sigma(t)^2 \nabla J(X_t)^T G^{\dagger}(X_t) J(X_t) dt\\
        dP_t =& \Pi(X_t) \lbra{\sigma(t)^2 \lbra{-\nabla f(X_t) - \gamma P_t} dt + \sigma(t) \sqrt{2\gamma} dW_t} \\
        & -\sigma(t)^2 \nabla J(X_t)^T G^{\dagger}(X_t) \lbra{\calH_1(X_t, P_t) - \alpha \calH_2(X_t, P_t)}dt.
        \end{aligned}
    \end{cases}
\end{equation*}
Under $O\tilde{B}A$ scheme, this can be split into three parts $O$, $\tilde{B}$, $A$ as follows:
\begin{align*}
    &\text{$O$:}
    \begin{cases}
        \begin{aligned}[t]
        dX_t =& 0 \\
        dP_t =& -\sigma(t)^2 \Pi(X_t) \gamma P_t dt + \sigma(t) \Pi(X_t) \sqrt{2\gamma} dW_t
        \end{aligned}
    \end{cases} +\\
    &\text{$\tilde{B}$ :}
    \begin{cases}
        \begin{aligned}[t]
        dX_t =& 0 \\
        dP_t =& -\sigma(t)^2 \sbra{\Pi(X_t)\nabla f(X_t)dt + \nabla J(X_t)^T G^{\dagger}(X_t)\lbra{\calH_1(X_t, P_t) - \alpha \calH_2(X_t, P_t)}dt}
        \end{aligned}
    \end{cases} \mkern-20mu +\\
    &\text{$A$ : }
    \begin{cases}
        \begin{aligned}[t]
        dX_t =& \sigma(t)^2P_t dt - \alpha \sigma(t)^2 \nabla J(X_t)^T G^{\dagger}(X_t) J(X_t)dt \\
        dP_t =& 0.
        \end{aligned}
    \end{cases}
\end{align*}
First, we integrate $O$ step from $t = t_k$ to $t=t_{k+1}$ with $X_t$ and $\sigma(t)$ frozen at $t = t_k$. Then, it gives
\begin{equation*}
    \mkern-10mu
    \begin{cases}
        \mkern-5mu
        \begin{aligned}[t]
        X_{k+1}^O =& X_k \\
        P_{k+1}^O =& (1- \Pi(X_k)) P_k + a_k \Pi(X_k) P_k + \sqrt{1-a_k^2} \Pi(X_k) \zeta_k = a_k \Pi(X_k) P_k + \sqrt{1-a_k^2} \Pi(X_k) \zeta_k
        \end{aligned}
    \end{cases}
\end{equation*}
where $a_k := e^{-\gamma \sigma_k^2 \Delta t}$ and we used the assumption that $P_k \in T_{X_k} \Sigma$, while will be guaranteed on the collapsing technique. Next, we integrate $\tilde{B}$ step with same integration domain with $X_t, \sigma(t)$ frozen at $t=t_k$. Then, we have:
\begin{equation*}
\mkern-10mu
    \begin{cases}
    \mkern-5mu
        \begin{aligned}[t]
        X_{k+1}^{O\tilde{B}} =& X_{k+1}^O + 0 = X_k\\
        P_{k+1}^{O\tilde{B}} =& P_{k+1}^O -\sigma_k^2 \Pi(X_k) \nabla f(X_k) \Delta t - \sigma_k^2 \nabla J(X_k)^T G^{\dagger}(X_k) \lbra{\calH_1(X_k, P_k) - \alpha \calH_2 (X_k, P_k)}\Delta t.
        \end{aligned}
    \end{cases}
\end{equation*}
Similarly, integrating $A$ step with frozen $X_t, \sigma(t)$  gives :
\begin{equation*}
    \begin{cases}
        \begin{aligned}[t]
        X_{k+1} =& X_{k+1}^{O\tilde{B}} +\sigma_k^2 P_k^{O\tilde{B}} \Delta t - \alpha \sigma_k^2 \nabla J(X_k)^T G(X_k)^{\dagger} J(X_k)\Delta t\\
        P_{k+1} =& P_{k+1}^{O\tilde{B}} + 0  = P_{k+1}^{O\tilde{B}},
        \end{aligned}
    \end{cases}
\end{equation*}
which is equivalent to
\begin{equation*}
    \begin{cases}
        \begin{aligned}[t]
        P_{k+1} =& \Pi(X_k)\lbra{a_k P_k - \sigma_k^2 \nabla f(X_k) \Delta t+ \sqrt{1-a_k^2} \zeta_k}  \\
        & - \sigma_k^2 \nabla J(X_k)^T G^{\dagger}(X_k) \lbra{\calH_1(X_k, P_k) - \alpha \calH_2 (X_k, P_k)}\Delta t \\
        X_{k+1} =& X_k +\sigma_k^2 P_{k+1} \Delta t - \alpha \sigma_k^2 \nabla J(X_k)^T G(X_k)^{\dagger} J(X_k)\Delta t.
        \end{aligned}
    \end{cases}
\end{equation*}
We note that this can be collapsed into one update rule with respect to $X$ state as follows:
\begin{align*}
    &X_{k+1} = X_k +\sigma_k^2 \Delta t \Pi(X_k) \lbra{ a_k P_k -\sigma_k^2 \nabla f(X_k) \Delta t + \sqrt{1-a_k^2} \zeta_k } \\
    &- \alpha \sigma_k^2 \nabla J(X_k)^T G(X_k)^{\dagger} J(X_k)\Delta t  - \sigma_k^4 \Delta t^2 \nabla J(X_k)^T G(X_k)^{\dagger}\lbra{\calH_1(X_k, P_k) - \alpha \calH_2 (X_k, P_k)}.
\end{align*}
Also, we observe that $P_k$ can be recovered as $P_k = \Pi(X_{k-1}) \sbra{\frac{X_k-X_{k-1}}{\sigma_{k-1}^2 \Delta t}}$ from the previous recursion  which again can be approximated by $P_k \approx \tilde{P}_k := \Pi(X_k) \sbra{\frac{X_k-X_{k-1}}{\sigma_{k-1}^2 \Delta t}}$ with error $\calO(\Delta t)$, guaranteeing the 1st order numerical error and $P_k \in T_{X_k}\Sigma$ assumption on $O$ step. Therefore, the final update rule for the forward process of ULLA becomes:
\begin{align*}
    \label{app:eqn:discretization of ULLA-forward}
    &X_{k+1} = X_k +\sigma_k^2 \Delta t \Pi(X_k) \lbra{ a_k \tilde{P}_k -\sigma_k^2 \nabla f(X_k) \Delta t + \sqrt{1-a_k^2} \zeta_k } \tag{Forward-ULLA}\\
    &- \alpha \sigma_k^2 \nabla J(X_k)^T G(X_k)^{\dagger} J(X_k)\Delta t  - \sigma_k^4 \Delta t^2 \nabla J(X_k)^T G^{\dagger}(X_k) \lbra{\calH_1(X_k, \tilde{P}_k) - \alpha \calH_2 (X_k, \tilde{P}_k)}
\end{align*}
with $\tilde{P}_k := \Pi(X_k) \sbra{\frac{X_k-X_{k-1}}{\sigma_{k-1}^2 \Delta t}}$ for $k \in \mbra{1,...,N-1}$ and $\tilde{P}_0 =\Pi(X_0) \zeta \in T_{X_0}^\Sigma$ where $\zeta \sim \calN(0,I)$.

\textbf{Discretization of Backward ULLA} \quad
For the discretization of backward process SDE, we recall the following backward process SDE of ULLA:
\begin{equation*}
    \mkern-10mu
    \begin{cases}
        \mkern-5mu
        \begin{aligned}[t]
        d\rev{X}_t =& -\sigma(T-t)^2 \rev{P}_tdt -\alpha \sigma(T-t)^2 \nabla J(\rev{X}_t) G^{\dagger}(\rev{X}_t) J(\rev{X}_t)dt\\
        d\rev{P}_t =&  \Pi(\rev{X}_t) \lbra{\sigma(T \mkern-5mu -t)^2 \lbra{\nabla f(\rev{X}_t)dt + \gamma \rev{P}_t dt + 2\gamma  \nabla_p \ln q_{T-t}(\rev{X}_t, \rev{P}_t)}dt  + \sigma(T \mkern-5mu -t) \sqrt{2\gamma }  d\bar{W}_t} \\
        & \mkern-10mu +\sigma(T-t)^2 \nabla J(\rev{X}_t)^T G^{\dagger}(\rev{X}_t) \lbra{\calH_1(\rev{X}_t, \rev{P}_t) + \alpha \calH_2 (\rev{X}_t, \rev{P}_t)}dt.
        \end{aligned}
    \end{cases}
\end{equation*}
For the backward process of the $O\tilde{B}A$ scheme, the $2\sigma(T-t)^2 \Pi(\rev{X}_t) \gamma \rev{P}_t$ is added to $\tilde{B}$ step to guarantee the stable non-exploding OU process at $O$ step. We remark that the similar trick to handle $O$ step was previously used in \citet{dockhorn2022score}. Under this technique, each $O$, $\tilde{B}$, $A$ steps can be given as follows:
\begin{align*}
    &\text{$O$:}
    \begin{cases}
        \begin{aligned}[t]
        d\rev{X}_t =& 0 \\
        d\rev{P}_t =& -\sigma(T-t)^2 \Pi(\rev{X}_t) \gamma \rev{P}_t dt + \sigma(T-t) \Pi(\rev{X}_t) \sqrt{2\gamma} d\bar{W}_t
        \end{aligned}
    \end{cases} +\\
    &\text{$\tilde{B}$ :}
    \begin{cases}
        \begin{aligned}[t]
        d\rev{X}_t =& 0 \\
        d\rev{P}_t =& \sigma(T-t)^2 \Pi(\rev{X}_t) \sbra{\nabla f(\rev{X}_t)+2\gamma \nabla_p \ln q_{T-t}(\rev{X}_t, \rev{P}_t) + 2 \gamma \rev{P}_t}dt \\
        & + \nabla J(\rev{X}_t)^T G^{\dagger}(\rev{X}_t)\lbra{\calH_1(\rev{X}_t, \rev{P}_t) + \alpha \calH_2(\rev{X}_t, \rev{P}_t)}dt
        \end{aligned}
    \end{cases} +\\
    &\text{$A$ : }
    \begin{cases}
        \begin{aligned}[t]
        d\rev{X}_t =& \sigma(T-t)^2\sbra{-\rev{P}_t dt - \alpha \nabla J(\rev{X}_t) G^{\dagger}(\rev{X}_t) J(\rev{X}_t)dt} \\
        d\rev{P}_t =& 0.
        \end{aligned}
    \end{cases}
\end{align*}
Integrating $O$ step from $t = t_k$ to $t=t_{k+1}$ with $\rev{X}_t$ and $\sigma(T-t)$ frozen at $t = t_k$. Then, it gives
\begin{equation*}
    \begin{cases}
        \begin{aligned}[t]
        \rev{X}_{t_{k+1}}^O =& \rev{X}_{t_k} \\
        \rev{P}_{t_{k+1}}^O =& a_{N-k} \Pi(\rev{X}_{t_k}) \rev{P}_{t_k} + \sqrt{1-a_{N-k}^2} \Pi(\rev{X}_{t_k}) \bar{\zeta}_k,
        \end{aligned}
    \end{cases}
\end{equation*}
where $a_{N-k} = e^{-\gamma \sigma_{N-k}^2 \Delta t}$ and we again used the assumption that $\rev{P}_{t_k} \in T_{\rev{X}_{t_k}} \Sigma$, which will be guaranteed via the collapsing technique later on. Next, integrating $\tilde{B}$ step with the same integration domain with $\rev{X}_t, \sigma(t)$ frozen at $t=t_k$ gives:
\begin{equation*}
    \begin{cases}
        \begin{aligned}[t]
        \rev{X}_{t_{k+1}}^{O\tilde{B}} =& \rev{X}_{t_{k+1}}^O + 0 = \rev{X}_{t_k}\\
        \rev{P}_{t_{k+1}}^{O\tilde{B}} =& \rev{P}_{t_{k+1}}^O + \sigma_{N-k}^2  \Pi(\rev{X}_{t_k}) \sbra{ \nabla f(\rev{X}_{t_k}) +2\gamma \nabla_p \ln q_{T-t_k}(\rev{X}_{t_k}, \rev{P}_{t_k}) + 2 \gamma \rev{P}_{t_k}} \Delta t \\
        &+ \sigma_{N-k}^2 \nabla J(\rev{X}_{t_k})^T G^{\dagger}(\rev{X}_{t_k}) \lbra{\calH_1(\rev{X}_{t_k}, \rev{P}_{t_k}) + \alpha \calH_2 (\rev{X}_{t_k}, \rev{P}_{t_k})}\Delta t
        \end{aligned}
    \end{cases}
\end{equation*}
Similarly, integrating $A$ step with frozen $\rev{X}_t, \sigma(t)$  gives :
\begin{equation*}
    \begin{cases}
        \begin{aligned}[t]
        \rev{X}_{t_{k+1}} =& \rev{X}_{t_{k+1}}^{O\tilde{B}} -\sigma_{N-k}^2 P_k^{O\tilde{B}} \Delta t - \alpha \sigma_{N-k}^2 \nabla J(\rev{X}_{t_k})^T G(\rev{X}_{t_k})^{\dagger} J(\rev{X}_{t_k})\Delta t\\
        \rev{P}_{t_{k+1}} =& \rev{P}_{t_{k+1}}^{O\tilde{B}} + 0  = \rev{P}_{t_{k+1}}^{O\tilde{B}}
        \end{aligned}
    \end{cases}
\end{equation*}
which is equivalent to
\begin{equation*}
    \begin{cases}
        \begin{aligned}[t]
        &\rev{P}_{t_{k+1}} = \Pi(\rev{X}_{t_k})\lbra{a_{N-k} \rev{P}_{t_k} + \sigma_{N-k}^2  \sbra{ \nabla f(\rev{X}_{t_k}) +2\gamma \nabla_p \ln q_{T-t_k}(\rev{X}_{t_k}, \rev{P}_{t_k}) + 2 \gamma \rev{P}_{t_k}} \Delta t}  \\
        &+ \sqrt{1-a_{N-k}^2} \Pi(\rev{X}_{t_k}) \bar{\zeta}_k + \sigma_{N-k}^2 \nabla J(\rev{X}_{t_k})^T G^{\dagger}(\rev{X}_{t_k}) \lbra{\calH_1(\rev{X}_{t_k}, \mkern-5mu \rev{P}_{t_k}) +  \mkern-5mu \alpha \calH_2 (\rev{X}_{t_k}, \rev{P}_{t_k})}\Delta t \\
        &\rev{X}_{t_{k+1}} = \rev{X}_{t_k} - \sigma_{N-k}^2 \rev{P}_{t_{k+1}} \Delta t - \alpha \sigma_{N-k}^2 \nabla J(\rev{X}_{t_k})^T G(\rev{X}_{t_k})^{\dagger} J(\rev{X}_{t_k})\Delta t
        \end{aligned}
    \end{cases}
\end{equation*}
Similarly as before, we note that $\rev{P}_{t_{k+1}}$ can be recovered as $\rev{P}_{t_{k+1}} = \Pi(\rev{X}_{t_{k-1}}) \sbra{\frac{\rev{X}_{t_{k-1}} - \rev{X}_{t_k}}{\sigma_{N-k+1}^2 \Delta t}}$ from the previous recursion, and it can be approximated by $\tilde{\rev{P}}_{t_{k+1}} \approx \Pi(\rev{X}_{t_k}) \sbra{\frac{\rev{X}_{t_{k-1}} - \rev{X}_{t_k}}{\sigma_{N-k+1}^2 \Delta t}}$ with $\calO(\Delta t)$ error. Therefore, we can collapse these two position-momentum updates into one single position update with approximated $\tilde{\rev{P}}_{t_{k+1}}$ as follows:
\begin{align*}
    &\rev{X}_{t_{k+1}} = \rev{X}_{t_k} + \sigma_{N-k}^2 \Delta t \sqrt{1-a_{N-k}^2}  \Pi(\rev{X}_{t_k}) \bar{\zeta}_k - \alpha \sigma_{N-k}^2 \nabla J(\rev{X}_{t_k})^T G(\rev{X}_{t_k})^{\dagger} J(\rev{X}_{t_k})\Delta t \\
    &-\sigma_{N-k}^2 \Delta t \Pi(\rev{X}_{t_k}) \lbra{ a_{N-k} \tilde{\rev{P}}_{t_k} +\sigma_{N-k}^2 \sbra{\nabla f(\rev{X}_{t_k}) + 2 \gamma \sbra{ \nabla_p \ln q_{T-t_k} (\rev{X}_{t_k}, \tilde{\rev{P}}_{t_k}) + \tilde{\rev{P}}_{t_k}}} \Delta t } \\
    &- \sigma_{N-k}^4 \Delta t^2 \nabla J(\rev{X}_{t_k})^T G^{\dagger}(\rev{X}_{t_k}) \lbra{\calH_1(\rev{X}_{t_k}, \tilde{\rev{P}}_{t_k}) + \alpha \calH_2 (\rev{X}_{t_k}, \tilde{\rev{P}}_{t_k})}.
\end{align*}
Finally, we recover the following discretization of the backward SDE of ULLA by changing index $k \leftarrow N-(k+1)$:
\begin{align*}
    \label{app:eqn:discretization of ULLA-backward}
    &X'_k =  \mkern-5mu X'_{k+1}  \mkern-5mu + \mkern-5mu \sigma_{k+1}^2 \Delta t\sqrt{1-a_{k+1}^2} \Pi(X'_{k+1}) \zeta'_{k+1} - \mkern-5mu \alpha \sigma_{k+1}^2 \nabla J(X'_{k+1})^T G(X'_{k+1})^{\dagger} J(X'_{k+1})\Delta t - \\
    &\sigma_{k+1}^2 \Delta t \Pi(X'_{k+1}) \lbra{ a_{k+1} \tilde{P}'_{k+1} +\sigma_{k+1}^2 \sbra{\nabla f(X'_{k+1}) + 2 \gamma \sbra{ \nabla_p \ln q_{k+1} (X'_{k+1}, \tilde{P}'_{k+1}) + \tilde{P}'_{k+1}}} \Delta t } \\
    &- \sigma_{k+1}^4 \Delta t^2 \nabla J(X'_{k+1})^T G^{\dagger}(X'_{k+1}) \lbra{\calH_1(X'_{k+1}, \tilde{P}'_{k+1}) + \alpha \calH_2 (X'_{k+1}, \tilde{P}'_{k+1})}  \tag{Backward-ULLA}
\end{align*}
with $\tilde{P}'_{k+1} := \Pi(X'_{k+1}) \sbra{\frac{X'_{k+2} - X'_{k+1}}{\sigma^2_{k+2} \Delta t}}$ for $k \in \mbra{0,...,N-2}$ and $\tilde{P}'_N =\Pi(X'_N) \zeta \in T_{X'_N}^\Sigma$ where $\zeta \sim \calN(0, I)$.

\textbf{Discretized algorithm of constrained underdamped via Lagrangian multiplier} \quad Similar to constrained overdamped with Lagrangian multiplier, we replace the explicit normal drifts $-\alpha \nabla J^T G^{\dagger} J, \nabla J^T G^{\dagger} \calH_1, \nabla J^T G^{\dagger} \calH_2$ terms by a position projection via Lagrangian multipliers $\lambda$ at each step so that $J(x)=0$ is satisfied. In the collapsed underdamped case, we also used the approximation to re-tangent the momentum $P$ by $\tilde{P}$. So, no separate momentum projection is required. Under this idea, we recover the following discretization of constrained underdamped Langevin dynamics via Lagrangian multiplier:
\begin{align*}
    \label{app:eqn:discretization of CULD-forward}
    &X_{k+1} = X_k +\sigma_k^2 \Delta t \Pi(X_k) \lbra{ a_k \tilde{P}_k -\sigma_k^2 \nabla f(X_k) \Delta t + \sqrt{1-a_k^2} \zeta_k } + \nabla J(X_k)^T \lambda_k\tag{Forward-ULLA-P}\\
\end{align*}
with $\lambda_k$ such that $J(X_{k+1}) = 0$.
Similarly, we obtain the following backward discretization:
\begin{align*}
    \label{app:eqn:discretization of CULD-backward}
    &X'_k = X'_{k+1} + \sigma_{k+1}^2 \Delta t\sqrt{1-a_{k+1}^2} \Pi(X'_{k+1}) \zeta'_{k+1} + \nabla J(X_{k+1}') \lambda_{k+1} - \\
    &\sigma_{k+1}^2 \Delta t \Pi(X'_{k+1}) \lbra{ a_{k+1} \tilde{P}'_{k+1}  \mkern-5mu +\sigma_{k+1}^2 \sbra{\nabla f(X'_{k+1}) + 2 \gamma \sbra{ \nabla_p \ln q_{k+1} (X'_{k+1}, \tilde{P}'_{k+1}) + \tilde{P}'_{k+1}}} \Delta t} \tag{Backward-ULLA-P}
\end{align*}
with $\lambda_{k+1}$ such that $J(X'_k) = 0$.

\section{DT-ELBO on Riemannian Manifold}
\label{appendix:sec:DT-ELBO and CT-ELBO on Riemannian manifold}
In this section, we derive the DT-ELBO variational bounds of KL-divergence between the initial data distribution and the generated data distribution on the Riemannian manifold $\Sigma := \mbra{x \in \bbR^d \mid h(x) = 0, g(x) \leq 0}$.

Let $\mbra{x_0,\dots,x_N}\subset\Sigma$ be the forward position trajectory produced by $N$ steps of a discretized sampler, and let $q(x_{k+1}\mid c_k)$ and $p_\theta(x_k\mid d_k)$ denote the forward and backward transition densities, respectively. The context $c_k, d_k$ depends on the history $\mbra{x_0,\dots,x_k}$: for the overdamped case we take $c_k=\mbra{x_k}, d_k = \mbra{x_{k+1}}$, while for the (collapsed) underdamped case we use $c_k=\mbra{x_k,x_{k-1}}$ (with $c_0=\mbra{x_0,p_0}$) and $d_k = \mbra{x_{k+1}, x_{k+2}}$ (with $d_{N-1} = \mbra{x_N, p_N}$). We set $q_0(x)=q_{\text{data}}(x)$ as the data distribution and $p(x)=p_N(x)$ as the prior. In what follows, all densities are understood with respect to the surface measure $d\sigma_\Sigma$ on $\Sigma$, and, for clarity of exposition, we will assume that Lagrangian multiplier methods (OLLA-P, ULLA-P) are used each iterate to satisfy $x_k\in\Sigma$; see the remark below for when this may fail under discretization.

\begin{remark}[On constraint enforcement and well-definedness of the DT-ELBO]
\label{app:rem:On constraint enforcement and well-definedness of the DT-ELBO}
\quad
The proposed discretized constrained Langevin dynamics with landing does \emph{not} automatically guarantee $x_k\in\Sigma$ at every step. By contrast, the RDDPM formulation \citep{liu2025riemannian} enforces feasibility at each time by solving the Lagrange multiplier system via Newton’s method (see \autoref{app:prop:Construction of OLLA}, \autoref{app:prop:Construction of ULLA}) so that
\begin{equation*}
J(X_t)=0 \quad\text{with}\quad \nabla J(X_t)^T P_t=0 \text{ \quad (if underdamped)}
\end{equation*}
ensuring $X_t\in\Sigma$ and $P_t\in T_{X_t}\Sigma$ exactly. Throughout our DT-ELBO derivation we adopt the same \emph{feasible-trajectory assumption}: we assume the multiplier solve succeeds by Newton's method so that the sampled forward path lies on $\Sigma$, i.e., $x_k\in\Sigma$ for all $k \in \mbra{1,...,N}$.

This assumption is not merely cosmetic. If some $x_k\notin\Sigma$, the conditional densities that appear in the ELBO may be undefined or degenerate (e.g., in the overdamped case $p_\theta(x_k\mid x_{k+1})/q(x_{k+1}\mid x_k)$, and in the underdamped case $p_\theta(x_{k-1}\mid x_k,x_{k+1})/q(x_{k+1}\mid x_k,x_{k-1})$). Such off-manifold iterates can induce singularities in the ELBO and render the induced NLL numerically unstable even under small constraint violations. For this reason, in our experiments we do not report test NLL; instead, we evaluate with task-specific metrics which reflect downstream performance.

To solidify our framework, we later introduce the Conditional Wasserstein Path Matching (CWPM) framework in \autoref{app:sec:Conditional Wasserstein Path Matching (CWPM)}, which drops the feasibility-trajectory assumption. Its training loss has the same DT-ELBO form up to the choice of training loss weights, justifying that the CWPM framework shares the same principle with DT-ELBO framework in the constrained Langevin dynamics with landing as the step size $\Delta t \rightarrow 0$.
\end{remark}

\textbf{DT-ELBO for overdamped Langevin} \quad From the Markov property, the densities $q(x_0, x_{1:N})$, $p_\theta(x_{1:N} | x_0)$ are given by
\begin{equation*}
    q(x_1,...x_N | x_0) = \prod_{k=0}^{N-1} q(x_{k+1} |x_k), \quad p_\theta(x_0, x_{1:N}) = p(x_N) \prod_{k=0}^{N-1} p_\theta (x_k | x_{k+1})
\end{equation*}
The common goal suggested in DDPM \citep{ho2020denoising} (Euclidean space) or RDDPM \citep{liu2025riemannian} (Riemannian manifold) is to minimize $\KL^\Sigma(q_0 ||p_\theta)$. For this, we observe that
\begin{align*}
    \KL^\Sigma(q_0(x_0) || p_\theta(x_0)) =& \int q_0(x_0) \ln q_0(x_0) d\sigma_\Sigma(x_0) \\
    & - \underbrace{\int q_0(x_0) \ln \sbra{\int p_\theta (x_{0:N}) d\sigma_\Sigma(x_{1:N})} d\sigma_\Sigma (x_0)}_{\text{Term (1)}}
\end{align*}
and
\begin{align*}
    \text{Term (1)} &= \int q_0(x_0) \ln \sbra{\int \frac{p_\theta(x_{0:N})}{q(x_{1:N} | x_0)} q(x_{1:N} |x_0) d\sigma_\Sigma(x_{1:N})} d\sigma_\Sigma(x_0)\\
    &\overset{\text{Jensen}}{\geq} \int q_0(x_0) q(x_{1:N} | x_0) \ln \sbra{\frac{p_\theta(x_{0:N})}{q(x_{1:N}|x_0)}}d\sigma_\Sigma(x_{1:N})\\
    &= \int q(x_{0:N}) \sbra{ \sum_{k=0}^{N-1} \ln \frac{p_\theta (x_k | x_{k+1})}{q(x_{k+1}|x_k)} +\ln p(x_N)}d\sigma_\Sigma(x_{1:N})
\end{align*}
Because we want to minimize $\KL(q_0(x_0) || p_\theta(x_0))$, the training loss $L^{\text{over}}(\theta)$ can be set to
\begin{equation*}
    L^{\text{over}}(\theta) = -\int q(x_{0:N}) \sum_{k=0}^{N-1} \ln p_\theta (x_k | x_{k+1}) d\sigma_\Sigma(x_{1:N}) = -\bbE_{q(x_{0:N})}\lbra{\sum_{k=0}^{N-1} \ln p_\theta (x_k | x_{k+1})}
\end{equation*}
where the product surface measure is defined to be $d\sigma_\Sigma (x_1,..x_N) := \prod_{k=1}^N d \sigma_\Sigma(x_k)$.

\begin{lemma_ap}[Backward transition density -- overdamped \cite{liu2025riemannian}]
    \label{app:lem:backward transition density overdamped}
    Suppose $x_{k+1} \in \text{int}(\Sigma)$ and assume the followings hold:
    \begin{enumerate}
        \item There exists a measurable set $\calF_{x_{k+1}} \subset T_{x_{k+1}} \Sigma$ such that, for every $\eta \in \calF_{x_{k+1}}$, the Newton's method returns a unique pair $(x, \lambda), x \in \text{int}(\Sigma)$ solving
        \begin{equation*}
            x = x_{k+1} +  \mu_{k+1}^o(x_{k+1}) +\sigma_{k+1} \sqrt{\Delta t} \eta + \nabla J(x_{k+1})\lambda, \quad {\text{ with $\lambda$ \ s.t. \ $J(x) = 0$}}
        \end{equation*}
        with the minimal-displacement normal correction and it fails for $\eta \notin \calF_{x_{k+1}}$
        \item The solver success probability is $1-\epsilon_{x_{k+1}} := \bbP(\eta \in \calF_{x_{k+1}}) \in (0,1]$.
        \item The map $\Phi_{k+1} : \calF_{x_{k+1}} \rightarrow \Sigma_{x_{k+1}} := \Phi_{k+1}(\calF_{x_{k+1}}) \subset \text{int}(\Sigma)$ with $\Phi_{k+1}(\eta) = x$ is a $C^1$ bijection.

    \end{enumerate}
    Then, the backward transition density of OLLA-P with respect to surface measure $d\sigma_\Sigma$ is given as  $p_\theta(x_{k} | x_{k+1})$ :
    \begin{equation*}
        p_\theta(x_k | x_{k+1}) = \frac{\abs{\det (U(x_{k+1})^T U(x_{k}))}}{(2\pi \sigma_{k+1}^2 \Delta t)^{\frac{d-m}{2}}{(1-\epsilon_{x_{k+1}})}}  \exp{\sbra{- \frac{\norm{\Pi(x_{k+1}) (x_k - \mu^o_{k+1}(x_{k+1}))}^2 }{2 \sigma_{k+1}^2 \Delta t}}}
    \end{equation*}
    for $x_k \in \Sigma_{x_{k+1}}$, and $p_\theta (x_k | x_{k+1}) =0$ outside of $\Sigma_{x_{k+1}}$, where
    \begin{equation*}
        \mu^o_{k+1}(x_{k+1}) := x_{k+1} + \frac{1}{2} \sigma_{k+1}^2 \Delta t\Pi(x_{k+1}) \lbra{\nabla f(x_{k+1}) + s_\theta^{(k+1)}(x_{k+1})}
    \end{equation*}
    is the backward mean vector, and $U(x)$ is the orthonormal matrix whose column vectors form an orthonormal basis of $T_x\Sigma$ so that $U(x)U(x)^T = \Pi(x)$
\end{lemma_ap}
\begin{proof}
    Recall that the backward discretization of OLLA-P is given as below:
    \begin{align*}
    x_k =& x_{k+1} + \frac{1}{2}\sigma_{k+1}^2 \Pi(x_{k+1})\lbra{\nabla f(x_{k+1}) + 2 s_\theta^{(k+1)}(x_{k+1})}\Delta t +\sigma_{k+1} \sqrt{\Delta t} \Pi(x_{k+1}) \zeta_{k+1} + \nabla J(x_{k+1})\lambda_{k+1}
    \end{align*}
    with $\lambda_{k+1}$ such that $J(x_k)=0$.

    Let $\eta \sim \calN (0, I_{d-m})$ on $T_{x_{k+1}}\Sigma$. Conditioning on the success event $\mbra{\eta \in \calF_{x_{k+1}}}$, the conditional density of $\eta$ is given by
    \begin{equation*}
        \phi (\zeta) = \frac{1}{(2\pi)^{\frac{d-m}{2}} (1-\epsilon_{x_{k+1}})} \exp\sbra{ -\frac{1}{2} \norm{\eta}^2} \eye_{\eta \in \calF_{x_{k+1}}}
    \end{equation*}
    From the assumption, for each $\eta \in \calF_{x_{k+1}}$, there is a unique $x = \Phi_{k+1} (\eta) \in \Sigma$ solving
    \begin{equation*}
        x = \mu_{k+1}^o(x_{k+1}) + \sigma_{k+1} \sqrt{\Delta t} \eta + \nabla J (x_{k+1}) \lambda
    \end{equation*}
    with $\lambda$ such that $J(x) =0$. Because $\Phi_{k+1}$ is bijection from $\calF_{x_{k+1}} \subset T_{x_{k+1}} \Sigma$ onto $\Sigma_{x_{k+1}} := \Phi_{k+1}(\calF_{x_{k+1}}) \subset \Sigma$, we can define $G_{x_{k+1}} : =\Phi_{k+1}^{-1}$ and its Jacobian is given by $ DG_{x_{k+1}}(x) = (\sigma_{k+1} \sqrt{\Delta t})^{-1} U(x_{k+1})^T U(x)$ for $x \in \Sigma_{x_{k+1}}$. Hence, we have
    \begin{equation*}
        \abs{\text{det} (DG_{x_{k+1}}(x))} = (\sigma_{k+1} \sqrt{\Delta t})^{-(d-m)} \abs{\text{det} (U(x_{k+1})^T U(x))}
    \end{equation*}
    Lastly, we observe that, for $x \in \Sigma_{x_{k+1}}$, the pushforward of the conditional density $\phi$ by $\Phi_{k+1}$ yields the following density with respect to $d\sigma_\Sigma$:
    \begin{equation*}
        p_\theta(x_k | x_{k+1}) = \frac{\abs{\det (U(x_{k+1})^T U(x_{k}))}}{(2\pi \sigma_{k+1}^2 \Delta t)^{\frac{d-m}{2}}{(1-\epsilon_{x_{k+1}})}}  \exp{\sbra{- \frac{\norm{\Pi(x_{k+1})^T (x_k - \mu^o_{k+1}(x_{k+1}))}^2 }{2 \sigma_{k+1}^2 \Delta t}}}
    \end{equation*}
    which becomes zero outside $\Sigma_{x_{k+1}}$ (equivalently, when projection failure happens).
\end{proof}
Therefore, assuming the forward path trajectory $\mbra{x_0, ..., x_N} \subset \text{int}(\Sigma)$, the training loss has the following upper bound:
\begin{align*}
    L^{\text{over}}(\theta) & = -\bbE_{q(x_{0:N})}\lbra{\sum_{k=0}^{N-1} \ln p_\theta (x_k | x_{k+1})} \tag{Training loss-OLLA-P} \leq \bbE_{q(x_{0:N})} \lbra{\sum_{k=0}^{N-1} \sbra{\frac{\norm{\Pi(x_{k+1})^T (x_k - \mu^o_{k+1}(x_{k+1}))}^2 }{2 \sigma_{k+1}^2 \Delta t}}} + C
\end{align*}
where $C:= \sum_{k=0}^{N-1} \lbra{-\ln \sbra{\frac{\abs{\det (U(x_{k+1})^T U(x_{k}))}}{(2\pi \sigma_{k+1}^2 \Delta t)^{\frac{d-m}{2}}}}}$ is constant with respect to $\theta$ and the last inequality is obtained using $\epsilon_{x_{k+1}} \leq 1$.

\textbf{DT-ELBO for underdamped Langevin} \quad In the collapsed underdamped setting, we keep only positions, which makes the forward chain a second-order Markov chain in $\mbra{x_0,...x_N}$. Similarly, applying the same Jensen inequality argument used for the overdamped case gives the ELBO:
\begin{align*}
    &\text{Term (1)} = \mkern-10mu\int \mkern-5mu q_0(x_0) \ln \sbra{\int \mkern-5mu \frac{p_\theta(x_{0:N})}{q( x_{1:N}| x_0, p_0)} q(p_0 | x_0) q(x_{1:N} | x_0, p_0) d\sigma_{T_{x_0}\Sigma}(p_0) d\sigma_\Sigma(x_{1:N})}  d\sigma_\Sigma(x_0)\\
    &\overset{\text{Jensen}}{\geq} \int q(x_0, p_0) q(x_{1:N} | x_0, p_0) \ln \sbra{\frac{p_\theta(x_{0:N})}{q(x_{1:N}|x_0, p_0)}}  d\sigma_{T_{x_0}\Sigma}(p_0) d\sigma_\Sigma(x_{0:N})\\
    &\overset{\text{Jensen}}{\geq}  \mkern-10mu\bbE_{q(x_{0:N}, p_0)} \bbE_{p(p_N | x_N)} \mkern-5mu \lbra{\sum_{k=1}^{N-1} \ln \sbra{\frac{p_\theta (x_{k-1} | x_k, x_{k+1})}{q(x_{k+1}|x_k, x_{k-1})}} + \ln \sbra{\frac{p_\theta(x_{N-1} | x_N, p_N)}{q ( x_1 | x_0, p_0)}} + \ln p(x_N)}
\end{align*}
where the last inequality holds because $p_\theta(x_{0:N})$ has the following form:
\begin{equation*}
    p_\theta(x_{0:N}) = p(x_N) \int p(p_N | x_N) p_\theta(x_{N-1} | x_N, p_N) \prod_{k=1}^{N-1} p_\theta(x_{k-1} | x_k, x_{k+1}) d \sigma_{T_{x_N} \Sigma} (p_N)
\end{equation*}
Therefore, the training loss $L^{\text{under}}(\theta)$ is naturally given as follows to minimize $\KL^\Sigma(q_0(x_0) || p_\theta(x_0))$:
\begin{align*}
    \label{eqn:training loss_primitive}
    L^{\text{under}}(\theta) &=  -\bbE_{q(x_{0:N}, p_0)} \bbE_{\rho_N(p_N|x_N)} \lbra{\sum_{k=0}^{N-2} \ln p_\theta (x_k | x_{k+1}, x_{k+2}) + \ln p_\theta(x_{N-1} | x_N, p_N)}\\
    &= -\bbE_{q(x_{0:N}, p_0)} \bbE_{\rho_N(p_N|x_N)} \lbra{\sum_{k=0}^{N-1} \ln p_\theta (x_k | x_{k+1}, x_{k+2})}
\end{align*}
where $\rho_N(p_N|x_N) \propto \exp\sbra{-\frac{1}{2} \norm{\Pi(x_N) p_N}_2^2}$ is the density of the momentum prior density, and we used the notation abusing $x_{N+1} := p_N$ for notational convenience.

\begin{lemma_ap}[Backward transition density -- underdamped]
    \label{app:lem:backward transition density (underdamped)}
    Suppose $x_{k+1} \in \text{int}(\Sigma)$ and assume the followings hold:
    \begin{enumerate}
        \item There exists a measurable set $\calF_{x_{k+1}} \subset T_{x_{k+1}} \Sigma$ such that, for every $\eta \in \calF_{x_{k+1}}$, the Newton's method returns a unique pair $(x, \lambda), x \in \text{int}(\Sigma)$ solving
        \begin{equation*}
            x = \mu_{k+1}^u (x_{k+1}, x_{k+2}) + \sigma_{k+1}^2 \Delta t\sqrt{1-a_{k+1}^2} \eta + \nabla J(x_{k+1}) \lambda, \text{ with $\lambda$ \ s.t. \ $J(x) =0 $}
        \end{equation*}
        with the minimal-displacement normal correction and it fails for $\eta \notin \calF_{x_{k+1}}$
        \item The solver success probability is $1-\epsilon_{x_{k+1}} := \bbP(\eta \in \calF_{x_{k+1}}) \in (0,1]$.
        \item The map $\Phi_{k+1} : \calF_{x_{k+1}} \rightarrow \Sigma_{x_{k+1}} := \Phi_{k+1}(\calF_{x_{k+1}}) \subset \text{int}(\Sigma)$ with $\Phi_{k+1}(\eta) = x$ is a $C^1$ bijection.
    \end{enumerate}

    Then, the backward transition density of ULLA-P with respect to surface measure $d\sigma_\Sigma$ is given as  $p_\theta(x_{k} | x_{k+1})$ :
    \begin{equation*}
        p_\theta(x_k | x_{k+1}) = \frac{\abs{\det (U(x_{k+1})^T U(x_{k}))}}{(2\pi \sigma_{k+1}^4 \Delta t^2(1-a_{k+1}^2))^{\frac{d-m}{2}}{(1-\epsilon_{x_{k+1}})}}  \exp{\sbra{- \frac{\norm{\Pi(x_{k+1})^T (x_k - \mu^u_{k+1}(x_{k+1}, x_{k+2}))}^2 }{2 \sigma_{k+1}^4 \Delta t^2(1-a_{k+1}^2)}}}
    \end{equation*}
    for $x_k \in \Sigma_{x_{k+1}}$, and $p_\theta (x_k | x_{k+1}) =0$ outside of $\Sigma_{x_{k+1}}$, where
    \begin{equation*}
        \mu^u_{k+1} = x_{k+1} + \sigma_{k+1}^2 \Delta t \Pi(x_{k+1}) \lbra{a_{k+1} \tilde{p}_{k+1} + \sigma_{k+1}^2 \Delta t \sbra{\nabla f(x_{k+1}) +s^\theta_{k+1}(x_{k+1}, \tilde{p}_{k+1})} }
    \end{equation*}
    is the backward mean vector.

\end{lemma_ap}
\begin{proof}
    Recall that the backward discretization of ULLA-P is given as below:
    \begin{align*}
    x_k =& x_{k+1} + \sigma_{k+1}^2 \Delta t \sqrt{1-a_{k+1}^2} \Pi(x_{k+1}) \zeta_{k+1} + \nabla J(x_{k+1})\lambda_{k+1} + \\
    &\sigma_{k+1}^2 \Delta t \Pi(x_{k+1}) \lbra{a_{k+1} \tilde{p}_{k+1} + \sigma_{k+1}^2 \sbra{\nabla f(x_{k+1}) +s^\theta_{k+1}(x_{k+1}, \tilde{p}_{k+1})} \Delta t}
    \end{align*}
    with $\lambda_{k+1}$ such that $J(x_k)=0$. Using the same proof as the overdamped case, we observe that $\Phi_{k+1}$ is bijection from $\calF_{x_{k+1}} \subset T_{x_{k+1}} \Sigma$ onto $\Sigma_{x_{k+1}} := \Phi_{k+1}(\calF_{x_{k+1}}) \subset \Sigma$, and we can define $G_{x_{k+1}} : =\Phi_{k+1}^{-1}$ whose Jacobian is given by $ DG_{x_{k+1}}(x) = (\sigma_{k+1}^2 \Delta t\sqrt{1-a_{k+1}^2})^{-1} U(x_{k+1})^T U(x)$ for $x \in \Sigma_{x_{k+1}}$. Therefore, the determinant of this is:
    \begin{equation*}
        \abs{\text{det} (DG_{x_{k+1}}(x))} = (\sigma_{k+1}^2 \Delta t\sqrt{1-a_{k+1}^2})^{-(d-m)} \abs{\text{det} (U(x_{k+1})^T U(x))}
    \end{equation*}
    Lastly, we observe that, for $x \in \Sigma_{x_{k+1}}$, the pushforward of the conditional density of $\eta$ (defined on the overdamped case proof) by $\Phi_{k+1}$ yields the following density with respect to $d\sigma_\Sigma$:
    \begin{equation*}
        p_\theta(x_k | x_{k+1}) = \frac{\abs{\det (U(x_{k+1})^T U(x_{k}))}}{(2\pi \sigma_{k+1}^4 \Delta t^2(1-a_{k+1}^2))^{\frac{d-m}{2}}{(1-\epsilon_{x_{k+1}})}}  \exp{\sbra{- \frac{\norm{\Pi(x_{k+1})^T (x_k - \mu^u_{k+1}(x_{k+1}, x_{k+2}))}^2 }{2 \sigma_{k+1}^4 \Delta t^2(1-a_{k+1}^2)}}}
    \end{equation*}
    which becomes zero outside $\Sigma_{x_{k+1}}$ (equivalently, when projection failure happens).
\end{proof}
Therefore, assuming the forward path trajectory $\mbra{x_0, ..., x_N} \subset \text{int}(\Sigma)$, the training loss has the following upper bound (with notation $x_{N+1} := p_N$):
\begin{align*}
    &L^{\text{under}}(\theta) =  -\bbE_{q(x_{0:N}, p_0)} \bbE_{\rho_N(p_N|x_N)} \lbra{\sum_{k=0}^{N-1} \ln p_\theta (x_k | x_{k+1}, x_{k+2})} \tag{Training loss-ULLA-P} \\
    & \leq \bbE_{q(x_{0:N})} \bbE_{\rho_N(p_N| x_N)} \lbra{\sum_{k=0}^{N-1} \sbra{\frac{\norm{\Pi(x_{k+1})^T (x_k - \mu^u_{k+1}(x_{k+1}, x_{k+2}))}^2 }{2 \sigma_{k+1}^4 \Delta t^2(1-a_{k+1}^2)}}} + C
\end{align*}
where $C:= \sum_{k=0}^{N-1} \lbra{-\ln \sbra{\frac{\abs{\det (U(x_{k+1})^T U(x_{k}))}}{(2\pi \sigma_{k+1}^4 \Delta t^2(1-a_{k+1}^2))^{\frac{d-m}{2}}}}}$ is constant with respect to $\theta$ and the last inequality is again obtained using $\epsilon_{x_{k+1}} \leq 1$.

\section{Conditional Wasserstein Path Matching (CWPM)}
\label{app:sec:Conditional Wasserstein Path Matching (CWPM)}
Let $q_0, q_1, ..., q_N$ be the marginal forward probability densities at each step, evolved by discrete forward transition kernels $Q_k = q(x_{k+1} | c_k)$. Similarly, define $p_N, p_{N-1}^\theta,...,p_0^\theta$ to be the marginal backward probability densities driven by parameterized backward transition kernels $T_{k+1}^\theta = p^\theta(x_k | d_k)$. Similarly as in DT-ELBO, the context vector is fixed to be $c_k=\mbra{x_k}, d_k = \mbra{x_{k+1}}$ for the overdamped case, while for the (collapsed) underdamped case we use $c_k=\mbra{x_k,x_{k-1}}$ (with $c_0=\mbra{x_0,p_0}$) and $d_k = \mbra{x_{k+1}, x_{k+2}}$ (with $d_{N-1} = \mbra{x_N, p_N}$).

Our goal is to minimize $W_2 (q_0, p_0^\theta)$ so that the Wasserstein-2 distance between data distribution and generated data distribution becomes close to each other.

\textbf{CWPM framework for the overdamped} \quad  We first define \textbf{circuitous} density at step $k$ as
\begin{equation*}
    \sigma_k := q_k T_k^\theta T_{k-1}^\theta\cdots T_1^\theta \text{ \ \ for $k \in \mbra{1,...,N}$} \qquad \sigma_0 := q_0.
\end{equation*}
We assume that for any $\mu,\nu\in\mathcal P_2(\bbR^d)$, there exists $K_{k+1} > 0$ such that
\begin{equation*}
    W_2(\mu T_{k+1}^\theta, \nu T_{k+1}^\theta) \leq K_{k+1} W_2(\mu ,\nu) + \calO(\sqrt{\Delta t}) \tag{stepwise-Lipschitz}
\end{equation*}
Under this assumption, we can choose $\Lambda_{k+1}>0$ such that
\begin{equation*}
    W_2(\sigma_k, \sigma_{k+1}) \leq \Lambda_{k+1} W_2(q_k, q_{k+1} T_{k+1}^\theta) + \calO(\sqrt{\Delta t})
\end{equation*}
for $k \in \mbra{0,...,N-1}$. We note that \autoref{app:lem:sufficient condition for lambda_k olla} implies that such $\Lambda_k$ exists without stepwise-Lipschitz assumption when the function class of score function and the constraint functions are sufficiently regular.

Under this setup, from the triangular inequality of $W_2$, it holds that
\begin{align*}
    &W_2(q_0, p_0^\theta) \leq W_2 (q_0, \sigma_N) + W_2 (\sigma_N, p_0^\theta) \leq \sum_{k=0}^{N-1} W_2(\sigma_k, \sigma_{k+1}) + W_2(\sigma_N, p_0^\theta) \\
    & \leq \sum_{k=0}^{N-1} \Lambda_{k+1} W_2(q_k, q_{k+1} T_{k+1}^\theta) + \Lambda_{N+1} W_2(q_N, p_N) +\calO(\sqrt{\Delta t}) \\
    & \leq \sum_{k=0}^{N-1} \Lambda_{k+1} \left(\bbE_{x_{k+1}\sim q_{k+1}} W_2^2(q_{k|k+1}(\cdot|x_{k+1}),T_{k+1}^\theta(\cdot|x_{k+1}))\right)^{1/2} + \Lambda_{N+1} W_2(q_N, p_N)  + \calO (\sqrt{\Delta t}) .
\end{align*}
The last inequality follows by disintegrating the forward joint law as
\begin{equation*}
    q(x_k,x_{k+1})=q_{k+1}(x_{k+1})q_{k|k+1}(x_k|x_{k+1})
\end{equation*}
and coupling each conditional pair optimally:
\begin{equation*}
    W_2^2(q_k,q_{k+1}T_{k+1}^\theta)
    \leq
    \bbE_{x_{k+1}\sim q_{k+1}}
    W_2^2(q_{k|k+1}(\cdot|x_{k+1}),T_{k+1}^\theta(\cdot|x_{k+1})).
\end{equation*}

Now, we know that
\begin{equation*}
    T_{k+1}^\theta(\cdot | x_{k+1})
    =
    \calN(\mu_{k+1}^o(x_{k+1}), \sigma_{k+1}^2 \Delta t\,\Pi(x_{k+1})).
\end{equation*}
Let $(\mu_{k|k+1}, \Sigma_{k|k+1})$ be the conditional mean and covariance of $q_{k|k+1}(\cdot|x_{k+1})$. Then, from the relation between Gelbrich distance and 2-Wasserstein distance \citep{gelbrich1990formula, borelle2023minimal}, with a nonnegative gap $\Delta_{k+1}(x_{k+1})$ independent of $\theta$, we have:
\begin{align*}
    & \bbE_{x_{k+1} \sim q_{k+1}} W_2(q_{k | k+1}, T_{k+1}^\theta(\cdot | x_{k+1}))^2 = \bbE_{x_{k+1} \sim q_{k+1}} \norm{\mu_{k|k+1} (x_{k+1}) - \mu_{k+1}^o(x_{k+1})}^{2} \\
    &+\bbE_{x_{k+1} \sim q_{k+1}} \lbra{\trace{\Sigma_{k | k+1} (x_{k+1})} + \trace{\sigma_{k+1}^2 \Delta t \Pi(x_{k+1})}} + \bbE_{x_{k+1} \sim q_{k+1}} \Delta_{k+1} (x_{k+1})\\
    &-2 \bbE_{x_{k+1} \sim q_{k+1}} \lbra{\trace{\sbra{\Sigma^{1/2}_{k |k+1}(x_{k+1})\sigma_{k+1}^2 \Delta t \Pi(x_{k+1}) \Sigma_{k|k+1}^{1/2}(x_{k+1}) }^{1/2}}} \\
    & = \bbE_{(x_k,x_{k+1})\sim q(x_k,x_{k+1})} \norm{x_k - \mu_{k+1}^o(x_{k+1})}^{2} + \bbE_{x_{k+1} \sim q_{k+1}}\trace{\sigma_{k+1}^2 \Delta t \Pi(x_{k+1})} + \bbE_{x_{k+1} \sim q_{k+1}} \Delta_{k+1} (x_{k+1}) \\
    & -2 \bbE_{x_{k+1} \sim q_{k+1}} \lbra{\trace{\sbra{\Sigma^{1/2}_{k |k+1}(x_{k+1})\sigma_{k+1}^2 \Delta t \Pi(x_{k+1}) \Sigma_{k|k+1}^{1/2}(x_{k+1}) }^{1/2}}}
\end{align*}
where we used the law of total variance for the last equality, by observing that $x_k = \mu_{k|k+1}(x_{k+1}) + \epsilon \sim q_k$ with $\bbE[\epsilon |x_{k+1}] = 0$ and $\text{Cov}(\epsilon | x_{k+1}) = \Sigma_{k|k+1}$. The gap is independent of $\theta$ because $T_{k+1}^\theta$ has a fixed covariance and $\theta$ only changes its mean. Note that $\Delta_{k+1}(x_{k+1}) \geq 0$ is the distance gap between Gelbrich distance and 2-Wasserstein distance (independent of $\theta$) so that it becomes zero if the true conditional $x_k |x_{k+1}$ follows the Gaussian distribution as
\begin{equation*}
     x_k |x_{k+1} \sim \calN (\mu_{k|k+1}(x_{k+1}), \Sigma_{k|k+1}(x_{k+1}))
\end{equation*}
\textbf{Training loss (overdamped) from CWPM} \quad We note that from the above bound, minimizing $\bbE_{(x_k,x_{k+1})\sim q(x_k,x_{k+1})} \norm{x_k - \mu_{k+1}^o(x_{k+1})}^{2}$ is sufficient for closing the Wasserstein distance between $q_0$ and $p_0^\theta$. Because $\norm{x_k - \mu_{k+1}^o (x_{k+1})}^2$ can be decomposed into
\begin{equation*}
    \norm{x_k - \mu_{k+1}^o (x_{k+1})}^2  = \norm{\Pi(x_{k+1}) (x_k - \mu_{k+1}^o(x_{k+1}))}^2 + \underbrace{\norm{(I -\Pi(x_{k+1})) (x_k - \mu_{k+1}^o(x_{k+1}))}^2}_{\text{constant w.r.t $\theta$}}
\end{equation*}
and $\mu_{k+1}^o$ does not have $\theta$ dependency on normal (second) term, the natural choice of loss (leveraging the saved forward trajectories) is
\begin{align*}
    L^{\text{over}}_{\text{CWPM}} (\theta) &= \bbE_{q(x_{0:N})}\lbra{\sum_{k=0}^{N-1} \lambda(k) \norm{\Pi(x_{k+1}) \sbra{x_k - \mu_{k+1}^o (x_{k+1})}}^2 } = \bbE_{q(x_{0:N})}\lbra{\sum_{k=0}^{N-1} \frac{\norm{\Pi(x_{k+1}) \sbra{x_k - \mu_{k+1}^o (x_{k+1})}}^2 }{2\sigma_{k+1}^2 \Delta t}}
\end{align*}
with some training loss weight $\lambda(k)$. We note that other seminal works \citep{ho2020denoising, wang2024evaluating, karras2022elucidating} in diffusion model choose the weight proportional to the inverse of variance of corresponding term, which, in our case, becomes $\lambda(k) = \frac{1}{2\sigma_{k+1}^2 \Delta t}$ with proportional constant $1/2$. And, notably, this leads to the exactly the same training loss provided in DT-ELBO, (\autoref{app:lem:backward transition density overdamped}) without the requirement $x_k \in \Sigma$.

\begin{lemma_ap}[Sufficient condition for $\Lambda_k <\infty$ -- Overdamped]
    \label{app:lem:sufficient condition for lambda_k olla}
    Let the one-step landing backward update of OLLA (\autoref{app:eqn:discretization of OLLA-backward}) be
    \begin{equation*}
        F^k_\theta(y, \zeta) = y + \Pi(y) b^k_\theta(y) \Delta t + \sigma_k \sqrt{\Delta t} \Pi(y) \zeta + \sigma_k^2 \phi(y) \Delta t, \quad \zeta \sim \calN(0, I_d)
    \end{equation*}
    where $b^k_\theta (y):= \frac{\sigma_k^2}{2} \lbra{ \nabla f(y) + 2 s^k_\theta(y)}$ is the drift term and $\phi(y)$ is the normal term.  Assume:
    \begin{enumerate}
        \item (Regularity of constraint functions) \quad There exists constants $c^0_\phi, c^1_\phi < \infty$ such that for any square-integrable $Y$,
        \begin{equation*}
            \bbE{ \norm{\phi(Y)}^2} \leq c^0_\phi + c^1_\phi \bbE \norm{Y}^2
        \end{equation*}
        \item (Regularity of function class) \quad There exists $L_s, B_s < \infty$ independent of $\theta$ with
        \begin{equation*}
            \norm{s^k_\theta(x) - s^k_\theta(y)} \leq L_s \norm{x-y}, \quad \norm{s^k_\theta(x)} \leq B_s + L_s \norm{x}
        \end{equation*}
        for all $k \in \mbra{1,...,N}$ and assume $\nabla f$ is Lipschitz with constant $L_f$ so that
        \begin{equation*}
            \norm{b^k_\theta (x) - b^k_\theta (y)} \leq L_b \norm{x-y}, \quad \norm{b^k_\theta (x)} \leq C_b(1+ \norm{x})
        \end{equation*}
        for some constant $L_b, C_b$, $k \in \mbra{1,...,N}$
    \end{enumerate}
    Let $T_k^\theta$ be the associated Markov kernel to $F^k_\theta$. That is, $T_k^\theta(\cdot | y) = \text{Law} (F^k_\theta(y,\zeta))$. Then, for any $\mu,\nu\in\mathcal P_2(\bbR^d)$, we have
    \begin{equation*}
        W_2 (\mu T_k^\theta, \nu T_k^\theta) \leq K_kW_2(\mu, \nu)  + \calO \sbra{ \sqrt{\Delta t} + \Delta t \sbra{1 + \sqrt{\bbE_{Y \sim\mu} \norm{Y}^2} + \sqrt{\bbE_{Y' \sim \nu} \norm{Y'}^2} }}
    \end{equation*}
    for some constant $K_k$. Also, if $\mu$ is given as $\mu = \rho T_{i}^\theta T_{i-1}^\theta \dots T_{j}^\theta$ for $i\geq j$ and some density $\rho\in\mathcal P_2(\bbR^d)$ independent of $\theta$, the supremum of second moment of $\mu$ is finite under $\theta \in \Theta$:
     \begin{equation*}
         \sup_{\theta \in \Theta}  \bbE_{Y_\mu \sim \mu} \norm{Y_\mu}^2 < \infty
     \end{equation*}
     Combining these two, we can guarantee the existence of $\Lambda_k >0 $ such that
     \begin{equation*}
         W_2(\sigma_{k-1}, \sigma_k) \leq \Lambda_k W_2(q_{k-1}, q_k T_k^\theta) + \calO (\sqrt{\Delta t})
     \end{equation*}
     for $k \in \mbra{1, ... N}$.
\end{lemma_ap}
\begin{proof}
    Let $(Y, Y')$ be the synchronous coupling for $W_2(\mu, \nu)$ with shared noise $\zeta \sim \calN (0,I)$. Set
    \begin{align*}
        \Delta F_\theta^k &:= F^k_\theta(Y, \zeta) - F^k_\theta(Y', \zeta) = (Y-Y') + \underbrace{\Delta t \sbra{ \Pi(Y) b^k_\theta(Y) - \Pi(Y') b^k_\theta (Y')}}_{:=B} \\
        &+ \underbrace{\sigma_k \sqrt{\Delta t} (\Pi(Y) - \Pi(Y')) \zeta}_{:=N} + \underbrace{\sigma_k^2 \Delta t \sbra{ \phi(Y) - \phi(Y')}}_{:=L}
    \end{align*}
    Then, using $\norm{a+b+c+d}^2 \leq 4(\norm{a}^2+\norm{b}^2+\norm{c}^2+\norm{d}^2)$, we have:
    \begin{equation*}
        \bbE \norm{ \Delta F_\theta^k}^2 \leq 4 \bbE \norm{ Y- Y'}^2 + 4\bbE \norm{B}^2 + 4 \bbE \norm{N}^2 + 4 \bbE \norm{L}^2
    \end{equation*}
    Now, we note that
    \begin{equation*}
        B = \Delta t\sbra{ \Pi(Y) (b^k_\theta (Y) - b^k_\theta(Y')) + \sbra{\Pi(Y) - \Pi(Y')}b^k_\theta(Y')}
    \end{equation*}
    and it implies
    \begin{equation*}
        \norm{B} \leq \Delta t \sbra{ L_b \norm{ Y- Y'} + 2 \norm{b^k_\theta (Y')}}
    \end{equation*}
    Therefore, using the growth bound $\norm{b_\theta^k(x)}\leq C_b(1+\norm{x})$, we have
    \begin{align*}
        \bbE \norm{B}^2
        &\leq 2L_b^2 \Delta t^2 \bbE \norm{Y-Y'}^2
        + 8\Delta t^2 \bbE\norm{b_\theta^k(Y')}^2 \\
        &\leq 2L_b^2 \Delta t^2 \bbE \norm{Y-Y'}^2
        + C_1\Delta t^2 \sbra{1+\bbE\norm{Y'}^2} \\
        &\leq 2L_b^2 \Delta t \bbE \norm{Y-Y'}^2
        + C_1\Delta t^2 \sbra{1+\bbE\norm{Y'}^2+\bbE\norm{Y}^2},
    \end{align*}
    where the last inequality uses $\Delta t\leq 1$ and comes by swapping $Y,Y'$ and taking average. Here $C_1$ is a finite constant depending on $C_b$. Also, we note that, for some constant $C_2>0$,
    \begin{equation*}
        \bbE \norm{N}^2
        =
        \sigma_k^2\Delta t\,\bbE\left[\norm{(\Pi(Y)-\Pi(Y'))\zeta}^2\right]
        \leq C_2 \Delta t,
    \end{equation*}
    where we used $\norm{\Pi}\leq 1$ and $\bbE\norm{\zeta}^2=d$. Similarly, using $(a-b)^2\leq 2a^2+2b^2$ and Assumption 1, we observe that
    \begin{align*}
        \bbE \norm{L}^2
        &\leq
        2\sigma_k^4\Delta t^2
        \sbra{
        \bbE\norm{\phi(Y)}^2+\bbE\norm{\phi(Y')}^2
        } \leq
        C_3\Delta t^2
        \sbra{
        1+\bbE\norm{Y}^2+\bbE\norm{Y'}^2
        },
    \end{align*}
    for some constant $C_3>0$. By collecting all terms, we obtain
    \begin{equation*}
        \bbE \norm{\Delta F_\theta^k}^2
        \leq
        C_4(1+L_b^2\Delta t)\bbE\norm{Y-Y'}^2
        +
        C_5\Delta t
        +
        C_6\Delta t^2
        \sbra{
        1+\bbE\norm{Y}^2+\bbE\norm{Y'}^2
        },
    \end{equation*}
    for finite constants $C_4,C_5,C_6>0$ independent of $\theta$ and $\Delta t$. By taking square-root and using $\sqrt{u+v+w}\leq \sqrt{u}+\sqrt{v}+\sqrt{w}$, we get
    \begin{equation*}
        W_2(\mu T_k^\theta,\nu T_k^\theta)
        \leq
        K_kW_2(\mu,\nu)
        +
        \calO\left(
        \sqrt{\Delta t}
        +
        \Delta t
        \sbra{
        1+\sqrt{\bbE_{Y\sim\mu}\norm{Y}^2}
        +\sqrt{\bbE_{Y'\sim\nu}\norm{Y'}^2}
        }
        \right),
    \end{equation*}
    for some constant $K_k$ independent of $\theta$. Also, following the same linear-growth estimates, one can show that
    \begin{equation*}
        \bbE\norm{F_\theta^k(Y,\zeta)}^2
        \leq
        \sbra{1+C_{7,k}\Delta t+C_{8,k}\Delta t^2}
        \bbE\norm{Y}^2
        +
        C_{9,k}\Delta t
        +
        C_{10,k}\Delta t^2,
    \end{equation*}
    for finite constants $C_{7,k},C_{8,k},C_{9,k},C_{10,k}>0$ independent of $\theta$. Indeed, writing
    \begin{equation*}
        F_\theta^k(Y,\zeta)
        =
        Y+\Delta t\,\Pi(Y)b_\theta^k(Y)
        +
        \sigma_k\sqrt{\Delta t}\Pi(Y)\zeta
        +
        \sigma_k^2\Delta t\,\phi(Y),
    \end{equation*}
    the cross term with $\zeta$ vanishes after conditioning on $Y$, and the remaining terms are controlled by $\norm{b_\theta^k(Y)}\leq C_b(1+\norm{Y})$ and $\bbE\norm{\phi(Y)}^2\leq c_\phi^0+c_\phi^1\bbE\norm{Y}^2$.

    So, once $\mu$ is given as
    \begin{equation*}
        \mu=\rho T_i^\theta T_{i-1}^\theta\dots T_j^\theta
    \end{equation*}
    for $i\geq j$ and some density $\rho$ independent of $\theta$ with finite second moment, applying the recursive inequality above gives
    \begin{equation*}
        \bbE_{Y_\mu\sim\mu}\norm{Y_\mu}^2
        \leq
        \prod_{\ell=j}^{i}
        \sbra{
        1+C_{7,\ell}\Delta t+C_{8,\ell}\Delta t^2
        }
        \bbE_{Y_\rho\sim\rho}\norm{Y_\rho}^2
        +
        C_{\mathrm{hor}}
        \sbra{
        1+\bbE_{Y_\rho\sim\rho}\norm{Y_\rho}^2
        },
    \end{equation*}
    for some finite-horizon constant $C_{\mathrm{hor}}$ independent of $\theta$. Taking the supremum over $\theta$ implies
    \begin{equation*}
        \sup_{\theta\in\Theta}
        \bbE_{Y_\mu\sim\mu}\norm{Y_\mu}^2
        <\infty,
    \end{equation*}
    because all constants and the density $\rho$ are independent of $\theta$.
\end{proof}

\textbf{CWPM framework for the underdamped} \quad Let $y_k := (x_k, x_{k+1}) \in \bbR^{2d}$ with $x_k \sim q_k, x_{k+1} \sim q_{k+1}$ and let $\bar{q}_k$ be the law of $y_k$. The forward pair-kernel $\bar{Q}_k$ and backward pair-kernel $\bar{T}_{k+1}^\theta$ are
\begin{align*}
    \bar{Q}_k\sbra{x_{k+1}, x_{k+2} | x_k, x_{k+1}} &= \delta_{x_{k+1}} \otimes Q_k(x_{k+2} | x_k, x_{k+1})\\
    \bar{T}_{k+1}^\theta \sbra{x_{k}, x_{k+1} | x_{k+1}, x_{k+2}} &=  T_{k+1}^\theta (x_{k} | x_{k+1}, x_{k+2}) \otimes \delta_{x_{k+1}}
\end{align*}
Now, we similarly define the \textbf{circuitous} densities on pairs as
\begin{equation*}
    \bar{\sigma}_0 := \bar{q}_0, \qquad \bar{\sigma}_k := \bar{q}_k \bar{T}_k^\theta \dotsi \bar{T}_1^\theta, \quad  \text{ for \ $k \in \mbra{1,...,N-1}$}
\end{equation*}
Assume the stepwise Lipschitz inequality on pairs holds such that there exists $\bar{K}_{k+1} > 0 $
\begin{equation*}
    W_2(\mu \bar{T}_{k+1}^\theta, \nu \bar{T}_{k+1}^\theta) \leq \bar{K}_{k+1} W_2 (\mu, \nu) + \calO (\Delta t)
\end{equation*}
for pair laws $\mu,\nu$ on $\bbR^{2d}$ with finite scaled pair moments. Then there exists finite $\bar{\Lambda}_{k+1} >0$ such that
\begin{equation*}
    W_2(\bar{\sigma}_{k}, \bar{\sigma}_{k+1}) \leq \bar{\Lambda}_{k+1} W_2(\bar{q}_{k}, \bar{q}_{k+1} \bar{T}_{k+1}^\theta) + \calO (\Delta t)
\end{equation*}
for $k \in \mbra{0,...,N-2}$. (As in \autoref{app:lem:sufficient condition for lambda_k ulla}) such $\Lambda_k$ exists without stepwise-Lipschitz assumption under some regularity of the score function class and constraints.

For the prior on pair chain setup, we let $\bar{p}^\theta_{N-1}$ be a terminal pair prior on $(X_{N-1}, X_N)$ induced by sampling $X_N \sim p_N, P_N \sim \Pi(X_N)\zeta$ with $\zeta \sim \calN(0,I_d)$ so that $X_{N-1} \sim \bar{T}_N^\theta (\cdot | X_N, P_N)$. Then, we propagate backward by
\begin{equation*}
    \bar{p}_0^\theta := \bar{p}_{N-1}^\theta \bar{T}^\theta_{N-1}\dotsi \bar{T}_1^\theta, \quad p_0^\theta := (\pi_1)_\# \bar{p}_0^\theta
\end{equation*}
where $\pi_1(x_0, x_1) = x_0$ is the projection map onto first coordinate. Since $\pi_1$ is 1-Lipschitz, $W_2(q_0, p_0^\theta) \leq W_2 (\bar{q}_0, \bar{p}_0^\theta)$ holds and, from the triangle inequality for $W_2$, we have
\begin{align*}
    W_2(q_0, p_0^\theta) \leq W_2(\bar{q}_0, \bar{p}_0^\theta) &\leq W_2(\bar{q}_0, \bar{\sigma}_{N-1}) + W_2 ( \bar{\sigma}_{N-1}, \bar{p}_0^\theta) \\
    & \leq \sum_{k=0}^{N-2} W_2 (\bar{\sigma}_{k}, \bar{\sigma}_{k+1}) + W_2(\bar{\sigma}_{N-1}, \bar{p}_0^\theta) \\
    & \leq \sum_{k=0}^{N-2} \bar{\Lambda}_{k+1} W_2 (\bar{q}_{k}, \bar{q}_{k+1} \bar{T}_{k+1}^\theta) + \bar{\Lambda}_N W_2 (\bar{q}_{N-1}, \bar{p}^\theta_{N-1}) + \calO(\Delta t).
\end{align*}
Because in pair conditionals the second coordinate is a Dirac mass, conditioning on the shared second coordinate gives
\begin{equation*}
    W_2^2(\bar{q}_{k}, \bar{q}_{k+1}\bar{T}_{k+1}^\theta)
    \leq
    \bbE_{(x_{k+1},x_{k+2})\sim\bar{q}_{k+1}}
    W_2^2(q_{k|k+1,k+2}(\cdot|x_{k+1},x_{k+2}),T_{k+1}^\theta(\cdot|x_{k+1},x_{k+2})).
\end{equation*}
Thus, as in the overdamped case, the expected squared conditional mismatch controls the corresponding pair-level Wasserstein term up to a square root and an additive constant independent of $\theta$.
Also, the following decomposition holds by triangle inequality:
\begin{equation*}
    W_2 (\bar{q}_{N-1}, \bar{p}^\theta_{N-1}) \leq W_2 \sbra{ (q_N \otimes \rho_N) S_N, (p_N \otimes \rho_N) S_N } + W_2 \sbra{ (p_N \otimes \rho_N) S_N, (p_N \otimes \rho_N) S_N^\theta }
\end{equation*}
where $p_N$ is the prior of position, $\rho_N(\cdot |X_N)$ is the prior of momentum defined by the law of $\Pi(x_N) \zeta$ with $\zeta \sim \calN(0,I), x_N \sim p_N$, and each $S_N$ and $S_N^\theta$ are defined by
\begin{equation*}
    S_N(x_N, p_N) := \bar{q}_{N-1 | N} (\cdot | x_N, p_N) \otimes \delta_{x_N}, \quad S_N^\theta(x_N, p_N) := \bar{T}_N^\theta (\cdot|x_N, p_N) \otimes \delta_{x_N}
\end{equation*}
The first term is independent of $\theta$. For the second term, using the common base coupling $(x_N,p_N)\sim p_N\otimes\rho_N$ gives
\begin{equation*}
    W_2^2 \sbra{ (p_N \otimes \rho_N) S_N, (p_N \otimes \rho_N) S_N^\theta}
    \leq
    \bbE_{(x_N, p_N) \sim p_N\otimes \rho_N}
    W_2^2 \sbra{\bar{q}_{N-1 |N } (\cdot | x_N, p_N), \bar{T}_N^\theta (\cdot | x_N, p_N) }.
\end{equation*}
Therefore, we have the following bound:
\begin{align*}
    W_2(q_0, p_0^\theta) &\leq \sum_{k=0}^{N-2} \bar{\Lambda}_{k+1}
    \left(\bbE_{\bar q_{k+1}} W_2^2(q_{k|k+1,k+2},T_{k+1}^\theta)\right)^{1/2} + \bar{\Lambda}_N  \left(\bbE_{(x_N,p_N)\sim p_N\otimes\rho_N} W_2^2(\bar{q}_{N-1|N},\bar{T}_N^\theta)\right)^{1/2} \\
    & + \bar{\Lambda}_N W_2 \sbra{ (q_N \otimes \rho_N) S_N, (p_N \otimes \rho_N) S_N } + \calO(\Delta t) .
\end{align*}

Now, we recall that
\begin{equation*}
    T_{k+1}^\theta(\cdot | x_{k+1}, x_{k+2})
    =
    \calN(\mu_{k+1}^u(x_{k+1}, x_{k+2}), \sigma_{k+1}^4 \Delta t^2 (1-a_{k+1}^2) \Pi(x_{k+1})).
\end{equation*}
Let $(\mu_{k|k+1,k+2}, \Sigma_{k|k+1,k+2})$ be the conditional mean and covariance of $q_{k|k+1,k+2}(\cdot|x_{k+1},x_{k+2})$. Then, from the relation between Gelbrich distance and 2-Wasserstein distance, with a nonnegative gap $\Delta_{k+1}(x_{k+1},x_{k+2})$ independent of $\theta$, we have:
\begin{align*}
    & \bbE_{\bar q_{k+1}} W_2^2(q_{k|k+1,k+2},T_{k+1}^\theta(\cdot | x_{k+1}, x_{k+2})) \\
    & = \bbE_{\bar q_{k+1}} \norm{\mu_{k|k+1, k+2} (x_{k+1}, x_{k+2}) - \mu_{k+1}^u(x_{k+1}, x_{k+2})}^{2} \\
    &+\bbE_{\bar q_{k+1}} \lbra{\trace{\Sigma_{k | k+1, k+2} (x_{k+1}, x_{k+2})} + \trace{\sigma_{k+1}^4 \Delta t^2 (1-a_{k+1}^2) \Pi(x_{k+1})}} \\
    &-2 \bbE_{\bar q_{k+1}} \lbra{\trace{\sbra{\Sigma^{1/2}_{k |k+1, k+2 }(x_{k+1}, x_{k+2} ) \sigma_{k+1}^4 \Delta t^2 (1-a_{k+1}^2) \Pi(x_{k+1}) \Sigma_{k|k+1, k+2}^{1/2}(x_{k+1}, x_{k+2}) }^{1/2}}} \\
    &+ \bbE_{\bar q_{k+1}} \Delta_{k+1} (x_{k+1}, x_{k+2}) \\
    & = \bbE_{(x_k,x_{k+1},x_{k+2})} \norm{x_k - \mu_{k+1}^u(x_{k+1}, x_{k+2})}^{2} + \bbE_{\bar q_{k+1}}\trace{\sigma_{k+1}^4 \Delta t^2 (1-a_{k+1}^2) \Pi(x_{k+1})} \\
    &-2 \bbE_{\bar q_{k+1}} \lbra{\trace{\sbra{\Sigma^{1/2}_{k |k+1,k+2}(x_{k+1},x_{k+2})\sigma_{k+1}^4 \Delta t^2 (1-a_{k+1}^2) \Pi(x_{k+1}) \Sigma_{k|k+1,k+2}^{1/2}(x_{k+1},x_{k+2}) }^{1/2}}} \\
    &+ \bbE_{\bar q_{k+1}} \Delta_{k+1} (x_{k+1}, x_{k+2}) .
\end{align*}
where the expectation $\bbE_{(x_k,x_{k+1},x_{k+2})}$ is over the forward joint triple. We used the law of total variance for the last equality, by observing that $x_k = \mu_{k|k+1, k+2}(x_{k+1}, x_{k+2}) + \epsilon \sim q_k$ with $\bbE[\epsilon |x_{k+1}, x_{k+2}] = 0$ and $\text{Cov}(\epsilon | x_{k+1}, x_{k+2}) = \Sigma_{k|k+1, k+2}$. Similar to overdamped case, $\Delta_{k+1}(x_{k+1}, x_{k+2}) \geq 0$ is the distance gap between Gelbrich distance and 2-Wasserstein distance (independent of $\theta$) so that it becomes zero if the true conditional $x_k |x_{k+1}, x_{k+2}$ follows the Gaussian distribution as
\begin{equation*}
     x_k |x_{k+1}, x_{k+2} \sim \calN (\mu_{k|k+1,k+2}(x_{k+1}, x_{k+2}), \Sigma_{k|k+1,k+2}(x_{k+1}, x_{k+2}))
\end{equation*}
\textbf{Training loss (underdamped) from CWPM} \quad  Because $\norm{x_k - \mu_{k+1}^u (x_{k+1}, x_{k+2})}^2$ can be decomposed into
\begin{align*}
    \norm{x_k - \mu_{k+1}^u (x_{k+1}, x_{k+2})}^2  =& \norm{\Pi(x_{k+1}) (x_k - \mu_{k+1}^u(x_{k+1}, x_{k+2}))}^2 \\
    &+ \underbrace{\norm{(I -\Pi(x_{k+1})) (x_k - \mu_{k+1}^u(x_{k+1}, x_{k+2}))}^2}_{\text{constant w.r.t $\theta$}}
\end{align*}
where $\mu^u_{k+1}$ does not have $\theta$ dependency on normal (second) term. Therefore, by abusing notation to set $p_N = x_{N+1}$, the choice of training loss becomes
\begin{align*}
    L^{\text{under}}_{\text{CWPM}} (\theta) &= \bbE_{q(x_{0:N})} \bbE_{\rho_N(p_N |x_N)}\lbra{\sum_{k=0}^{N-1} \lambda(k) \norm{\Pi(x_{k+1}) \sbra{x_k - \mu_{k+1}^u (x_{k+1}, x_{k+2})}}^2 } \\
    &= \bbE_{q(x_{0:N})} \bbE_{\rho_{N}(p_N|x_N)}\lbra{\sum_{k=0}^{N-1} \frac{\norm{\Pi(x_{k+1}) \sbra{x_k - \mu_{k+1}^u (x_{k+1}, x_{k+2})}}^2 }{2\sigma_{k+1}^4 \Delta t^2 (1-a_{k+1}^2)}}
\end{align*}
with some training loss weight $\lambda(k)$. In our case, the training loss weight proportional to the inverse of variance can be chosen by $\lambda(k) = \frac{1}{2\sigma_{k+1}^4 \Delta t^2 (1-a_{k+1}^2)}$ with proportional constant $1/2$. And, notably, this leads to the exactly the same training loss provided in DT-ELBO, (\autoref{app:lem:backward transition density (underdamped)}) without the requirement $x_k \in \Sigma$.

\begin{lemma_ap}[Sufficient condition for $\Lambda_k <\infty$ -- Underdamped]
    \label{app:lem:sufficient condition for lambda_k ulla}
    Let the one-step landing backward update of ULLA (\autoref{app:eqn:discretization of ULLA-backward}) be
    \begin{align*}
        \bar{F}_{k+1}^\theta(x_+, x_{++}, \zeta)
        &= x_+ - \sigma_{k+1}^2 \Delta t \Pi(x_+)
        \lbra{a_{k+1}\tilde{p}+\sigma_{k+1}^2\Delta t\, b^\theta_{k+1}(x_+,\tilde{p})} \\
        &\quad + \sigma_{k+1}^2\Delta t\sqrt{1-a^2_{k+1}}\Pi(x_+)\zeta
        + \sigma_{k+1}^2\Delta t\phi(x_+,\tilde{p})
    \end{align*}
    with pseudo-momentum
    \begin{equation*}
        \tilde{p}(x_+, x_{++})
        := \Pi(x_+)\sbra{\frac{x_{++}-x_+}{\sigma_{k+2}^2\Delta t}}
        \in T_{x_+}\Sigma.
    \end{equation*}
    Here $\phi(x_+,\tilde p)$ is the normal term and $b^\theta_{k+1}(x_+,\tilde p)$ is the drift term. We use the standing finite-grid bounds
    \begin{equation*}
        0<\underline{\sigma}\leq \sigma_j\leq \overline{\sigma}<\infty,
        \qquad
        a_j=e^{-\gamma\sigma_j^2\Delta t},
        \qquad
        1-a_j^2=\calO(\Delta t).
    \end{equation*}
    Assume the following regularity:
    \begin{enumerate}
        \item (Regularity of constraint functions) \quad
        For every $r\geq 1$, there exist constants
        $c_{\phi,r}^0,c_{\phi,r}^1,c_{\phi,r}^2<\infty$ such that, for any pair $(X,P)$ with finite $4r$-th moment in $P$,
        \begin{equation*}
            \bbE\norm{\phi(X,P)}^{2r}
            \leq
            c_{\phi,r}^0
            +
            c_{\phi,r}^1
            \bbE\sbra{\norm{X}^{2r}+\norm{P}^{2r}}
            +
            c_{\phi,r}^2
            \bbE\norm{P}^{4r}.
        \end{equation*}
        \item (Regularity of function class) \quad There exist constants $L_g,C_b<\infty$, independent of $\theta$, such that
        \begin{align*}
            \norm{b^\theta_{k+1}(x,p)-b^\theta_{k+1}(x',p')}
            &\leq L_g\sbra{\norm{x-x'}+\norm{p-p'}}, \qquad \norm{b^\theta_{k+1}(x,p)}
            \leq C_b\sbra{1+\norm{x}+\norm{p}}.
        \end{align*}
    \end{enumerate}
    Let $\bar{T}_{k+1}^\theta$ be the associated pair-kernel to
    \begin{equation*}
        \bar{G}_{k+1}^\theta(x_+,x_{++},\zeta)
        :=(\bar{F}_{k+1}^\theta(x_+,x_{++},\zeta),x_+).
    \end{equation*}
    For a pair law $\eta$ on $\bbR^{2d}$, define its augmented scaled pair moment by
    \begin{equation*}
        \mathcal{M}_{k+1}(\eta)^2
        :=
        \bbE_{(X_+,X_{++})\sim\eta}
        \sbra{
        \norm{X_+}^2
        +
        \norm{X_{++}}^2
        +
        \norm{\tilde p(X_+,X_{++})}^2
        +
        \norm{\tilde p(X_+,X_{++})}^4
        }.
    \end{equation*}
    For the finite-horizon moment propagation below, we also write, for $r\geq 1$,
    \begin{equation*}
        \mathcal{A}_{k+1}^{(r)}(\eta)
        :=
        \bbE_{(X_+,X_{++})\sim\eta}
        \sbra{
        \norm{X_+}^{2r}
        +
        \norm{X_{++}}^{2r}
        +
        \norm{\tilde p(X_+,X_{++})}^{2r}
        }.
    \end{equation*}
    Then, for any probability measures $\mu,\nu$ on $\bbR^{2d}$ satisfying $\mathcal{M}_{k+1}(\mu)+\mathcal{M}_{k+1}(\nu)<\infty$, we have
    \begin{equation*}
        W_2(\mu\bar{T}_{k+1}^\theta,\nu\bar{T}_{k+1}^\theta)
        \leq K_{k+1}W_2(\mu,\nu)
        + \bar{C}_{k+1}\Delta t\sbra{1+\mathcal{M}_{k+1}(\mu)+\mathcal{M}_{k+1}(\nu)}
    \end{equation*}
    for some constants $K_{k+1},\bar{C}_{k+1}<\infty$ independent of $\theta$.

    Also, if $\mu$ is given as
    \begin{equation*}
        \mu=\rho\bar{T}_{i}^\theta\bar{T}_{i-1}^\theta\dots\bar{T}_{j}^\theta
    \end{equation*}
    for $i\geq j$ and some source pair law $\rho$ satisfying, for every $r\geq 1$,
    \begin{equation*}
        \sup_{\theta\in\Theta}\mathcal{A}_{i}^{(r)}(\rho)<\infty,
    \end{equation*}
    then the augmented scaled pair moment of $\mu$ is uniformly finite under $\theta\in\Theta$:
    \begin{equation*}
        \sup_{\theta\in\Theta}\mathcal{M}_{j-1}(\mu)<\infty.
    \end{equation*}
    Moreover, assuming the initial data/momentum and terminal position/momentum priors have finite moments of all orders,
    \begin{equation*}
        \bbE\norm{X_0}^{2r}+\bbE\norm{P_0}^{2r}<\infty,
        \qquad
        \bbE_{X_N\sim p_N}\norm{X_N}^{2r}
        +
        \bbE_{P_N\sim \rho_N(\cdot|X_N)}\norm{P_N}^{2r}
        <\infty
        \quad
        \text{for every } r\geq 1,
    \end{equation*}
    and assuming the forward collapsed ULLA drift and normal terms satisfy the same polynomial-growth bounds as above, we can guarantee the existence of $\Lambda_k>0$ such that
    \begin{equation*}
        W_2(\bar{\sigma}_{k-1},\bar{\sigma}_k)
        \leq \Lambda_k W_2(\bar{q}_{k-1},\bar{q}_k\bar{T}_k^\theta)+\calO(\Delta t)
    \end{equation*}
    for $k\in\mbra{1,\ldots,N-1}$ and
    \begin{equation*}
        W_2(\bar{\sigma}_{N-1},\bar{p}_0^\theta)
        \leq \Lambda_N W_2(\bar{q}_{N-1},\bar{p}_{N-1}^\theta)+\calO(\Delta t).
    \end{equation*}
\end{lemma_ap}

\begin{proof}
    Throughout the proof, $C_1,C_2,C_3,\ldots$ denote finite positive constants independent of $\theta$ and $\Delta t$, and their values may change from line to line; $C_r$ may also depend on $r$.
    Let $Y=(X_+,X_{++})$ and $Y'=(X'_+,X'_{++})$ be an optimal coupling of $\mu$ and $\nu$, and use the same Gaussian noise $\zeta\sim\calN(0,I_d)$ for the two transitions. Set
    \begin{equation*}
        \tau_{k+1}:=\sigma_{k+1}^2\Delta t,
        \qquad a:=a_{k+1},
    \end{equation*}
    and write
    \begin{equation*}
        \Delta_+ := X_+-X'_+,
        \qquad \tilde p:=\tilde p(X_+,X_{++}),
        \qquad \tilde p':=\tilde p(X'_+,X'_{++}).
    \end{equation*}
    Also, we use notations
    \begin{equation*}
        \Pi_+ := \Pi(X_+),
        \qquad \Pi'_+ := \Pi(X'_+), \qquad
        b:=b^\theta_{k+1}(X_+,\tilde p),
        \qquad b':=b^\theta_{k+1}(X'_+,\tilde p'),
    \end{equation*}
    and
    \begin{equation*}
        \phi:=\phi(X_+,\tilde p),
        \qquad \phi':=\phi(X'_+,\tilde p').
    \end{equation*}
    The pair transition outputs are
    \begin{equation*}
        \bar{G}_{k+1}^\theta(Y,\zeta)=(\bar{F}_{k+1}^\theta(Y,\zeta),X_+),
        \qquad
        \bar{G}_{k+1}^\theta(Y',\zeta)=(\bar{F}_{k+1}^\theta(Y',\zeta),X'_+).
    \end{equation*}
    Hence, by the synchronous coupling,
    \begin{equation*}
        W_2(\mu\bar T_{k+1}^\theta,\nu\bar T_{k+1}^\theta)
        \leq \left(
        \bbE\norm{\bar{G}_{k+1}^\theta(Y,\zeta)-\bar{G}_{k+1}^\theta(Y',\zeta)}^2
        \right)^{1/2}.
    \end{equation*}
    Since the second coordinate of $\bar G_{k+1}^\theta$ is $X_+$, we have
    \begin{align*}
        &\left(
        \bbE\norm{\bar{G}_{k+1}^\theta(Y,\zeta)-\bar{G}_{k+1}^\theta(Y',\zeta)}^2
        \right)^{1/2} \leq
        \left(
        \bbE\norm{\bar{F}_{k+1}^\theta(Y,\zeta)-\bar{F}_{k+1}^\theta(Y',\zeta)}^2
        \right)^{1/2}
        + \left(\bbE\norm{\Delta_+}^2\right)^{1/2}.
    \end{align*}
    Now decompose the first-coordinate difference as
    \begin{equation*}
        \bar{F}_{k+1}^\theta(Y,\zeta)-\bar{F}_{k+1}^\theta(Y',\zeta)
        = \Delta_+ + R_1+R_2+R_3+R_4,
    \end{equation*}
    where
    \begin{equation*}
        R_1 := -\tau_{k+1}a(\Pi_+\tilde p-\Pi'_+\tilde p'),\quad
        R_2 := -\tau_{k+1}^2(\Pi_+b-\Pi'_+b'),\quad
        R_3 := \tau_{k+1}\sqrt{1-a^2}(\Pi_+-\Pi'_+)\zeta,\quad
        R_4 := \tau_{k+1}(\phi-\phi').
    \end{equation*}
    Therefore, by Minkowski's inequality,
    \begin{align*}
        &\left(
        \bbE\norm{\bar{G}_{k+1}^\theta(Y,\zeta)-\bar{G}_{k+1}^\theta(Y',\zeta)}^2
        \right)^{1/2} \leq
        2\left(\bbE\norm{\Delta_+}^2\right)^{1/2}
        + \sum_{i=1}^4\left(\bbE\norm{R_i}^2\right)^{1/2}.
    \end{align*}
    Since $(Y,Y')$ is an optimal coupling,
    \begin{equation*}
        \left(\bbE\norm{\Delta_+}^2\right)^{1/2}
        \leq \left(\bbE\norm{Y-Y'}^2\right)^{1/2}
        = W_2(\mu,\nu).
    \end{equation*}
    It remains to control the four remainder terms.

    First, using $\norm{\Pi(x)}\leq 1$ and $0<a\leq 1$,
    \begin{equation*}
        \norm{R_1}\leq \tau_{k+1}\sbra{\norm{\tilde p}+\norm{\tilde p'}}.
    \end{equation*}
    Thus,
    \begin{equation*}
        \left(\bbE\norm{R_1}^2\right)^{1/2}
        \leq C_1\Delta t\sbra{
        \left(\bbE\norm{\tilde p}^2\right)^{1/2}
        + \left(\bbE\norm{\tilde p'}^2\right)^{1/2}
        }.
    \end{equation*}

    Second, using $\norm{\Pi(x)}\leq 1$ and the linear-growth part of Assumption 2,
    \begin{align*}
        \norm{R_2}
        &\leq \tau_{k+1}^2\sbra{\norm{b}+\norm{b'}} \leq C_2\Delta t^2\sbra{
        1+\norm{X_+}+\norm{X'_+}+\norm{\tilde p}+\norm{\tilde p'}
        }.
    \end{align*}
    Therefore,
    \begin{align*}
        \left(\bbE\norm{R_2}^2\right)^{1/2}
        \leq C_2\Delta t^2\sbra{
        1+\left(\bbE\norm{X_+}^2\right)^{1/2}
        + \left(\bbE\norm{X'_+}^2\right)^{1/2}
        + \left(\bbE\norm{\tilde p}^2\right)^{1/2}
        + \left(\bbE\norm{\tilde p'}^2\right)^{1/2}
        }.
    \end{align*}

    Third, since $1-a^2=\calO(\Delta t)$ and $\norm{\Pi_+-\Pi'_+}\leq 2$,
    \begin{align*}
        \left(\bbE\norm{R_3}^2\right)^{1/2}
        &\leq C_3\Delta t\sqrt{1-a^2}\left(\bbE\norm{\zeta}^2\right)^{1/2} \leq C_4\Delta t^{3/2}\leq C_4\Delta t,
    \end{align*}
    where the last inequality uses $\Delta t\leq 1$.

    Fourth, by Assumption 1 with $r=1$,
    \begin{align*}
        \left(\bbE\norm{R_4}^2\right)^{1/2}
        &\leq \tau_{k+1}\sbra{
        \left(\bbE\norm{\phi}^2\right)^{1/2}
        + \left(\bbE\norm{\phi'}^2\right)^{1/2}
        }\\
        &\leq C_5\Delta t\sbra{
        1+\left(\bbE\norm{X_+}^2\right)^{1/2}
        + \left(\bbE\norm{X'_+}^2\right)^{1/2}
        + \left(\bbE\norm{\tilde p}^2\right)^{1/2}
        + \left(\bbE\norm{\tilde p'}^2\right)^{1/2}
        }\\
        &\quad
        + C_6\Delta t\sbra{
        \left(\bbE\norm{\tilde p}^4\right)^{1/2}
        +
        \left(\bbE\norm{\tilde p'}^4\right)^{1/2}
        }\\
        &\leq
        C_7\Delta t
        \sbra{
        1+\mathcal M_{k+1}(\mu)+\mathcal M_{k+1}(\nu)
        },
    \end{align*}
    where the last inequality uses the definition of the augmented scaled pair moment.
    Combining these four bounds gives
    \begin{equation*}
        W_2(\mu\bar T_{k+1}^\theta,\nu\bar T_{k+1}^\theta)
        \leq K_{k+1}W_2(\mu,\nu)
        + \bar{C}_{k+1}\Delta t\sbra{1+\mathcal{M}_{k+1}(\mu)+\mathcal{M}_{k+1}(\nu)},
    \end{equation*}
    for constants $K_{k+1},\bar{C}_{k+1}<\infty$ independent of $\theta$.

    Next, we show that the augmented scaled pair moment remains finite under finitely many reverse pair-kernel applications, provided the source law has finite scaled moments of all orders. Let
    \begin{equation*}
        Y^-=(X_-,X_+)
        := \bar G_{k+1}^\theta(Y,\zeta),
        \qquad
        X_-:=\bar F_{k+1}^\theta(Y,\zeta).
    \end{equation*}
    The scaled pseudo-momentum of the new pair is
    \begin{equation*}
        \tilde p^- := \Pi(X_-)\frac{X_+-X_-}{\sigma_{k+1}^2\Delta t}.
    \end{equation*}
    Since $\sigma_{k+1}$ is bounded below and $\norm{\Pi(X_-)} \leq 1$,
    \begin{equation*}
        \norm{\tilde p^-}\leq C_8\frac{\norm{X_+-X_-}}{\Delta t}.
    \end{equation*}
    By the definition of $X_-$,
    \begin{equation*}
        X_+-X_-
        = \tau_{k+1}\Pi_+\sbra{a\tilde p+\tau_{k+1}b}
        - \tau_{k+1}\sqrt{1-a^2}\Pi_+\zeta
        - \tau_{k+1}\phi.
    \end{equation*}
    Therefore,
    \begin{equation*}
        \norm{\tilde p^-}
        \leq C_9\sbra{
        \norm{\tilde p}
        + \Delta t\norm{b}
        + \sqrt{1-a^2}\norm{\zeta}
        + \norm{\phi}
        }.
    \end{equation*}
    More generally, for every $r\geq 1$, using $(u_1+\cdots+u_4)^{2r}\leq \tilde{C}_r\sum_{m=1}^4u_m^{2r}$, the linear growth of $b$, the polynomial-growth bound on $\phi$, and the finiteness of Gaussian moments, we obtain
    \begin{equation*}
        \bbE\norm{\tilde p^-}^{2r}
        \leq
        \tilde{C}_r\sbra{
        1+\bbE\norm{X_+}^{2r}
        +\bbE\norm{\tilde p}^{2r}
        +\bbE\norm{\tilde p}^{4r}
        }.
    \end{equation*}
    Hence,
    \begin{equation*}
        \bbE\norm{\tilde p^-}^{2r}
        \leq
        \tilde{C}_r\sbra{
        1+\mathcal A_{k+1}^{(2r)}(\eta)
        }.
    \end{equation*}
    Similarly, the position update gives
    \begin{equation*}
        \bbE\norm{X_-}^{2r}
        \leq
        \tilde{C}_r\sbra{
        1+\mathcal A_{k+1}^{(2r)}(\eta)
        }.
    \end{equation*}
    Since the second coordinate of the new pair is $X_+$, we obtain the all-order finite-horizon moment recursion
    \begin{equation*}
        \mathcal A_k^{(r)}(\eta\bar T_{k+1}^\theta)
        \leq
        C_r'\sbra{1+\mathcal A_{k+1}^{(2r)}(\eta)}.
    \end{equation*}
    Iterating this estimate over finitely many steps gives
    \begin{equation*}
        \sup_{\theta\in\Theta}\mathcal{M}_{j-1}
        \sbra{
        \rho\bar T_i^\theta\bar T_{i-1}^\theta\cdots\bar T_j^\theta
        }<\infty,
    \end{equation*}
    whenever the source pair law $\rho$ satisfies
    \begin{equation*}
        \sup_{\theta\in\Theta}\mathcal A_i^{(r)}(\rho)<\infty
        \qquad
        \text{for every } r\geq 1.
    \end{equation*}
    We now verify the required source moment bounds. For the forward pair laws $\bar q_\ell=\mathrm{Law}(X_\ell,X_{\ell+1})$, assume that the initial data and initial momentum have finite moments of all orders:
    \begin{equation*}
        \bbE\norm{X_0}^{2r}+\bbE\norm{P_0}^{2r}<\infty
        \qquad
        \text{for every } r\geq 1.
    \end{equation*}
    Define the approximate forward momentum used in the collapsed ULLA update by
    \begin{equation*}
        \widehat p_0^{\,q}:=\Pi(X_0)P_0,
        \qquad
        \widehat p_\ell^{\,q}
        :=
        \Pi(X_\ell)
        \frac{X_\ell-X_{\ell-1}}{\sigma_{\ell-1}^2\Delta t},
        \quad \ell\geq 1.
    \end{equation*}
    The forward collapsed ULLA update can be written in the generic form
    \begin{align*}
        X_{\ell+1}
        &=
        X_\ell
        + \sigma_\ell^2\Delta t\,\Pi(X_\ell)
        \sbra{
        a_\ell\widehat p_\ell^{\,q}
        + \sigma_\ell^2\Delta t\,
        b_\ell^{\mathrm{fwd}}(X_\ell,\widehat p_\ell^{\,q})
        }
        + \sigma_\ell^2\Delta t\sqrt{1-a_\ell^2}\,
        \Pi(X_\ell)\zeta_\ell
        + \sigma_\ell^2\Delta t\,
        \phi_\ell^{\mathrm{fwd}}(X_\ell,\widehat p_\ell^{\,q}),
    \end{align*}
    where the signs of the deterministic drift and landing terms are absorbed into $b_\ell^{\mathrm{fwd}}$ and $\phi_\ell^{\mathrm{fwd}}$. We assume that the forward drift and normal terms satisfy the same growth bounds uniformly over the finite grid: for every $r\geq 1$,
    \begin{equation*}
        \norm{b_\ell^{\mathrm{fwd}}(x,p)}
        \leq C_{\mathrm{f},1}\sbra{1+\norm{x}+\norm{p}},
    \end{equation*}
    and
    \begin{equation*}
        \bbE\norm{\phi_\ell^{\mathrm{fwd}}(X,P)}^{2r}
        \leq
        C_{\mathrm{f},r}
        \sbra{
        1+\bbE\norm{X}^{2r}+\bbE\norm{P}^{2r}
        +\bbE\norm{P}^{4r}
        }.
    \end{equation*}
    The scaled pseudo-momentum appearing in the pair law $\bar q_\ell$ is
    \begin{equation*}
        \tilde p_{\ell}^{\,q}
        := \Pi(X_\ell)\frac{X_{\ell+1}-X_\ell}{\sigma_{\ell+1}^2\Delta t}.
    \end{equation*}
    Substituting the forward update into this definition and using $\norm{\Pi(x)}\leq 1$ gives
    \begin{align*}
        \norm{\tilde p_{\ell}^{\,q}}
        &\leq
        C_{\sigma,1}
        \sbra{
        \norm{\widehat p_\ell^{\,q}}
        +
        \Delta t
        \norm{b_\ell^{\mathrm{fwd}}(X_\ell,\widehat p_\ell^{\,q})}
        +
        \sqrt{1-a_\ell^2}\norm{\zeta_\ell}
        +
        \norm{\phi_\ell^{\mathrm{fwd}}(X_\ell,\widehat p_\ell^{\,q})}
        },
    \end{align*}
    where $C_{\sigma,1}:=\sup_\ell\sigma_\ell^2/\sigma_{\ell+1}^2<\infty$. Therefore, for every $r\geq 1$, the same polynomial-growth argument gives
    \begin{equation*}
        \bbE\norm{\tilde p_{\ell}^{\,q}}^{2r}
        \leq
        C_r
        \sbra{
        1+\bbE\norm{X_\ell}^{2r}
        +\bbE\norm{\widehat p_\ell^{\,q}}^{2r}
        +\bbE\norm{\widehat p_\ell^{\,q}}^{4r}
        }.
    \end{equation*}

    For $\ell=0$, $\widehat p_0^{\,q}=\Pi(X_0)P_0$ has finite moments of all orders. For $\ell\geq 1$, using the previous forward update for $X_\ell-X_{\ell-1}$ gives, for every $r\geq 1$,
    \begin{equation*}
        \bbE\norm{\widehat p_\ell^{\,q}}^{2r}
        \leq
        C_r\sbra{
        1+\bbE\norm{X_{\ell-1}}^{2r}
        +\bbE\norm{\widehat p_{\ell-1}^{\,q}}^{2r}
        +\bbE\norm{\widehat p_{\ell-1}^{\,q}}^{4r}
        }.
    \end{equation*}
    Similarly, the forward position update gives
    \begin{equation*}
        \bbE\norm{X_{\ell+1}}^{2r}
        \leq
        C_r\sbra{
        1+\bbE\norm{X_\ell}^{2r}
        +\bbE\norm{\widehat p_\ell^{\,q}}^{2r}
        +\bbE\norm{\widehat p_\ell^{\,q}}^{4r}
        }.
    \end{equation*}
    Because the grid is finite and the initial data/momentum have finite moments of all orders, these recursions imply by finite-horizon induction that
    \begin{equation*}
        \sup_{0\leq \ell\leq N}\bbE\norm{X_\ell}^{2r}
        +
        \sup_{0\leq \ell\leq N-1}\bbE\norm{\widehat p_\ell^{\,q}}^{2r}
        <\infty
        \qquad
        \text{for every } r\geq 1.
    \end{equation*}
    In particular, the augmented scaled pair moments of the forward source laws are finite:
    \begin{equation*}
        \sup_{0\leq \ell\leq N-1}
        \mathcal M_\ell(\bar q_\ell)
        <\infty.
    \end{equation*}
    Similarly, for the terminal source law, assume
    \begin{equation*}
        \bbE_{X_N\sim p_N}\norm{X_N}^{2r}
        +
        \bbE_{P_N\sim\rho_N(\cdot|X_N)}\norm{P_N}^{2r}
        <\infty
        \qquad
        \text{for every } r\geq 1.
    \end{equation*}
    In the collapsed notation, set
    \begin{equation*}
        X_{N+1}:=X_N+\sigma_{N+1}^2\Delta t\,P_N,
    \end{equation*}
    where $\sigma_{N+1}$ is any bounded terminal extension of the schedule, e.g. $\sigma_{N+1}=\sigma_N$. Then
    \begin{equation*}
        \Pi(X_N)\frac{X_{N+1}-X_N}{\sigma_{N+1}^2\Delta t}
        = \Pi(X_N)P_N,
    \end{equation*}
    and hence the base terminal pair $(X_N,X_{N+1})$ has finite scaled moments of all orders. The terminal pair law $\bar p_{N-1}^\theta$ is obtained from this base pair by one application of the backward pair-kernel $\bar T_N^\theta$. The all-order finite-horizon moment recursion above therefore yields
    \begin{equation*}
        \sup_{\theta\in\Theta}\mathcal{M}_{N-1}(\bar p^\theta_{N-1})<\infty.
    \end{equation*}
    Finally, define
    \begin{equation*}
        \bar T_{a:b}^\theta
        := \bar T_a^\theta\bar T_{a-1}^\theta\cdots\bar T_b^\theta
        \qquad (a\geq b),
    \end{equation*}
    and interpret $\bar T_{a:b}^\theta$ as the identity kernel when $a<b$. For $k=1,\ldots,N-1$,
    \begin{equation*}
        \bar\sigma_{k-1}
        = \bar q_{k-1}\bar T_{k-1:1}^{\theta},
        \qquad
        \bar\sigma_k
        = (\bar q_k\bar T_k^\theta)\bar T_{k-1:1}^{\theta}.
    \end{equation*}
    Applying the one-step stability estimate successively to $\bar T_{k-1}^{\theta},\ldots,\bar T_1^{\theta}$, and using the finite-horizon augmented moment bounds above, gives
    \begin{equation*}
        W_2(\bar\sigma_{k-1},\bar\sigma_k)
        \leq \Lambda_k W_2(\bar q_{k-1},\bar q_k\bar T_k^\theta)+\calO(\Delta t),
    \end{equation*}
    for some finite $\Lambda_k$ independent of $\theta$.

    The terminal estimate follows in the same way. Since
    \begin{equation*}
        \bar\sigma_{N-1}
        = \bar q_{N-1}\bar T_{N-1:1}^{\theta},
        \qquad
        \bar p_0^\theta
        = \bar p_{N-1}^\theta \bar T_{N-1:1}^{\theta},
    \end{equation*}
    repeated application of the same one-step estimate gives
    \begin{equation*}
        W_2(\bar\sigma_{N-1},\bar p_0^\theta)
        \leq \Lambda_N W_2(\bar q_{N-1},\bar p_{N-1}^\theta)+\calO(\Delta t).
    \end{equation*}
    This completes the proof.
\end{proof}

\section{Experiment Settings and Supplementary Results}
\label{app:sec:Experiment settings and Supplementary Results}
\textbf{Settings.} \quad All experiments were implemented in Python using the PyTorch framework \citep{paszke2019pytorch} and run in a Linux (Ubuntu) environment. The computational hardware was tailored to the specific experimental group. We utilized an NVIDIA L40S GPU with 48GB of VRAM for the Earth and climate science datasets, and an NVIDIA H100 GPU with 80GB of VRAM for the 3D mesh data experiments. All other tasks, including the $SO(10)$ manifold, Alanine dipeptide, and the 7-DOF robot arm, were conducted on an NVIDIA H200 GPU with 141GB of VRAM.

\subsection{Description of Baseline Algorithms}
\textbf{Riemannian Flow Matching (RFM)}. \quad RFM \citep{chen2024flow} is a framework for training Continuous Normalizing Flows (CNF) \citep{chen2018neural} on Riemannian manifold by regressing a vector field $v_t$ to a conditional target vector field $u_t(x |x_1)$ for $t\in [0,1]$ defined via a user-specified premetric $d(\cdot, \cdot)$ (e.g., geodesics, spectral distances). The model minimizes the Riemannian Conditional Flow Matching objective given as :
\begin{equation*}
    \calL_{\text{RCFM}} = \bbE_{t \sim \calU(0,1), x_1 \sim q_{\text{data}}, x_0 \sim p_{\text{prior}}} \lbra{\norm{v_t(x_t)-u_t(x_t \mid x_1)}_g^2}
\end{equation*}
where $x_t$ is as conditional flow sample interpolation between prior samples $x_0$ and the data point $x_1$, and $\norm{\cdot}_g$ is the norm defined in the corresponding Riemannian manifold.

The computational requirements for $x_t$ may depend on the manifold's geometry. On simple manifold (e.g., spheres, tori), the geodesic distance can be used as the premetric, allowing $x_t$ to be computed in closed form via the exponential map, thus making the algorithm simulation-free. In contrast, on general geometries (e.g., triangular meshes) where exact geodesics are intractable, spectral distances such as the biharmonic distance are employed as the premetric. In this case, computing $x_t$ requires solving an ODE during the training process.

\begin{remark}[Implementation details on RFM]
    For RFM, we used the default configuration from the official code from authors.\footnote{\url{https://github.com/facebookresearch/riemannian-fm}} For Earth \& Climate datasets, training iterations were reduced to $1/10$, whereas Mesh data experiments were conducted using the unaltered default configuration.
\end{remark}

\textbf{Riemannian Denoising Diffusion Probabilistic Models (RDDPM)}. \quad RDDPM \citep{liu2025riemannian} is a constrained diffusion model framework that adapts Denoising Diffusion Probabilistic Models (DDPMs) \citep{ho2020denoising} to Riemannian manifold $\Sigma := \mbra{x \in \bbR^d \mid h(x)=0}$ setup by incorporating a Newton's method projection step into the diffusion process.

The method constructs forward and backward Markov chains that alternate between diffusion steps along tangential direction of $\Sigma$ and projecting the resulting sample back onto $\Sigma $ via Newton's method. While this guarantees feasibility at every step, the iterative nature of the projection leads to higher computational costs and potentially result in forward trajectory resampling due to projection failures. We remark that the projected version of OLLA (OLLA-P) corresponds to RDDPM under the equality-only scenario.

\textbf{Euclidean Forward with Backward Variants}. \quad These baselines employ a standard unconstrained Euclidean diffusion process for the forward process, and distinguish themselves by the mechanism used to enforce constraints $h(x)=0, g(x)\leq0$ during the backward process. In the forward process, it follows the update rule below:
\begin{equation*}
    x_{k+1} = x_{k} - \frac{\sigma_{k}^2 \Delta t}{2} \nabla f(x_{k}) +\sigma_{k} \sqrt{\Delta t} \zeta_{k}
\end{equation*}
with corresponding training loss
\begin{equation*}
    \calL^{\text{over}}_{\text{Euclidean}}(\theta) = \bbE_{q(x_{0:N})} \lbra{ \sum_{k=0}^{N-1} \frac{\norm{x_k - \mu_{k+1}^o(x_{k+1})}^2}{2\sigma_{k+1}^2 \Delta t}}
\end{equation*}
and $\mu_{k+1}^o := x_{k+1} + \frac{\sigma_{k+1}^2\Delta t}{2} \lbra{\nabla f(x_{k+1}) + s_\theta^{k+1} (x_{k+1})}$.
\begin{enumerate}[leftmargin=5mm]
    \item \textbf{Euclidean}: This method performs sampling using the standard Euclidean backward without any constraint enforcement. The backward update rule is given as:
    \begin{equation*}
        x_{k} = x_{k+1} + \frac{\sigma_{k+1}^2 \Delta t}{2} \lbra{\nabla f(x_{k+1}) + s_{k+1}^\theta(x_{k+1})} +\sigma_{k+1} \sqrt{\Delta t} \zeta_{k+1}
    \end{equation*}
    This approach offers no guarantee that the generated samples lie on $\Sigma$.
    \item \textbf{Projected}: This variant strictly enforces equality constraints by projecting the sample onto $\Sigma$ immediately after each Euclidean backward step. Let $\tilde{x}_k$ be the proposal from the Euclidean backward step. Then, the final state is obtained via $x_k = \calP_\Sigma (\tilde{x}_k)$, where $\calP_\Sigma$ finds the root of $h(y)=0, g(y)\leq0$ close to $\tilde{x}_k$ using the interior point method \citep{wachter2006implementation, christopher2024constrained}. We remark that our implementation uses log-barrier for $g(x) < 0$ and quadratic penalty for $g(x) \geq 0$.
    \item \textbf{Lagrangian}: This method formulates the sampling step as a constrained optimization problem using the Augmented Lagrangian Method (ALM). At each timestep, the proposal $\tilde{x}_k$ is refined by minimizing an augmented Lagrangian objective:
    \begin{equation*}
        \calL(x,\lambda, \mu) = \lambda^T h(x) + \frac{\rho}{2} \norm{h(x)}^2 + \frac{1}{2\rho} \sbra{\norm{\text{ReLU}(\mu + \rho g(x))}^2 - \norm{\mu}^2}
    \end{equation*}
    The inequality term follows the Powell-Hestenes-Rockafellar (PHR) formulation. This specific form is derived by introducing a non-negative slack variable $s \geq0$ to convert the inequality constraint $g(x) \leq 0$ into an equality $g(x) + s=0$. By constructing the standard augmented Lagrangian for this equality and analytically minimizing it with respect to $s$, the slack variable is eliminated, resulting in the closed-form $\text{ReLU}(\mu+\rho g(x))$ term. This ensures that penalties are applied correctly only when constraints are violated or multipliers are active. The multipliers $\lambda$ and $\mu$ are updated iteratively via dual ascent. We note that this approach is also introduced in \citet{liang2025simultaneous}.
    \item \textbf{Guided}: This approach utilizes constraint guidance during sampling. The standard drift term of the backward process is modified by adding a guidance term derived from the gradient of a constraint violation energy potential. This potential is defined as $V(x) = \frac{1}{2}\norm{h(x)}^2 + \frac{1}{2}\norm{\text{ReLU}(g(x))}^2$, where the first term penalizes deviations from equality constraints and the second term penalizes violations of inequality constraints. Consequently, the backward update rule naturally incorporates a gradient descent step on this potential, which steers the generated trajectory towards the feasible set $\Sigma$ by actively minimizing the constraint violation at each step.
\end{enumerate}

\subsection{Experiment Settings and Descriptions}
\textbf{Earth and Climate Science Datasets $S^2$.}  \quad This benchmark \citep{NOAA_volcanic_2020b, NGDC_earthquake,  mathieu2020, Brakenridge2017, EOSDIS2020} evaluates the model's ability to learn geographical distributions on the Earth's surface, which is modeled as the 2-sphere, $S^2$. The datasets represent the locations of phenomena such as volcanoes, earthquakes, floods, and fires.

\textit{Mathematical formulation.} \quad A sample $x$ represents a point in 3D Euclidean space lying on the surface of a unit sphere. Thus, $x\in \bbR^d$ with $d=3$. The manifold is defined by a single, simple equality constraint $h(x) = \norm{x}_2 -1 = 0$.

\textit{Prior distribution.} \quad As this is a compact manifold, the prior distribution $p_N$ is set to be the uniform distribution over the surface of the sphere $S^2$.

\textbf{3D Mesh Data on a Learned Manifolds.} \quad The objective is to learn a probability distribution over the surface of a complex 3D shape, such as the Stanford Bunny \cite{turk1994zippered} and Spot the Cow \cite{crane2013robust}. The manifold is implicitly defined as the zero-level set of a Signed Distance Function (SDF) that is itself represented by a pre-trained neural network $h_{NN}(x)$ as performed in \citet{rozen2021moser, gropp2020implicit}.

\textit{Mathematical Formulation.} \quad A sample $x$ represents a point in 3D Euclidean space, thus $x\in \bbR^d $ with $d=3$. The manifold is defined by a single equality constraint requiring any valid point to lie on the zero-level set of $h_{NN}(x)=0$.

\textit{Prior distribution.} \quad The prior distribution $p_N$ is chosen to be uniform distribution over the learned manifold surface $\Sigma$ due to its compactness.

\textbf{High-Dimensional Special Orthogonal Group ($SO(10)$).} \quad This experiment tests the model's ability to learn a multimodal distribution on the high-dimensional Lie group $SO(10)$. This is a challenging task due to the high dimensionality and non-trivial geometric structure of the manifold.

\textit{Mathematical Formulation.} \quad A sample is a $10 \times 10$ matrix, which is vectorized into $x \in \bbR^{100}$. The constraints enforce the defining properties of a special orthogonal matrix. For the equality constraints, we impose
\begin{equation*}
    h_{ij}(X) = (X^TX-I)_{ij}= 0 \quad \text{for $1 \leq i \leq j \leq 10$}
\end{equation*}
and the determinant condition $\text{det}(X)=1$ is handled by via rejection when it is violated.

\textit{Prior distribution.} \quad Similarly, the manifold is compact and we choose uniform distribution over $SO(10)$ as our prior distribution.

\textbf{Alanine Dipeptide} \quad This task involves generating valid 3D conformations of Alanine dipeptide, a model system in biophysics. The goal is to learn the distribution of structures subject to constraints on specific internal coordinates, including a mixed equality and inequality setup. Following the same approach in \citet{liu2025riemannian}, we generated the dataset by running a $1$ns constrained molecular dynamics simulation of alanine dipeptide in water using GROMACS \citep{abraham2015gromacs} with a $1$ fs timestep. A harmonic bias was applied through the COLVARS module \citep{fiorin2013using}, where the chosen collective variable was dihedral angle $\phi$. The harmonic restraint was centered at $\phi = -70^\circ$ with a force constant $5.0$. Other simulation settings follow closely those reported in \citet{lelievre2024analyzing}. In total, $10^4$ configurations were collected by saving a snapshot every $100$ simulation steps. Hydrogen atoms were removed, leaving the coordinates of the $10$ heavy atoms for further analysis.

\textit{Mathematical Formulation.} \quad The state $x$ consists of the 3D coordinates of the 10 non-hydrogen atoms, so $x \in \bbR^{30}$. The constraints are placed on two of the molecule's primary dihedral angles, $\phi$ and $\psi$. For the equality constraints, the dihedral angle $\phi$ is fixed to a specific value:
\begin{equation*}
    h(x) = \phi(x) - (-70^{\circ})_{\text{rad}} = 0
\end{equation*}
and we impose an inequality constraint so that another adjacent dihedral angle $\psi$ is constrained to lie within the range $\lbra{130^\circ, 170^\circ}$. This is formulated as a single inequality:
\begin{equation*}
    g(x) = \max \mbra{ \psi(x) -170^{\circ}_{\text{rad}}, 130^{\circ}_{\text{rad}} - \psi(x)} \leq 0.
\end{equation*}

\textit{Prior distribution.} \quad Instead of introducing a potential-based drift term to induce a specific unimodal prior as in \citet{liu2025riemannian}, we employ an empirical prior strategy. We first generate a large set of forward trajectories using the corresponding constrained dynamics (OLLA/ULLA) by running them to approximate the terminal prior distribution $q_N$ on the feasible set. The terminal states $x_N$ of these trajectories are collected, and the backward sampling process is initiated by drawing starting points uniformly from this pre-computed set, serving as a discrete approximation of the prior. Furthermore, to ensure the generated conformations respect physical symmetries, the score network for this task is designed to be $SE(3)$-invariant as proposed in \citet{liu2025riemannian}.

\textbf{7-DOF Robot Arm Trajectory} \quad This experiment focuses on learning a complex, bimodal distribution of trajectories for a 7-DOF Franka Emika Panda robot arm. The model is trained on a dataset of 400 valid paths (200 for S-shaped, 200 for reverse S-shaped paths) generated by the Rapidly-exploring Random Tree (RRT) algorithm. The primary task is to generate trajectories that trace both S-shaped and reverse S-shaped paths between fixed start and end points. Throughout the motion, the generated trajectories must satisfy several critical constraints: the robot arm must navigate around two spherical obstacles, and its end-effector must maintain a constant height of $z=z_{\text{target}}=0.1$.

\textit{Mathematical Formulation.} \quad The fundamental state of the robot arm is its configuration in joint space, represented by a vector of 7 joint angles, $\theta \in \mathbb{R}^7$. A trajectory is a time-discretized sequence of these configurations, $(\theta_l)_{l=1}^L$. To avoid the periodicity issue of raw angles, which poses challenges for neural networks, we represent each joint angle $\theta_{l,j}$ as a 2D vector on the unit circle $(\cos (\theta_{l,j}), \sin (\theta_{l,j}))$. Consequently, the state at a single time step $l$ is a vector $x_l \in \mathbb{R}^{14}$. The full trajectory is flattened into a single vector $x=[x_1, \dots, x_L] \in \mathbb{R}^{d}$. For a trajectory with $L \in \{10, 20, 30, 40\}$ as in our setup, the ambient space dimension is $d \in \{140, 280, 420, 560\}$.

The constraints on the robot's behavior, such as end-effector position and obstacle avoidance, are defined in 3D Cartesian space. We bridge the joint space representation and the Cartesian space constraints using the forward kinematics function, $\text{FK}: \mathbb{R}^7 \rightarrow \mathbb{R}^{3\times K}$, which maps a set of joint angles $\theta_l$ to the 3D positions of the $K=7$ links of the robot arm. To handle the large number of resulting constraints efficiently, we employ a ``summation trick'' to combine multiple constraint violations into a single function for both equalities and inequalities. In particular, multiple geometric and kinematic conditions are aggregated into a single sum-of-squares function:
\begin{equation*}
    h(x) = \sum_{i=1}^m h_i(x)^2 =0.
\end{equation*}
The individual components $h_i(x)$ enforce: (1) the validity of the joint representation, $h_{\text{rep}} (x_{l, j}) = \cos^2 (\theta_{l,j}) + \sin^2 (\theta_{l,j}) - 1 =0$, for each joint $j$ at each time step $l$, (2) fixed start and end points for the trajectory
\begin{equation*}
    h_{\text{end}} (x_L) = \norm{\text{FK} (\theta_L)_{\text{end-effector}} - p_{\text{end}}}^2 = 0, \quad h_{\text{start}} (x_1) = \norm{\text{FK} (\theta_1)_{\text{end-effector}} - p_{\text{start}}}^2 = 0
\end{equation*}
 with $p_{\text{start}}$ and $p_{\text{end}}$ being the target start and end positions, and (3) a fixed $z$-height for the end effector throughout the trajectory, $h_z(x_l) = [\text{FK} (\theta_l)_{\text{end-effector}}]_z - z_{\text{target}} = 0$.

For the inequality constraint, the robot arm must avoid two spherical obstacles. For each relevant robot link $k \in [K]$ and obstacle $o\in \mbra{1, 2}:=N_{\text{obs}}$, the distance between them must exceed a safety margin. These conditions are combined into a single function by summing the rectified violations:
\begin{equation*}
    g(x) = \sum_{l=1}^L \sum_{k=1}^K \sum_{o=1}^{N_{\text{obs}}} \text{ReLU}\left( (r_{\text{obs},o} + r_{\text{safety}}) - \|\text{FK} (\theta_l)_{\text{link},k} - p_{\text{obs}, o}\| \right) \leq 0.
\end{equation*}
with $r_{\text{obs},o}, p_{\text{obs},o}$ being the radius and position of obstacles. This function is non-positive if and only if all links maintain the required minimum distance $r_{\text{safety}}$ from all obstacles throughout the entire trajectory.

\textit{Prior distribution.} \quad Similar to the Alanine Dipeptide task, we employ an empirical prior strategy. We first generate a large set of forward trajectories using the corresponding constrained dynamics (OLLA/ULLA) by running them to approximate the target prior distribution $q_N$ on the feasible set. The terminal states $x_N$ of these trajectories are collected, and the backward sampling process is initiated by drawing starting points uniformly from this pre-computed set, serving as a discrete approximation of the prior.

\begin{table}[h]
\centering
\caption{\textbf{Summary of constrained feasible set dimensions and constraint specifications.}
Below table represent the ambient dimension $d$, the intrinsic manifold dimension, and the number of equality ($m$) and inequality ($l$) constraints. For the Robot Arm task, $L$ denotes the number of time steps (e.g., $L \in \{10, \dots, 40\}$), and the constraint counts $m$ and $l$ are reported before applying the summation trick.}
\label{app:tab:manifold_specs}
\renewcommand{\arraystretch}{1.2}
\setlength{\tabcolsep}{6pt}
\begin{tabular}{lcccc}
\toprule
\textbf{Dataset / Task} & \textbf{Ambient Dim.} ($d$) & \textbf{Intrinsic Dim.} & \textbf{Equality} ($m$) & \textbf{Inequality} ($l$) \\
\midrule
Earth \& Climate ($S^2$) & $3$ & $2$ & $1$ & $0$ \\
3D Mesh (Bunny / Spot) & $3$ & $2$ & $1$ & $0$ \\
Lie Group $SO(10)$ & $100$ & $45$ & $55$ & $0$ \\
Alanine Dipeptide & $30$ & $29$ & $1$ & $2$  \\
7-DOF Robot Arm & $14 L$ & $7 L$ & $8L + 2$ & $14 L$ \\
\bottomrule
\end{tabular}
\end{table}

\begin{table}[h!]
\centering
\caption{Detailed hyperparameters for all datasets, specified per algorithm. Here, $l_f$ denotes the frequency of forward trajectory generation (once every $l_f$ epochs), $B$ is the batch size, $N_{\text{hidden}}$ represents the hidden dimension of the MLP, and $N_{\text{layer}}$ is the number of layers in the MLP. For OLLA and ULLA, terminal projection is applied in all tasks except the 7-DOF robot arm. The symbol $\triangle$ in the $\alpha$ column denotes implicit landing with $\alpha = 1/(\sigma_k^2 \Delta t)$.}
\label{tab:hyperparameters_full_by_algo}
\setlength{\tabcolsep}{2.7pt}
\begin{tabular}{llcccccccccccc}
\toprule
\textbf{Dataset} & \textbf{Algorithm} & \textbf{$\gamma$} & \textbf{$\sigma_{\min}$} & \textbf{$\sigma_{\max}$} & \textbf{$N$} & \textbf{$T$} & \textbf{$l_f$} & \textbf{$N_{\text{epoch}}$} & \textbf{$B$} & \textbf{$N_{\text{hidden}}$} & \textbf{$N_{\text{layer}}$} & \textbf{$\alpha$} & \textbf{$\epsilon$} \\
\midrule
\multirow{3}{*}{Volcano} & OLLA & - & 0.01 & 1.0 & 100 & 4.0 & 1 & 20000 & 128 & 512 & 5 & $\triangle$ & - \\
& ULLA  & 5 & 0.1 & 2.0 & 50 & 2.0 & 1 & 20000 & 128 & 512 & 5 & $\triangle$ & - \\
& ULLA-P & 5 & 0.1 & 2.0 & 100 & 2.0 & 1 & 20000 & 128 & 512 & 5 & - & - \\
\midrule
\multirow{3}{*}{Earthquake} & OLLA & - & 0.01 & 1.0 & 100 & 4.0 & 1 & 20000 & 512 & 512 & 5 & $\triangle$ & - \\
& ULLA & 5  & 0.1 & 2.0 & 50 & 2.0 & 1 & 20000 & 512 & 512 & 5 & $\triangle$ & - \\
& ULLA-P & 5 & 0.1 & 2.0 & 100 & 2.0 & 1 & 20000 & 512 & 512 & 5 & - & - \\
\midrule
\multirow{3}{*}{Flood} & OLLA & - & 0.01 & 1.0 & 100 & 4.0 & 1 & 20000 & 512 & 512 & 5 & $\triangle$ & - \\
& ULLA & 5  & 0.1 & 2.0 & 50 & 2.0 & 1 & 20000 & 512 & 512 & 5 & $\triangle$ & - \\
& ULLA-P & 5  & 0.1 & 2.0 & 100 & 2.0 & 1 & 20000 & 512 & 512 & 5 & - & - \\
\midrule
\multirow{3}{*}{Fire} & OLLA & - & 0.01 & 1.0 & 100 & 4.0 & 1 & 20000 & 512 & 512 & 5 & $\triangle$ & - \\
& ULLA & 5  & 0.1 & 2.0 & 50 & 2.0 & 1 & 20000 & 512 & 512 & 5 & $\triangle$ & - \\
& ULLA-P & 5  & 0.1 & 2.0 & 100 & 2.0 & 1 & 20000 & 512 & 512 & 5 & - & - \\
\midrule
\multirow{3}{*}{\shortstack{Bunny \\ ($k=50$)}} & OLLA & - & 0.07 & 0.07 & 100 & 8.0 & 100 & 2000 & 2048 & 256 & 5 & $\triangle$ & - \\
& ULLA & 20 & 0.2 & 0.6 & 30 & 3.0 & 100 & 2000 & 2048 & 256 & 5 & 25 & - \\
& ULLA-P & 20 & 0.2 & 0.6 & 50 & 3.0 & 100 & 2000 & 2048 & 256 & 5 & - & - \\
\midrule
\multirow{3}{*}{\shortstack{Bunny \\ ($k=100$)}} & OLLA & - & 0.07 & 0.07 & 100 & 5.0 & 100 & 2000 & 2048 & 256 & 5 & $\triangle$ & - \\
& ULLA & 20 & 0.2 & 0.6 & 30 & 3.0 & 100 & 2000 & 2048 & 256 & 5 & 25 & - \\
& ULLA-P & 20 & 0.2 & 0.6 & 50 & 3.0 & 100 & 2000 & 2048 & 256 & 5 & - & - \\
\midrule
\multirow{3}{*}{\shortstack{Spot \\ ($k=50$)}} & OLLA & - & 0.1 & 0.1 & 100 & 5.0 & 100 & 2000 & 2048 & 256 & 5 & $\triangle$ & - \\
& ULLA & 20 & 0.2 & 0.5 & 30 & 3.0 & 100 & 2000 & 2048 & 256 & 5 & 25 & - \\
& ULLA-P & 20 & 0.2 & 0.5 & 50 & 3.0 & 100 & 2000 & 2048 & 256 & 5 & - & - \\
\midrule
\multirow{3}{*}{\shortstack{Spot \\ ($k=100$)}} & OLLA & - & 0.1 & 0.1 & 100 & 3.0 & 100 & 2000 & 2048 & 256 & 5 & $\triangle$ & - \\
& ULLA & 20 & 0.2 & 0.5 & 30 & 3.0 & 100 & 2000 & 2048 & 256 & 5 & 25 & - \\
& ULLA-P & 20 & 0.2 & 0.5 & 50 & 3.0 & 100 & 2000 & 2048 & 256 & 5 & - & - \\
\midrule
\multirow{3}{*}{\shortstack{SO(10) \\ ($m=3$)}} & OLLA & - & 0.2 & 2.0 & 100 & 1.0 & 5 & 2000 & 128 & 1024 & 5 & 50 & - \\
& ULLA & 10 & 0.2 & 2.0 & 50 & 2.5 & 5 & 2000 & 128 & 1024 & 5 & 15 & - \\
& ULLA-P & 10 & 0.2 & 2.0 & 50 & 2.5 & 5 & 2000 & 128 & 1024 & 5 & - & - \\
\midrule
\multirow{3}{*}{\shortstack{SO(10) \\ ($m=5$)}} & OLLA & - & 0.2 & 2.0 & 100 & 1.0 & 5 & 2000 & 512 & 1024 & 5 & 50 & - \\
& ULLA & 10 & 0.2 & 2.0 & 50 & 2.5 & 5 & 2000 & 512 & 1024 & 5 & 15 & - \\
& ULLA-P & 5 & 0.2 & 2.0 & 50 & 2.5 & 5 & 2000 & 512 & 1024 & 5 & - & - \\
\midrule
\addlinespace[6pt]
\multirow{1}{*}{\shortstack{Alanine \\ dipeptide}}
& ULLA & 10 & 0.5 & 2.0 & 100 & 0.2 & 10 & 2000 & 1024 & 1024 & 5 & 50 & 0.05\\
\addlinespace[6pt]
\midrule
\addlinespace[6pt]
\multirow{1}{*}{\shortstack{7-DOF \\ robot arm}}
& ULLA &1& 0.1& 2.0&100 &0.2 &50 &10000 &160 &1024 &5 & 200& 0.001  \\
\addlinespace[6pt]

\bottomrule

\end{tabular}
\end{table}
\clearpage

\subsection{Generated Samples from Tasks}
\label{app:subsec:Generated samples from tasks}
In this subsection, we demonstrate the effectiveness of our proposed methods by comparing the generated samples in various tasks to the baseline RDDPM \cite{liu2025riemannian}.

\textbf{Earth and Climate Science Datasets $S^2$ --Volcano.} \quad
\begin{figure}[htbp]
    \centering
    \begin{subfigure}{0.48\textwidth}
        \centering
        \includegraphics[width=\linewidth]{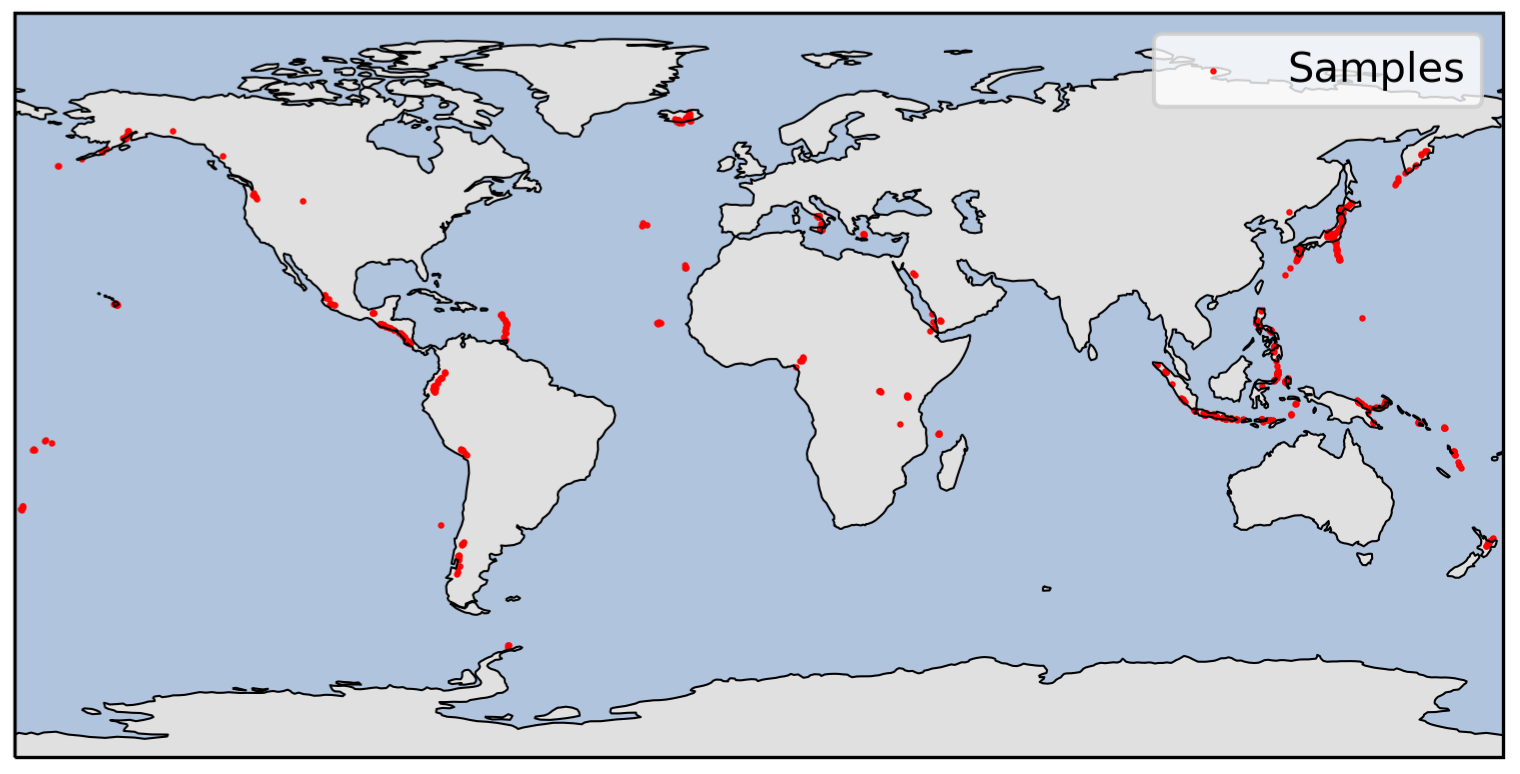}
        \caption{\textbf{RDDPM} (OLLA-P)}
    \end{subfigure}
    \hfill
    \begin{subfigure}{0.48\textwidth}
        \centering
        \includegraphics[width=\linewidth]{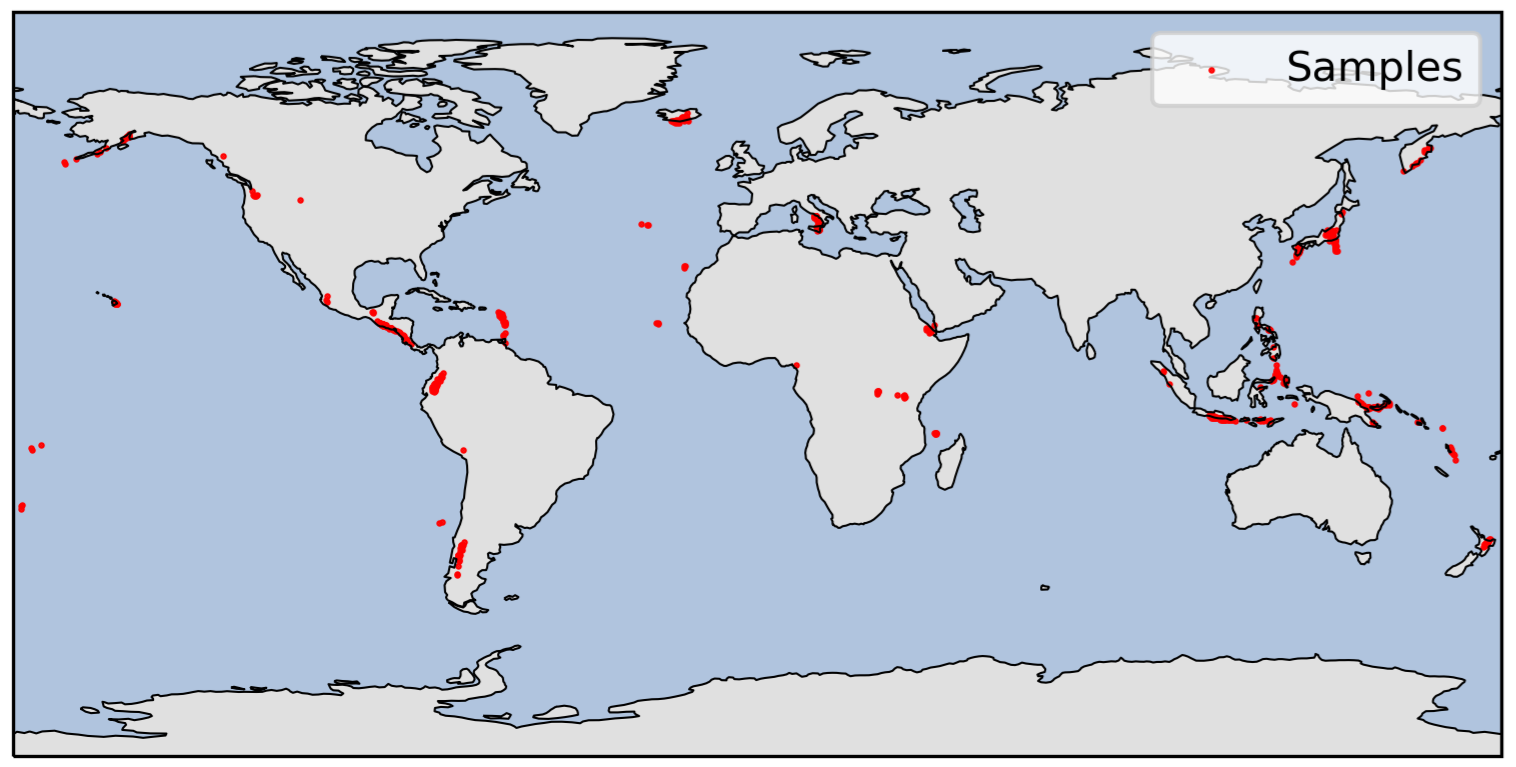}
        \caption{\textbf{OLLA}}
    \end{subfigure}

    \begin{subfigure}{0.48\textwidth}
        \centering
        \includegraphics[width=\linewidth]{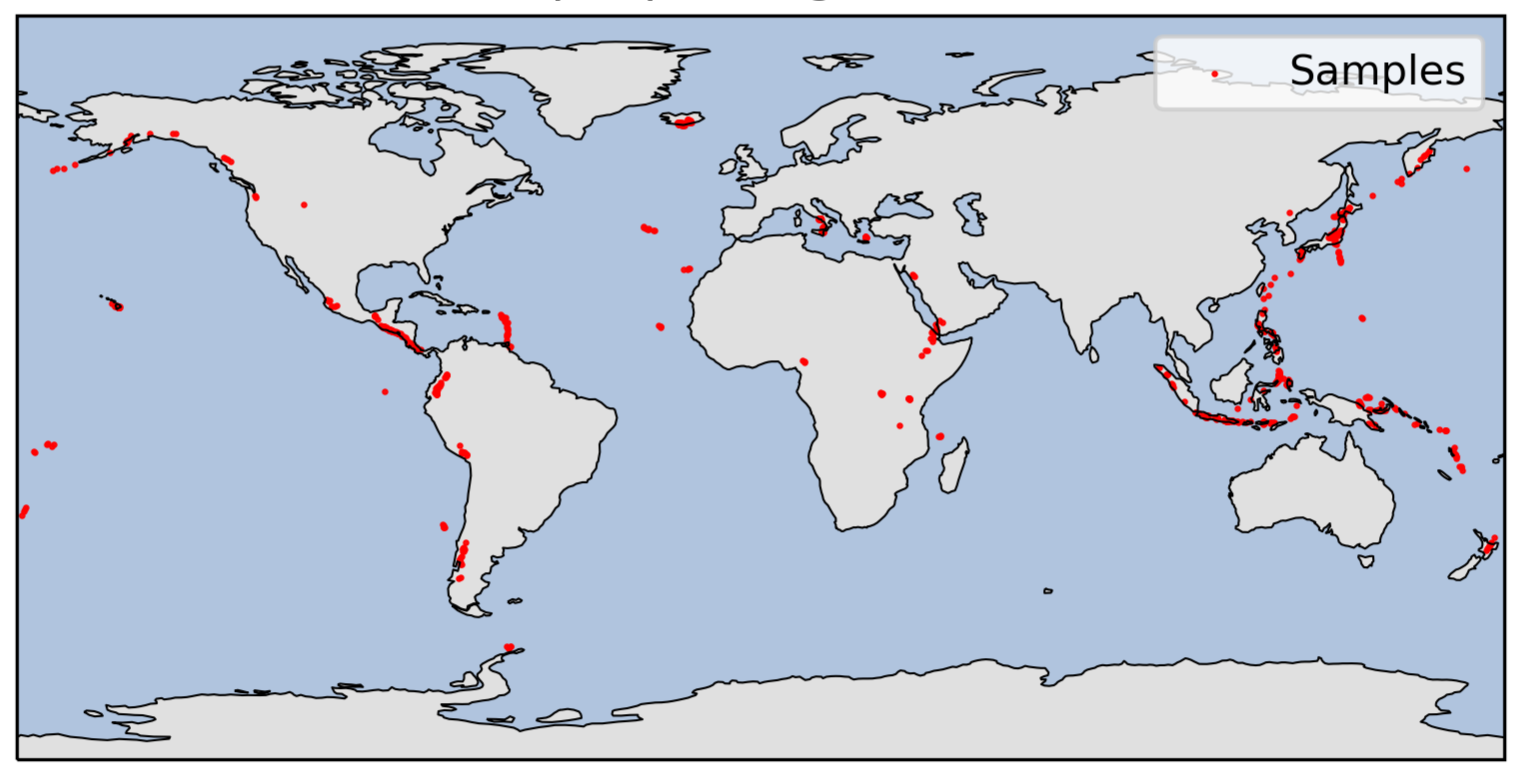}
        \caption{\textbf{ULLA-P}}
    \end{subfigure}
    \hfill
    \begin{subfigure}{0.48\textwidth}
        \centering
        \includegraphics[width=\linewidth]{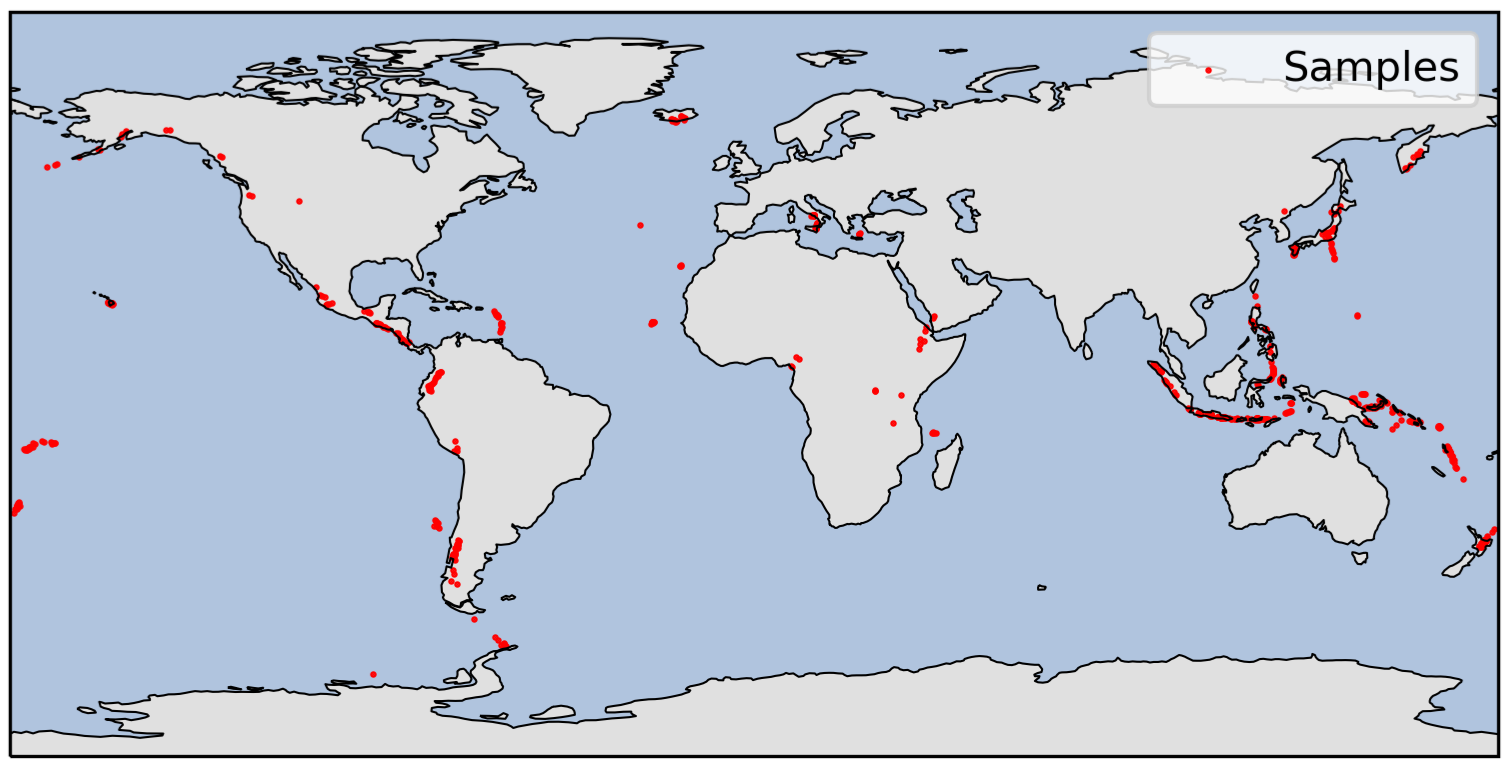}
        \caption{\textbf{ULLA}}
    \end{subfigure}

    \caption{Comparison of generated distributions across different algorithms-Volcano dataset}
    \label{fig:volcano-generation}
\end{figure}

\textbf{3D Mesh data on learned manifold -- Spot the Cow $(k=100)$.} \quad
\begin{figure}[htbp]
    \centering
    \small

    \begin{subfigure}{0.24\textwidth}
        \centering
        \includegraphics[width=\linewidth]{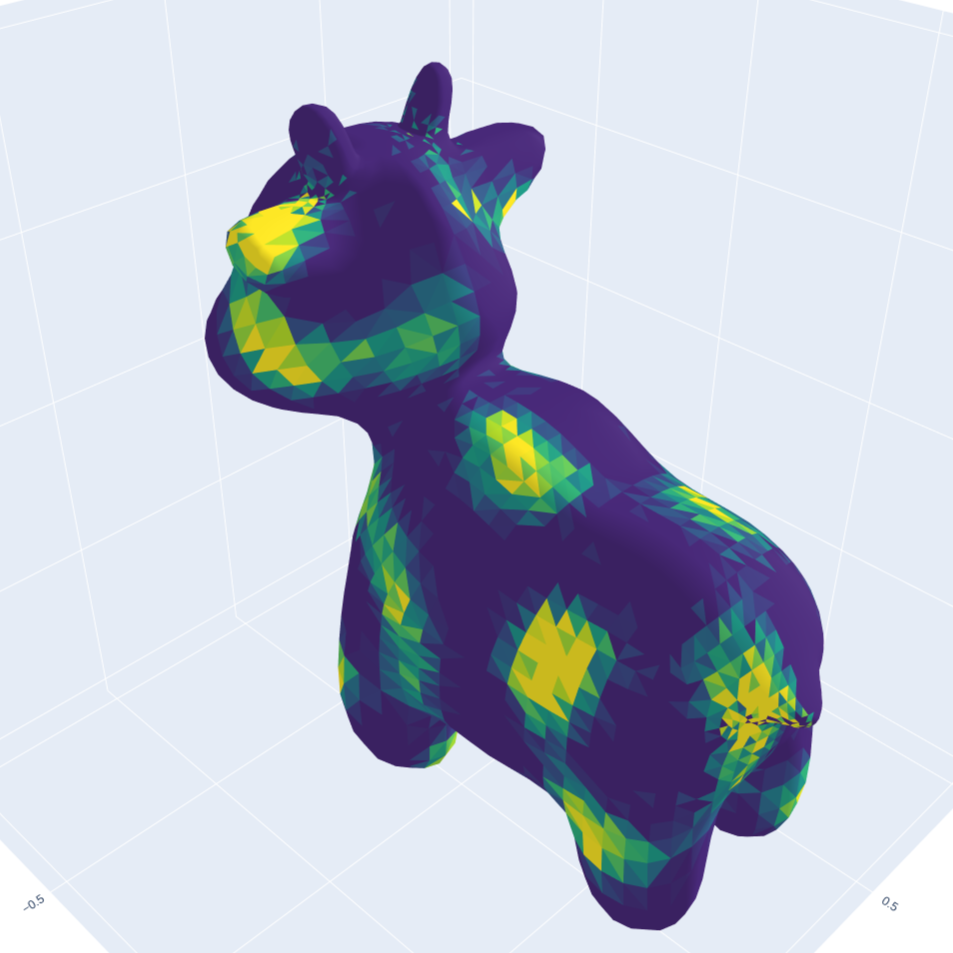}
        \caption{RDDPM}
    \end{subfigure}
    \hfill
    \begin{subfigure}{0.24\textwidth}
        \centering
        \includegraphics[width=\linewidth]{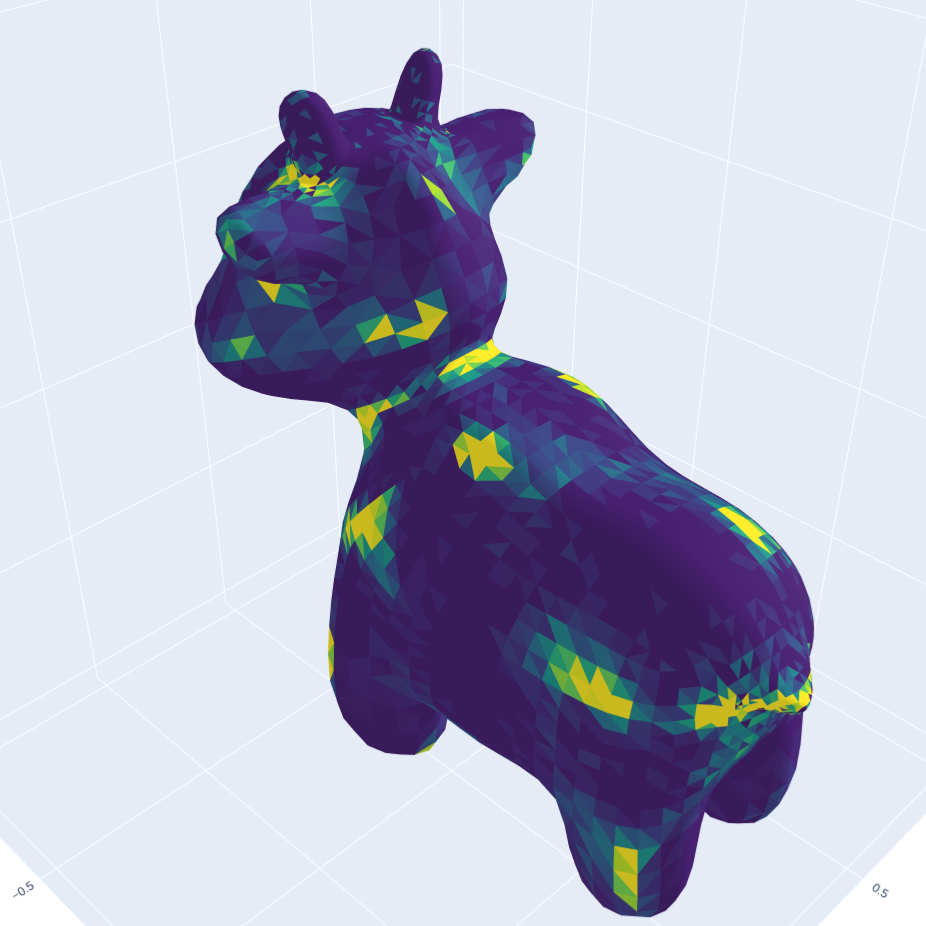}
        \caption{OLLA}
    \end{subfigure}
    \hfill
    \begin{subfigure}{0.24\textwidth}
        \centering
        \includegraphics[width=\linewidth]{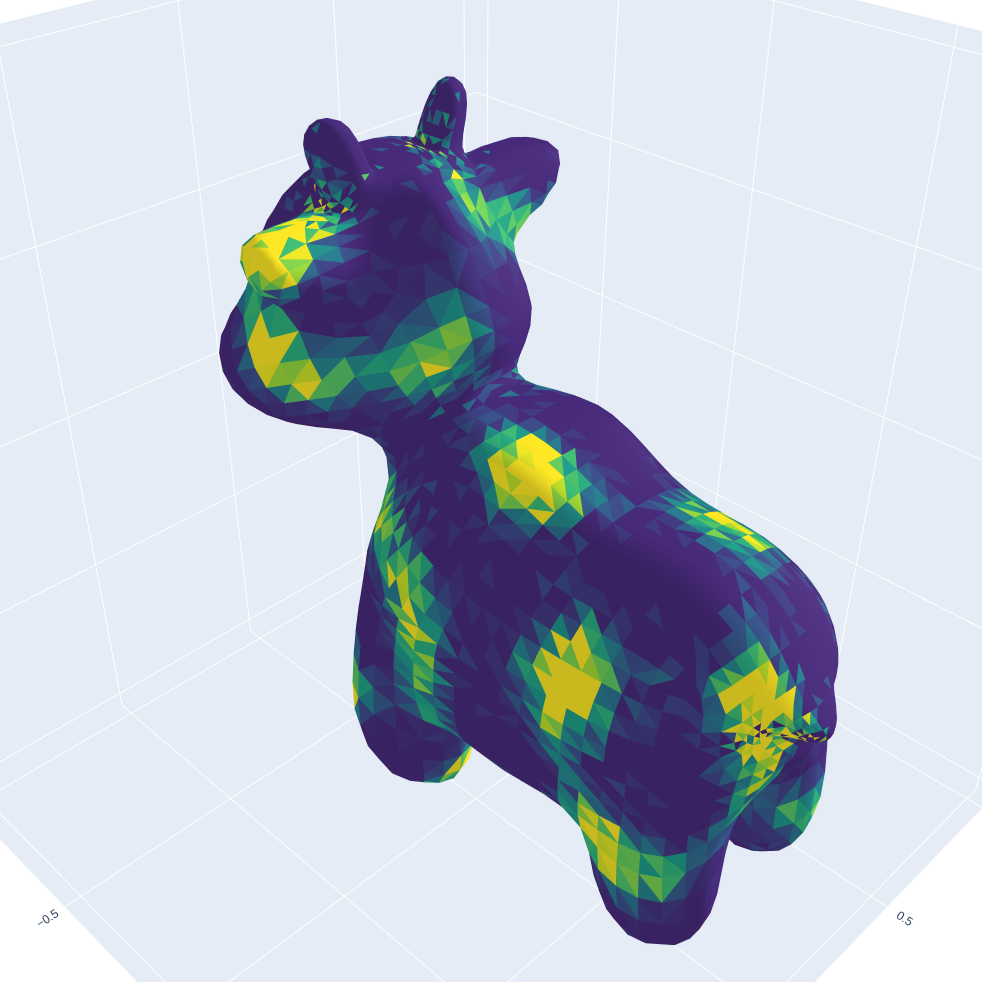}
        \caption{ULLA-P}
    \end{subfigure}
    \hfill
    \begin{subfigure}{0.24\textwidth}
        \centering
        \includegraphics[width=\linewidth]{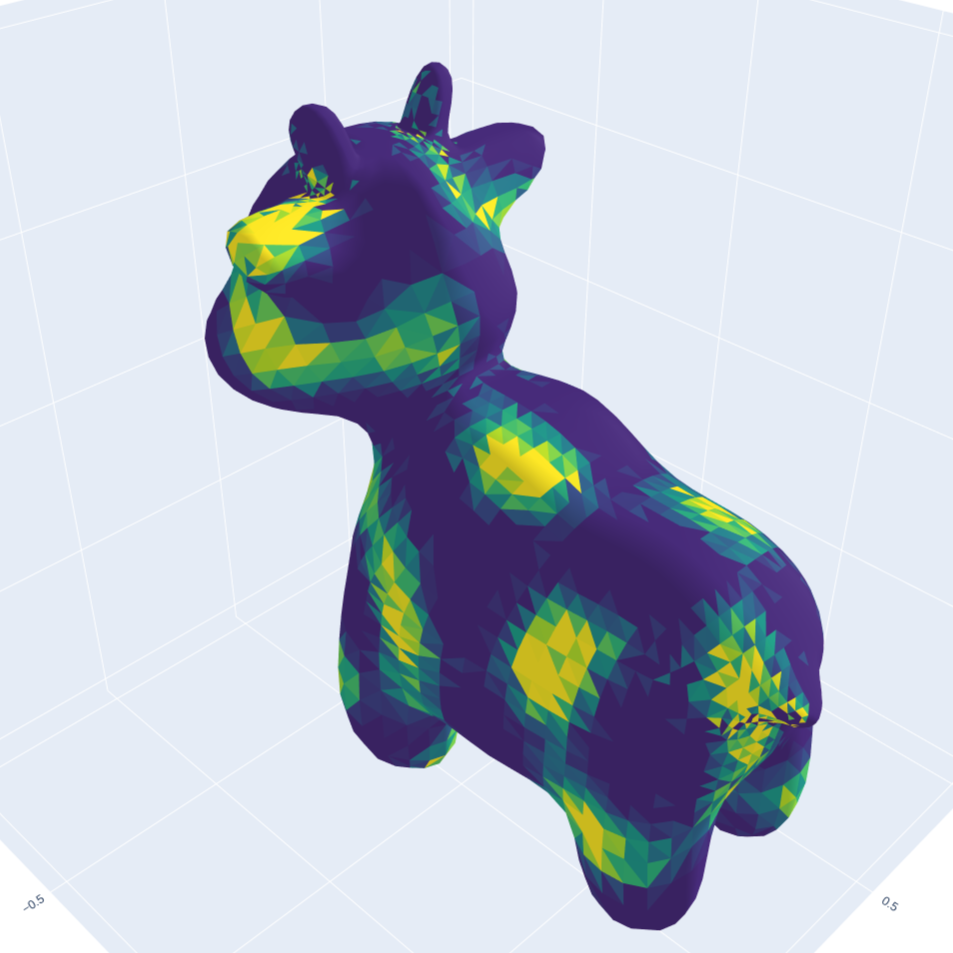}
        \caption{ULLA}
    \end{subfigure}

    \caption{Comparison of generated distributions across different algorithms - Spot the Cow $k=100$}
    \label{fig:Spot-generation}
\end{figure}
\newpage
\textbf{SO(10) manifold with $\boldmath {m=3}$.} \quad
\begin{figure}[h!]
    \centering
    \begin{subfigure}{0.95\textwidth}
        \centering
        \includegraphics[width=1.0\linewidth]{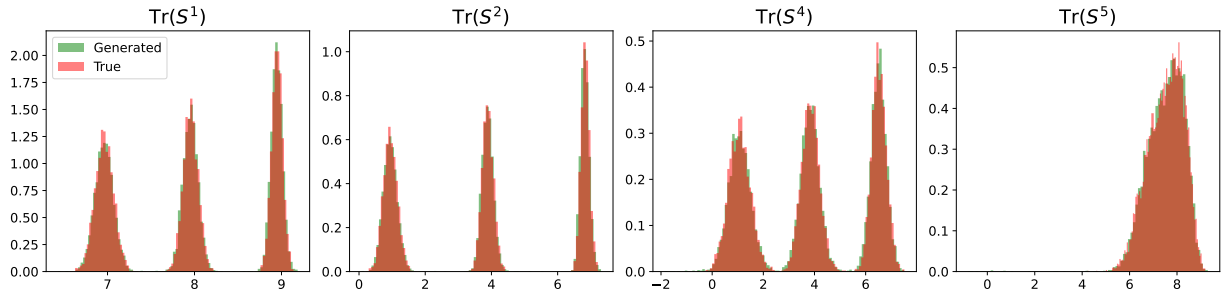}
        \caption{\textbf{RDDPM} (OLLA-P)}
    \end{subfigure}

    \vspace{0.3cm}

    \begin{subfigure}{0.95\textwidth}
        \centering
        \includegraphics[width=1.0\linewidth]{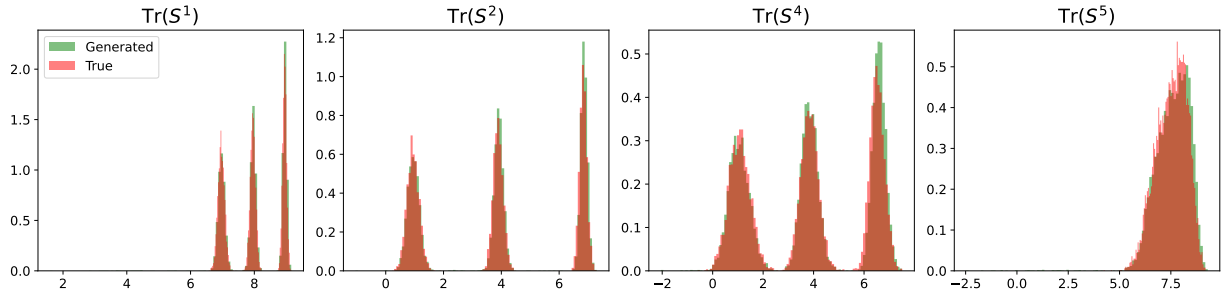}
        \caption{\textbf{OLLA}}
    \end{subfigure}

    \vspace{0.3cm}

    \begin{subfigure}{0.95\textwidth}
        \centering
        \includegraphics[width=1.0\linewidth]{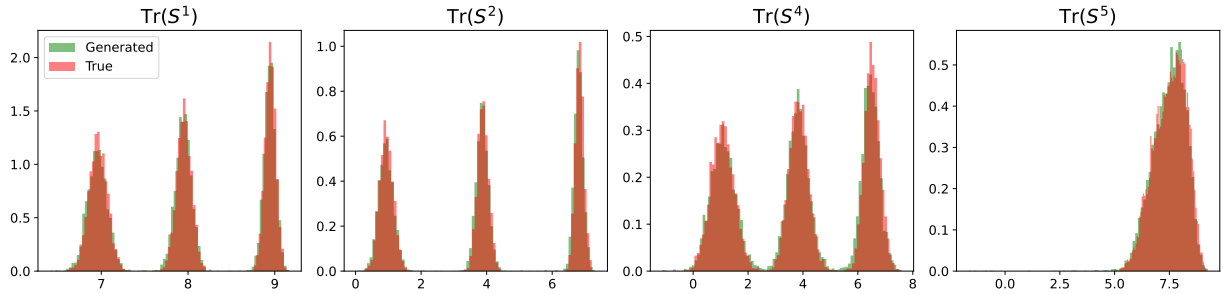}
        \caption{\textbf{ULLA-P}}
    \end{subfigure}

    \vspace{0.3cm}

    \begin{subfigure}{0.95\textwidth}
        \centering
        \includegraphics[width=1.0\linewidth]{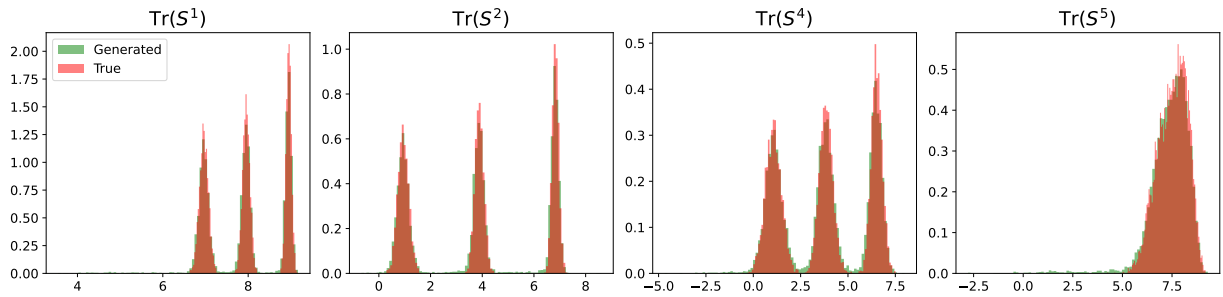}
        \caption{\textbf{ULLA}}
    \end{subfigure}

    \caption{Comparison of generated distributions across different algorithms- $SO(10)$ with $m=3$}
    \label{fig:SO(10)-3W}
    \vspace{-20pt}
\end{figure}

\clearpage
\subsection{Supplementary Results: Effect of Hyperparameters $\alpha$ and $\epsilon$}
\label{app:sec:alpha_epsilon_guideline}

\autoref{app:tab:effect-alpha-epsilon-dipeptide} summarizes how the landing rate $\alpha$ and boundary repulsion rate $\epsilon$ affect in the dipeptide experiment. Increasing $\alpha$ strengthens contraction toward the equality manifold and improves the JSD up to a moderate range, but overly large $\alpha$ can introduce discretization error and numerical stiffness. For the inequality parameter, too small $\epsilon$ leads to boundary stickiness, while too large $\epsilon$ pushes samples too aggressively into the feasible interior and distorts the generated distribution.

\paragraph{Tuning of $\alpha$ and $\epsilon$.} \quad
The continuous-time landing property in \autoref{lem:Exponential decay of constraint functions} suggests a simple discretized tuning rule. For equality constraints, the discretized residual approximately follows
\begin{equation*}
    \mathbb{E}[h(X_{k+1})]
    =
    (1-\alpha\sigma_k^2\Delta t)\mathbb{E}[h(X_k)]
    + O(\Delta t).
\end{equation*}
Thus $0<\alpha\sigma_k^2\Delta t<2$ gives stable contraction, while $0<\alpha\sigma_k^2\Delta t \le 1$ gives monotone decay. We initialize $\alpha$ so that $\alpha\sigma_k^2\Delta t\le 1$ for all $k$, then increase it while monitoring violation and stability. For inequalities, $\epsilon$ is chosen as the smallest value that prevents boundary sticking without causing overshoot.

\begin{table}[h]
  \centering
  \caption{Effect of hyperparameters $\alpha$ (with fixed $\epsilon = 0.05$) and $\epsilon$ (with fixed $\alpha = 50$) on JSD metrics and constraint violations for Alanine Dipeptide using ULLA without last step projection.}
  \label{app:tab:effect-alpha-epsilon-dipeptide}
  \vspace{5pt}
  \begin{tabular}{ccccc}
    \toprule
    Parameter & \textbf{JSD ($\psi$ angle)} & \textbf{JSD (RMSD)} & $\bbE[|h(x)|]$ & $\bbE[g(x)^+]$ \\
    \midrule
    \multicolumn{5}{l}{\textit{Effect of $\alpha$ (with $\epsilon = 0.05$)}} \\
    $\alpha=0.1$ & $0.419\text{\scriptsize$\pm$.000}$ & $0.045\text{\scriptsize$\pm$.002}$ & $5.40 \times 10^{-3}$ & $3.79 \times 10^{-3}$ \\
    $\alpha=0.5$ & $0.247\text{\scriptsize$\pm$.000}$ & $0.036\text{\scriptsize$\pm$.001}$ & $5.39 \times 10^{-3}$ & $2.82 \times 10^{-3}$ \\
    $\alpha=1.0$ & $0.134\text{\scriptsize$\pm$.004}$ & $0.034\text{\scriptsize$\pm$.003}$ & $5.51 \times 10^{-3}$ & $1.60 \times 10^{-3}$ \\
    $\alpha=5.0$ & $0.060\text{\scriptsize$\pm$.002}$ & $0.033\text{\scriptsize$\pm$.001}$ & $2.97 \times 10^{-3}$ & $3.42 \times 10^{-4}$ \\
    $\alpha=10.0$ & $0.053\text{\scriptsize$\pm$.004}$ & $0.034\text{\scriptsize$\pm$.002}$ & $1.02 \times 10^{-3}$ & $1.46 \times 10^{-4}$ \\
    $\alpha=20.0$ & $0.043\text{\scriptsize$\pm$.001}$ & $0.035\text{\scriptsize$\pm$.002}$ & $2.09 \times 10^{-4}$ & $4.28 \times 10^{-7}$ \\
    $\alpha=50.0$ & $0.033\text{\scriptsize$\pm$.002}$ & $0.034\text{\scriptsize$\pm$.002}$ & $7.20 \times 10^{-5}$ & $7.95 \times 10^{-8}$ \\
    $\alpha=100.0$ & $0.051\text{\scriptsize$\pm$.003}$ & $0.035\text{\scriptsize$\pm$.002}$ & $3.70 \times 10^{-5}$ & $6.92 \times 10^{-7}$ \\
    $\alpha=200.0$ & $0.059\text{\scriptsize$\pm$.008}$ & $0.183\text{\scriptsize$\pm$.066}$ & $2.73 \times 10^{-3}$ & $4.19 \times 10^{-3}$ \\
    $\alpha=400.0$ & $0.235\text{\scriptsize$\pm$.001}$ & \text{NaN} & $6.74 \times 10^{-1}$ & $9.95 \times 10^{-1}$ \\
    \midrule
    \multicolumn{5}{l}{\textit{Effect of $\epsilon$ (with $\alpha = 50$)}} \\
    $\epsilon=0.001$ & $0.084\text{\scriptsize$\pm$.001}$ & $0.036\text{\scriptsize$\pm$.002}$ & $7.00 \times 10^{-5}$ & $3.49 \times 10^{-6}$ \\
    $\epsilon=0.005$ & $0.054\text{\scriptsize$\pm$.003}$ & $0.033\text{\scriptsize$\pm$.002}$ & $7.30 \times 10^{-5}$ & $3.58 \times 10^{-5}$ \\
    $\epsilon=0.01$  & $0.048\text{\scriptsize$\pm$.002}$ & $0.032\text{\scriptsize$\pm$.002}$ & $7.00 \times 10^{-5}$ & $5.49 \times 10^{-7}$ \\
    $\epsilon=0.05$  & $0.033\text{\scriptsize$\pm$.002}$ & $0.034\text{\scriptsize$\pm$.002}$ & $7.20 \times 10^{-5}$ & $7.95 \times 10^{-8}$ \\
    $\epsilon=0.1$   & $0.052\text{\scriptsize$\pm$.007}$ & $0.035\text{\scriptsize$\pm$.002}$ & $2.57 \times 10^{-4}$ & $5.26 \times 10^{-4}$ \\
    $\epsilon=0.5$   & $0.081\text{\scriptsize$\pm$.009}$ & $0.738\text{\scriptsize$\pm$.012}$ & $2.37 \times 10^{-1}$ & $1.98 \times 10^{-1}$ \\
    $\epsilon=1.0$   & $0.107\text{\scriptsize$\pm$.024}$ & $0.788\text{\scriptsize$\pm$.001}$ & $3.01 \times 10^{-1}$ & $1.95 \times 10^{-1}$ \\
    $\epsilon=5.0$   & $0.134\text{\scriptsize$\pm$.002}$ & $0.796\text{\scriptsize$\pm$.002}$ & $7.34 \times 10^{-1}$ & $2.97 \times 10^{-1}$ \\
    $\epsilon=10.0$  & $0.188\text{\scriptsize$\pm$.003}$ & $0.796\text{\scriptsize$\pm$.002}$ & $1.05 \times 10^{0}$ & $1.08 \times 10^{0}$ \\
    \bottomrule
  \end{tabular}
\end{table}
\begin{figure}[h!]
    \centering
    \begin{subfigure}[b]{0.30\textwidth}
        \centering
        \includegraphics[trim={0 0 250px 0}, clip, width=\textwidth]{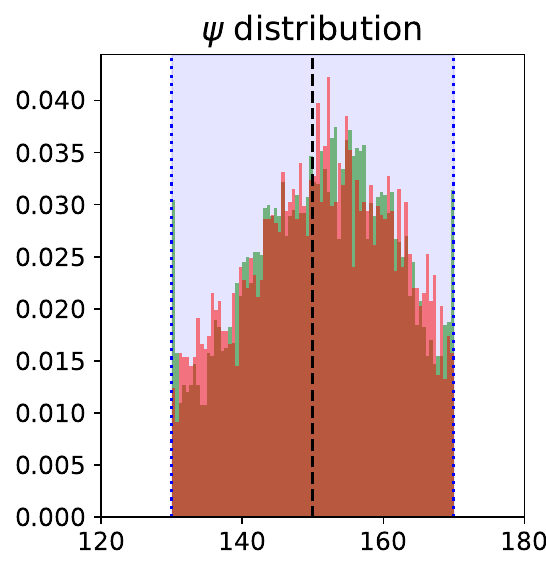}
        \caption{$\epsilon=0.01$ (Sticky boundary)}
        \label{fig:eps_sticky}
    \end{subfigure}
    \hfill
    \begin{subfigure}[b]{0.30\textwidth}
        \centering
        \includegraphics[trim={0 0 250px 0}, clip, width=\textwidth]{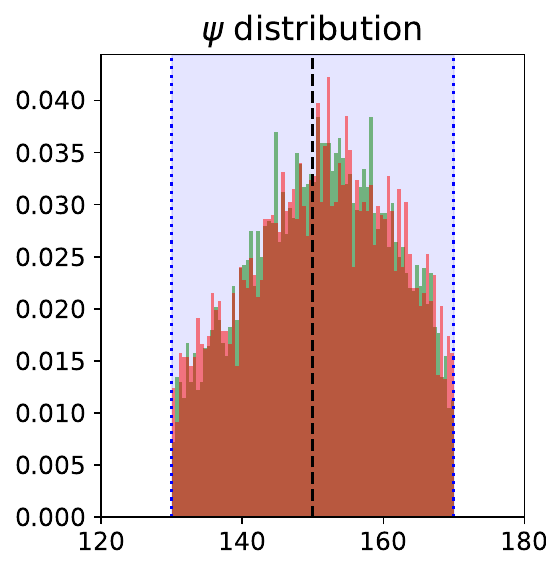}
        \caption{$\epsilon=0.05$ (Balanced choice)}
        \label{fig:eps_balanced}
    \end{subfigure}
    \hfill
    \begin{subfigure}[b]{0.30\textwidth}
        \centering
        \includegraphics[trim={0 0 250px 0}, clip, width=\textwidth]{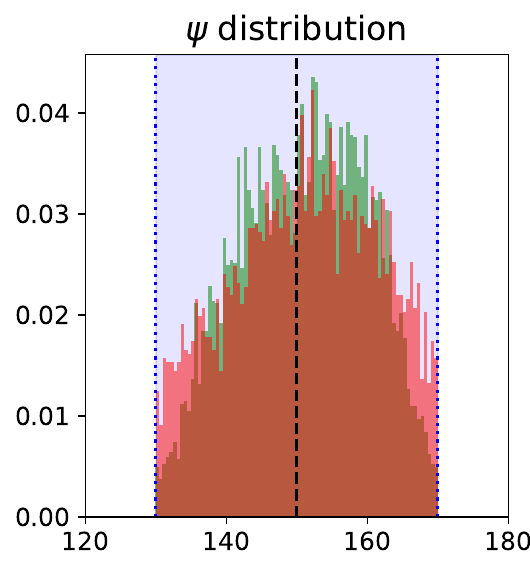}
        \caption{$\epsilon=0.1$ (Strong repulsion)}
        \label{fig:eps_repulsive}
    \end{subfigure}

    \caption{\textbf{Effect of boundary repulsion rate $\epsilon$ on the generated distribution.}
    When $\epsilon$ is too small (a), trajectories tend to stick to the boundary.
    Conversely, an excessively large $\epsilon$ (c) aggressively pushes samples away from the boundary, distorting the distribution.
    A moderate choice (b) balances these effects, yielding the best sampling quality.}
    \label{app:fig:effect_of_epsilon}
\end{figure}

\end{document}